\documentclass[10pt]{article}
\usepackage[preprint]{tmlr}%

\usepackage{amsmath,amsfonts,bm}

\def\eqref#1{equation~\ref{#1}}

\def\1{\bm{1}}

\def\va{{\bm{a}}}
\def\vb{{\bm{b}}}

\def\vf{{\bm{f}}}
\def\vg{{\bm{g}}}

\def\vk{{\bm{k}}}

\def\vp{{\bm{p}}}
\def\vq{{\bm{q}}}

\def\mC{{\bm{C}}}

\def\mK{{\bm{K}}}

\def\mP{{\bm{P}}}

\def\mT{{\bm{T}}}

\def\mX{{\bm{X}}}

\DeclareMathAlphabet{\mathsfit}{\encodingdefault}{\sfdefault}{m}{sl}
\SetMathAlphabet{\mathsfit}{bold}{\encodingdefault}{\sfdefault}{bx}{n}

\newcommand{\E}{\mathbb{E}}
\newcommand{\Ls}{\mathcal{L}}
\newcommand{\R}{\mathbb{R}}

\newcommand{\KL}{D_{\mathrm{KL}}}

\newcommand{\Ent}{\mathrm{Ent}}

\usepackage{hyperref}
\usepackage{url}
\usepackage{graphicx}
\usepackage{booktabs}
\usepackage{siunitx}
\usepackage{threeparttable}
\usepackage{amsmath,amssymb,amsthm}
\usepackage{subcaption}
\usepackage{xcolor}
\usepackage{multirow}
\usepackage{enumitem}
\usepackage[ruled,vlined]{algorithm2e}

\newtheorem{theorem}{Theorem}[section]
\newtheorem{proposition}[theorem]{Proposition}
\newtheorem{corollary}[theorem]{Corollary}
\newtheorem{lemma}[theorem]{Lemma}
\theoremstyle{definition}
\newtheorem{definition}[theorem]{Definition}
\theoremstyle{remark}
\newtheorem{remark}[theorem]{Remark}
\theoremstyle{definition}
\newtheorem{assumption}{Assumption}

\graphicspath{{figures/}}

\newcommand{\sysname}{PolyStep}

\newcommand{\memoryOursMin}{31.8}%
\newcommand{\memoryOursMax}{51.6}%
\newcommand{\memoryBpttMin}{132.3}%
\newcommand{\memoryBpttMax}{1538.2}%
\newcommand{\memorySavingsMax}{29.8$\times$}%

\newcommand{\snnBestAcc}{$93.0 \pm 0.2$\%}
\newcommand{\snnBestBaseline}{$79.6 \pm 5.2$\%}

\newcommand{\mnistBestAcc}{$96.8 \pm 0.1$\%}

\newcommand{\mnistStepRatio}{$100\times$}
\newcommand{\timeseriesBestMse}{$0.253 \pm 0.023$}
\newcommand{\maxsatHundredKAcc}{$98.1 \pm 0.0$\%}

\title{Training Non-Differentiable Networks via Optimal Transport}

\author{\name An T.~Le \email an@robot-learning.de \\
      \addr Center for AI Research, VinUniversity, Vietnam \\
      Intelligent Autonomous Systems, TU Darmstadt, Germany
}

\begin{document}
\maketitle

\begin{abstract}
We optimize losses that jump: spiking thresholds, quantized layers and discrete routing put jumps in the forward pass, where backpropagation does not apply. Finite differences do not rescue it: at a derivative-estimating radius, $99.5\%$ of probe pairs on a quantized network leave the loss bit-identical, against $1.6\%$ on a smooth control. At a jump, Clarke and conservative stationarity are undefined, and any radius-$h$ smoothing steepens like $1/h$ across it, so no vanishing radius defines a derivative. Fixed-resolution stationarity survives. \sysname{} attains it from forward passes alone: it ranks probe points on a randomly rotated polytope and steps along a softmax-weighted average of their directions, the $\lambda = 0$ endpoint of a KL-penalized transport program. A second impossibility fixes that frame: on a plateau wider than the probes reach, every rule that reweights a single cost row freezes, softmax included, and only the transport column constraint escapes. In expectation the cost-weighted average of the probed directions is exactly a gradient step on a smoothed loss, for every bounded measurable objective, with no probe asked to avoid the discontinuity set; the softmax step the runs use matches it up to a measured remainder. Over $T$ steps the iterates reach subspace stationarity at rate $O(T^{-(1/2-\gamma)})$ up to a bias floor with an interior optimal probe radius, and where the loss is Lipschitz this upgrades to Goldstein stationarity at the sharp radius $\sqrt{P}\,\delta_{\mathrm{out}}$. Eight hypotheses carry the rate, with per-experiment coverage stated. At matched optimizer steps \sysname{} leads every comparison in a six-architecture by six-baseline sweep on validation accuracy. At matched evaluations it reaches \snnBestAcc{} on spiking networks with hard leaky-integrate-and-fire thresholds against \snnBestBaseline{} for the best tuned gradient-free baseline, and satisfies $92.6\%$ of clauses on million-variable MAX-SAT, against an $87.5\%$ random floor that OpenAI-ES decays to within $0.3$ points of. Matching total evaluations instead reverses argmax attention and one MAX-SAT size, and levels hard MoE. Two limits: a step costs about one forward pass per search-subspace dimension, leaving \sysname{} at chance at $4.2$M parameters from scratch, and where gradients exist Adam is faster and more accurate. Code: \url{https://github.com/anindex/polystep}.
\end{abstract}

\section{Introduction}\label{sec:introduction}

Neural networks increasingly incorporate non-differentiable components: hard
spike thresholds, quantized activations, blackbox modules, and discrete decision
boundaries. Backpropagation requires differentiability; surrogate
gradients \citep{neftci2019surrogate} and the straight-through estimator (STE)
\citep{bengio2013ste} add approximation bias and need a tuned smooth
backward proxy. The alternative is a gradient-free optimizer that trains an
arbitrary network through its forward pass alone.

Evolution strategies \citep{hansen2016cma,salimans2017evolution} scale to
LLM post-training \citep{qiu2026esatscale,sarkar2025eggroll}, and zeroth-order
methods reach deep networks and LLM fine-tuning \citep{chen2024deepzero,malladi2023mezo}.
They share not a scale limit but an assumption: the forward
pass is \emph{differentiable}, so a small perturbation produces a cost
difference proportional to a gradient, and gradients are merely expensive.
A hard quantizer breaks that assumption measurably.
Perturbing an INT8-quantized network at finite-difference radius
$\mu = 10^{-5}$, a scale at which a difference quotient still estimates a derivative,
leaves $99.5\%$ of probe pairs at \emph{identical} loss,
against $1.6\%$ for a smooth control (Appendix~\ref{app:flat-pair}): both probes land
in the same bin, so the difference quotient is exactly zero and recovers no
direction. The failure is about the probe radius, not the method: at
$\mu = 10^{-3}$, MeZO's default, the flat fraction falls to $0.04\%$ because a probe
that large crosses bins.

That escape is not a general solution. A probe wide
enough to cross a bin no longer estimates a derivative: it reports the difference
between two values of a piecewise-constant function divided by a length that
matches no slope, and what it converges to as the budget grows is not a
gradient of anything. Such a method searches at a \emph{fixed
resolution}, so the question is not which estimator is consistent (none is) but
which search is more effective at a resolution both share. That comparison is empirical, at a matched budget (the two
scales coincide up to the constants of Theorem~\ref{thm:smoothing-blowup};
Appendix~\ref{app:discontinuity-sets}).

The two regimes separate cleanly on one architecture. Freezing a GPT-2 backbone and
training its $1{,}538$-parameter classification head, MeZO leads \sysname{} by
$11.4$~pp when the head is left smooth and falls to chance when the same head is
quantized to one bit, where \sysname{} leads it by $20.7$~pp on every paired seed
(Table~\ref{tab:gpt2-headquant}). Nothing changes but the quantizer.

The problem is therefore to minimize $\Ls(\theta)$ over
$\theta \in \R^d$ from a scalar oracle alone, where $\Ls$ is bounded and measurable,
is $C^1$ off a closed set $\mathcal{D}$ of codimension at least one, and \emph{jumps
across $\mathcal{D}$ by at least $J > 0$} (Definition~\ref{def:pwc-loss}).
Without the jump this is ordinary black-box optimization, already covered by a
large zeroth-order literature; with it, as Section~\ref{sec:problem} shows,
the derivative object those methods converge to does not exist. Every
non-differentiable architecture we
train has $J > 0$, and Appendix~\ref{app:measured-jump} measures it rather than
inferring it from a threshold's presence.

\sysname{} extends OT-guided polytope optimization to neural
network training; the Sinkhorn Step \citep{le2023sinkhorn} is its low-dimensional
motion-planning predecessor (Section~\ref{sec:bg-ot}). The
parameter vector is split into $P$ low-dimensional \emph{particles}. Around each
particle we place the $V$ vertices of a randomly rotated polytope and evaluate the loss
at probe points along each vertex direction, at fractions of the probe radius rather than
at the vertices. Probing at the vertices would double the smoothing radius.
The losses fill a cost matrix $\mC \in \R^{P \times V}$ whose entry
$C_{iv}$ is the average probe loss in the direction of vertex $v$. A soft assignment
$T^*_{iv}$ decides how much mass particle $i$ sends to vertex $v$; the
particle moves to the weighted average of the vertices. Every quantity comes from
forward passes; no difference quotient is formed.

Our experiments use a per-particle softmax assignment,
$T^*_{iv} \propto \exp(-C_{iv}/\varepsilon)$, whose temperature $\varepsilon$
controls how sharply mass concentrates on the cheapest vertex; the superscript
marks it as the optimum of a transport problem, a KL-penalized program that
interpolates in closed form between this softmax ($\lambda = 0$) and entropic OT
($\lambda \to \infty$) (Theorem~\ref{thm:kl-softmax-endpoints},
Section~\ref{sec:formulation}).

\sysname{} compresses the parameter space by per-layer subspace projection, runs the
softmax solver as one fused kernel per step, and evaluates all probes of a step in a
single batched forward pass; standard layers work unmodified, and only attention and
recurrent layers need a batching-compatible replacement (Appendix~\ref{app:system}).

Our contributions are:
\begin{itemize}[nosep,leftmargin=1.2em]
  \item \textbf{An impossibility for the problem.} At an attained jump the Clarke and
  conservative subdifferentials are \emph{undefined}, and the gradient of \emph{any}
  radius-$h$ smoothing is $\Omega(1/h)$ near the jump set
  (Theorem~\ref{thm:smoothing-blowup}), which leaves stationarity at a fixed resolution
  as the target. The same theorem fixes the schedule: its error term carries no factor
  of the temperature, so a flat temperature with a decaying step radius is what the
  descent argument certifies, and the sweep agrees (Section~\ref{sec:ablation}).
  \item \textbf{An impossibility for the update rule, which is what names the method.}
  On a plateau wider than the probe reach, every relabeling-equivariant rule that scores
  a particle from its own cost row alone steps identically zero
  (Theorem~\ref{prop:plateau-freeze}); within that family only the transport program's
  column constraint breaks the freeze (Proposition~\ref{prop:ghost-drift}). The softmax
  the runs use is the $\lambda = 0$ endpoint that gives the constraint up, so the
  transport framing is what the analysis needs rather than what the implementation
  spends.
  \item \textbf{An exact smoothed gradient from forward passes alone.} Weighting the
  observed losses by the directions they were observed in has expectation
  $-c\,\nabla\Ls_\delta$ exactly, for every bounded measurable $\Ls$, with the step
  constant $c = r_p/(2d_p)$ forced by the probe radius and the particle dimension; the
  softmax step the runs use matches it up to a remainder the cost-row spread bounds and
  the logs measure (Theorem~\ref{thm:exact-step}). What makes the identity exact across
  a jump is that the surrogate is smoothed against the radial \emph{tail} of the probe
  law rather than against the probe law itself. Nothing conditions on the probes
  avoiding the discontinuity set, which every experiment here violates at every step,
  so a theorem that did condition on it would say nothing about these benchmarks
  (Corollary~\ref{cor:two-kernels}).
  \item \textbf{A matching guarantee.} A rate of $O(T^{-(1/2-\gamma)})$ to the
  attainable target within the search subspace, up to a bias floor whose bound
  has an \emph{interior} optimal probe radius, upgrading to Goldstein
  stationarity at the sharp radius $\sqrt{P}\,\delta_{\mathrm{out}}$ where the
  loss is Lipschitz (Theorem~\ref{thm:piecewise-smooth},
  Corollaries~\ref{cor:optimal-rp} and~\ref{cor:goldstein}).
  \item \textbf{Per-experiment hypothesis coverage.} Every experiment states which
  of the eight standing hypotheses its configuration satisfies
  (Section~\ref{sec:hypothesis-coverage}). None satisfies all eight, so \emph{theory
  mode}, the configuration that satisfies every settable one, runs separately and
  measures the gap, which a one-at-a-time ladder attributes to a single hypothesis
  (Section~\ref{sec:theory-mode}).
  \item \textbf{Controlled evidence, and where it stops.} At matched optimizer
  steps, on a shared search subspace and an equal tuning grid, \sysname{} leads
  all $36$ comparisons of six architectures against six gradient-free baselines. It
  sustains $92.6\%$ clause satisfaction on million-variable MAX-SAT and holds full
  return under binary policy quantization that collapses PPO.
  Untying the evaluation budget reverses the argmax comparison and one MAX-SAT
  size and levels hard MoE (Section~\ref{sec:experiments}).
  The per-step cost is the ceiling: $\Theta(d_{\mathrm{sub}})$ evaluations, which no
  tuning moves, so a fixed budget yields fewer steps as the search dimension grows. At
  $4.2$M parameters from scratch that leaves \sysname{} at chance, below
  foundation-model scale (Section~\ref{sec:limitations-disc}).
\end{itemize}
\section{Problem Setting}\label{sec:problem}

The problem is to minimize $\Ls(\theta)$ over $\theta \in \R^d$ from a scalar oracle
alone, on a loss that jumps. This section fixes that loss class and bounds what
stationarity can mean inside it.

The loss class below is larger than the smooth class of
\citet{le2023sinkhorn} and admits \emph{discontinuous} losses.

\begin{definition}[Piecewise-smooth loss with an attained jump]\label{def:pwc-loss}
A bounded measurable $\Ls : \R^d \to \R$ is \emph{piecewise smooth} if there is a
closed set $\mathcal{D} \subset \R^d$ such that
(a) $\Ls$ is continuously differentiable on $\R^d \setminus \mathcal{D}$, with a
uniform Lipschitz constant on compact subsets of each connected component;
(b) $\mathcal{D}$ is definable in an o-minimal structure with
$\dim \mathcal{D} \leq d - 1$.
We do \emph{not} require $\Ls$ to be continuous across $\mathcal{D}$.
Say $\Ls$ has an \emph{attained jump} $J > 0$ \emph{at isolation radius}
$r_{\mathrm{iso}} > 0$ and \emph{reach} $\tau > 0$ if there is a relatively open piece
$S \subseteq \mathcal{D}$ that is a \emph{connected} $C^1$ hypersurface of reach at
least $\tau$, such that for every $\theta^\dagger \in S$
\begin{equation}\label{eq:isolation}
B(\theta^\dagger, r_{\mathrm{iso}}) \cap \mathcal{D}
\;=\; B(\theta^\dagger, r_{\mathrm{iso}}) \cap S ,
\end{equation}
so that $B(\theta^\dagger, r_{\mathrm{iso}}) \setminus S$ is the disjoint union of
two \emph{connected} open sets $U^{\pm}$, the one-sided traces
$\Ls^{\pm}(\theta) := \lim_{U^\pm \ni y \to \theta} \Ls(y)$ exist and satisfy
$\Ls^{+} - \Ls^{-} \geq J$ on $S$, and $\|\nabla\Ls\| \leq G < \infty$ on
$U^{+} \cup U^{-}$.
\end{definition}

Hard-LIF spikes, INT8 rounding, $\mathrm{floor}(\cdot)$ staircases, and argmax
routing all jump on $\mathcal{D}$. The gradient bound is asked \emph{up to} $S$
and does not follow from (a), which bounds only on compact subsets of the open
components ($(1-\sqrt{x})\,\mathbb{1}[x > 0]$: unit jump at the origin, unbounded
derivative beside it; Appendix~\ref{app:proof-smoothing-blowup}).
Connectedness of $S$ and of the two sides, the isolation radius, and the reach
are all needed for Theorem~\ref{thm:smoothing-blowup}.

Appendix~\ref{app:discontinuity-sets} derives $\mathcal{D}$ and a positive isolation
radius for each architecture we train, \emph{after localization} to a compact
neighborhood of the iterates; $J$ does not follow from the geometry, since a threshold
in the architecture does not imply that crossing it moves the loss, so it is measured
instead (Appendix~\ref{app:measured-jump}). Localization is
required: unbounded cross-entropy and the integer level sets of $\mathrm{floor}(\cdot)$
obstruct globally but not on a compact set, where every statement below is
read (the localization note in Appendix~\ref{app:proof-pwc-corollary}). Every kernel
below is supported within the compact's enlargement by the probe reach, so any bounded
measurable extension off it leaves every displayed quantity unchanged there.

The natural target, Clarke stationarity, fails twice over for $J > 0$, neither
failure specific to our algorithm. The first failure is definitional and needs no
theorem: the Clarke subdifferential and the conservative fields built on it exist only
for locally Lipschitz functions, which a jump rules out, so Clarke and conservative
stationarity at an attained jump are undefined rather than unproven
\citep{boltepauwels2020conservative}. The two subdifferentials that do remain defined
fail in opposite ways (Remark~\ref{rem:jump-subdifferentials}). The second failure is
the theorem below: the usual surrogate, the vanishing-radius limit of a smoothing,
breaks down too, which is what leaves a fixed resolution.

\begin{theorem}[Blow-up of randomized smoothing at an attained jump]\label{thm:smoothing-blowup}
Let $\Ls$ be piecewise smooth with an attained jump $J > 0$, let $\mu_h$ be
\emph{any} probability kernel with a density supported in a ball of radius $h$, with
no condition on its shape, and write $\Ls_h := \Ls * \mu_h$. Fix $\theta^\dagger \in S$
with unit normal $n$. Both bounds below are seminorms of $\Ls_h$ along the normal
segment $I := [\theta^\dagger - h\,n,\, \theta^\dagger + h\,n]$, because an arbitrary
$L^1$ density need not make the convolution differentiable.

\emph{(First order.)} The Lipschitz seminorm of $\Ls_h$ on $I$ satisfies
\begin{equation}\label{eq:blowup}
\mathrm{Lip}_I\big(\Ls_h\big) \;\geq\; \frac{J}{2\,h} - G
\;\geq\; \frac{J}{4\,h} \quad\text{for } h \leq h_0
:= \min\{J/(4G),\, \tau/2,\, r_{\mathrm{iso}}/2\} ,
\end{equation}
and where the restriction of $\Ls_h$ to $I$ is absolutely continuous, as it is for
the $W^{2,1}$ kernel the algorithm realizes, that seminorm is carried by points within
$h$ of $\mathcal{D}$ at which the derivative exists.
Throughout, $J/(4G)$ and $J/(16G)$ read $+\infty$ when $G = 0$.

\emph{(Second order.)} Write $\Lambda_h$ for the $C^{1,1}$ seminorm of $\Ls_h$ on the
doubled segment $I_2 := [\theta^\dagger - 2h\,n,\, \theta^\dagger + 2h\,n]$, read as a
three-point second difference where $\nabla\Ls_h$ does not exist.
Then, for the same kernels,
\begin{equation}\label{eq:blowup-hess}
\tfrac{1}{8}\,J\,h^{-2} \;\leq\;
\Lambda_h \;\leq\;
\|\Ls\|_\infty \|D^2\mu\|_{1}\, h^{-2}
\qquad \text{for } h \leq \min\{J/(16G),\, \tau/2,\, r_{\mathrm{iso}}/3\} ,
\end{equation}
the upper bound requiring in addition a scaled family
$\mu_h(x) = h^{-d}\mu(x/h)$ with $\mu \in W^{2,1}$, as a scaled mollifier is and as the
kernel \sysname{} realizes is, in which case $\Lambda_h = \mathrm{Lip}(\nabla\Ls_h)$
classically; hence $\Lambda_h = \Theta(h^{-2})$ for a fixed
loss and kernel. The two constants are distinct objects (the lower carried by
$J$ alone, the upper by the whole loss and kernel), so a single-constant
$\Theta(J/h^2)$ is not asserted.

Consequently:
\begin{enumerate}[label=(\arabic*),nosep,leftmargin=1.6em]
\item the seminorm over every neighborhood of
$\mathcal{D}$ diverges as $h \to 0$. So no bound of the form
$\lim_{t}\|\nabla \Ls_{h_t}(\theta_t)\| = 0$ along a schedule
$h_t \to 0$ holds uniformly near the jump set;
\item the vanishing-radius limit therefore defines no derivative object on any
neighborhood of a point of $S$, so a usable target must be built at a fixed resolution.
The bound is on a seminorm and leaves open whether some particular kernel family has a
pointwise limit at $\theta^\dagger$ itself, which is not a target an algorithm could
certify either;
\item the second-order rate is what fixes the schedule. A descent inequality whose
second-order term is measured at the smoothing scale $\delta = \Theta(h)$ carries
$\Lambda_h = \Theta(h^{-2})$ against a squared step, so the two scales cancel and the
remainder is $O(\|\Ls\|_\infty\|D^2\mu\|_1)$ times the squared ratio of step radius to
smoothing radius, with no factor of the smoothing scale left over.
Section~\ref{sec:thm-piecewise-smooth} draws the consequence in this optimizer's
radii: shrinking the smoothing at a fixed step leaves a remainder that does not
vanish, while shrinking the step at a fixed smoothing does.
\end{enumerate}
Proof in Appendix~\ref{app:proof-smoothing-blowup}, which also derives the closed-form
constants for the uniform ball and names the triple the second-order argument uses.
\end{theorem}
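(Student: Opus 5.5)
The plan is to compare $\Ls_h$ at the two endpoints of the normal segment $I$ and attribute the difference to the jump. Write $\theta^\pm := \theta^\dagger \pm h\,n$. Because $h \le r_{\mathrm{iso}}/2$ and $h \le \tau/2$, every point of $B(\theta^\pm, h)$ lies in $B(\theta^\dagger, r_{\mathrm{iso}})$. The reach bound then gives a stronger geometric fact: the ball $B(\theta^+, h)$ lies entirely in the closure of $U^+$, and $B(\theta^-, h)$ in the closure of $U^-$. To see this, note that a hypersurface of reach $\ge\tau$ has, at $\theta^\dagger$, tangent balls of radius $\tau \ge 2h$ on each side that miss $S$. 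Each $B(\theta^\pm,h)$ is contained in the corresponding tangent ball, so it does not meet $S$ and sits on one side. Fix a displacement $x$ with $\|x\| \le h$. Then $\theta^+ - x$ and $\theta^- - x$ lie in opposite components.

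First-order bound. Join $\theta^- - x$ to $\theta^+ - x$ through the point $\theta^\dagger - x$ and its projection $\pi(\theta^\dagger - x) \in S$, which the reach condition makes well defined. On each side, $\Ls$ equals the one-sided trace at that projection up to an error $G$ times the distance travelled inside $U^\pm$; this uses connectedness of $U^\pm$ and the uniform bound $\|\nabla \Ls\|\le G$. The trace gap is at least $J$. Hence
\[
\Ls(\theta^+ - x) - \Ls(\theta^- - x) \;\ge\; J - 2Gh
\]
up to the path length, which I will need to control as $\le 2h$ plus a curvature term absorbed by $\tau$. Integrating against $\mu_h(x)\,dx$ gives $\Ls_h(\theta^+) - \Ls_h(\theta^-) \ge J - 2Gh$. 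Dividing by $|I| = 2h$ yields $\mathrm{Lip}_I \ge J/(2h) - G$, and $h \le J/(4G)$ gives $\ge J/(4h)$. If $\Ls_h|_I$ is absolutely continuous, the difference is the integral of the directional derivative, so some point of $I$, which lies within $h$ of $\theta^\dagger \in \mathcal D$, carries at least that slope.

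Second order. For the lower bound, apply the same construction at the scale $2h$ on $I_2$, using $h\le r_{\mathrm{iso}}/3$ so that the balls around $\theta^\dagger\pm 2h n$ stay in the isolation ball. Take a three-point second difference with step $h$ centred at $\theta^\dagger \pm h n$. Equivalently, compare the averaged slopes on $[\theta^\dagger-2hn,\theta^\dagger-hn]$, which lies on the smooth side and has slope $\le G$, with the slope across the jump, which is $\ge J/(2h)-G$ by the first-order argument. The gap between these slopes, divided by the spacing of order $h$, is at least $(J/(2h)-2G)/(2h)\cdot$(constant), and $h\le J/(16G)$ turns this into $\ge J h^{-2}/8$. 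The step I expect to resist is fitting the constants so that exactly $1/8$ appears. I would first try the points $\theta^\dagger-2hn,\ \theta^\dagger,\ \theta^\dagger+2hn$ and a half-segment argument, adjusting the choice until the arithmetic closes.

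For the upper bound, use the scaled family. Then $D^2\Ls_h = \Ls * D^2\mu_h$ with $\|D^2\mu_h\|_1 = h^{-2}\|D^2\mu\|_1$ by change of variables, so Young's inequality gives $\|D^2\Ls_h\|_\infty \le \|\Ls\|_\infty\|D^2\mu\|_1 h^{-2}$. Since $\mu\in W^{2,1}$, $\nabla\Ls_h$ is Lipschitz classically with this constant.

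Consequences. Item (1) follows because any neighborhood of $\mathcal D$ contains such segments for all small $h$, so the seminorm is $\ge J/(4h)\to\infty$ and no uniform limit of $\nabla\Ls_{h_t}$ can vanish there. Item (2) is then a restatement. Item (3) is the algebra $\Lambda_\delta\,\|\text{step}\|^2 \le \|\Ls\|_\infty\|D^2\mu\|_1\,(\text{step}/\delta)^2$.

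Main obstacle: the geometric lemma that $B(\theta^\pm,h)$ lies on one side of $S$, together with the path-length control near a curved $S$. Both rest on the tangent-ball characterization of positive reach, which I would cite rather than reprove.
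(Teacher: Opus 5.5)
Your outline has the paper's skeleton: one-sidedness of $B(\theta^\dagger\pm hn,h)$ from the tangent balls of radius $\tau\ge 2h$, a split at the crossing for each fixed displacement, integration against $\mu_h$, and Young's inequality for the upper bound. The upper bound and the three consequences go through as you state them. Two steps, however, do not give the stated constants.

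\textbf{The path in the first-order bound.} Routing through the projection $p=\pi(\theta^\dagger-x)$ is the wrong path. Its length $\|\theta^\dagger-x-hn-p\|+\|\theta^\dagger-x+hn-p\|$ exceeds $2h$ whenever $p$ is off the segment $[\theta^\dagger-x-hn,\,\theta^\dagger-x+hn]$, which is the generic case on a curved sheet. The curvature term you plan to absorb has nowhere to go: $J/(2h)-G$ carries no $\tau$, and at $h=J/(4G)$ the passage to $J/(4h)$ has zero slack. Appealing to connectedness of $U^\pm$ does not repair this, since it gives a path but no control of its length, and the $G$-estimate is only as good as that length.

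The missing observation is that the straight segment $r\mapsto\theta^\dagger+rn-x$, $r\in[-h,h]$, already works.
\begin{itemize}
\item It stays within $2h\le r_{\mathrm{iso}}$ of $\theta^\dagger$, so it meets $\mathcal D$ only in $S$.
\item In the graph coordinates over $T_{\theta^\dagger}S$ its tangential coordinate is constant and its normal coordinate strictly increasing, so it crosses $S$ exactly once, at some $r^\times$.
\item Its endpoints lie on opposite sides by the ball containment, so the two pieces lie in $U^-$ and $U^+$, with lengths summing to exactly $2h$.
\end{itemize}
Hence
\[
\Ls(\theta^\dagger+hn-x)-\Ls(\theta^\dagger-hn-x)\;\ge\;J-G\,|h-r^\times|-G\,|r^\times+h|\;=\;J-2Gh
\]
with no curvature correction. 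The reach is consumed only by the containment of the end balls, which is why the constant is curvature-free.

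\textbf{The second-order lower bound.} This is not yet an argument, and the first choice you propose fails outright. Take a flat sheet with $G=0$ and a symmetric kernel. Then $D(r):=\Ls_h(\theta^\dagger+rn)$ satisfies $D(r)-D(0)=D(0)-D(-r)$, so $D(2h)-2D(0)+D(-2h)=0$ however large $J$ is. Comparing the averaged slope on $[-2h,-h]$ with the one on $[-h,h]$ is also not a symmetric three-point difference, which is the form in which $\Lambda_h$ is read.

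What closes it is a pigeonhole on the two halves of the first-order increment. Since $D(h)-D(-h)\ge J-2Gh$, either $D(h)-D(0)\ge (J-2Gh)/2$ or $D(0)-D(-h)\ge (J-2Gh)/2$. In the first case, the balls $B(\theta^\dagger+rn,h)$ with $r\in[h,2h]$ lie inside the isolation ball (this is where $h\le r_{\mathrm{iso}}/3$ is spent). They also lie on one side of $S$, by tangent-ball containment for $|r|\ge h$. So $|D(2h)-D(h)|\le Gh$, and
\[
\frac{|D(2h)-2D(h)+D(0)|}{h^2}\;\ge\;\frac{(J-2Gh)/2-Gh}{h^2}\;=\;\frac{J}{2h^2}-\frac{2G}{h}\;\ge\;\frac{J}{8h^2}\quad\text{for } h\le J/(16G).
\]
The second case is the mirror image, centred at $-h$.
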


Claim~(1) bounds a supremum and not a particular run: certifying such a limit at a
given iterate requires knowing the iterate escapes the lower bound, which no scalar
oracle reports.

\begin{remark}[Fr\'echet and limiting subdifferentials at a jump]\label{rem:jump-subdifferentials}
These two remain defined where the Clarke and conservative objects do not, and they
fail in opposite ways: the Fr\'echet subdifferential can be \emph{empty}, leaving
nothing to set to zero, while the limiting subdifferential declares the jump point
stationary (worked example in Appendix~\ref{app:theory-remarks}).
\end{remark}

The constants are dimension-free, and curvature is free below the reach: the constant
$1/2$ is unchanged for curved $S$ once $h \leq \tau/2$, and is tight in one
dimension. Each of the remaining hypotheses is sharp, with an attaining witness in
Appendices~\ref{app:proof-smoothing-blowup} and~\ref{app:theory-remarks}.

Since $h \to 0$ is unavailable, stationarity can only be asked at a fixed positive
resolution, which is what Section~\ref{sec:theory} analyzes and attains. That a
threshold system
needs a fixed positive noise scale to transmit signal at all is established outside
optimization, as dithering in quantization and suprathreshold stochastic resonance in
threshold arrays \citep{gray1993dithered,gammaitoni1995dithering,stocks2000suprathreshold};
Theorem~\ref{thm:smoothing-blowup} is its optimization statement.
\section{Algorithm}\label{sec:algorithm}

Given a model $f_\theta$ with parameters $\theta \in \R^d$ and a loss $\Ls$, the optimizer
updates parameters from forward-pass evaluations $\theta \mapsto \Ls(f_\theta)$ alone.

\subsection{Optimal Transport Preliminaries}\label{sec:bg-ot}

Optimal transport supplies the assignment rule, and a single constraint separates its
full form from the plain softmax the reported runs use.

Given a supply $\va \in \Delta_P$ over $P$ sources, a demand $\vb \in \Delta_V$ over
$V$ targets, and a cost $C_{iv}$ of moving one unit from source $i$ to target $v$, the
Kantorovich problem seeks the cheapest \emph{transport plan}: a non-negative $\mT$
minimizing $\langle \mC, \mT\rangle$ under a row constraint $\mT\bm{1} = \va$, each
source ships its supply, and a column constraint $\mT^\top\bm{1} = \vb$, each target
receives its demand. \citet{cuturi2013sinkhorn} add an entropic penalty
$\varepsilon\,\Ent(\mT)$, with $\Ent$ the convex negentropy of Eq.~\ref{eq:intro-kl}
below, which makes the solution unique and smooth in the cost and puts it in the form
$T_{iv} \propto e^{f_i/\varepsilon}e^{-C_{iv}/\varepsilon}e^{g_v/\varepsilon}$, the
\emph{dual potentials} $\vf$ and $\vg$ pricing the two constraints;
\citet{peyre2019computational} treat computational OT in full. Dropping the column
constraint, equivalently $\vg \equiv 0$, decouples the rows and leaves a plain softmax,
$T_{iv} = a_i\,\mathrm{softmax}(-C_{i:}/\varepsilon)_v$. The gap between softmax and
full OT is therefore exactly the column constraint.

Here the sources are optimizer particles, the targets are polytope vertices, and the
cost is the loss at the corresponding probe point, so a transport plan decides which
search directions each particle commits to.

The Sinkhorn Step \citep{le2023sinkhorn} is the work we build on directly: it
proved that the barycentric projection of the
OT plan over polytope probes is a descent direction for smooth objectives, and
demonstrated it on low-dimensional motion planning. The objectives here are not
smooth, which forces the
stationarity analysis of Section~\ref{sec:theory} and rules out the descent argument
that carried the original result, and the dimension is that of a neural network
rather than a trajectory, which forces the multi-particle decomposition, the
per-layer subspace compression, and the batched evaluation below.

\subsection{Setup and Notation}\label{sec:formulation}

We minimize $\Ls(\theta)$ using only function evaluations, requiring nothing of $\Ls$
beyond scalar evaluability: the algorithm never forms a difference quotient, so the
loss need not be differentiable, continuous, or Lipschitz.
Two conditions define the \emph{loss class} the analysis is stated on: $\Ls$ is
bounded and measurable with a definable discontinuity set of codimension at least one
(Definition~\ref{def:pwc-loss}). Every architecture we train satisfies them after
localization to a compact neighborhood of the iterates, which is required and not
cosmetic, since cross-entropy is unbounded and the integer level sets of
$\mathrm{floor}(\cdot)$ are definable in no o-minimal structure
(Appendix~\ref{app:discontinuity-sets}). The convergence \emph{rate} asks for more,
the eight standing hypotheses of Assumption~\ref{asm:main}, whose per-experiment
coverage Section~\ref{sec:hypothesis-coverage} states.
The loss class is strictly weaker than what zeroth-order gradient methods assume
(smoothness, for finite-difference estimates) or STE/surrogate approaches
require (a differentiable proxy).
The parameters are reshaped into a \emph{particle matrix} $\mX \in \R^{P \times d_p}$, where $P = \lceil d_{\mathrm{sub}} / d_p \rceil$ is the number of particles, $d_p \in \{2, 4, 8\}$ is the particle dimension, and $d_{\mathrm{sub}}$ is the searched dimension, equal to $d$ in full-space mode (Section~\ref{sec:subspace}).
Each row $x_i$ of $\mX$ is a particle (a contiguous slice of the parameter vector), and the optimizer updates all particles simultaneously.

Given a cost matrix $\mC \in \R^{P \times V}$ scoring each (particle, polytope vertex) pair, the central
primitive is a temperature-controlled soft assignment from particles to vertices. The
assignment builds a barycentric step (Eq.~\ref{eq:bary-proj}) that displaces each particle toward low-cost
vertices.
\sysname{} provides two solvers, related by a strict generalization
(Remark~\ref{rem:ot-generalization}):
\begin{itemize}[nosep,leftmargin=1em]
\item \textbf{Softmax (default in all reported experiments).} Each particle $i$ assigns weights
to vertices independently:
\begin{equation}\label{eq:softmax-rule}
T^*_{iv} \;=\; a_i \cdot \mathrm{softmax}\!\left(-C_{i:}/\varepsilon\right)_v, \qquad a_i = 1/P,
\end{equation}
yielding row sums $\sum_v T^*_{iv} = a_i$ and a closed-form, single-pass update.
\item \textbf{Entropic optimal transport (the generalization).} The two
solvers are endpoints of the KL-penalized transport program
\begin{equation}\label{eq:intro-kl}
\min_{\mP \geq 0}\; \langle \mC, \mP\rangle + \varepsilon\,\Ent(\mP)
+ \lambda\,\KL(\mP^\top\bm{1} \,\|\, \vb)
\quad\text{s.t.}\quad \mP\bm{1} = \va ,
\end{equation}
with $\Ent(\mP) := \sum_{ij} P_{ij}(\log P_{ij} - 1)$ the (convex) negentropy and
$\KL(\vp \,\|\, \vq) := \sum_j p_j \log(p_j/q_j) - p_j + q_j$ the generalized
Kullback--Leibler divergence, and the fully constrained
entropic OT problem is its other endpoint,
\begin{equation}\label{eq:ot-objective}
\min_{\mT \geq 0}\, \langle \mC, \mT \rangle + \varepsilon \cdot \KL(\mT \| \va \otimes \vb),
\quad \text{s.t.}~\mT\bm{1} = \va,~\mT^\top\bm{1} = \vb,
\end{equation}
where $\va \in \R^P$ and $\vb \in \R^V$ are uniform marginals with $a_i = 1/P$ and $b_v = 1/V$,
and $\varepsilon > 0$ is the entropic regularization parameter~\citep{cuturi2013sinkhorn,
peyre2019computational}.
Eq.~\ref{eq:ot-objective}'s hard column constraint is the $\lambda \to \infty$
endpoint of Eq.~\ref{eq:intro-kl}, and the softmax the deployed runs use is the
$\lambda = 0$ endpoint, by a closed-form identity~\citep{litman2025attention}.
Giving up the column constraint costs a theorem
(Section~\ref{sec:rule-impossibility}) and, in every subspace configuration reported
here, nothing measurable: the column-marginal violation sits at the numerical floor,
${\sim}3\times10^{-7}$, so the two solvers agree to that tolerance rather than by fiat
(Section~\ref{sec:ot-binding}).
\end{itemize}
The parameter $\varepsilon$ sets both the exploration radius (via $r_s\varepsilon$
and $r_p\varepsilon$) and the soft-assignment sharpness~\citep{le2023sinkhorn},
effectively smoothing the loss at radius $\delta = \Theta(r_p\varepsilon)$,
proportional to but distinct from $\varepsilon$: several statements in
Section~\ref{sec:theory} are true for one and false for the other. It is held flat,
the schedule Section~\ref{sec:problem} certifies and the sweep confirms; that sweep
also ablates the two solvers (Section~\ref{sec:ablation}).

Notation for the algorithm; Section~\ref{sec:theory} adds the constants its
hypotheses need, beside Assumption~\ref{asm:main}.

\begin{center}
\small
\begin{tabular}{@{}ll@{}}
\toprule
$P$ & number of particles, $P = \lceil d_{\mathrm{sub}} / d_p \rceil$ \\
$d_p$ & particle dimension (default 8 in subspace mode, 2 in full-space mode) \\
$V$ & polytope vertices: $d_p+1$ simplex (default in every tuned run), $2d_p$ orthoplex (theory mode) \\
$K$ & probe points per vertex (default 1) \\
$\varepsilon$ & entropic temperature, which also scales the exploration radius \\
$\delta$ & effective smoothing radius, $\delta = \Theta(r_p\varepsilon)$, the surrogate scale $\bar r_p\varepsilon$ \\
 & of Eq.~\ref{eq:surrogate-def}; never the temperature \\
$\delta_{\mathrm{out}}$ & outer probe reach, $r_p \xi_K (1+\eta_{\max})\varepsilon$ \\
$\lambda$ & weight of the KL column-marginal penalty of Eq.~\ref{eq:intro-kl} \\
$\xi_k$ & probe fractions along a vertex direction (Section~\ref{sec:core-step}) \\
$r_s$, $r_p$ & step radius and probe radius multipliers \\
$c$ & step constant, $c = r_p/(2 d_p)$ at $\E[\eta] = 0$ (Theorem~\ref{thm:exact-step}) \\
$\Pi$, $\mathcal{S}$ & subspace projection with orthonormal columns, $\Pi^\top\Pi = I$ (the per-layer $P_l$ of \\
 & Section~\ref{sec:subspace} stacked), and its range $\mathcal{S}$, onto which $\Pi\Pi^\top$ is the orthogonal projection \\
$d_{\mathrm{sub}}$ & effective optimization dimension: $d$ in full-space mode; in HybridSubspace \\
 & $\sum_l (d_{\mathrm{out},l} + d_{\mathrm{in},l})\, r_l$, $r_l = \min(r, d_{\mathrm{out},l}, d_{\mathrm{in},l})$ per-layer rank, plus biases at full size \\
\bottomrule
\end{tabular}
\end{center}
Each step costs $P \times V \times K$ forward-pass evaluations: $P(d_p+1)K$ for the simplex ($V = d_p+1$), the default, and $2Pd_pK$ for the orthoplex ($V = 2d_p$). At $K{=}1$ and $d_p \mid d_{\mathrm{sub}}$ the simplex count reads $\tfrac{d_p+1}{d_p} d_{\mathrm{sub}}$. Otherwise $d_{\mathrm{sub}}$ is padded up to $P d_p$ and the reconstruction discards the pad coordinates. The loss does not depend on those coordinates, so their gradient is zero and every statement below reads $P$ blocks of dimension $d_p$ with condition~(iii) asked of the $d_{\mathrm{sub}}$ columns that act.

\subsection{Core Optimization Step}\label{sec:core-step}

Each optimization step runs the five sub-steps below; Algorithm~\ref{alg:polystep-step} summarizes the procedure and Figure~\ref{fig:method-overview} illustrates it geometrically.

\paragraph{Step 1: Polytope sampling with random rotations.}
For each particle $i$ we sample a rotation $R_i \sim \mathrm{Uniform}(\mathrm{SO}(d_p))$ via the Mezzadri QR method~\citep{mezzadri2007generate}, or an exact analytical rotation for $d_p = 2$.
Three polytope templates define the vertex set $\mathcal{V}$:
\begin{itemize}[nosep]
  \item \textbf{Simplex}: $d_p + 1$ vertices. Default in every tuned run; theory
  mode runs the orthoplex (Section~\ref{sec:theory-mode}).
  \item \textbf{Orthoplex} (cross-polytope): $2 d_p$ vertices $\{\pm e_j\}_{j=1}^{d_p}$.
  \item \textbf{Cube} (hypercube): $2^{d_p}$ vertices. Exponential in $d_p$; recommended only for $d_p \leq 4$.
\end{itemize}
The first two are centered frames of unit vertices, the class the analysis admits
(condition~(i) of Assumption~\ref{asm:main}), and both carry the same
step constant (Theorem~\ref{thm:exact-step}). They differ
in cost: the simplex spends $(d_p{+}1)/(2d_p)$ as many evaluations per step, a
$1.78\times$ saving in wall-clock and memory at $d_p{=}8$. We default to the
simplex. At matched
\emph{steps} (the axis every table in this paper uses), the orthoplex probes more
directions and scores higher (Section~\ref{sec:ablation}), so our reported numbers use the
cheaper polytope, not the stronger one.
The rotated step vertices for particle $i$ are:
\begin{equation}\label{eq:step-vertices}
v_{i,v} = x_i + r_s \varepsilon \, R_i \, v_v, \quad v_v \in \mathcal{V},~v = 1,\ldots,V.
\end{equation}

\paragraph{Step 2: Probe generation.}
For each particle--vertex pair $(i, v)$, we generate $K$ probe points along the rotated direction $R_i v_v$, scaled by an independent radius jitter $\eta_t \sim q$, with $q$ a density supported in $(-\eta_{\max}, \eta_{\max})$, $\eta_{\max} \in [0,1)$:
\begin{equation}\label{eq:probe-points}
p_{i,v,k} = x_i + r_p (1+\eta_t) \varepsilon \, R_i \, v_v \, \xi_k, \quad \xi_k = \frac{k}{K+1},~k = 1, \ldots, K.
\end{equation}
Probes sample the loss at intermediate distances along the step-vertex directions (Eq.~\ref{eq:step-vertices}), at an independently controlled radius $r_p (1+\eta_t) \varepsilon$ rather than at fractions of the step vertex distance $r_s \varepsilon$.
When $K = 1$, each probe sits at half the (jittered) probe radius:
$p = x_i + \tfrac{1}{2} r_p (1+\eta_t) \varepsilon \, R_i \, v_v$.
The probe jitter $\eta_t$ is a single draw per step, shared across particles and
probes; only its marginal law enters Theorem~\ref{thm:exact-step}. The step jitter
$\eta_i$ of Eq.~\ref{eq:step-jitter} is drawn per particle.
Condition~(iv) of Assumption~\ref{asm:main}, the surrogate's regularity,
asks for $q \in W^{2,1}$ (a mollifier qualifies, a uniform density does not); the
requirement is not on the exact identity of Theorem~\ref{thm:exact-step}, which
holds for any radius law, jitter or none, and is what makes the surrogate
differentiable everywhere rather than almost everywhere
(Section~\ref{sec:thm-piecewise-smooth}).

\paragraph{Step-radius jitter.}
The same device applies to the \emph{step}, independently and per particle:
\begin{equation}\label{eq:step-jitter}
x_i^{\mathrm{new}} = x_i + (1 + \eta_i)\, r_s \varepsilon \,
\frac{1}{a_i}\sum_v T^*_{iv} R_i v_v ,
\qquad \eta_i \overset{\text{iid}}{\sim} q ,
\end{equation}
where $q$ is the same mollifier and $1/a_i$ is the row renormalization of
Eq.~\ref{eq:bary-proj} (the implementation divides by the realized row sum).
Probe jitter randomizes \emph{where the cost row is measured}, step jitter
\emph{how far the iterate moves}, giving the one-step law a density on an annulus
rather than a sphere (Corollary~\ref{cor:piecewise-constant}). Both jitters are
off in the tuned configurations ($\eta_{\max} = 0$), which fail the schedule
condition either way (Table~\ref{tab:hypothesis-coverage}), and on in theory mode
($\eta_{\max} = 0.05$; Section~\ref{sec:theory-mode}); jitter disables the
amortization heuristics of Appendix~\ref{app:turbo-mode}, which would read its
noise as non-monotone progress.

\paragraph{Step 3: Cost matrix construction.}
All probe points of a step are evaluated in one batched forward pass (Appendix~\ref{app:system}).
The cost entry for particle $i$ and vertex $v$ averages the probe losses:
\begin{equation}\label{eq:cost-matrix}
C_{iv} = \frac{1}{K} \sum_{k=1}^{K} \Ls\!\left(f_{\theta(p_{i,v,k})}\right).
\end{equation}
The cost matrix $\mC \in \R^{P \times V}$ is particle-by-vertex, \emph{not} $P \times P$: the OT problem transports mass from particles to vertices, not between particles, making the Sinkhorn solve $O(P \cdot V)$ rather than $O(P^2)$.

\paragraph{Computational model.}
Each \sysname{} step evaluates $P \times V \times K$ candidate parameter configurations in a single batched forward pass, so per-step cost is dominated by one batched inference call, the operation modern GPUs and inference-optimized accelerators are designed to maximize (Section~\ref{sec:discussion}).
Unlike ES methods, which perturb parameters with independent noise samples and collect scalar returns, \sysname{}'s polytope geometry produces a structured batch of candidates evaluated together.

\paragraph{Step 4: Soft-assignment solve.}
\textbf{Default (softmax).} A single pass gives the transport plan (Eq.~\ref{eq:softmax-rule}):
$T^*_{iv} = (1/P) \cdot \mathrm{softmax}(-C_{i:}/\varepsilon)_v$.
This is one fused softmax kernel per step (no iterations, no dual potentials), and the
solver used throughout Section~\ref{sec:experiments}.
\textbf{OT generalization.} Replacing it by the entropic solve turns on the column
potential $g$, the only way a particle frozen on a plateau moves at all
(Remark~\ref{rem:ot-generalization}, Proposition~\ref{prop:ghost-drift}). Log-domain
Sinkhorn updates and warm starting: Appendix~\ref{app:sinkhorn-details}.

\paragraph{Step 5: Barycentric projection.}
Each particle moves to the barycenter of its vertices under the soft assignment (the softmax weights in every reported run, the transport plan at $\lambda > 0$):
\begin{equation}\label{eq:bary-proj}
x_i^{(t+1)} = \frac{1}{a_i}\sum_{v=1}^{V} T^*_{iv} \cdot v_{i,v}.
\end{equation}
With step-radius jitter this is Eq.~\ref{eq:step-jitter}; Eq.~\ref{eq:bary-proj} is
the $\eta_{\max} = 0$ case the tuned configurations run.
The updated particle matrix $\mX^{\mathrm{new}}$ maps back to the model's parameter dictionary via the inverse of the flatten operation.

\begin{figure*}[t]
  \centering
  \setlength{\unitlength}{\linewidth}%
  \setlength{\fboxsep}{1pt}%
  \begin{picture}(1,0.598)
    \put(0,0){\includegraphics[width=\linewidth]{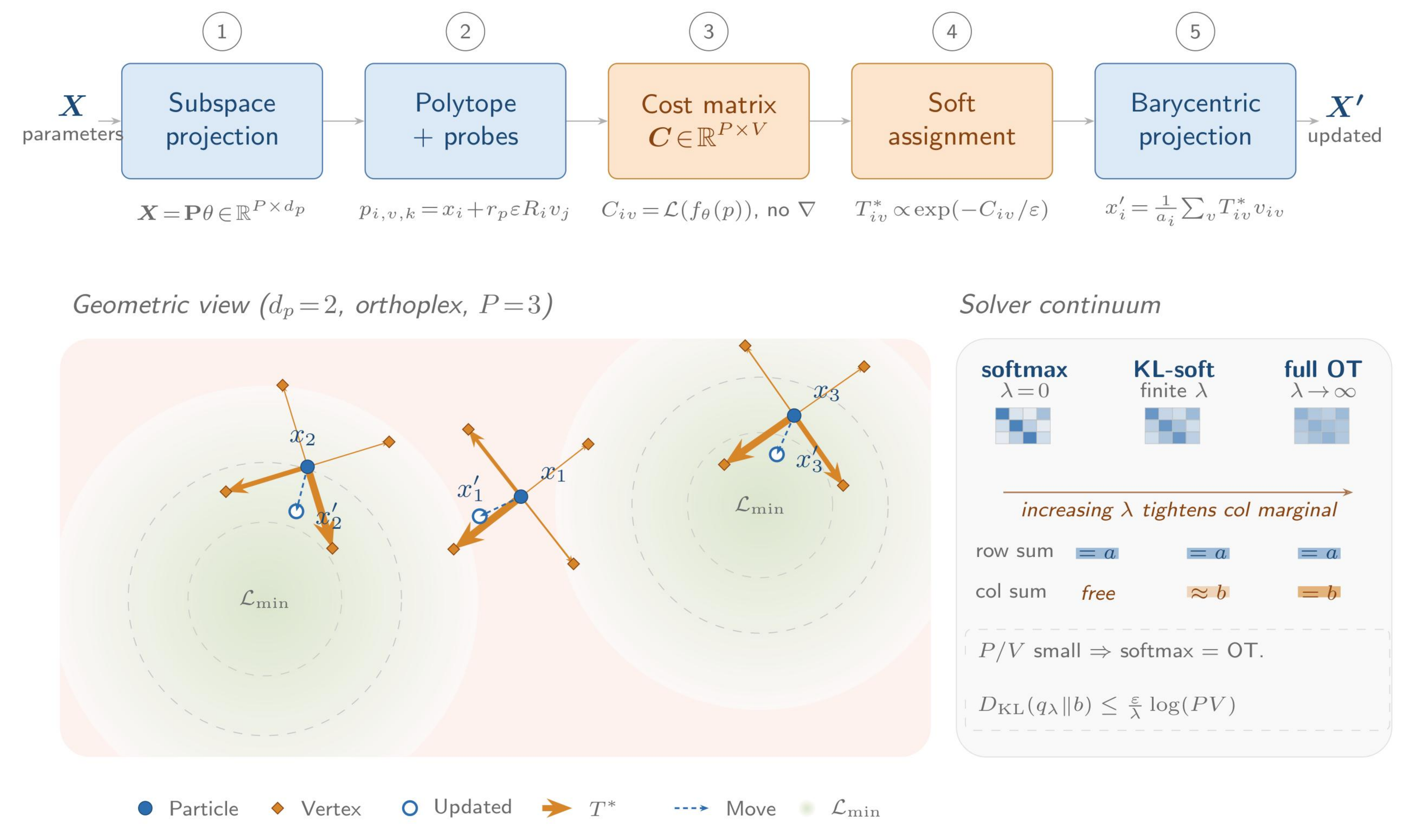}}
    \put(0.086,0.432){\colorbox{white}{\makebox(0.145,0.027){\scriptsize $\theta \mapsto \mX \in \R^{P\times d_p}$}}}
    \put(0.244,0.432){\colorbox{white}{\makebox(0.168,0.027){\resizebox{0.148\unitlength}{!}{$p_{i,v,k} = x_i + r_p(1{+}\eta_t)\varepsilon\,\xi_k R_i v_v$}}}}
    \put(0.660,0.124){\colorbox{white}{\makebox(0.320,0.024){\scalebox{0.75}{\scriptsize col.\ sum slack $\Rightarrow$ softmax $=$ OT (measured)}}}}
    \put(0.660,0.086){\colorbox{white}{\makebox(0.320,0.024){\scalebox{0.75}{\scriptsize $\KL(\vq_\lambda \| \vb)$ non-increasing, $\to 0$ as $\lambda \to \infty$}}}}
  \end{picture}
  \caption{One \sysname{} optimization step. \textbf{Top:} forward-only pipeline.
  Parameters $\theta$ are compressed into $P$ particles via subspace projection
  (1), polytope vertices sampled around each particle under random
  rotations (2), the cost matrix evaluated by forward passes only (3), the
  soft-assignment $T^*$ computed by softmax ($\lambda\!=\!0$) or
  entropic OT ($\lambda\!\to\!\infty$) (4), and particles updated via
  barycentric projection (5). \textbf{Bottom-left:} geometric view in two
  dimensions with three particles (orthoplex, $d_p\!=\!2$); arrow width is
  proportional to $T^*_{iv}$, and each particle steps toward the
  lowest-cost vertex inside its polytope, on a non-convex objective
  with two local minima. \textbf{Bottom-right:} the solver continuum of
  Theorem~\ref{thm:kl-softmax-endpoints}, along which increasing $\lambda$
  tightens the column marginal toward $\mathbf{b}$.}
  \label{fig:method-overview}
\end{figure*}

\begin{algorithm}[t]
\caption{\sysname{} optimization step (softmax solver, default in all reported experiments).}
\label{alg:polystep-step}
\KwIn{Model $f_\theta$, loss $\Ls$, temperature $\varepsilon$, radii $r_s$, $r_p$, probes $K$,
particles $P$ of dimension $d_p$, centered unit vertices $\{v_1,\ldots,v_V\} = \mathrm{Polytope}(d_p)$,
jitter density $q$ on $(-\eta_{\max}, \eta_{\max})$, row masses $a_i = 1/P$}
\KwOut{Updated $\theta'$, transport cost}
$\mX \leftarrow \mathrm{params\_to\_particles}(\theta)$ \tcp*{$\mX \in \R^{P \times d_p}$}
Sample probe jitter $\eta_t \sim q$ \tcp*{one draw per step, shared by all particles and probes}
\For{each particle $i = 1, \ldots, P$}{
  Sample $R_i \sim \mathrm{Uniform}(\mathrm{SO}(d_p))$ and step jitter $\eta_i \sim q$ independently\;
  \For{vertex $v = 1, \ldots, V$, probe $k = 1, \ldots, K$}{
    $p_{i,v,k} \leftarrow x_i + r_p (1+\eta_t) \varepsilon \, R_i\, v_v\,\xi_k$ \tcp*{radius jitter, condition (iv) of Assumption~\ref{asm:main}}
  }
}
$C_{iv} \leftarrow \frac{1}{K}\sum_k \Ls(f_{\theta(p_{i,v,k})})$ \tcp*{one batched forward pass; $\mC \in \R^{P \times V}$}
$T^*_{iv} \leftarrow (1/P)\, \mathrm{softmax}(-C_{i:}/\varepsilon)_v$ \tcp*{single fused kernel; default solver}
$x_i^{\mathrm{new}} \leftarrow x_i + (1+\eta_i)\, r_s \varepsilon \frac{1}{a_i}\sum_v T^*_{iv}\, R_i v_v$ \tcp*{barycentric projection, Eq.~\ref{eq:step-jitter}}
$\theta' \leftarrow \mathrm{particles\_to\_params}(\mX^{\mathrm{new}})$\;
\Return $\theta'$, $\sum_{iv} C_{iv}\, T^*_{iv}$
\end{algorithm}

\begin{remark}[Generalization to entropic optimal transport]\label{rem:ot-generalization}
Algorithm~\ref{alg:polystep-step} produces every primary-table number here. For the
generalization, replace the softmax line with the entropic-OT solve
$\mT^* \leftarrow \mathrm{Sinkhorn}(\mC, \varepsilon, \varpi, f_{\mathrm{prev}}, g_{\mathrm{prev}})$
of Eq.~\ref{eq:ot-objective}, at overrelaxation $\varpi \in [0.5, 1.95]$ and with the
dual potentials warm-started from the previous iteration. That path adds the column
constraint $\mT^\top\bm{1} = \vb$, non-binding in the subspace regime
(Section~\ref{sec:ablation}).
\end{remark}

\subsection{Subspace Compression}\label{sec:subspace}

In full-space mode, $P$ grows linearly with model size, making the cost matrix $\mC \in \R^{P \times V}$ and the Sinkhorn solve expensive.
Subspace compression decouples $P$ from $d$ by projecting parameters into a fixed-size subspace before running Algorithm~\ref{alg:polystep-step}.
Random subspace projections work well for zeroth-order optimization at scale: SubZero~\citep{subzero2024} uses layer-wise low-rank perturbations for LLM fine-tuning, and MeZO~\citep{malladi2023mezo} operates in the full parameter space with in-place perturbations.
\sysname{} supports four projection modes.

Two modes carry every reported run: \emph{full-space}, the identity projection at
$d_{\mathrm{sub}} = d$ and $P = \lceil d/d_p\rceil$, used for MAX-SAT and RL; and
\emph{HybridSubspace}, one fixed per-layer projection each, used everywhere else and
recommended. Two further modes, a single global random projection and a
displacement-biased adaptive one, are measured in the ablation and used by no reported
run (Section~\ref{sec:ablation}).

\paragraph{HybridSubspace.}
Each weight layer $l$ with shape $(d_{\mathrm{out}}, d_{\mathrm{in}})$ receives an independent dense projection matrix $P_l \in \R^{d_l \times n_l}$, the $Q$ factor of a thin QR decomposition of a matrix with i.i.d.\ $\mathcal{N}(0,1)$ entries. Its columns are therefore orthonormal ($P_l^\top P_l = I$, condition~(iii) of Assumption~\ref{asm:main}) and span a rotation-invariant random subspace, at the factored low-rank dimension $n_l = (d_{\mathrm{out}} + d_{\mathrm{in}})\, r_l$ and effective rank $r_l = \min(r, d_{\mathrm{out}}, d_{\mathrm{in}})$.
Biases stay at full dimension (identity projection, no compression).
Reconstruction maps $z_l \in \R^{n_l}$ back to the full layer: $\Delta\theta^{(l)} = P_l\, z_l$.
The total subspace dimension is $d_{\mathrm{sub}} = \sum_l n_l + \sum_b d_b$ (e.g., 8{,}538 for MNISTNet at rank~8); Algorithm~\ref{alg:polystep-step} then operates on $\mX \in \R^{P \times d_p}$.
Projections are fixed after initialization; rotation is disabled for best accuracy (Section~\ref{sec:ablation}).
The per-layer structure preserves layer geometry that a single global projection collapses, which explains the accuracy advantage over LinearSubspace.

Appendix~\ref{app:system} describes two further modes, blockwise OT (per-layer
Sinkhorn solves at $O(M^2/L)$ peak memory) and a sparse Johnson--Lindenstrauss
projection for models past 2M parameters; no primary run uses either.
\section{Theoretical Analysis}\label{sec:theory}

Section~\ref{sec:problem} fixed the target at the resolution the algorithm probes.
This section reaches it, after a second impossibility about the update rule rather
than the problem: every rule that reads one cost row at a time freezes on a plateau,
and within that family the transport plan's column constraint is what breaks the freeze.
Section~\ref{sec:thm-piecewise-smooth} then states the exact tail-kernel identity, the
rate to the target up to a bias floor within the search subspace, and the Goldstein
certificate at the sharp radius $\sqrt{P}\,\delta_{\mathrm{out}}$, with no step
assuming the probes avoid the discontinuity set.
Section~\ref{sec:hypothesis-coverage} states which hypotheses each configuration
satisfies, and the remaining subsections specialize the analysis to
piecewise-\emph{constant} losses, the schedule and the solver continuum.

Every statement concerns the row-wise softmax plan, the $\lambda = 0$ endpoint of
Theorem~\ref{thm:kl-softmax-endpoints} and the default configuration; where a result
changes at $\lambda > 0$ the replacement is proved. Proofs are in
Appendix~\ref{app:proofs}, the closed-form constants evaluated numerically in
Appendix~\ref{app:theory-numerics}.

\subsection{Plateau Freezing of Row-Wise Rules}\label{sec:rule-impossibility}

Theorem~\ref{thm:smoothing-blowup} bounds what any smoothing can deliver on the
\emph{problem}; this second impossibility is about the \emph{update-rule family}. On a
plateau wider than the probe reach every cost row is constant, and every rule that
reads one row at a time freezes at once:

\begin{theorem}[No row-wise rule leaves a plateau]\label{prop:plateau-freeze}
Let the polytope be centered, $\sum_v v_v = 0$ (orthoplex, cube and regular
simplex all qualify), and let the plan be \emph{any} rule $T_{i:} = F(\mC_{i:})$
reading particle $i$'s cost row alone and equivariant under relabeling the
vertices, $F(\mC_{i:}\circ\sigma) = F(\mC_{i:})\circ\sigma$.
If particle $i$'s cost row is
constant then equivariance forces uniform weights and its barycentric step is exactly zero,
\[
\sum_v T_{iv}\, R v_v \;=\; \frac{a_i}{V}\, R \sum_v v_v \;=\; 0 ,
\]
for every rotation $R$, every rotation law, and every temperature $\varepsilon$.
Consequently \sysname{} at $\lambda = 0$ cannot leave a level set whose intersection with
the probe annulus is the whole annulus, a plateau wider than the outer probe reach
$\delta_{\mathrm{out}}$.
Proof in Appendix~\ref{app:proof-plateau-freeze}.
\end{theorem}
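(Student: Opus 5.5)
The argument has two parts: a symmetry step showing that equivariance forces uniform weights on a constant row, and a geometric step showing that a wide enough plateau makes every cost row constant. For the first, fix particle $i$ with constant cost row $\mC_{i:} = c\,\bm{1}_V$. For any permutation $\sigma$ of the vertex labels, $\mC_{i:}\circ\sigma = \mC_{i:}$, so equivariance gives
\[
F(\mC_{i:}) = F(\mC_{i:}\circ\sigma) = F(\mC_{i:})\circ\sigma .
\]
Thus $T_{i:}$ is invariant under every permutation; taking transpositions gives $T_{iv} = T_{iw}$ for all $v,w$. The row-mass constraint $\sum_v T_{iv} = a_i$, which every plan in the family carries (the softmax rule of Eq.~\ref{eq:softmax-rule} by construction), then forces $T_{iv} = a_i/V$. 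The barycentric displacement in Eq.~\ref{eq:bary-proj} and Eq.~\ref{eq:step-jitter} is $r_s\varepsilon(1+\eta_i)\,a_i^{-1}\sum_v T_{iv} R v_v$. By linearity of $R$ and the centering hypothesis $\sum_v v_v = 0$, this equals $(1+\eta_i)\,r_s\varepsilon\,V^{-1} R\sum_v v_v = 0$. The argument uses no property of $R$, of its law, of $\varepsilon$, or of the jitter, which gives the claimed uniformity. If the family is meant to include rules without a fixed row mass, I would instead state the conclusion as $\sum_v T_{iv}R v_v = \bar T\, R\sum_v v_v = 0$ with $\bar T$ the common weight, so the row mass is never needed.

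For the centering claims, the orthoplex and the cube are symmetric under $v\mapsto -v$, so their vertices sum to zero. The regular simplex is centered at the origin by the template convention, condition~(i) of Assumption~\ref{asm:main}.

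For the consequence, I would show that the hypothesis makes every cost row constant. Every probe $p_{i,v,k}$ of Eq.~\ref{eq:probe-points} lies at distance $r_p(1+\eta_t)\varepsilon\,\xi_k \le \delta_{\mathrm{out}}$ from $x_i$ inside particle $i$'s block, so it lies in the probe annulus. If the level set $\{\Ls = c\}$ contains that whole annulus, every probe returns $c$. Each $C_{iv}$ in Eq.~\ref{eq:cost-matrix} is then the average of $K$ copies of $c$, so every row equals $c\,\bm{1}_V$. The softmax rule is row-wise and relabeling-equivariant, so the first part gives a zero step for every particle. The iterate is therefore unchanged, its next probe annulus is the same one, and induction on $t$ shows the iterate never leaves.

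The main obstacle I expect is bookkeeping, not mathematics. The annulus must be read in the full parameter space: when particle $i$ moves, the other blocks stay at the current iterate, so "the whole annulus" has to mean the union over particles of the block-wise probe sets. It must also be read after the subspace map $\Pi$, so that a probe in $\mathcal{S}$ lies in the level set as a point of $\R^d$. I would make this explicit by defining the annulus as the set of all $\theta + \Pi\,e_i\otimes u$ with $\|u\|\le\delta_{\mathrm{out}}$. The induction then goes through unchanged.
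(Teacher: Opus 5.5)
Your proof is correct and follows the paper's argument: a constant row is fixed by every relabeling, so equivariance applied to transpositions forces the uniform weights $a_i/V$, and centering then makes the barycentric sum vanish for every rotation and temperature. Your explicit treatment of the consequence fills in a step the paper's appendix leaves implicit. You read the annulus as the union of the block-wise probe sets after $\Pi$, so that every particle's row is constant and the iterate never moves, which holds as long as the probe reach does not grow between steps.
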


The row-wise softmax, the $\lambda = 0$ endpoint of
Theorem~\ref{thm:kl-softmax-endpoints}, is one instance; min-cost greedy and top-$k$
averaging with symmetric tie-breaking are others. Three apparent escapes fail, each for
its own reason: label-based tie-breaking evades the hypothesis but steps along one
rotated vertex, of mean zero under the rotation law; a biased rotation law does not
help, since the rotation factors out of the sum by linearity; and heavy-ball momentum
coasts on velocity stored before entry, a finite distance
(Appendix~\ref{app:proof-plateau-freeze}). What each lacks is the column potentials, and
the restriction to $\lambda = 0$ is not a technicality, since at $\lambda > 0$ the
conclusion is false.

\begin{proposition}[What replaces the freeze at $\lambda > 0$]\label{prop:ghost-drift}
Under the KL-penalized plan of Theorem~\ref{thm:kl-softmax-endpoints}
(Section~\ref{sec:thm-kl-softmax}) with
$\lambda > 0$, a constant cost row gives
\begin{equation}\label{eq:ghost-weights}
T_{iv} \;=\; a_i\,\mathrm{softmax}_v\!\big(g_v/\varepsilon\big) ,
\end{equation}
where $g$ are the column potentials. They are determined jointly by all rows, row $i$
included, since its constant row still ships $a_i\,\mathrm{softmax}(g/\varepsilon)$ into
the column marginal; what they do not carry is any dependence on $R_i$, provided row $i$
is constant \emph{for every rotation in the support} and not merely for the one
realized. The step of the frozen particle is then generically nonzero
pointwise, while under a Haar rotation law
$\E_R[\sum_v T_{iv} R v_v] = 0$: the freeze breaks pointwise and survives in
expectation. It breaks in expectation exactly when $\E[R]\sum_v w_v v_v \neq 0$, with
$w_v := \mathrm{softmax}_v(g_v/\varepsilon)$ the ghost weights of
Eq.~\ref{eq:ghost-weights}, a condition on the rotation law's first moment and not on
its being non-uniform: on
$\mathrm{SO}(2)$ the density $(1 + a\cos 2\vartheta)/2\pi$ is non-uniform for
$0 < |a| < 1$ yet has $\E[R] = 0$.
Proof in Appendix~\ref{app:proof-ghost-drift}.
\end{proposition}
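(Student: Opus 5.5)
The plan is to first derive the closed-form plan of the KL-penalized program Eq.~\ref{eq:intro-kl} at $\lambda>0$, then specialize it to a constant row, and finally average over the rotation. The Lagrangian carries only the row constraint $\mP\bm{1}=\va$ explicitly, with multiplier $f_i$. The column marginal $\vq=\mP^\top\bm{1}$ enters through $\lambda\,\KL(\vq\|\vb)$. Stationarity in $P_{iv}$ gives
\[
C_{iv}+\varepsilon\log P_{iv}+\lambda\log(q_v/b_v)-f_i=0,
\qquad\text{so}\qquad
P_{iv}\propto e^{f_i/\varepsilon}e^{-C_{iv}/\varepsilon}e^{g_v/\varepsilon},
\]
with $g_v:=-\lambda\log(q_v/b_v)$. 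This is the column potential of Theorem~\ref{thm:kl-softmax-endpoints}, which I take as given, together with existence and uniqueness from strict convexity of the entropic term. Imposing the row sum then yields $T_{iv}=a_i\,\mathrm{softmax}_v\big((g_v-C_{iv})/\varepsilon\big)$.

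If row $i$ is constant, $C_{iv}\equiv c_i$, the factor $e^{-c_i/\varepsilon}$ cancels in the softmax, which gives Eq.~\ref{eq:ghost-weights}. The potential $g$ is a fixed point of $g_v=-\lambda\log\big(\sum_j T_{jv}/b_v\big)$, so it depends on every row, including row $i$'s contribution $a_i\,\mathrm{softmax}(g/\varepsilon)$. The problem data are only $(\mC,\va,\vb,\varepsilon,\lambda)$. Assume row $i$'s entries are constant for every $R_i$ in the support, and note that the other rows depend on their own rotations, which are independent of $R_i$. Then $g$, and hence $w$, is a function of quantities independent of $R_i$. I would state this carefully as conditional independence of $w$ and $R_i$ given the other particles' randomness. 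The caveat in the statement is exactly what is needed here: if row $i$ were constant only at the realized $R_i$, the map $R_i\mapsto g$ would change across the support.

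For the pointwise claim, the step is $R_i\sum_v w_v v_v$. It vanishes iff $\sum_v w_v v_v=0$. Because the vertices of the centered polytopes are affinely spanning (a simplex) or form pairs $\pm e_j$, this happens only on a lower-dimensional set of weight vectors: uniform $w$ for the simplex, and the pairwise-equal condition $w_{+j}=w_{-j}$ for the orthoplex. So the step is nonzero for generic $g$; I would exhibit one non-uniform column load to confirm this. For the expectation, conditional independence gives
\[
\E\Big[\textstyle\sum_v T_{iv}R_iv_v\Big]=a_i\,\E\big[\E[R_i]\,\textstyle\sum_v w_vv_v\big].
\]
Under the Haar law, $\E[R]=0$: for $d_p\ge2$, $-I$ or a rotation by $\pi$ in a plane, combined with invariance, forces $\E[R]$ to commute with all of $\mathrm{SO}(d_p)$ and to be fixed by a sign flip. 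The expectation is therefore nonzero exactly when $\E[R]\sum_v w_vv_v\neq0$ (for deterministic $w$; otherwise inside the outer expectation). Finally, for the $\mathrm{SO}(2)$ example, write $R_\vartheta$ with entries $\cos\vartheta,\sin\vartheta$ and integrate against $(1+a\cos2\vartheta)/2\pi$. Since $\int\cos\vartheta\cos2\vartheta=\int\sin\vartheta\cos2\vartheta=0$ over a period, $\E[R]=0$, while the density is non-uniform for $0<|a|<1$.

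I expect the main obstacle to be the independence of $g$ from $R_i$. Making it rigorous requires the fixed-point characterization of $g$ to be unique, which follows from strict convexity in $\mP$. It also requires verifying that the only $R_i$-dependence enters through row $i$'s costs, which are constant across the support by hypothesis. Genericity of the nonzero pointwise step is secondary, and I would handle it by a direct example rather than a measure-theoretic statement.
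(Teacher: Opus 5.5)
Your proposal is correct and follows essentially the same route as the paper's proof. Both arguments use the KKT form $T_{iv}=\exp((f_i+g_v-C_{iv})/\varepsilon)$, cancel the constant row against the row constraint, observe that row $i$ enters the column condition only through $a_i\,\mathrm{softmax}(g/\varepsilon)$ so that $g$ carries no dependence on $R_i$, factor the step as $a_i R_i\sum_v w_v v_v$, and get $\E[R]=0$ from Haar invariance.

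Two small wording points. First, the simplex property you need is that its vertices carry no linear relation beyond $\sum_v v_v=0$, which is affine independence; being affinely spanning is not enough, since the orthoplex is affinely spanning too. Second, left invariance, $\E[R]=Q\,\E[R]$ for all $Q\in\mathrm{SO}(d_p)$, already forces $\E[R]=0$ directly, so the detour through the commutant is unnecessary.
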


The column penalty thus converts a deterministic freeze into a zero-mean random walk
on the plateau, subject to two qualifications: it needs a neighbor off the plateau,
since all rows constant makes $w$ uniform and the step exactly zero at every $\lambda$,
and, being driven by other particles' costs, the walk is not a descent direction.
Conditionally on that neighbor and on a ghost-weight floor and cap, all assumed rather
than derived, optional stopping bounds the expected exit time from a plateau of inradius
$W \geq r_s\varepsilon$ above and below by $\Theta(W^2/(r_s\varepsilon)^2)$, the floor
carrying the upper bound and the cap the lower, on the simplex, whose vertices
carry no linear relation beyond $\sum_v v_v = 0$; an antipodal frame is excluded,
the orthoplex weights $(0.4,0.4,0.1,0.1)$ on $(\pm e_1, \pm e_2)$ being non-uniform yet
summing to zero (Appendix~\ref{app:proof-ghost-drift}). Absent the floor, only the
separation is claimed: exactly zero at $\lambda = 0$, generically nonzero at
$\lambda > 0$. That separation is what justifies the transport framing, the column
constraint being the one ingredient a row-wise rule cannot supply.

\subsection{Convergence on Piecewise-Smooth Losses}\label{sec:thm-piecewise-smooth}

The step reads a cost row on an annulus and moves along the softmax-weighted vertex
directions. Two kernels are in play: the \emph{probe law}, which says where the loss
is sampled, and its radial tail, which the step differentiates; separating them makes the
identification below exact on a loss that jumps.

\subsubsection{The Exact Smoothed-Gradient Identity}

Write $\Delta_i := \max_v \mC_{iv} - \min_v \mC_{iv}$ for the spread of particle
$i$'s cost row, and $\Delta_i(t)$ for its realization at step $t$. Let $\kappa_0$ be
the absolute constant with
$|\mathrm{softmax}(z)_v - V^{-1}(1 + \tilde z_v)| \leq \kappa_0 V^{-1}\|\tilde z\|_\infty^2$
for $\|\tilde z\|_\infty \leq 1$, $\tilde z$ the centered row.

\begin{theorem}[Exact tail-kernel identity, and the softmax remainder]\label{thm:exact-step}
Fix particle $i$ and let $E_i$ be its probe plane. Let the displacement attached to
vertex $v$ be $u_v = \rho_v\,R v_v$ with $R \sim \mathrm{Uniform}(\mathrm{SO}(d_p))$
shared across vertices, $\|v_v\| = 1$ with $\sum_v v_v = 0$ (the centered frame the
remainder estimate consumes), and each radius $\rho_v$ independent of $R$ and
with marginal law the mixture the probes realize,
\begin{equation}\label{eq:mixture-radius}
q_{\mathrm{mix}}(\rho) \;=\; \frac{1}{K}\sum_{k=1}^{K}
\frac{1}{r_p\xi_k\varepsilon}\, q\!\Big(\frac{\rho}{r_p\xi_k\varepsilon} - 1\Big),
\qquad \xi_k = \frac{k}{K+1},
\end{equation}
$q$ being the radius-jitter density; the $\rho_v$ need not be independent of one
another, since the expectation below is linear and only the marginals enter. Write $\tilde p$ for the
radial profile of the induced density on $E_i$ and define the \emph{tail kernel} and
the surrogate
\begin{equation}\label{eq:surrogate-def}
m(u) := \int_{\|u\|}^{\infty}\!\!\tilde p(s)\,\mathrm{d}s,
\qquad
Z := \int m \;=\; \frac{\E\|u\|}{d_p} \;=\; \frac{\bar r_p\,\varepsilon}{d_p},
\qquad
\Ls_\delta := \frac{\Ls * m}{Z},
\end{equation}
with $\bar r_p := r_p\bar\xi\,\E[1+\eta]$, $\bar\xi := K^{-1}\sum_k \xi_k = 1/2$
and $\delta := \bar r_p \varepsilon$ its scale. Then for \emph{every bounded measurable}
$\Ls$, with no continuity assumed and no event conditioned on,
\begin{equation}\label{eq:stein-softmax-body}
\E\Big[\sum_v \Ls(\theta + \rho_v R v_v)\, R v_v\Big]
\;=\; V\,Z\,\nabla_{E_i} \Ls_\delta(\theta) ,
\end{equation}
and, writing $p_v(-\mC_{i:}/\varepsilon)$ for the $v$-th entry of
$\mathrm{softmax}(-\mC_{i:}/\varepsilon)$, the realized barycentric step satisfies
\begin{equation}\label{eq:step-error}
\E\Big[\sum_v p_v(-\mC_{i:}/\varepsilon)\, R v_v\Big]
\;=\; -\,c\,\nabla_{E_i}\Ls_\delta(\theta) \;+\; e_i,
\qquad
c \;=\; \frac{\bar r_p}{d_p} \;=\; \frac{r_p}{2 d_p}\ \text{ for }\E[\eta]=0 ,
\end{equation}
where the error is the softmax linearization remainder alone, and
\[
\|e_i\| \leq \kappa_0\,\E[(\Delta_i/\varepsilon)^2\,\mathbb{1}[\Delta_i \leq \varepsilon]]
+ \Pr[\Delta_i > \varepsilon] + \E[(\Delta_i/\varepsilon)\,\mathbb{1}[\Delta_i > \varepsilon]] ;
\]
unconditionally
$\|e_i\| \leq 1 + c\|\nabla_{E_i}\Ls_\delta\|_\infty$. Proof in Appendix~\ref{app:proof-piecewise-smooth}.
\end{theorem}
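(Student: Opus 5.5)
The plan has two parts. The identity of Eq.~\ref{eq:stein-softmax-body} is an integration-by-parts (Stein-type) identity on the probe plane $E_i$ in which the derivative falls entirely on the kernel and never on $\Ls$. The softmax statement of Eq.~\ref{eq:step-error} then follows by linearizing the softmax about the uniform weights and using the centering $\sum_v v_v = 0$ to kill the constant term.

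\textbf{Step 1 (law of one displacement).} Fix a vertex $v$. Since $R$ is Haar on $\mathrm{SO}(d_p)$ and $\|v_v\|=1$, the direction $Rv_v$ is uniform on the unit sphere of $E_i$. Since $\rho_v$ is independent of $R$ with law $q_{\mathrm{mix}}$, the displacement $u_v=\rho_v Rv_v$ has a rotation-invariant density $p(u)=\tilde p(\|u\|)$ on $E_i$, the same for every $v$. Only this marginal enters, because the left side of Eq.~\ref{eq:stein-softmax-body} is a sum of $V$ terms, each of which is an expectation over one $u_v$; this is why the $\rho_v$ may be dependent.

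\textbf{Step 2 (the kernel identity).} The summand weights the loss by the unit direction $u/\|u\|$, not by the displacement $u$. For the tail kernel $m(u)=\int_{\|u\|}^\infty \tilde p(s)\,\mathrm{d}s$ this gives, almost everywhere,
\[
\nabla m(u) = -\tilde p(\|u\|)\,\frac{u}{\|u\|} = -p(u)\,\frac{u}{\|u\|} .
\]
Also $\nabla m\in L^1$, with $\|\nabla m\|_1=\int p=1$. The function $m$ is radial and absolutely continuous, so it lies in $W^{1,1}$. It is also integrable: integrating by parts against the Euler field gives
\[
\int m = -\tfrac{1}{d_p}\int u\cdot\nabla m = \tfrac{1}{d_p}\int \|u\|\,p(u)\,\mathrm{d}u = \E\|u\|/d_p = Z .
\]
Substituting $\E\|u\|=\E[\rho]=r_p\varepsilon\,\bar\xi\,\E[1+\eta]$ then identifies $Z=\bar r_p\varepsilon/d_p$.

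\textbf{Step 3 (moving the derivative).} For $\Ls\in L^\infty$ and $m\in W^{1,1}$, the convolution $\Ls*m$ is $C^1$ with $\nabla(\Ls*m)=\Ls*\nabla m$. To justify this, I would take difference quotients in $\theta$, write them as $\Ls$ against difference quotients of the translates of $m$, and use continuity of translation in $L^1$. No regularity of $\Ls$ is used, and no event is conditioned on. Because $m$ is even, $\nabla m$ is odd, so
\[
\E\Big[\Ls(\theta+u)\,\frac{u}{\|u\|}\Big] = -\int \Ls(\theta+u)\,\nabla m(u)\,\mathrm{d}u = \nabla_{E_i}(\Ls*m)(\theta) = Z\,\nabla_{E_i}\Ls_\delta(\theta) .
\]
Summing over the $V$ vertices gives Eq.~\ref{eq:stein-softmax-body}.

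\textbf{Step 4 (softmax remainder).} Write $\tilde z_v=-(C_{iv}-\bar C_i)/\varepsilon$, where $\bar C_i$ is the row mean, and split
\[
p_v = V^{-1}(1+\tilde z_v) + r_v .
\]
The term $V^{-1}\sum_v Rv_v$ vanishes by centering. The mean $\bar C_i$ contributes only a multiple of $\sum_v Rv_v$, so it vanishes too. The linear term is therefore $-(V\varepsilon)^{-1}\sum_v C_{iv}Rv_v$. The cost $C_{iv}$ is the average of $\Ls$ over the $K$ probes, which is exactly the mixture law of Eq.~\ref{eq:mixture-radius}. Its full expectation, by Step 3, is $-(Z/\varepsilon)\nabla_{E_i}\Ls_\delta=-c\,\nabla_{E_i}\Ls_\delta$.

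This leaves $e_i=\E[\sum_v r_v Rv_v]$, where $r_v$ is the softmax weight minus its linearization. I bound it on the two events separately; the linear term's expectation is not split, which keeps the main term exact.
\begin{itemize}[nosep,leftmargin=1.2em]
\item On $\{\Delta_i\le\varepsilon\}$ we have $\|\tilde z\|_\infty\le \Delta_i/\varepsilon\le 1$. The defining bound of $\kappa_0$ then gives $|r_v|\le\kappa_0 V^{-1}(\Delta_i/\varepsilon)^2$, and summing against unit vectors gives $\kappa_0(\Delta_i/\varepsilon)^2$.
\item On $\{\Delta_i>\varepsilon\}$, the softmax barycenter has norm at most $1$. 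The linear part has norm at most $\Delta_i/\varepsilon$, since its constant term vanishes and $\|V^{-1}\sum_v \tilde z_v Rv_v\|\le\|\tilde z\|_\infty\le\Delta_i/\varepsilon$.
\end{itemize}
Together these give the stated three-term bound.

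The unconditional bound uses a different decomposition: $\|e_i\|$ is at most the norm of the expected softmax step, which is at most $1$, plus $c\|\nabla_{E_i}\Ls_\delta\|_\infty$.

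\textbf{Main obstacle.} I expect it to be Step 3: justifying differentiation of $\Ls*m$ for a merely bounded measurable, discontinuous $\Ls$. The first thing I would verify is that $q_{\mathrm{mix}}$ makes $\tilde p$ integrable with finite first moment, so that $m\in W^{1,1}$. The argument then rests only on $L^\infty$–$L^1$ duality, and the one delicate check is that the singular radial factor $\|u\|^{1-d_p}$ near the origin is harmless because $\rho$ is bounded away from $0$.
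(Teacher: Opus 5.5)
Your proposal is correct and follows the paper's proof in Appendix~\ref{app:proof-piecewise-smooth} essentially step for step. The shared steps are the radial disintegration of the probe law, the formula $\nabla m = -\tilde p(\|u\|)\,u/\|u\|$ with $\nabla(\Ls*m) = \Ls*\nabla m$ from $m\in W^{1,1}$, the value of $Z$ by integration by parts (your Euler-field divergence is the paper's polar-coordinate computation), and the softmax linearization in which centering kills the constant term and the remainder is split on $\{\Delta_i\le\varepsilon\}$; your explicit remark that the $K$-probe average reproduces the mixture law by linearity spells out a step the paper leaves implicit.
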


The three terms of the $e_i$ bound split the expectation on whether the linearization
covers the draw. On draws whose probes stay on $C^{2,1}$ pieces the bound is $O(r_p^2)$
pointwise in the rotation, and under Haar rotations the second-order part is odd in
$Rv$ and cancels, giving $\|e_i\| = O(r_p^3)$ where the smooth pieces are $C^{2,1}$
\emph{and the probes cross no wall of $\mathcal{D}$}; that is the exponent
Corollary~\ref{cor:optimal-rp} substitutes for the $\bar q$ form of
Eq.~\ref{eq:bias-floor}'s first term. Only $\|v_v\| = 1$ and $\sum_v v_v = 0$ are used,
so $c$ is the same for the orthoplex and the simplex.

\begin{corollary}[Exactness across the discontinuity set]\label{cor:two-kernels}
Eq.~\ref{eq:stein-softmax-body} identifies the expected step with the gradient of the
\emph{tail} smoothing $\Ls_\delta = (\Ls * m)/Z$, and conditions on no event: no probe
is asked to avoid $\mathcal{D}$. It does not identify the smoothing by the probe law,
which is a different functional even where $\Ls$ is smooth: on a plane wave the two
gradients differ by a Bessel factor that changes sign with the radius
(Appendix~\ref{app:proof-piecewise-smooth}). The Taylor argument that would relate them
needs the probe annulus inside one connected component of $\R^d \setminus \mathcal{D}$,
and on the INT8 network the joint kernel meets $\mathcal{D}$ at every step of every run,
so that argument is unavailable there as well.
\end{corollary}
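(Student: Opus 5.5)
The corollary has three assertions, and I would prove them in order. First, the expected step identifies the gradient of the tail smoothing $\Ls_\delta=(\Ls*m)/Z$ and conditions on no event. Second, this is not the gradient of the probe-law smoothing. Third, a Taylor bridge between the two needs the annulus inside one component of $\R^d\setminus\mathcal{D}$. The first is a rereading of Eq.~\ref{eq:stein-softmax-body} in Theorem~\ref{thm:exact-step}. I would make explicit \emph{why} no regularity of $\Ls$ enters, by rederiving the identity through a layer-cake representation rather than an integration by parts that moves a derivative onto $\Ls$.

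\textbf{Step 1 (the identity as a superposition of ball averages).} Write the tail kernel as $m(u)=\int_0^\infty \tilde p(s)\,\mathbb{1}[\|u\|<s]\,\mathrm{d}s$. By Fubini, which needs only $\Ls$ bounded and measurable,
\[
(\Ls*m)(\theta)=\int_0^\infty \tilde p(s)\,A_s(\theta)\,\mathrm{d}s,
\qquad A_s(\theta):=\int_{B_{E_i}(\theta,s)}\Ls .
\]
For $\Ls\in L^\infty$, each $A_s$ is Lipschitz in $\theta$. Translating the ball gives a surface integral of $\Ls$ against the outward normal, $\nabla A_s(\theta)=\int_{\partial B(\theta,s)}\Ls\,n\,\mathrm{d}\sigma$, for almost every $s$. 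That exceptional null set is harmless once we integrate against the density $\tilde p$, so $\nabla(\Ls*m)$ exists everywhere and equals $\int\tilde p(s)\int_{\partial B(\theta,s)}\Ls\,n$. The right-hand side is exactly $\E[\Ls(\theta+\rho Rv)\,Rv]$ for a single vertex, after disintegrating the planar density into radius times uniform direction; here the Haar law of $R$ makes each $Rv_v$ uniform on the sphere. Summing over the $V$ vertices and using linearity in the marginals gives Eq.~\ref{eq:stein-softmax-body}.

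Nowhere is $\Ls$ differentiated, and nowhere is the probe set asked to miss $\mathcal{D}$. The only regularity used is that of the radius law, through $\tilde p$, so the identity holds verbatim when probes straddle $S$. This is the ``conditions on no event'' clause. I expect this step to be the main obstacle: the almost-every-$s$ differentiability of $A_s$ for a merely measurable $\Ls$ must be handled by absolute continuity in $\theta$ along lines plus Fubini in $s$, not by a pointwise divergence theorem.

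\textbf{Step 2 (the probe-law smoothing is a different functional).} Take the plane wave $\Ls(\theta)=\cos(k\cdot\theta)$ restricted to $E_i$. Both smoothings act as Fourier multipliers, so each gradient is $-k\sin(k\cdot\theta)$ times a multiplier. For the probe law (radial profile on the annulus) the multiplier is its characteristic function. Conditionally on radius $\rho$ this is $\Gamma(d_p/2)\,(2/(\|k\|\rho))^{d_p/2-1}J_{d_p/2-1}(\|k\|\rho)$, which changes sign as $\rho$ (or $\|k\|$) varies. The tail kernel's multiplier is $\hat m(k)/Z$, and by Step 1 it equals a radial average of the derivative of that Bessel function, $\propto J_{d_p/2}(\|k\|\rho)/(\|k\|\rho)^{d_p/2}$, normalized. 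The two ratios are not proportional as functions of $\|k\|\rho$, for instance because their zeros interlace. So the gradients differ by a $k$-dependent factor that changes sign, even though $\Ls$ is smooth.

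\textbf{Step 3 (why no Taylor bridge exists here).} Relating $\nabla(\Ls*\tilde p)$ to $\nabla(\Ls*m)$ up to $O(\delta^2)$ requires expanding $\Ls$ to second order around $\theta$ uniformly over the support of the joint kernel. That in turn requires the support to lie in one connected component of $\R^d\setminus\mathcal{D}$, where Definition~\ref{def:pwc-loss}(a) gives $C^1$ with a uniform Lipschitz constant. Across $S$, the jump $J$ contributes an $O(J/\delta)$ term, by Theorem~\ref{thm:smoothing-blowup}, rather than an $O(\delta^2)$ term. The last sentence of the corollary is then an empirical premise, not derived, and I would discharge it by citing the measured crossing frequency on the INT8 network from Appendix~\ref{app:measured-jump}.
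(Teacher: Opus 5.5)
Your proposal is correct, and Step~1 reaches Eq.~\ref{eq:stein-softmax-body} by a different route from the paper's. The paper computes the weak gradient of the tail kernel once, $\nabla m(u) = -\tilde p(\|u\|)\,u/\|u\|$. It notes that $m \in W^{1,1}$ for any radius density, and differentiates $\Ls * m$ under the integral using $L^1$-continuity of translation. That gives $\E[\Ls(\theta+u)\,u/\|u\|] = -\int \Ls(\theta+u)\,\nabla m(u)\,\mathrm{d}u = \nabla(\Ls*m)(\theta)$ in one line. You instead write $m$ as a $\tilde p(s)\,\mathrm{d}s$-mixture of ball indicators and superpose Flaxman's ball identity over radii.

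Your version makes the name ``tail kernel'' and the normalization $Z = \E\|u\|/d_p$ transparent. It also shows exactly where the radius density is spent, namely in averaging out an exceptional null set of radii. The paper's version avoids that pointwise-in-$s$ issue entirely and feeds directly into the $W^{k,1}$ bookkeeping it later uses for $\Lambda_\delta$ and the third-derivative constants.

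In Step~2 you compare the two multipliers themselves in general $d_p$: the sphere through $J_{d_p/2-1}$ against the ball through $J_{d_p/2}$, with interlacing zeros. This matches the corollary's wording more closely than the paper's $d_p = 2$ evaluation of the probe-law factor $J_0$ at radii $\delta$ and $2\delta$.

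A few repairs are needed.

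\emph{The crux you flag.} Absolute continuity of $A_s$ along lines in $\theta$ plus Fubini, as you describe it, gives differentiability only at almost every base point. The identity is asserted at each fixed $\theta$. Argue at fixed $\theta$ instead:
\begin{itemize}
\item take Lebesgue points of $r \mapsto \Ls(\theta + r\omega)$ for almost every direction $\omega$;
\item apply Fubini in $(s,\omega)$;
\item pass to the limit by dominated convergence, using $|A_s(\theta+te) - A_s(\theta)| \le C\,\|\Ls\|_\infty\, s^{d_p-1}|t|$ with $C$ depending only on $d_p$.
\end{itemize}
Alternatively, keep the almost-everywhere statement and upgrade it. The right-hand side is continuous in $\theta$ once the radius law has a density. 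A Lipschitz function whose almost-everywhere gradient agrees with a continuous field is $C^1$ with that gradient.

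\emph{Step~3.} An $O(\delta^2)$ Taylor bridge needs $C^{2,1}$ pieces, not the regularity of Definition~\ref{def:pwc-loss}(a). Also, Theorem~\ref{thm:smoothing-blowup} lower-bounds each smoothed gradient separately. So it does not by itself show that the two gradients separate across $S$; the paper supports that separation only numerically.

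\emph{The INT8 premise.} The paper rests it on the wall geometry: pitch $10^{-2}$, nearest wall $6.3\times10^{-7}$ from the iterate, joint reach $\sqrt{P}\,\delta_{\mathrm{out}}$ (Step~2a of Appendix~\ref{app:proof-piecewise-smooth}). It does not rest on the jump-magnitude table of Appendix~\ref{app:measured-jump}, which is the source you cited.
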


The two surrogates agree to $O(r_p^2)$ where $\Ls$ is $C^{2,1}$ and separate next to a
jump, by $31\%$ at a base point within one probe radius of a unit jump
(Appendix~\ref{app:proof-piecewise-smooth}). At $K = 1$ with no jitter
Eq.~\ref{eq:stein-softmax-body} is the identity of
\citet[Lem.~2.1]{flaxman2005online}, read at the points where the surrogate is
differentiable, that kernel being an indicator; the tail construction carries it to the
radial mixture the probes realize, where a jitter density makes the surrogate
differentiable everywhere.

Two radii must be kept apart. Theorem~\ref{thm:smoothing-blowup}'s $h$ is the kernel's
\emph{support} radius, for \sysname{} the outer probe reach $\delta_{\mathrm{out}}$,
while from Eq.~\ref{eq:surrogate-def} on $\delta$ is the surrogate scale
$\bar r_p\varepsilon$, smaller by a constant factor and not interchangeable inside
constants. So $\Lambda_\delta$ is that theorem's $\Lambda_h$ for the kernel the
algorithm realizes, which lies in $W^{2,1}$, making the seminorm the classical
$\mathrm{Lip}(\nabla\Ls_\delta)$.

The remainder is observable during a run. At $\lambda = 0$ the transport row gives the
cost-row spread $\Delta_i/\varepsilon$ exactly, and the logged statistic
$\omega_i := V\max_v|T_{iv}/a_i - 1/V|$ brackets it under $\omega_i < 1$, which every
logged row here satisfies at or below $0.79$; at $\lambda > 0$ the solver's returned
potentials recover the spread exactly. Section~\ref{sec:hypothesis-coverage} gives it
measured rather than assumed.

Two facts, neither needed for Eq.~\ref{eq:stein-softmax-body}, show that the cost matrix
is generically read off the smooth pieces. For any projection with absolutely continuous
entries, $\mathcal{D}$ meets the probe plane in a set of dimension at most $d_p - 1$ at
almost every base point (Lemma~\ref{lem:generic-section}). The deployed projection is not
of that kind, and Lemma~\ref{lem:blockwise-section} gives the same conclusion for it
without transversality, through the cell polynomials cutting out $\mathcal{D}$ rather
than through $\theta_t \notin \mathcal{D}$, which does not suffice. The gap is measured:
those particle planes are transversal to $62.3\%$ of coordinate-aligned walls at best,
yet $74{,}240$ of $74{,}240$ sampled sections are proper
(Appendix~\ref{app:theory-numerics}).

\subsubsection{Convergence Rate and the Bias Floor}

\paragraph{The target.} Section~\ref{sec:problem} fixed it at a positive resolution;
here it splits in two tiers. The primary tier, defined for every bounded
measurable loss, is smallness of $\|\nabla_{\mathcal{S}}\Ls_\delta\|$ at the
resolution the algorithm probes (Theorem~\ref{thm:piecewise-smooth}). Where the loss
is locally Lipschitz ($J = 0$) the certificate upgrades to Goldstein
$(\delta_{\mathrm{joint}},\nu)$-stationarity within the search subspace
\citep{zhang2020complexity}, the established target for
non-smooth zeroth-order optimization
\citep{lin2022gradientfree,chen2023faster,kornowski2024algorithm}:
$\mathrm{dist}(0, \Pi^\top\partial_{\delta_{\mathrm{joint}}}\Ls(\theta)) \leq \nu$,
certified through the inclusion $\nabla\varphi_\delta(0) \in
\partial_{\delta_{\mathrm{joint}}}\varphi(0)
\subseteq \Pi^\top\partial_{\delta_{\mathrm{joint}}}\Ls(\theta)$
(Corollary~\ref{cor:goldstein}). On the $J > 0$ benchmarks the Goldstein
subdifferential is undefined rather than unproven (Remark~\ref{rem:definable}), so
there only the primary tier is available. The tolerance is written $\nu$;
$\varepsilon$ is the entropic temperature throughout.

\paragraph{The schedule.} By Theorem~\ref{thm:smoothing-blowup}(3), decaying $\varepsilon$ at fixed step radius
leaves a per-step error bound independent of $\varepsilon$ (telescoping accumulates
$O(T)$ against a descent sum $\sum_{t<T} r_s\varepsilon_t = o(T)$), so the descent
bound certifies no such schedule; the bound is one-sided, so this is
unavailability of the argument, not non-convergence. Holding $\varepsilon$ fixed and
decaying $r_s$ keeps $\Lambda_\delta$ finite and recovers a rate: the regime of
Theorem~\ref{thm:piecewise-smooth} and of theory mode, from which
Table~\ref{tab:hypothesis-coverage} states where each tuned configuration departs.
Section~\ref{sec:prop-fragility} quantifies the
destabilization the blow-up predicts under $\varepsilon$ decay, and the sweep of
Section~\ref{sec:ablation} measures it.

Write $A_i(t) := \{\Delta_i(t) > \varepsilon\}$ for the draws the softmax
linearization does not cover, and $(\mathcal{F}_t)$ for the natural filtration of the
iterates and the rotations, jitters and minibatches drawn through step $t-1$. The
joint kernel radius is
\begin{equation}\label{eq:joint-radius}
\delta_{\mathrm{joint}} \;:=\; \sqrt{P}\,\delta_{\mathrm{out}},
\qquad
\delta_{\mathrm{out}} := r_p\xi_K(1+\eta_{\max})\varepsilon ,
\end{equation}
$P$ blocks each perturbed out to the outer probe reach
$\delta_{\mathrm{out}}$. Write $\pi(t)$ for the smallest constant admissible in
Eq.~\ref{eq:crossing-fraction} at step $t$, reading $\pi(t) := 0$ when both sides vanish
and $\pi(t) := +\infty$ when the right side vanishes and the left does not, a
configuration condition~(viii) excludes. Write $\bar q$ for the smallest deterministic
constant with
$\E[(\Delta_i/\varepsilon)^2\,\mathbb{1}[A_i^c] \mid \mathcal{F}_t] \leq \bar q$
almost surely for all $i$ and $t$; the linearization share of the bias floor
(Eq.~\ref{eq:bias-floor}) is stated in it. A finite run measures its realized
counterpart (Section~\ref{sec:hypothesis-coverage}) and not the uniform conditional
constant, which stays a hypothesis.

\begin{assumption}[Standing hypotheses]\label{asm:main}
Run \sysname{} (Algorithm~\ref{alg:polystep-step}) with:
\begin{enumerate}[label=(\roman*),leftmargin=2.6em]
\item a \emph{centered frame of unit vertices}, $\sum_v v_v = 0$ and $\|v_v\| = 1$, at
particle dimension $d_p \geq 2$;
\item rotations $R_i \sim \mathrm{Uniform}(\mathrm{SO}(d_p))$ drawn independently each
step;
\item a projection with orthonormal columns, $\Pi^\top\Pi = I$;
\item probe- and step-radius jitter with a density $q \in W^{2,1}$ symmetric about
zero and supported in $[-\eta_{\max}, \eta_{\max}]$ for some $\eta_{\max} \in (0,1)$,
so that $\E[\eta] = 0$, the outer probe reach $\delta_{\mathrm{out}}$ is finite, and the
tail kernel of Eq.~\ref{eq:surrogate-def} has $m \in W^{3,1}$;
\item \emph{fixed} entropic temperature $\varepsilon > 0$ and step radius $r_{s,t}$
satisfying $\sum_t r_{s,t} = \infty$ and $\sum_t r_{s,t}^2 < \infty$;
\item iterates confined to a compact $\mathcal{K} \subset \R^d$ almost surely;
\item finite block-coupling constants, in two halves. (vii-K), kernel-side: a
cross-block second-derivative bound $\Lambda_\times$, the off-block-diagonal part of
$D^2\Ls_\delta$ in operator norm (Remark~\ref{rem:block-coupling}), finite under
condition~(iv) for any bounded loss. (vii-L), loss-side:
$\Lambda_3 := \sup_t \sup_i \|D^2_\perp \nabla_{E_i}(\Ls * m_i)\|/Z$ on
$\mathcal{K}_{\delta_{\mathrm{joint}}}$, the compact of condition~(vi) enlarged by the
joint kernel radius;
\item \emph{the softmax linearization covers the step}: the step-$t$ rotations,
jitters and minibatch are drawn mutually independently given $\mathcal{F}_t$; the
minibatch loss is jointly measurable in the parameter and the batch and bounded on
$\mathcal{K}_{\delta_{\mathrm{joint}}}$ by a deterministic constant, uniformly in $t$,
which is what lets Theorem~\ref{thm:exact-step} be read conditionally on the batch and
the two expectations be exchanged; and it is conditionally unbiased,
$\E[\Ls_t(\cdot) \mid \mathcal{F}_t] = \Ls(\cdot)$. Two quantities are asked to be
small. The \emph{saturated mass} satisfies
\begin{equation}\label{eq:crossing-fraction}
\Big\| \E\Big[\textstyle\sum_v \mC_{iv} R v_v\,\mathbb{1}_{A_i(t)} \,\Big|\, \mathcal{F}_t\Big] \Big\|
\;\leq\;
\bar\pi \, \Big\| \E\Big[\textstyle\sum_v \mC_{iv} R v_v \,\Big|\, \mathcal{F}_t\Big] \Big\| ,
\qquad \bar\pi < 1, \quad\text{for every } i, t, \text{ almost surely.}
\end{equation}
The \emph{saturation probability}
$\sigma(t) := P^{-1}\sum_i \Pr[A_i(t)\mid\mathcal{F}_t]$ enters additively and is
asked to be small only in the $r_{s,t}$-weighted sense: $\bar\sigma$ is any constant
with $\E\sum_{t<T} r_{s,t}\,\sigma(t) \leq \bar\sigma\, S_T$,
$S_T := \sum_{t<T} r_{s,t}$, for every horizon $T$.
\end{enumerate}
\end{assumption}

Condition~(i) is met by the orthoplex ($V = 2d_p$) and by the regular simplex
($V = d_p+1$), which carry the same step constant $c$, so the reported runs may use the
cheaper simplex; condition~(ii) makes $R_i v_v$ uniform on the sphere in
Eq.~\ref{eq:stein-softmax-body}; and condition~(iii) makes $\Pi\Pi^\top$ the orthogonal
projection onto $\mathcal{S}$, so every statement above is read on the restricted loss
$\varphi = \Ls(\theta_0 + \Pi\,\cdot)$, for which
Appendix~\ref{app:discontinuity-sets} verifies the attained-jump hypotheses architecture
by architecture.

Condition~(iv) is where the surrogate acquires its smoothness, from $m$ rather than from
$\Ls$: $\|\nabla\Ls_\delta\|_\infty = O(\|\Ls\|_\infty/\delta)$ and
$\Lambda_\delta := \mathrm{Lip}(\nabla\Ls_\delta) = O(\|\Ls\|_\infty\delta^{-2})$
already at $q \in W^{1,1}$ (Lemma~\ref{lem:smooth-kernel}). The condition asks one
derivative further, hence $m \in W^{3,1}$, which fixes the step constant $c$ and keeps
every kernel-side constant finite. A mollifier qualifies; a uniform jitter density does
not, its derivative being a pair of Dirac masses, and neither does no jitter, which
leaves $m$ a step function and $\Ls_\delta$ merely Lipschitz. Condition~(vii-L)'s
$\Lambda_3$ is the \emph{transverse} Hessian of the per-block smoothed field, its block
derivative falling on the kernel and its two transverse derivatives on the loss, so it is
a regularity hypothesis on the loss and not automatic for a bounded measurable one. In
condition~(viii) the filtration makes step-$t$ draws fresh relative to $\mathcal{F}_t$,
so Theorem~\ref{thm:exact-step} applies conditionally on the batch and averages to the
population surrogate.

Throughout Theorem~\ref{thm:piecewise-smooth} and its consequences
(Corollary~\ref{cor:goldstein},
Remark~\ref{rem:no-as-claim}), $\Ls_\delta$ denotes the \emph{joint} surrogate,
every block convolved with the tail kernel of Eq.~\ref{eq:surrogate-def}. The probe
planes decompose the subspace coordinates,
$\R^{d_{\mathrm{sub}}} = \bigoplus_i E_i$, so $m^{\otimes P}$ is a density there and
not on $\R^d$, and the surrogate and the gradient the theorem bounds are
\begin{equation}\label{eq:joint-surrogate}
\Ls_\delta(\theta) \;:=\; \int_{\R^{d_{\mathrm{sub}}}} \Ls(\theta + \Pi u)\,
\frac{m^{\otimes P}(u)}{Z^P}\,\mathrm{d}u ,
\qquad
\nabla_{\mathcal{S}}\Ls_\delta(\theta) \;:=\;
\Pi\,\nabla_u\big[\Ls_\delta(\theta + \Pi u)\big]_{u=0} .
\end{equation}
Read as an ambient object the kernel is the pushforward
$\Pi_\#(m^{\otimes P}/Z^P)$, singular on $\R^d$; the restricted convolution of
Eq.~\ref{eq:joint-surrogate} is against a density, which is what every argument below
uses. This is
the object whose gradient the concatenated step follows up to the block-coupling
discrepancy (Remark~\ref{rem:block-coupling}); the single-block surrogate of
Theorem~\ref{thm:exact-step} is written $(\Ls * m_i)/Z$ where the distinction
matters. The joint object's second derivative splits into its block diagonal, which
condition~(iv) bounds block by block with
$\Lambda_\delta = O(\|\Ls\|_\infty\delta^{-2})$, and the cross-block part that
condition~(vii-K)'s $\Lambda_\times$ bounds at $O(\|\Ls\|_\infty\delta^{-2})$ for any
bounded $\Ls$, so the proof's expansion constant $\Lambda_\delta + \Lambda_\times$ is
finite.

\begin{theorem}[Convergence rate to subspace stationarity of the smoothed loss]\label{thm:piecewise-smooth}
Let $\Ls$ be bounded and measurable and let Assumption~\ref{asm:main} hold. Piecewise
smoothness (Definition~\ref{def:pwc-loss}) is not consumed: what it supplies is the
regularity that makes condition~(vii-L) plausible, and that condition does not follow
from it. Then, with $\mathcal{S} = \mathrm{range}(\Pi)$,
\begin{equation}\label{eq:pws-rate}
\E\big[\min\nolimits_{t < T}
\| \nabla_{\mathcal{S}} \Ls_{\delta}(\theta_t) \|^2 \big]
\;\leq\; \frac{C}{(1-\bar\pi)\,S_T} + \frac{B(r_p, \varepsilon, \bar\sigma)}{(1-\bar\pi)^2} ,
\end{equation}
with $C = 2C'/(c\varepsilon)$ for
\[
C' \;:=\; \E\Ls_\delta(\theta_0) - \inf\Ls
\;+\; \tfrac12(\Lambda_\delta + \Lambda_\times)(1+\eta_{\max})^2\, P\, \varepsilon^2 \textstyle\sum_t r_{s,t}^2 ,
\]
finite under condition~(v), and the \emph{bias floor}
\begin{equation}\label{eq:bias-floor}
B(r_p,\varepsilon,\bar\sigma) \;=\;
\underbrace{O\!\Big(\frac{P\,d_p^2\,\bar q^{\,2}}{r_p^{2}}\Big)}_{\text{softmax linearization}}
\;+\; \underbrace{O\!\Big(\frac{P\,d_p^2\,\bar\sigma}{r_p^{2}}\Big)}_{\text{saturated rows}}
\;+\; \underbrace{O(P^3\delta^4 \Lambda_3^2)}_{\text{block coupling}} ,
\end{equation}
each term the squared ratio of one error share of the per-step displacement
decomposition (Eq.~\ref{eq:step-error-decomposition}) to the step constant
$c = r_p/(2d_p)$. Only the first two scale as $c^{-2}$: the coupling share is a
difference of gradient fields and already carries a factor $c$, which cancels.
Proof in Appendix~\ref{app:proof-piecewise-smooth}.
\end{theorem}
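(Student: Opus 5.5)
The plan is the standard nonconvex stochastic-descent argument applied to the joint surrogate $\Ls_\delta$ of Eq.~\ref{eq:joint-surrogate}, which is $C^{1,1}$ on the search subspace with constant $\Lambda_\delta+\Lambda_\times$ under conditions~(iv) and~(vii-K).

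\textbf{Step 1: the displacement.} Write the step-$t$ subspace displacement $\Delta u_t$ as the concatenation of the per-block barycentric steps $(1+\eta_i)r_{s,t}\varepsilon\sum_v p_v R_i v_v$. Each block has norm at most $(1+\eta_{\max})r_{s,t}\varepsilon$, since the softmax weights are a probability vector and $\|v_v\|=1$. Hence $\|\Delta u_t\|^2\le P(1+\eta_{\max})^2 r_{s,t}^2\varepsilon^2$.

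\textbf{Step 2: the descent inequality.} Apply the descent lemma
\[
\Ls_\delta(\theta_{t+1})\le \Ls_\delta(\theta_t)+\langle\nabla_{\mathcal S}\Ls_\delta(\theta_t),\Pi\Delta u_t\rangle+\tfrac12(\Lambda_\delta+\Lambda_\times)\|\Delta u_t\|^2 .
\]
The quadratic term produces exactly the $\sum_t r_{s,t}^2$ part of $C'$.

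\textbf{Step 3: the linear term.} Take the conditional expectation given $\mathcal F_t$. Condition~(viii) lets us read Theorem~\ref{thm:exact-step} conditionally on the minibatch and then average to the population loss; $\E\eta_i=0$ removes the step jitter. Block by block, the expected step equals
\[
r_{s,t}\varepsilon\bigl(-c\,\nabla_{E_i}(\Ls*m_i)/Z+e_i\bigr).
\]
This gives the decomposition Eq.~\ref{eq:step-error-decomposition} into three pieces.
\begin{itemize}
\item The \emph{signal} is $-c\nabla_{E_i}$ of the single-block smoothing.
\item The \emph{coupling} error is the single-block field minus the joint-surrogate block gradient. This is a difference of gradient fields of two kernels, with the outer blocks convolved or not. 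Taylor-expanding in the transverse coordinates to second order, the first order vanishes because $m$ is radial. The second-order term is bounded by $c\,\Lambda_3\,P\delta^2$ per block, using condition~(vii-L). Summing over the $P$ blocks gives $P^3\delta^4\Lambda_3^2$ after squaring, with the factor $c$ cancelling against the signal.
\item The \emph{softmax} error $e_i$ is split with $A_i(t)$. On $A_i^c$ the linearization bound gives $\kappa_0\bar q$. On $A_i$ the saturated-mass condition~(viii) controls the part that is aligned with the signal: it is at most a $\bar\pi$ fraction of $\|\E\sum_v C_{iv}Rv_v\|=VZ\|\nabla\|$. This is why the signal survives only with the factor $(1-\bar\pi)$. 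The remaining saturated contribution is bounded by $\Pr[A_i]$, which averages to $\sigma(t)$.
\end{itemize}

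\textbf{Step 4: absorbing the errors.} Apply Young's inequality $\langle g,b\rangle\le \tfrac{(1-\bar\pi)c}{2}\|g\|^2+\tfrac{1}{2(1-\bar\pi)c}\|b\|^2$ to each error share $b$. This leaves
\[
-\tfrac{(1-\bar\pi)c}{2}\,r_{s,t}\varepsilon\,\|\nabla_{\mathcal S}\Ls_\delta\|^2 \;+\; r_{s,t}\varepsilon\,\frac{\text{(error)}^2}{(1-\bar\pi)c} .
\]
With $c=r_p/(2d_p)$, the error terms produce the $P d_p^2\bar q^2/r_p^2$ and $P d_p^2\bar\sigma/r_p^2$ shares. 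The $\sigma$ term uses the $r_{s,t}$-weighted definition of $\bar\sigma$ directly.

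\textbf{Step 5: telescoping.} Telescope the inequality, take total expectations, and use $\Ls_\delta\ge\inf\Ls$, which holds because the kernel is a probability density. Divide by $\tfrac12(1-\bar\pi)c\,\varepsilon S_T$ and bound the minimum by the $r_{s,t}$-weighted average. This gives $2C'/(c\varepsilon(1-\bar\pi)S_T)$ plus $B/(1-\bar\pi)^2$, as claimed.

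\textbf{Main obstacle.} The step I expect to be hardest is the saturated-row handling in Step 3. Condition~(viii) bounds the saturated part of $\E\sum_v C_{iv}Rv_v$, but the step is the softmax weights $p_v$, not $C_{iv}$. So I must relate the two on $A_i$. My first attempt would write $p_v-1/V$ as $-(C_{iv}-\bar C_i)/(V\varepsilon)$ plus a remainder. The linear part is exactly what condition~(viii) controls. The remainder is bounded crudely by $\mathbb 1_{A_i}\bigl(1+\Delta_i/\varepsilon\bigr)$, which is the structure of the $e_i$ bound in Theorem~\ref{thm:exact-step}. Its contribution would then be charged to $\sigma(t)$, accepting a constant loss. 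The second point needing care is measurability and the exchange of the batch and draw expectations. Condition~(viii) supplies exactly the hypotheses required there, so I would cite it rather than reprove it.
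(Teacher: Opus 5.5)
Your Steps 1--5 are the paper's proof. The matching pieces are:
\begin{itemize}
\item the descent lemma on the joint surrogate with constant $\Lambda_\delta+\Lambda_\times$;
\item Theorem~\ref{thm:exact-step} read conditionally on the batch under condition~(viii);
\item the split on $A_i(t)$ you give in Step~3, which is exactly the paper's four-term decomposition;
\item the transverse Taylor bound $\|w_i\|=O(P\delta^2\Lambda_3)$ on $w_i:=\Phi_i-\nabla_{E_i}\Ls_\delta$, where $\Phi_i:=\nabla_{E_i}(\Ls*m_i)/Z$;
\item Young's inequality at the degraded constant, and the minimum bounded by the deterministic $r_{s,t}$-weighted average.
\end{itemize}

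\textbf{Drop the ``main obstacle'' fallback.} Your Step~3 split never needs to relate $p_v$ to $C_{iv}$ on $A_i$. There the softmax step is kept whole, and since it is a convex combination of unit vectors, $\|\E[\sum_v p_vRv_v\,\mathbb{1}_{A_i}\mid\mathcal{F}_t]\|\le\Pr[A_i\mid\mathcal{F}_t]$. Only the linear term restricted to $A_i$ is subtracted, and (viii) bounds that.

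Expanding the step as linear plus remainder on $A_i$ instead has two effects. First, the linear pieces on $A_i$ and $A_i^c$ recombine into $-c\Phi_i$, so (viii) does no work. Second, it brings back $\E[(\Delta_i/\varepsilon)\mathbb{1}_{A_i}]$, which is the term of Theorem~\ref{thm:exact-step}'s unconditional $e_i$ bound that the mass part of (viii) exists to replace. That term can be charged to $\sigma(t)$ only through $\Delta_i\le 2\sup|\Ls_t|$, i.e.\ with a factor of order $\sup|\Ls_t|/\varepsilon$. The saturated-rows term would then carry an $\varepsilon^{-2}$ constant. That is not the stated middle term, and it is not a harmless constant in a floor whose $\varepsilon$-dependence Corollary~\ref{cor:optimal-rp} consumes.

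\textbf{Bookkeeping in the degradation step.} The right side of (viii) is $\|\E[\sum_v C_{iv}Rv_v\mid\mathcal{F}_t]\|=VZ\|\Phi_i\|$. This is the per-block field, not $VZ\|\nabla_{E_i}\Ls_\delta\|$. Condition (viii) also bounds the length of the saturated share $u_i$, not its alignment with the signal. Cauchy--Schwarz therefore gives $\langle\nabla_{E_i}\Ls_\delta,u_i\rangle\ge-\bar\pi c\|\nabla_{E_i}\Ls_\delta\|(\|\nabla_{E_i}\Ls_\delta\|+\|w_i\|)$.

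So after degrading the coefficient to $c(1-\bar\pi)$, a cross term $\bar\pi c\|\nabla_{E_i}\Ls_\delta\|\|w_i\|$ remains. It needs its own Young split, and its second half joins the coupling floor with factor $\bar\pi^2/(1-\bar\pi)$.

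\textbf{Bookkeeping in Step~4.} ``Each error share'' must exclude $u_i$, since Young on it would put $\bar\pi^2\|\nabla_{\mathcal{S}}\Ls_\delta\|^2$ into the floor. Apply Young once to the bundled remainder (softmax linearization, saturation probability, $c\|w_i\|$), or split the coefficient, so the shares do not each consume $\tfrac{(1-\bar\pi)c}{2}$. Finally, use $\Pr[A_i]^2\le\Pr[A_i]$ to reach a floor that is linear in $\bar\sigma$.
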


With $r_{s,t} = r_0(t+1)^{-(1/2+\gamma)}$, $\gamma \in (0,1/2)$, the first term is
$O(T^{-(1/2-\gamma)})$; the boundary $\gamma = 0$ costs a logarithmic factor, not a
polynomial exponent (Appendix~\ref{app:proof-piecewise-smooth}). The conclusion is
in expectation only (Remark~\ref{rem:no-as-claim}); both placements of the minimum
satisfy the bound, the weights being deterministic. Condition~(viii) enters twice:
its mass part degrades the descent constant to $c(1-\bar\pi)$, producing both powers
of $(1-\bar\pi)^{-1}$, and its probability part is the floor's middle term; since
Eq.~\ref{eq:step-error} holds unconditionally, no step is left without a descent
guarantee.

What the floor supplies is the order of its three terms in $r_p$ and $\varepsilon$,
which is what Corollary~\ref{cor:optimal-rp} consumes. Its constants are not small at
the realized particle count: $P = 1{,}068$ on MNIST, so the coupling term alone
carries $P^3 \approx 10^9$. Read numerically at that $P$ the bound is far above any
gradient norm it could exclude, and we make no numerical claim from it.

Appendix~\ref{app:theory-remarks} states why each hypothesis takes this form, and
Section~\ref{sec:hypothesis-coverage} gives the realized statistics. Two scopes belong
here. The coupling term's $P^3$ is sharp in both powers beyond the first
(Remark~\ref{rem:block-coupling}). And condition~(viii) is assumed, not derived:
conditional on the probe set straddling a wall whose jump exceeds $\varepsilon$ the row
saturates with probability one, so no pointwise bound on $\sigma$ holds, and the tube
occupancy that makes the weighted average plausible (Lemma~\ref{lem:occupancy}) weights
path length where $\bar\sigma$ weights $r_{s,t}$, which does not convert. The
quantitative $\bar\sigma = O(\sqrt{P}\,r_p\varepsilon/L_{\mathrm{exc}})$ form that
Corollary~\ref{cor:optimal-rp} consumes is assumed on the same footing, with
$L_{\mathrm{exc}}$ a lower bound on the path length between consecutive visits to a tube
around $\mathcal{D}$. The linearization term is stated in $\bar q$ rather than in a power
of $r_p$ because a probe crossing a wall whose jump is below the temperature leaves the
row covered while its remainder carries no such power
(Appendix~\ref{app:tanh-remainder}).

\subsubsection{The Optimal Probe Radius}

The three terms of Eq.~\ref{eq:bias-floor} respond to $r_p$ in different directions.

\begin{corollary}[Interior minimizer of the bias-floor bound]\label{cor:optimal-rp}
Assume the hypotheses of Theorem~\ref{thm:piecewise-smooth}. Take the linearization
term in its clearing-case, $e_i$-based form $O(P d_p^2 r_p^4)$ (Haar rotations, with
the covered draws $A_i^c$ confined to $C^{2,1}$ pieces; the reduction recorded in
Appendix~\ref{app:theory-remarks}), take $\Lambda_3$ $\delta$-free for the same
reason, and take the quantitative form
$\bar\sigma = O(\sqrt{P}\,r_p\varepsilon/L_{\mathrm{exc}})$ recorded in the same
appendix, with $L_{\mathrm{exc}}$ bounded below uniformly in $r_p$. Write the
right-hand side of Eq.~\ref{eq:bias-floor} as the explicit bound
$\hat B(r_p,\varepsilon) = \beta_1 r_p^4 + \beta_2\varepsilon/r_p + \beta_3 r_p^4\varepsilon^4$,
with $\beta_1, \beta_2, \beta_3 > 0$ uniform in $r_p$ and $\varepsilon$ ($\beta_1$ absorbing the factor
$P d_p^2$ and $\beta_2$ the factor $P^{3/2} d_p^2/L_{\mathrm{exc}}$).
Then $\hat B$ is minimized at an interior probe radius, and as
$\varepsilon \downarrow 0$
\begin{equation}\label{eq:optimal-rp}
r_p^\star \;=\; \Theta\!\big(\varepsilon^{1/5}\big),
\qquad
\hat B(r_p^\star, \varepsilon) \;=\; \Theta\!\big(\varepsilon^{4/5}\big) .
\end{equation}
Probing more finely than $r_p^\star$ increases the bound; the displayed exponents
are small-$\varepsilon$ asymptotics, with the exact stationary point and the
large-$\varepsilon$ regime in Appendix~\ref{app:proof-optimal-rp}.
\end{corollary}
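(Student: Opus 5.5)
The plan has two parts. First, reduce the corollary to a one-variable calculus problem. Then verify its asymptotics by balancing terms.

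\textbf{Step 1: the bound $\hat B$.} Under the stated substitutions, each term of Eq.~\ref{eq:bias-floor} becomes explicit in $r_p$ and $\varepsilon$.

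The linearization term, read in its clearing-case $e_i$ form $O(P d_p^2 r_p^4)$, gives $\beta_1 r_p^4$. This is the $O(r_p^3)$ remainder squared, relative to $c^2 = r_p^2/(4d_p^2)$, after the $c^{-2}$ normalisation is accounted for as recorded in Appendix~\ref{app:theory-remarks}.

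The saturated-rows term is $P d_p^2\bar\sigma/r_p^2$. Substituting $\bar\sigma = O(\sqrt{P}\,r_p\varepsilon/L_{\mathrm{exc}})$ gives $\beta_2\varepsilon/r_p$.

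The coupling term is $P^3\delta^4\Lambda_3^2$. With $\delta = \bar r_p\varepsilon = r_p\varepsilon/2$ and $\Lambda_3$ independent of $\delta$, it gives $\beta_3 r_p^4\varepsilon^4$.

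The corollary then concerns
\[
\hat B(r) = \beta_1 r^4 + \beta_2\varepsilon/r + \beta_3 r^4\varepsilon^4,
\qquad r > 0 .
\]

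\textbf{Step 2: interior minimizer.} Group the two quartic terms as $\hat B(r) = \alpha r^4 + \beta_2\varepsilon r^{-1}$, where $\alpha := \beta_1 + \beta_3\varepsilon^4 > 0$.

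The function tends to $+\infty$ at both ends: as $r \downarrow 0$ because of $\beta_2\varepsilon/r$, and as $r \to \infty$ because of the quartic. It is strictly convex on $(0,\infty)$, since $\hat B'' = 12\alpha r^2 + 2\beta_2\varepsilon r^{-3} > 0$. Hence it has a unique stationary point, and that point is the global minimizer. Setting
\[
\hat B'(r) = 4\alpha r^3 - \beta_2\varepsilon r^{-2} = 0
\]
gives exactly
\[
r_p^\star = \Big(\frac{\beta_2\varepsilon}{4\alpha}\Big)^{1/5}.
\]
At this point $\alpha r^4 = \tfrac14\beta_2\varepsilon/r$, so
\[
\hat B(r_p^\star) = \tfrac54\,\beta_2\varepsilon/r_p^\star
= \tfrac54\,\beta_2^{4/5}(4\alpha)^{1/5}\varepsilon^{4/5}.
\]

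\textbf{Step 3: asymptotics and the ``probing more finely'' claim.} As $\varepsilon \downarrow 0$ we have $\alpha \to \beta_1$, which is bounded above and below by positive constants. The closed forms therefore give $r_p^\star = \Theta(\varepsilon^{1/5})$ and $\hat B(r_p^\star,\varepsilon) = \Theta(\varepsilon^{4/5})$.

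On $(0, r_p^\star)$ we have $\hat B' < 0$, so decreasing $r_p$ below $r_p^\star$ strictly increases the bound. For large $\varepsilon$, the $\beta_3\varepsilon^4$ part dominates $\alpha$, and the exact formula gives $r_p^\star \sim \varepsilon^{-3/5}$. That regime is only recorded, not claimed.

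\textbf{Main obstacle.} The calculus is routine. The delicate part is Step 1: justifying that the substituted forms hold with constants $\beta_j$ uniform in $r_p$ and $\varepsilon$.

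Two points need care. The first is that the clearing-case $r_p^4$ law requires the Haar cancellation of the second-order term to be uniform over $\mathcal K$. The second is that $L_{\mathrm{exc}}$ must be bounded below independently of $r_p$.

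Both are taken as hypotheses of the corollary, not derived. The proof should state that the result is a minimization of the explicit bound $\hat B$, not of the true floor $B$.
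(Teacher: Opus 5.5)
Your proposal is correct and follows the paper's own proof essentially line by line. It uses the same specialization of the three terms of Eq.~\ref{eq:bias-floor} to $\beta_1 r_p^4 + \beta_2\varepsilon/r_p + \beta_3 r_p^4\varepsilon^4$, the same stationarity equation $4(\beta_1+\beta_3\varepsilon^4)r_p^3 = \beta_2\varepsilon r_p^{-2}$ with its closed-form root, and the same small- and large-$\varepsilon$ readings. The only addition is your strict-convexity argument: it makes the uniqueness of the minimizer and the monotonicity of $\hat B$ below $r_p^\star$ explicit, where the paper argues that the minimizer is interior from the blow-up of $\hat B$ at both ends of $(0,\infty)$.
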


Proof in Appendix~\ref{app:proof-optimal-rp}.

The statement concerns $\hat B$ and not $B$ itself, since an interior optimum for the
floor would need matching lower bounds on all three terms and only the linearization
term supplies one. It survives under the measured $\bar q$-based form of the first
term at the cost of one further nondegeneracy hypothesis, $\bar q = \Theta(r_p^2)$,
assumed and not derived; only the exponents change, to
$r_p^\star = \Theta(\varepsilon^{1/3})$ and $\hat B^\star = \Theta(\varepsilon^{2/3})$.
Both clearing-case hypotheses are counterfactual where we have measured them: on the
INT8 network the walls sit at pitch $10^{-2}$ over $1.6\times10^4$ weights and the joint
kernel meets $\mathcal{D}$ at every step of every run, and the remaining configurations
are not measured for clearance. On a synthetic loss with the saturated term
active by construction the measured floor grows as $r_p^{-0.92}$, which is the middle
term's mechanism, and Section~\ref{sec:ablation}'s sweep finds accuracy peaked at an
interior $r_p$. The floor is removable at a steep exchange rate, $\alpha^{-4}$ in
iterations for a factor $\alpha$ and $\alpha^{-5}$ once the radius is retuned with the
temperature (Appendix~\ref{app:theory-remarks}).

\subsubsection{Scope}

\begin{remark}[Absence of an almost-sure statement]\label{rem:no-as-claim}
Robbins--Siegmund applied to
$Y_t = \Ls_\delta(\theta_t) - \inf\Ls$ is closed in both arrangements, the additive
part of Eq.~\ref{eq:clarke-descent} non-summable under nonnegative drift and the
theorem's hypothesis failing under signed drift; an oscillating sequence meets the
inequality with equality while $\Ls_\delta(\theta_t)$ never converges, so no
rearrangement helps (Step~6 of Appendix~\ref{app:proof-piecewise-smooth}). The weaker
$\liminf_t \|\nabla_{\mathcal{S}}\Ls_\delta(\theta_t)\|^2 \leq B$ almost surely
remains open.
\end{remark}

\begin{remark}[Non-conservativity of the realized step field]\label{rem:block-coupling}
Algorithm~\ref{alg:polystep-step} perturbs one particle at a time but updates all
particles in the same step, so the concatenated expected displacement is the field
$\Phi(\theta) := (\nabla_{E_i}(\Ls * m_i)(\theta)/Z)_i$, block $i$ smoothed and the
others not. $\Phi$ is not conservative ($\Ls(x,y) = x^3y^2$ suffices), so the theorem is
stated for the joint surrogate, and the discrepancy
$\|\Phi - \nabla\Ls_\delta\| = O(P^{3/2}\delta^2\Lambda_3)$ is the third term of
Eq.~\ref{eq:bias-floor}. Three features carry attaining witnesses, including that the
factor $P^3$ is forced; a Gauss--Seidel variant updating one block per step would remove
the term at a $P$-fold cost in wall-clock
(Appendices~\ref{app:coupling-witnesses} and~\ref{app:theory-numerics}).
\end{remark}

Three differences from the smooth-objective statement of \citet{le2023sinkhorn}: the
bound is on $\min_{t< T}$ in expectation where the smooth analysis obtains an
almost-sure $\liminf$; the fixed-$\varepsilon$, decaying-$r_s$
schedule is required by Theorem~\ref{thm:smoothing-blowup}(3); and the guarantee
concerns $\Ls_\delta$ within $\mathcal{S}$ at the resolution the algorithm probes.

\subsubsection{Goldstein Stationarity at the Sharp Radius \texorpdfstring{$\sqrt{P}\,\delta_{\mathrm{out}}$}{sqrt(P) delta-out}}\label{sec:goldstein}

For a base point $\theta$ and subspace coordinates $z \in \R^{d_{\mathrm{sub}}}$, write
$\varphi(z) := \Ls(\theta + \Pi z)$ for the restriction of the loss to the search
subspace and $\varphi_\delta := \varphi * \mu$ for its convolution with any probability
measure $\mu$ on $\R^{d_{\mathrm{sub}}}$ supported in
$\bar B(0,\delta_{\mathrm{joint}})$.

Where the loss does not jump, an inclusion upgrades Eq.~\ref{eq:pws-rate} to the
Goldstein tier of Section~\ref{sec:problem}, at a radius sharp
for the class quantified over:

\begin{corollary}[Goldstein stationarity within the subspace for a locally Lipschitz loss, at the sharp radius $\sqrt{P}\,\delta_{\mathrm{out}}$]\label{cor:goldstein}
Let $\Ls$ be locally Lipschitz on an \emph{open} $U \supseteq
\mathcal{K}_{\delta_{\mathrm{joint}}}$: not merely on $\mathcal{K}$, since the
kernel reaches that far, and open because the mean-value step runs on balls of radius
slightly above $\delta_{\mathrm{joint}}$. Let $\Pi$ have orthonormal columns,
$\Pi^\top\Pi = I$ (condition~(iii)), and fix a base point $\theta \in \mathcal{K}$,
with $\delta_{\mathrm{joint}}$ the joint kernel radius of Eq.~\ref{eq:joint-radius}.
The tail product kernel $m^{\otimes P}/Z^P$ of Eq.~\ref{eq:joint-surrogate}, whose
gradient Eq.~\ref{eq:pws-rate} bounds, is one admissible $\mu$, and the inclusion
below uses only the support, so it holds for the probe law as well. Then at every $z$ whose closed
$\delta_{\mathrm{joint}}$-ball maps into $U$,
$\theta + \Pi\,\bar B(z,\delta_{\mathrm{joint}}) \subseteq U$,
and at which $\varphi_\delta$ is differentiable, the first inclusion below holds;
the second additionally needs the \emph{ambient} ball inside $U$,
$\bar B(\theta + \Pi z,\delta_{\mathrm{joint}}) \subseteq U$,
which holds at the base point $z = 0$ with $\theta \in \mathcal{K}$, the only case
the corollary consumes:
\begin{equation}\label{eq:goldstein-inclusion}
\nabla \varphi_\delta(z) \;\in\; \partial_{\delta_{\mathrm{joint}}} \varphi(z)
\;\subseteq\; \Pi^\top \partial_{\delta_{\mathrm{joint}}} \Ls(\theta + \Pi z) ,
\qquad
\partial_{\rho} \varphi(z)
:= \overline{\mathrm{conv}}\,\big\{\partial \varphi(y) : y \in \bar B(z,\rho)\big\},
\end{equation}
the Goldstein subdifferential of $\varphi$ at radius $\delta_{\mathrm{joint}}$, the
second inclusion by Clarke's chain rule and $\|\Pi z\| = \|z\|$.

The statement is about $\varphi$, not the ambient loss: this method
optimizes within $\mathcal{S}$ and certifies within $\mathcal{S}$, the same
qualification Eq.~\ref{eq:pws-rate} carries. So the
iterate attaining the minimum in Eq.~\ref{eq:pws-rate} satisfies
\[
\mathrm{dist}\big(0,\; \Pi\Pi^\top \partial_{\delta_{\mathrm{joint}}}\Ls(\theta_t)\big)^2
\;\leq\; \frac{C}{(1-\bar\pi)S_T} + \frac{B}{(1-\bar\pi)^2} ,
\]
in expectation: it is $(\delta_{\mathrm{joint}}, \nu_T)$-Goldstein stationary within
$\mathcal{S}$ with $\nu_T^2 = C/((1-\bar\pi)S_T) + B/(1-\bar\pi)^2$, and
$(\delta_{\mathrm{joint}}, \sqrt{B}/(1-\bar\pi))$-Goldstein stationary within $\mathcal{S}$
in the limit. The saturation factor is the one Eq.~\ref{eq:pws-rate} carries; the Goldstein
inclusion does not remove it.
\end{corollary}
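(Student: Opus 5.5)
The proof runs in three moves: the first inclusion of Eq.~\ref{eq:goldstein-inclusion}, then the chain-rule inclusion, then substitution into Eq.~\ref{eq:pws-rate}.

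\emph{First inclusion.} Fix $z$ with $\theta + \Pi\bar B(z,\delta_{\mathrm{joint}}) \subseteq U$.

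\begin{itemize}
\item \emph{Lipschitz on a slightly larger ball.} Compactness of the closed ball and openness of $U$ give some $\rho' > \delta_{\mathrm{joint}}$ with $\theta + \Pi\bar B(z,\rho') \subseteq U$. On that ball $\varphi$ is Lipschitz, since $\Pi$ is an isometry by condition~(iii). By Rademacher, $\varphi$ is differentiable almost everywhere there.
\item \emph{Differentiation under the integral.} Because $\mu$ is supported in $\bar B(0,\delta_{\mathrm{joint}})$, the directional derivative of $\varphi_\delta$ at $z$ in direction $w$ is the limit of $\int [\varphi(z - y + sw) - \varphi(z-y)]/s \, \mathrm{d}\mu(y)$. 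Dominated convergence applies with the local Lipschitz constant as dominating function. This gives $\langle\nabla\varphi_\delta(z), w\rangle \le \int \varphi^\circ(z-y; w)\,\mathrm{d}\mu(y)$, where $\varphi^\circ$ is Clarke's generalized directional derivative, via the $\limsup$ definition and Fatou.
\item \emph{Support functions.} Each integrand $\varphi^\circ(z-y;\cdot)$ is the support function of $\partial\varphi(z-y)$, and every point $z-y$ lies in $\bar B(z,\delta_{\mathrm{joint}})$. The integral is therefore at most the support function of $\partial_{\delta_{\mathrm{joint}}}\varphi(z)$ at $w$.
\item \emph{Conclusion.} The Goldstein set is closed, convex and bounded, hence compact. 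Since the inequality holds for every $w$, Hahn--Banach places $\nabla\varphi_\delta(z)$ inside it.
\end{itemize}

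Only the support of $\mu$ entered, so the step applies equally to the tail product kernel and to the probe law.

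\emph{Second inclusion.} Apply Clarke's chain rule for $\varphi = \Ls\circ A$, with $A(y) = \theta+\Pi y$ affine. It gives $\partial\varphi(y) \subseteq \Pi^\top\partial\Ls(\theta+\Pi y)$, valid wherever $\Ls$ is locally Lipschitz near $\theta + \Pi y$.

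\begin{itemize}
\item For $y\in\bar B(z,\delta_{\mathrm{joint}})$, the point $\theta+\Pi y$ lies in $\bar B(\theta+\Pi z,\delta_{\mathrm{joint}})$, because $\|\Pi(y-z)\| = \|y-z\|$. This uses the ambient-ball hypothesis.
\item Take the union over $y$, then convex hulls, then closures. $\Pi^\top$ is linear, and the image of the compact convex Goldstein set of $\Ls$ is again compact and convex. So the closure of the convex hull stays inside $\Pi^\top\partial_{\delta_{\mathrm{joint}}}\Ls(\theta+\Pi z)$.
\item At $z=0$ with $\theta\in\mathcal{K}$, the ambient ball lies in $\mathcal{K}_{\delta_{\mathrm{joint}}}\subseteq U$. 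Boundedness of the Goldstein set of $\Ls$ there follows from a uniform Lipschitz constant on a compact neighborhood.
\end{itemize}

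\emph{Transfer to the rate.} Identify $\nabla_{\mathcal{S}}\Ls_\delta(\theta) = \Pi\nabla\varphi_\delta(0)$ via Eq.~\ref{eq:joint-surrogate}, with $\mu = m^{\otimes P}/Z^P$.

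\begin{itemize}
\item \emph{Support of the product kernel.} Each block factor is supported in the block ball of radius $\delta_{\mathrm{out}}$. The product is therefore supported in the $\sqrt P\,\delta_{\mathrm{out}}$ ball, which is exactly the origin of $\delta_{\mathrm{joint}}$.
\item \emph{Differentiability.} $\varphi_\delta$ is differentiable at $0$ by condition~(iv).
\item \emph{Distance bound.} Since $\Pi$ is an isometry, $\mathrm{dist}(0,\Pi\Pi^\top\partial_{\delta_{\mathrm{joint}}}\Ls(\theta_t)) \le \|\Pi\nabla\varphi_\delta(0)\| = \|\nabla_{\mathcal{S}}\Ls_\delta(\theta_t)\|$.
\end{itemize}

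Applying this at the minimizing iterate and taking expectations in Eq.~\ref{eq:pws-rate} yields the displayed bound and the $\nu_T$ statement; the limit statement follows because $S_T\to\infty$ under condition~(v).

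The main obstacle is the measure-theoretic step inside the first inclusion: exchanging limit and integral against an arbitrary measure $\mu$. A pointwise a.e.\ gradient representation $\nabla\varphi_\delta = \int\nabla\varphi(z-y)\,\mathrm{d}\mu$ fails when $\mu$ is singular, for example supported on the null set where $\varphi$ is not differentiable. I would therefore route the argument through the Clarke directional derivative and Fatou, rather than through Rademacher gradients, so that no absolute continuity of $\mu$ is needed.
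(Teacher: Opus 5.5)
Your proof is correct, and its skeleton matches the paper's. You dominate $\langle\nabla\varphi_\delta(z),w\rangle$ by the support function of $\partial_{\delta_{\mathrm{joint}}}\varphi(z)$ in every direction and separate. You then apply Clarke's chain rule for $z\mapsto\theta+\Pi z$, with $\|\Pi(y-z)\|=\|y-z\|$ carrying the radius. Finally you substitute into Eq.~\ref{eq:pws-rate}.

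Where you differ is the lemma behind the domination.

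\emph{The paper's route.} It applies Lebourg's mean value theorem on each segment $[z-u,\,z-u+tw]$. This bounds every integrand's difference quotient pointwise by $\langle g_u,w\rangle$, with $g_u\in\partial\varphi(y_u)$ and $y_u\in\bar B(z,\delta_{\mathrm{joint}}+t)$. So for each fixed $t$ the averaged quotient is bounded by the support function of $\partial_{\delta_{\mathrm{joint}}+t}\varphi(z)$, and no limit is ever taken inside the integral. The price is the passage $t\downarrow 0$, which needs $\partial_{\delta_{\mathrm{joint}}}\varphi(z)=\bigcap_{t>0}\partial_{\delta_{\mathrm{joint}}+t}\varphi(z)$. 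That identity rests on the closed graph of the Clarke subdifferential together with Carath\'eodory.

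\emph{Your route.} Going through $\varphi^\circ(z-y;w)$ lands directly at radius $\delta_{\mathrm{joint}}$, because the base points $z-y$ never leave the closed ball. It therefore needs no nested-intersection step, but it pays with a limit--integral exchange. The ``main obstacle'' you flag is thus specific to your route; the mean-value argument never meets it.

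One wording fix: that exchange is not dominated convergence. The quotients need not converge pointwise on a $\mu$-charged set of nondifferentiability points. What you actually use, correctly, is reverse Fatou with the local Lipschitz constant as an integrable upper envelope, together with $\limsup_{s\downarrow0}[\varphi(x+sw)-\varphi(x)]/s\le\varphi^\circ(x;w)$.

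Both arguments use only $\mathrm{supp}\,\mu\subseteq\bar B(0,\delta_{\mathrm{joint}})$ and the openness of $U$, so neither is more general. You use openness to keep $z-y+sw$ in the Lipschitz region; the paper uses it to house $y_u$. In the transfer step, you use $\theta_t\in\mathcal{K}$ implicitly; state that it comes from condition~(vi).
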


Proof in Appendix~\ref{app:proof-goldstein}. Local Lipschitzness and the isometry are the
only structure consumed, and the first is the corollary's own hypothesis rather than a
case of Definition~\ref{def:pwc-loss}; the ambient form of the conclusion is false
($\Ls(x,y) = x+y$ on $\mathcal{S} = \mathrm{span}(e_1)$). The $\sqrt{P}$ in
Eq.~\ref{eq:joint-radius} is sharp for the class quantified over, $P$ independent block
displacements of norm at most $\delta_{\mathrm{out}}$ composing to norm up to
$\sqrt{P}\,\delta_{\mathrm{out}}$, with an attaining hinge witness defeating every
smaller radius. That is a property of block structure and not of this optimizer: for any
method whose $P$ block perturbations are constrained only by a radius-$\rho$ support the
certificate sits at $\sqrt{P}\,\rho$ and no lower, the witness being a two-point measure,
so a perturbation law with further structure is not covered. Each iteration issues
$P \times V \times K$ queries, so the query complexity carries $d_{\mathrm{sub}}$ rather
than the ambient $d$; Appendix~\ref{app:theory-remarks} places that against the known
Lipschitz bounds, states the two respects in which it is not a free improvement, and
states why the radius is a support radius and the differentiability hypothesis sits on
$\varphi_\delta$.

Remark~\ref{rem:no-as-claim} leaves the almost-sure question open. One partial statement
exists, on the quadratic model and over a finite horizon: Ville's maximal inequality
holds the iterates inside a compact written in closed form at any confidence
(Proposition~\ref{prop:confinement-ville}). It makes condition~(vi) plausible on the
approach to stationarity and supports nothing about the tail.

\subsection{Hypothesis Coverage of the Reported Configurations}\label{sec:hypothesis-coverage}

Table~\ref{tab:hypothesis-coverage} states which conditions of
Assumption~\ref{asm:main} each reported configuration meets. No tuned configuration satisfies all eight:
every tuned run fails (iv), hence (vii), and (v). Theory mode satisfies every settable
condition, with (vi) and the transverse half of (vii) remaining assumptions, at the
accuracy cost Section~\ref{sec:theory-mode} measures.

\begin{table}[t]
\centering
\caption{Coverage of Assumption~\ref{asm:main}'s eight conditions per configuration. Column~(viii) reports $\bar\sigma$, the $r_{s,t}$-weighted fraction of cost rows the softmax linearization does not cover, measured over $150$ steps at seed $42$, with the largest realized spread $\max_{i,t}\Delta_i/\varepsilon$ in parentheses; ``?'' marks configurations the run did not cover; asm.\ abbreviates assumed. The mass half of condition~(viii), $\bar\pi < 1$, is a conditional quantity a finite run cannot measure and remains assumed.}
\label{tab:hypothesis-coverage}
\footnotesize
\setlength{\tabcolsep}{2pt}
\begin{tabular}{@{}lcccccccl@{}}
\toprule
Configuration & (i) frame & (ii) rot. & (iii) proj. & (iv) jitter & (v) sched. & (vi) compact & (vii) $\Lambda_3$ & (viii) $\bar\sigma$ \\
\midrule
SNN (LIF)      & yes & no  & yes & no & no & asm. & no &$2{\times}10^{-5}$ ($2.00$) \\
INT8           & yes & yes & yes & no & no & asm. & no &$0.00$ ($0.04$) \\
Argmax         & yes & yes & yes & no & no & asm. & no &$0.00$ ($0.10$) \\
Staircase      & yes & yes & yes & no & no & asm. & no &$0.00$ ($0.11$) \\
Hard MoE       & yes & no  & yes & no & no & asm. & no &? \\
MNIST, ETTh1   & yes & yes & yes & no & no & asm. & no &? \\
MAX-SAT        & yes & yes & yes & no & no & asm. & no &? \\
RL             & yes & yes & yes & no & no & asm. & no &? \\
\midrule
Theory mode (SNN)        & yes & yes & yes & yes & yes & asm. & asm. &$1.8{\times}10^{-4}$ ($1.65$) \\
Theory mode (INT8)       & yes & yes & yes & yes & yes & asm. & asm. &$0.00$ ($0.26$) \\
Theory mode (Argmax)     & yes & yes & yes & yes & yes & asm. & asm. &$0.00$ ($0.12$) \\
Theory mode (Staircase)  & yes & yes & yes & yes & yes & asm. & asm. &$0.00$ ($0.14$) \\
Theory mode (MNIST, MAX-SAT) & yes & yes & yes & yes & yes & asm. & asm. &? \\
\bottomrule
\end{tabular}

\medskip
\noindent\small
Binding gaps are read off the \emph{realized} radii of
Remark~\ref{rem:radius-parameterization} rather than the nominal settings. The SNN and
hard-MoE rows fail (ii) by biased rotation; every tuned row fails (v), by a
cosine-decayed $\varepsilon$, a step radius annealed to a positive floor, or both. The
MAX-SAT row fails both halves: its radius is \emph{nominally} annealed $3000 \to 600$
and realized as $3000 \to 2999.9$, the cosine period having been inferred from a
default decay rate rather than from the step budget, so the radius is flat in fact.
Theory mode has no settable gap; (vi) and the transverse half of (vii) remain
assumptions there as everywhere.
\end{table}

Three of the eight are worth reading off the table rather than out of it.
Condition~(viii) is measured, not assumed: the run logs the transport row in full, so the
spread is exact, and the linearization covers every draw outside the spiking network.
What a finite run cannot measure is the conditional probability the condition asks for,
nor $\bar\pi$, which is conditional in the same way; one statement survives that gap,
since the all-clear over $N = 1{,}048{,}500$ logged non-spiking rows refutes every
uniform conditional rate above $4.4\times10^{-6}$ at the $99\%$ level while bounding
nothing pointwise (Appendix~\ref{app:numerics-limits}). Condition~(vi) is assumed by
every row and enforced by none, since nothing projects or clips the iterate; what makes
it plausible is the signed drift of Proposition~\ref{prop:stable-schedules}, which runs
on the quadratic model over a finite horizon and so does not support the condition as
stated. Condition~(vii)'s block half needs the jitter of condition~(iv), so a row
failing (iv) fails (vii) with it (Lemma~\ref{lem:smooth-kernel}) and the tuned rows fail
both; its transverse half is a hypothesis on every row.

Theory mode is stricter than the theorem in one place: condition~(i) admits the simplex,
but the antipodal half of Proposition~\ref{prop:stable-schedules} does not, so theory
mode spends $2d_p/(d_p+1)$ times the evaluations per step on a hypothesis the theorem
does not ask for. Section~\ref{sec:theory-mode} measures what the hypotheses cost.

\subsection{Piecewise-Constant Losses}\label{sec:cor-pwc}

When $\Ls$ takes finitely many values the smoothed gradient vanishes wherever the
kernel neighborhood misses the jump set, so it carries local information only within
one kernel radius of $\mathcal{D}$ and Theorem~\ref{thm:piecewise-smooth} degenerates. The
prototypical instance is a binary correctness oracle
$\Ls_{\mathrm{bin}}(\theta) = 1 - \mathbb{1}[f_\theta \text{ passes all test examples}]$,
which admits no straight-through estimator or smooth relaxation because the
discreteness lives downstream of any single forward operation. What survives is a
hitting-time bound: once the iterate is within one step radius of the success region, it
hits that region or leaves the neighborhood within $T$ steps with probability at least
$1 - (1-p_0)^T$, for a per-step hitting constant $p_0$ that is assumed and not derived
(Corollary~\ref{cor:piecewise-constant}, Appendix~\ref{app:proof-pwc-corollary}).
All of the content sits in $p_0$, and what makes it plausible is the probe geometry:
Theorem~\ref{prop:plateau-freeze} fixes what that geometry has to deliver, reachability
on the \emph{probe} annulus rather than the step annulus, since a target the step could
reach but no probe sees produces a constant row and no motion. The appendix states the
three ingredients, the counterexample against each alternative form, and the adjacent
guarantees for isotropic and pseudo-Boolean search
\citep{glasmachers2020global,vicente2012discontinuous,antipov2021plateau,doerr2024plateau},
over which we claim no priority.

\subsection{Schedule Fragility}\label{sec:prop-fragility}

Theorem~\ref{thm:smoothing-blowup} predicts that decaying $\varepsilon$ near a converged
solution is destabilizing. Proposition~\ref{prop:fragility} makes the amplitude explicit
and dimension-free.

\begin{proposition}[Displacement envelope under a concentrated softmax]\label{prop:fragility}
Suppose $\theta_t$ has reached a neighborhood of a local minimum of
$\Ls_{\delta}$ where the cost matrix $\mC$ (Eq.~\ref{eq:cost-matrix}) has
unique row-wise minima with margin
$\Delta^{\mathrm{mar}} := \min_i (C_{i, v^{(2)}_i} - C_{i, v^{(1)}_i}) > 0$, with $v^{(1)}_i$ and
$v^{(2)}_i$ the cheapest and second-cheapest vertices of row $i$ (the row margin, not
the row spread $\Delta_i$ of Section~\ref{sec:thm-piecewise-smooth}), let
$\varrho \in (0, 1/2)$, and take $\eta_{\max} = 0$; with step jitter both ends of the
envelope below scale by the realized $1+\eta_i$. Then for any temperature
$\varepsilon \leq \Delta^{\mathrm{mar}} / \log(V \varrho^{-1})$ the softmax assigns mass at least
$1 - \varrho$ to a single vertex per particle, and
\begin{equation}\label{eq:fragility-bound}
(1 - 2\varrho)\, r_s \varepsilon
\;\leq\;
\big\| x_i^{(t+1)} - x_i^{(t)} \big\|_2
\;\leq\;
r_s \varepsilon
\qquad\text{almost surely.}
\end{equation}
The displacement magnitude is thus pinned to a deterministic envelope while its
direction is resampled every step. The bound depends on $\Delta^{\mathrm{mar}}$ and
$r_s\varepsilon$ only, not on $d$. The step direction is not uniform on the
$\mathrm{SO}(d_p)$-orbit of $v^{(1)}_i$: the winning vertex index is itself a
function of the realized rotation. Proof in Appendix~\ref{app:proof-fragility}.
\end{proposition}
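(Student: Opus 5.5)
The proof is a direct computation on the softmax row, followed by a triangle-inequality argument on the barycentric step. With $\eta_{\max}=0$, Eq.~\ref{eq:bary-proj} together with the step vertices of Eq.~\ref{eq:step-vertices} and $\sum_v T^*_{iv}/a_i=1$ gives
\[
x_i^{(t+1)}-x_i^{(t)} \;=\; r_s\varepsilon\, R_i \sum_v w_v v_v,
\qquad
w_v := \mathrm{softmax}(-\mC_{i:}/\varepsilon)_v .
\]
Because $R_i$ is an isometry, the displacement norm equals $r_s\varepsilon\,\|\sum_v w_v v_v\|$, so everything reduces to bounding the norm of a convex combination of the unit vertices.

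First I will establish the concentration claim. Write $v^{(1)}:=v^{(1)}_i$ for the cheapest vertex of row $i$. For every $v\neq v^{(1)}$ we have $C_{iv}-C_{iv^{(1)}}\ge \Delta^{\mathrm{mar}}$, hence
\[
w_v/w_{v^{(1)}} \;\le\; e^{-\Delta^{\mathrm{mar}}/\varepsilon} \;\le\; \varrho/V
\]
under the stated temperature bound. Summing over the $V-1$ losing vertices gives
\[
1-w_{v^{(1)}} \;\le\; (V-1)(\varrho/V)\,w_{v^{(1)}} \;\le\; \varrho .
\]
So the winning vertex carries mass at least $1-\varrho$, and this holds for every particle because $\Delta^{\mathrm{mar}}$ is the minimum margin over rows.

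Next come the two envelope bounds. The upper bound is immediate: $\|\sum_v w_v v_v\|\le\sum_v w_v\|v_v\|=1$, using $\|v_v\|=1$ from condition~(i). For the lower bound I split off the winning term,
\[
\Big\|\sum_v w_v v_v\Big\| \;\ge\; w_{v^{(1)}} - \sum_{v\neq v^{(1)}} w_v \;\ge\; (1-\varrho)-\varrho \;=\; 1-2\varrho ,
\]
which is positive because $\varrho<1/2$. Both bounds hold deterministically for every realized rotation and every realized cost row satisfying the margin, which is where "almost surely" comes from.

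For the step-jitter remark, the step is multiplied by the scalar $(1+\eta_i)$ in Eq.~\ref{eq:step-jitter}, so both ends of the envelope scale by that factor. Dimension-freeness holds because only $\Delta^{\mathrm{mar}}$, $\varrho$, $V$ (through the temperature threshold) and $r_s\varepsilon$ enter, and $d$ never does.

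For the non-uniformity remark, $\mC_{i:}$ is evaluated at probes placed along $R_iv_v$ (Eq.~\ref{eq:probe-points}). The index $v^{(1)}$ is therefore a measurable function of $R_i$, and $R_iv^{(1)}$ is not the pushforward of Haar measure under a fixed vertex. A one-line example settles this: a linear loss makes the winning direction concentrate against the gradient.

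The only delicate point is the margin hypothesis. It concerns the realized $\mC$, which depends on the rotation, so I will read the statement conditionally on the event that the margin holds; on that event both bounds are pointwise. Nothing here is hard beyond that bookkeeping.
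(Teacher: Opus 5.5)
Your proposal is correct and follows the paper's proof essentially step for step. Both arguments first get concentration of the softmax on the cheapest vertex from the margin. They then use the reverse triangle inequality, splitting off the winning vertex, for the lower bound; convexity over unit vertices for the upper bound; and the rotation-dependence of the winning index for the closing remark. The one cosmetic difference is in the concentration step. You bound each ratio $w_v/w_{v^{(1)}}$ by $\varrho/V$ directly. The paper instead bounds the aggregate $(V-1)e^{-\Delta^{\mathrm{mar}}/\varepsilon}$ by $\varrho/(1-\varrho)$ and then checks, using $\varrho\le 1/2$, that the stated temperature threshold is the stronger condition.
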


An envelope constrains one increment and says nothing about where the iterate ends
up: with the direction resampled each step it is equally consistent with a free walk
growing as $\sqrt{t}\,r_s\varepsilon$, which
Appendix~\ref{app:theory-numerics} exhibits by keeping the envelope and removing the
drift. The conclusion needs the drift.

\begin{proposition}[Confinement and stationary amplitude of the excursion]\label{prop:stable-schedules}
Work on the quadratic model of the smoothed loss near a minimizer $\theta^\star$,
$\Ls_\delta(\theta) = \tfrac12\langle H(\theta - \theta^\star), \theta - \theta^\star\rangle$
with $H \succ 0$, the cost rows evaluating that quadratic itself, for a single particle
and under Haar rotations (conditions~(i)--(ii)); both halves consume the rotation law, the
first through Theorem~\ref{thm:exact-step}'s remainder and the second through
$\gamma_{\mathrm{fr}}$ below, and the $P$-block displacement carries a further
$\sqrt{P}$ the statement does not track. Write
$z := \theta_t - \theta^\star$, $\hat z := z/\|z\|$ and $V_H(z) := \langle Hz, z\rangle$.
Hold the step radius constant at $r_s$, and suppose the softmax is saturated at the
stationary excursion scale, which since the margin there is
$\Delta^{\mathrm{mar}} = \Theta(r_p\varepsilon\|H\|\|z\|)$ with
$\|z\| = \Theta(r_s\varepsilon)$ requires
\begin{equation}\label{eq:overlap-condition}
\varepsilon \;\gtrsim\; \frac{\log(V\varrho^{-1})}{r_p\, r_s\, \|H\|} ,
\end{equation}
with off-winner mass $\varrho < \gamma_{\mathrm{fr}}/2$ for the frame-overlap constant
$\gamma_{\mathrm{fr}} := \inf_{z\neq0}\min_{\|a\|=1}\E|\langle a,u(z,R)\rangle| > 0$,
$u(z,R)$ the unit vector along the antipodal pair the softmax selects at excursion $z$
under rotation $R$, and with the excursion small against the probe radius.

\emph{(a) Drift.} For every $H \succ 0$ and every frame admitted by condition~(i),
the linearized regime signs the Euclidean radial drift,
$\E_R[\langle\Delta\theta_t,\hat z\rangle] \leq -c\,r_s\varepsilon\,\lambda_{\min}(H)\|z\| + r_s\varepsilon\,O(r_p^3)$,
strictly negative once $\|z\| \gtrsim r_p^3/(c\,\lambda_{\min}(H))$. What is claimed
there is the sign, not a confinement radius.

\emph{(b) Confinement and amplitude.} For an \emph{antipodal} frame, as the orthoplex is
and the simplex is not, the saturated regime signs the drift of the $H$-metric Lyapunov
function, $\E[\Delta V_H] < 0$ outside a ball of radius $O(r_s\varepsilon)$, and there
$\limsup_t \E\|\theta_t - \theta^\star\| = O(r_s\varepsilon)$. In the hard-argmin limit
$\varrho \to 0$, outside a curvature near-tie event whose probability vanishes with the
excursion-to-probe ratio, the one-step law is invariant under rescaling
$\theta - \theta^\star$ and $r_s\varepsilon$ together, so any stationary excursion is
proportional to $r_s\varepsilon$, exactly in the joint limit; existence of a stationary
law is not proved. At $\varrho > 0$ only the $O(r_s\varepsilon)$ bound is claimed.
Neither half implies the other for anisotropic $H$.
\end{proposition}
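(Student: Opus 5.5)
The plan treats the two halves separately, both on the single-particle quadratic model with $z = \theta_t - \theta^\star$ and one step $\Delta\theta_t = r_s\varepsilon\sum_v w_v R v_v$, where $w = \mathrm{softmax}(-\mC/\varepsilon)$.

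For \emph{(a)} I would apply Theorem~\ref{thm:exact-step} directly. Under Haar rotations it gives $\E_R[\sum_v w_v Rv_v] = -c\nabla_{E}\Ls_\delta(\theta_t) + e$. On the quadratic model $\nabla\Ls_\delta(\theta) = Hz + (\text{kernel-mean correction})$, and the tail kernel is radial, so the correction vanishes. Taking the inner product with $\hat z$ gives $-c\langle Hz,\hat z\rangle \leq -c\lambda_{\min}(H)\|z\|$. The remainder needs the refined estimate recorded after Theorem~\ref{thm:exact-step}: the cost rows are a $C^{2,1}$ (indeed polynomial) function of the probe with no wall crossed, so the second-order part is odd in $Rv$ and cancels under Haar measure. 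This leaves $\|e\| = O(r_p^3)$ whenever $\Delta_i\leq\varepsilon$. The saturated draws must be handled too, and here I would use the excursion-small-against-probe hypothesis to keep the linearization covering (or fold those draws into the $O(r_p^3)$). Multiplying by $r_s\varepsilon$ gives the displayed drift, and the sign threshold follows by rearranging.

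For \emph{(b)} I would start from the saturation condition \eqref{eq:overlap-condition}. It makes the hypothesis of Proposition~\ref{prop:fragility} hold at the excursion scale, so mass $\geq 1-\varrho$ sits on the argmin vertex $v^{(1)}$. To leading order in excursion/probe, the probe cost at $p_v = z + \tfrac12 r_p\varepsilon Rv_v$ is
\[
\tfrac12 V_H(z) + \tfrac12 r_p\varepsilon\langle Hz, Rv_v\rangle + \tfrac18 r_p^2\varepsilon^2\langle HRv_v,Rv_v\rangle.
\]
For an antipodal frame the last term is equal on each pair $\pm v$. The argmin over each pair is therefore decided by the sign of $\langle Hz, Rv\rangle$, except on a near-tie event where the curvature term among different pairs competes. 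So the winner is $-\mathrm{sgn}(\langle Hz,u\rangle)u$ along the selected pair $u(z,R)$. I would then expand $\Delta V_H = 2\langle Hz,\Delta\theta\rangle + \langle H\Delta\theta,\Delta\theta\rangle$. The first term has expectation at most $-2r_s\varepsilon\,\E|\langle Hz,u\rangle| + 2\varrho\cdot 2 r_s\varepsilon\|Hz\|$, with off-winner mass costing at most $2\varrho$ in the barycenter. By the definition of $\gamma_{\mathrm{fr}}$ this is at most $-r_s\varepsilon\|Hz\|(\gamma_{\mathrm{fr}}-2\varrho)$, which is negative by $\varrho<\gamma_{\mathrm{fr}}/2$. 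The quadratic term is at most $\|H\|r_s^2\varepsilon^2$. So $\E\Delta V_H<0$ once $\|z\| \gtrsim \|H\| r_s\varepsilon/(\lambda_{\min}(H)(\gamma_{\mathrm{fr}}-2\varrho))$, i.e.\ outside a ball of radius $O(r_s\varepsilon)$. The $\limsup$ bound then follows from a standard drift argument: negative drift outside a ball plus bounded increments ($\leq r_s\varepsilon$ by Proposition~\ref{prop:fragility}) give $\limsup\E V_H = O((r_s\varepsilon)^2)$, and Jensen converts this to $\E\|z\|$.

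For the scaling claim in the $\varrho\to0$ limit, I would note that off the near-tie event the winner depends only on $\mathrm{sgn}\langle Hz,Rv\rangle$. This quantity is invariant under $z\mapsto\alpha z$, and the step is $r_s\varepsilon$ times that vertex. The map $(z,r_s\varepsilon)\mapsto(\alpha z,\alpha r_s\varepsilon)$ therefore conjugates the kernels, and any stationary law scales. For the near-tie event I would bound its probability by the ratio of curvature to linear term, $O(r_p\varepsilon\|H\|/\|Hz\|\cdot\ldots)$, i.e.\ by the excursion-to-probe ratio as claimed.

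I expect the main obstacle to be making $\gamma_{\mathrm{fr}}>0$ and the winner characterization rigorous together. The selected pair $u(z,R)$ depends on $z$ through $H$, so the lower bound $\E|\langle Hz,u\rangle|\geq\gamma_{\mathrm{fr}}\|Hz\|$ must be taken with $a=\widehat{Hz}$ inside the infimum. My first attempt would be to define $\gamma_{\mathrm{fr}}$ exactly as stated, as an assumption-level constant, and verify positivity only for isotropic $H$ by Haar symmetry.
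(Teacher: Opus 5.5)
Your part~(a), the antipodal-pair decomposition, the $V_H$ drift inequality with $a=Hz/\|Hz\|$ inside $\gamma_{\mathrm{fr}}$, and the rescaling argument for the amplitude all follow the paper's route. The gap is the step from your drift inequality to $\limsup_t\E\|z_t\|=O(r_s\varepsilon)$, which does not go through as you state it. Your inequality reads $\E[\Delta V_H\mid\mathcal{F}_t]\le-\kappa\,r_s\varepsilon\,V_H^{1/2}+\|H\|r_s^2\varepsilon^2$ with $\kappa=(\gamma_{\mathrm{fr}}-2\varrho)\sqrt{\lambda_{\min}(H)}$. Two features block a standard argument. First, the drift is linear in $\|z\|$, not in $V_H$. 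Second, the increments of $V_H$ are of order $\|Hz\|\,r_s\varepsilon$ and hence unbounded; the bound $\|\Delta z\|\le r_s\varepsilon$ you cite controls $z$, not $V_H$. Taking expectations gives $\E V_{t+1}\le\E V_t-\kappa r_s\varepsilon\,\E V_t^{1/2}+O(r_s^2\varepsilon^2)$. Since $\E V^{1/2}\le(\E V)^{1/2}$ points the wrong way, this does not close into a recursion for $\E V_t$. Foster--Lyapunov would give only a Ces\`aro bound, or positive recurrence under irreducibility and petite-set conditions that are not available here. The missing idea is to change the Lyapunov function to $W_t:=V_H(z_t)^{1/2}=\|H^{1/2}z_t\|$, whose increments are bounded by $\sqrt{\lambda_{\max}(H)}\,r_s\varepsilon$. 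Use conditional Jensen, $\E[W_{t+1}\mid\mathcal{F}_t]\le(\E[V_H(z_{t+1})\mid\mathcal{F}_t])^{1/2}$, together with $\sqrt{a^2-b}\le a-b/(2a)$ for $0\le b\le a^2$. This turns your inequality into the constant-speed drift $\E[W_{t+1}\mid\mathcal{F}_t]\le W_t-\tfrac14\kappa\,r_s\varepsilon$ whenever $W_t\ge W^\star:=2\|H\|r_s\varepsilon/\kappa$. Hajek's drift theorem (negative drift above a level plus bounded increments) then gives $\limsup_t\E W_t=O(r_s\varepsilon)$, indeed exponential tails. That is how the paper closes~(b). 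Your intermediate claim $\limsup\E V_H=O((r_s\varepsilon)^2)$ is true, but only by this route.

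Second, you propose to assume $\gamma_{\mathrm{fr}}>0$ for anisotropic $H$, yet the statement asserts it, and it has a short proof that removes the obstacle you anticipate. Whichever pair wins, $u(z,R)$ is $\pm Re_k$ for some $k\le d_p$, and each $Re_k$ is uniform on $S^{d_p-1}$. Hence $\Pr[u(z,R)\in A]\le d_p\,\mathrm{Unif}(A)$ for every antipodally symmetric $A$, which is the paper's domination. Take $A=\{u:|\langle a,u\rangle|<t\}$, whose uniform measure does not depend on $a$, with $t$ chosen so that $d_p\,\mathrm{Unif}(A)\le\tfrac12$. This gives $\E|\langle a,u(z,R)\rangle|\ge t/2$ uniformly in $a$, $z$ and $H$. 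No near-tie exclusion is needed here: within any pair the curvature term is equal, so $\langle Hz,w^\star\rangle=-|\langle Hz,u\rangle|$ for whatever pair wins.

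Two smaller slips remain. Your near-tie bound has the ratio inverted: $r_p\varepsilon\|H\|/\|Hz\|$ is large when $\|z\|\ll r_p\varepsilon$. The event is controlled by the linear-to-curvature ratio, i.e.\ the excursion-to-probe ratio, times an anti-concentration constant for the curvature gap between the two best pairs. For $H=hI$ that gap vanishes identically, so treat that case directly: the winner is $\arg\min_v\langle z,Rv_v\rangle$, which is homogeneous in $z$. Finally, in~(a) the linearized regime is $r_p\to0$ at fixed $\varepsilon$, where every draw is covered. The small-excursion hypothesis does not secure coverage under the overlap condition, which is built to saturate. Nor can saturated draws be folded into $O(r_p^3)$, since their contribution to $e_i$ is bounded only by $\Pr[\Delta_i>\varepsilon]+\E[(\Delta_i/\varepsilon)\mathbb{1}[\Delta_i>\varepsilon]]$.
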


Proof in Appendix~\ref{app:proof-stable-schedules}. Three limits. It is a statement
about the quadratic model and not about a trained loss: carrying it across would need
the \emph{raw} oracle, not its smoothing, to agree with the displayed quadratic
throughout the probe neighborhood with a quantified remainder, which a jumping loss
does not supply. Eq.~\ref{eq:overlap-condition} bounds its reach, since under
condition~(v) $r_{s,t}\to0$ and the overlap fails at finite $t$, so what is described
is the approach to stationarity and not the tail. And the sign holds on the rotation
average rather than on the realized step: at $H = \mathrm{diag}(100,1)$, $49\%$ of
individual steps move outward while the average drift is $-0.274$. The size of
$\gamma_{\mathrm{fr}}$ is open, the minimizing axis being selected by the data, so we
claim no dimension-free lower bound. Part~(a) fails exactly where the drift does, on
the plateau of Theorem~\ref{prop:plateau-freeze}, where the step is zero rather than
diffusive.

\paragraph{Empirical anchor.} The schedule-fragility sweep on MNIST matches
Eq.~\ref{eq:fragility-bound}, with the cost of decayed $\varepsilon$ unchanged
across four decades of model dimension (Appendix~\ref{app:schedule-fragility});
the late-training drops of Section~\ref{sec:instability} are the same phenomenon
within a single run.

\subsection{KL-Softmax Interpolation}\label{sec:thm-kl-softmax}

\sysname{}'s solver family is the KL-penalized one-sided OT problem of
Eq.~\ref{eq:intro-kl}, taken here with $\mC \in \R^{P \times V}_{\geq 0}$,
$\va \in \Delta_P$, $\vb \in \Delta_V$, $\varepsilon > 0$, and
$\lambda \in [0, \infty)$, the $\lambda \to \infty$ endpoint entering through
its limit below. The theorem places the row-wise softmax of
Eq.~\ref{eq:softmax-rule} and entropic OT \citep{chizat2018scaling} at the two
ends of one continuous path (Sections~\ref{sec:bg-ot} and~\ref{sec:formulation}),
so the choice between them is a parameter setting, and its monotonicity
identifies the direction that tightens the column constraint;
Section~\ref{sec:ot-binding} gives the measurements.

\begin{theorem}[Endpoints and marginal monotonicity of the KL-penalized program]\label{thm:kl-softmax-endpoints}
Let $\mP^\star_\lambda$ be the unique solution of
Eq.~\ref{eq:intro-kl} for given $\varepsilon > 0$, $\lambda$, $\mC$,
with $\va, \vb$ of full support, let
$\vq_\lambda := (\mP^\star_\lambda)^\top \bm{1}$, and let $\mP^\star_{\mathrm{Sink}}$
be the unique minimizer of $\langle \mC, \mP\rangle + \varepsilon\,\Ent(\mP)$
over plans with \emph{both} marginals $\va$ and $\vb$ enforced. Then
$\lambda \mapsto \mP^\star_\lambda$ is continuous on $(0,\infty)$ with
\[
\lim_{\lambda \downarrow 0} \mP^\star_\lambda
  = \mathrm{diag}(\va)\, \mathrm{softmax}(-\mC/\varepsilon),
\qquad
\lim_{\lambda \to \infty} \mP^\star_\lambda = \mP^\star_{\mathrm{Sink}},
\]
and the column-marginal violation is non-increasing in $\lambda$:
$\KL(\vq_{\lambda_2} \| \vb) \leq \KL(\vq_{\lambda_1} \| \vb)$ for
$0 \leq \lambda_1 \leq \lambda_2 < \infty$, extending to the endpoint
$\mP^\star_\infty := \mP^\star_{\mathrm{Sink}}$ by the displayed limit, with
$\KL(\vq_\lambda \| \vb) \to 0$ as $\lambda \to \infty$.
\end{theorem}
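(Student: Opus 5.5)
The plan is to solve Eq.~\ref{eq:intro-kl} in closed form through its dual and read every claim off the resulting fixed-point equation. With only the row constraint $\mP\bm{1}=\va$ enforced, the Lagrangian stationarity condition gives $P_{ij} = \exp((f_i - C_{ij})/\varepsilon)$, and the KL term contributes a column potential through $\partial_{q_j}\lambda\KL(\vq\|\vb) = \lambda\log(q_j/b_j)$. So the optimizer has the form $P_{ij} = a_i\,\mathrm{softmax}_j\big((g_j - C_{ij})/\varepsilon\big)$ with $g_j = -\lambda\log(q_j/b_j)$, where $q_j = \sum_i P_{ij}$. Equivalently, $\vq_\lambda$ is the unique fixed point of a map that is strictly monotone in $g$. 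Uniqueness follows because the objective is strictly convex (the entropy term) and coercive on the closed feasible set. Full support of $\va$ keeps every entry positive.

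\textbf{Endpoints.} At $\lambda\downarrow 0$, the column marginal satisfies $q_j \ge \min_i a_i\,e^{-(\max C - \min C)/\varepsilon}/V$ uniformly in $\lambda$ once $g$ is bounded. I will show $g$ stays bounded, which gives $g = O(\lambda)\to 0$. The softmax then tends to $\mathrm{diag}(\va)\,\mathrm{softmax}(-\mC/\varepsilon)$.

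At $\lambda\to\infty$, I will use a $\Gamma$-convergence/compactness argument. The family $\mP^\star_\lambda$ lies in the compact set of plans with row marginal $\va$. Comparing with the feasible plan $\mP^\star_{\mathrm{Sink}}$ gives
\[
\lambda\KL(\vq_\lambda\|\vb) \le F_0(\mP^\star_{\mathrm{Sink}}) - F_0(\mP^\star_\lambda) \le \text{const},
\]
where $F_0 := \langle\mC,\cdot\rangle + \varepsilon\Ent$. Hence $\KL(\vq_\lambda\|\vb)=O(1/\lambda)\to0$. Any limit point is then feasible for the two-marginal problem and has $F_0 \le F_0(\mP^\star_{\mathrm{Sink}})$, so by uniqueness the limit point is $\mP^\star_{\mathrm{Sink}}$.

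\textbf{Continuity on $(0,\infty)$.} This follows from the implicit function theorem applied to the fixed-point equation in $g$. The Jacobian is nondegenerate because the dual objective is strictly concave modulo nothing, since the KL penalty makes it strictly concave in $g$. Alternatively, continuity follows from Berge's maximum theorem together with uniqueness.

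\textbf{Monotonicity.} I will use the standard penalty-method exchange argument. Write $F_0(\lambda)$ and $K(\lambda) := \KL(\vq_\lambda\|\vb)$ for the values at the optimizer $\mP^\star_\lambda$. For $\lambda_1<\lambda_2$, optimality of each plan at its own $\lambda$ gives
\[
F_0(\lambda_1) + \lambda_1 K(\lambda_1) \le F_0(\lambda_2) + \lambda_1 K(\lambda_2),
\]
\[
F_0(\lambda_2) + \lambda_2 K(\lambda_2) \le F_0(\lambda_1) + \lambda_2 K(\lambda_1).
\]
Adding the two inequalities yields $(\lambda_2-\lambda_1)\big(K(\lambda_2)-K(\lambda_1)\big)\le 0$, which is the claim. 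The case $\lambda_1=0$ follows by continuity at $0$, established above. The endpoint $\lambda=\infty$ has $K=0$, and the $O(1/\lambda)$ bound gives the limit.

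The main obstacle is the $\lambda\to0$ endpoint together with the continuity at $0$ needed for $\lambda_1=0$. One must show the column potentials $g = -\lambda\log(q/b)$ vanish, which requires a lower bound on $q_j$ that is uniform in small $\lambda$. I would get this by bootstrapping. If $g_j$ were very negative, then $q_j$ would be tiny, which forces $g_j = -\lambda\log(q_j/b_j)$ to be positive, a contradiction. Making this quantitative gives $|g_j|\le \lambda\cdot\log\!\big(V e^{\Delta C/\varepsilon}/\min_i a_i\big)$ for small $\lambda$, up to constants.
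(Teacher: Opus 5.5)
Your proof is correct and shares the paper's skeleton: the same KKT form $g_j=\lambda(\log b_j-\log q_j)$, the same penalty comparison against $\mP^\star_{\mathrm{Sink}}$ (giving $\KL(\vq_\lambda\|\vb)=O(1/\lambda)$) followed by compactness and uniqueness as $\lambda\to\infty$, and the same two-inequality exchange for monotonicity. The one genuine difference is the $\lambda\downarrow0$ endpoint, which you treat as the main obstacle.

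The paper never touches the potentials there. Since $\vb$ has full support, $\KL(\cdot\|\vb)\le\log(1/\min_j b_j)$ on the column simplex, so $(\mP,\lambda)\mapsto\lambda\KL(\mP^\top\bm{1}\|\vb)$ is jointly continuous and vanishes uniformly on the row-feasible set as $\lambda\downarrow0$. Berge's theorem on $[0,L]$ plus uniqueness then gives continuity on the closed half-line, and setting $g=0$ in the KKT form identifies the $\lambda=0$ minimizer as $\mathrm{diag}(\va)\,\mathrm{softmax}(-\mC/\varepsilon)$. So the Berge argument you already offer for $(0,\infty)$ covers the endpoint as well.

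Your potential bound buys something the soft argument does not, namely a rate: $\|g\|_\infty=O(\lambda)$, so every entry of $\mP^\star_\lambda$ differs from the softmax plan by a factor in $[e^{-2\|g\|_\infty/\varepsilon},e^{2\|g\|_\infty/\varepsilon}]$.

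As sketched, though, your bootstrap is one-sided and leans on ``once $g$ is bounded.'' It closes without circularity if you bound the two sides separately.
\begin{itemize}
\item Lower side: $q_j\le1$ gives $g_j\ge\lambda\log b_j$ at once.
\item Upper side: at $j^\ast\in\arg\max_k g_k$, every normalized softmax row puts mass at least $e^{-\Delta C/\varepsilon}/V$ on $j^\ast$, where $\Delta C:=\max_i(\max_k C_{ik}-\min_k C_{ik})$. Hence $q_{j^\ast}\ge e^{-\Delta C/\varepsilon}/V$ and $\max_k g_k\le\lambda(\log V+\Delta C/\varepsilon)$.
\end{itemize}
The resulting constant involves $\min_j b_j$, not $\min_i a_i$.

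Finally, the case $\lambda_1=0$ of monotonicity needs no continuity at all. The exchange argument runs verbatim with $\mP^\star_0$ taken as the unique minimizer of the $\lambda=0$ program.
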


Proof in Appendix~\ref{app:proof-kl-softmax}.

The theorem gives monotonicity, not a rate, so where the constraint is worth
enforcing is empirical (Section~\ref{sec:ot-binding}); elsewhere its effect is
the separation of Theorem~\ref{prop:plateau-freeze} and
Proposition~\ref{prop:ghost-drift}, not a tuning gain.

Finite-horizon RL needs no new optimizer, only a noisy scalar objective: the
loss is the negative expected return, and with $M$ rollouts per candidate the
empirical cost matrix is, with probability $1-\alpha$, a uniform perturbation of
the true one at radius $O(H_{\mathrm{ep}}R_{\max}\sqrt{\log(N/\alpha)/M})$, so
the computed weights are the exact \sysname{} weights for the perturbed matrix
and nothing differentiates through action selection or the simulator
(Proposition~\ref{prop:rl-policy-search}, Appendix~\ref{sec:rl-policy-theory}).
\section{Experiments}\label{sec:experiments}

The primary setting is training a forward pass that jumps: spiking thresholds, INT8
quantization, argmax attention, staircase activations and hard MoE routing, with
MAX-SAT adding an integer-valued objective at up to $10^6$ variables. This is where
prior gradient-free methods have not been evaluated systematically. MNIST, ETTh1 and
quantized RL policy search bound the claim from either side: two objectives where
exact gradients exist, and one where the gradient is exactly zero.
At matched optimizer steps \sysname{} leads all six gradient-free baselines on
validation accuracy across all $36$ architecture-baseline comparisons; untying the
evaluation budget reverses argmax attention, reverses one MAX-SAT size, and levels
hard MoE. Every table uses five seeds, except the evaluation-axis argmax and hard-MoE
comparisons, which use three, and the theory-mode ladder, which uses one; error bars are
$\pm 1$ std.

\subsection{Experimental Setup}\label{sec:setup}

\paragraph{Datasets.}
\textbf{MNIST} \citep{lecun1998gradient}: 60K/10K handwritten digits (28$\times$28 grayscale).
\textbf{ETTh1} \citep{zhou2021informer}: 17,420 hourly oil temperature readings (8640/2880/2880 train/val/test split), z-score normalized using train statistics.
The SNN and non-differentiable model experiments use MNIST with model-specific
architectures, except argmax attention (Fashion-MNIST) and hard MoE (multi-domain
synthetic data). Dataset details are in Appendix~\ref{app:datasets}.

\paragraph{Models.}
MNIST uses a two-layer MLP with 102K trainable parameters (MNISTNet).
Memory scaling uses a VGG-11-style spiking network (SNNVGG11Small, ${\sim}$2.4M parameters) with leaky integrate-and-fire neurons at $T_{\mathrm{seq}}$ timesteps.
The SNN accuracy benchmark uses SpikingMNISTNet, a smaller spiking network with hard LIF thresholds ($T_{\mathrm{seq}}{=}15$ timesteps) on standard MNIST.
Exact architectures are in Appendix~\ref{app:architectures}.

\paragraph{Baselines.}
Gradient-free:
\textbf{CMA-ES} \citep{hansen2016cma}, covariance matrix adaptation;
\textbf{OpenAI-ES} \citep{salimans2017evolution}, isotropic perturbations with
rank shaping;
\textbf{EGGROLL} \citep{sarkar2025eggroll}, low-rank evolution strategies;
\textbf{MeZO} \citep{malladi2023mezo}, in-place zeroth-order SGD with
seed-regenerated perturbations;
\textbf{SPSA} \citep{spall1992multivariate};
and \textbf{subspace random search}, which samples a random direction in the same
subspace and accepts on improvement. The last shares \sysname{}'s representation
and differs only in the update rule, separating the subspace's contribution from
the transport step's.
Gradient-based references where applicable: \textbf{Adam} \citep{kingma2015adam}
or SGD; \textbf{surrogate-gradient} BPTT \citep{neftci2019surrogate} on SNN tasks;
and \textbf{STE}/quantization-aware training \citep{bengio2013ste} on the
quantized tasks. These are ceilings for a tuned differentiable pipeline
on an equivalent architecture, not evidence that gradient methods fail
in general: they fail on the specific hard-threshold forward passes studied
here.

\paragraph{Protocol.}
Every method, baselines included, trains on a training split, selects its
checkpoint on a validation split, and is scored once on test with that
checkpoint; the alternative, maximum test accuracy over epochs, is test-set
selection. On MNIST the two conventions differ by roughly $0.7$~pp for \sysname{} and
not uniformly across methods, so the difference does not cancel in a comparison.
Determinism, the pinned flags and the cross-device spread are in
Appendix~\ref{app:reproducibility}.

\paragraph{Hyperparameters.}
Every primary experiment uses the softmax solver (Eq.~\ref{eq:softmax-rule}), with
$K{=}1$ probe per vertex on the simplex, at $d_p{=}8$ in subspace mode and $d_p{=}2$ in
full-space mode. The column marginal is measured slack in every subspace run and
inconclusive on full-space MAX-SAT, the one primary setting where it could bind, where
the two solvers agree on the answer anyway (Appendix~\ref{app:ot-highparticle}). Every
method, \sysname{} included, receives a two-stage grid of nine configurations per stage
at an equal per-configuration budget, selected on validation, the second stage
recentering each axis on the first stage's selection so that a boundary selection
extends the range instead of being accepted as tuned. Baselines start from reference
defaults with the probe scale converted to a per-coordinate quantity, since \sysname{}'s
radius is the norm of a displacement while a baseline's $\sigma$ is per coordinate;
carried across unconverted it made every baseline probe $\sqrt{d}$ times too far. Where
\sysname{} anneals, the baselines hold the schedule's final value, which favors us.
Sweep mechanics, the selected configurations and the transfer rule are in
Appendix~\ref{app:hyperparameters}.

\paragraph{Compute and statistical testing.}
Every run is on one NVIDIA RTX 5090 (32\,GB) under PyTorch 2.11 and CUDA 13.0, over
seeds $\{42, 123, 456, 789, 1337\}$. Pairwise comparisons use an exact two-sided
sign-flip permutation test over all $2^5$ seed assignments, whose smallest attainable
value is $2/2^5 = 0.0625$; that floor, and the absence of a multiplicity correction
across benchmarks, are in Appendix~\ref{app:stats}.

\subsection{The Matching Axis}\label{sec:matching-axis}

Optimizer steps, candidate evaluations and wall-clock rank the methods differently, so
every table names its axis. \emph{Optimizer steps} is the primary axis, a sequential
dependency no batching removes; one generation of a population method is one step, which
pins each population baseline at population $32$. Table~\ref{tab:step-sweep} is that
comparison, across all $36$ pairings. The per-architecture tables instead give each baseline the
axis that favors it: the population methods run evaluation-matched on the SNN, MNIST, INT8 and
staircase, step-matched on argmax and hard MoE, and the finite-difference methods run
step-matched throughout, since at two evaluations per sequential step MNIST's budget
would demand $5{,}094{,}360$ of them. Each row carries the axis it was budgeted on.

Matching \emph{evaluations} unties the population and shrinks \sysname{}'s step count as
$1/d_{\mathrm{sub}}$: at MNIST's evaluation-matched budget OpenAI-ES takes \mnistStepRatio{} as
many optimizer steps. Section~\ref{sec:other-axes} gives that axis in full, including the
two architectures and the $5{,}000$-variable MAX-SAT size where it changes the answer.
Wall-clock belongs to the implementation and the device rather than to the update rule,
and at population $32$ it measures a throughput handicap of about $90\times$ against a
tuned population; Appendix~\ref{app:wallclock-matched} gives the timings, the
per-result bookkeeping that enforces the axis, and the one representation asymmetry,
EGGROLL's factored subspace. Otherwise every gradient-free method in a table shares the
subspace object, the minibatch stream, and a tuning sweep of the same shape.

Two conventions carry through. Table~\ref{tab:step-sweep} reports \emph{validation}
accuracy, since a trajectory truncated at another method's step count has no test score,
while every per-architecture table reports \emph{test} accuracy at the
validation-selected checkpoint; test runs $0.0$ to $0.9$~pp above validation here, so the
two differ in level and not in ordering. And where an evaluation-axis comparison ran three
seeds, its \sysname{} figure is the same run read on those three, not the five-seed figure.

\subsection{The Benchmark Set}\label{sec:gallery-overview}

The five architectures place the same discontinuity in five different operators
(Table~\ref{tab:gallery-overview}), so a result that holds across them does not depend
on one of them. Each operator is constant between its jumps, so its derivative is zero
almost everywhere and undefined on a null set, and surrogate-gradient and
quantization-aware training must manufacture a gradient the forward pass does not have,
a deliberate mismatch between the function trained and the function deployed. MAX-SAT
differs in kind: counting satisfied clauses takes finitely many values, and no smoothing
of one forward operation helps, because the discreteness lives in the clause count.
All five architectures have measured $J > 0$, the regime where by
Theorem~\ref{thm:smoothing-blowup} a fixed resolution is the strongest guarantee any
smoothing-based argument can supply. A finite-difference probe therefore returns
\emph{exactly} zero once it is small enough to stay inside one quantization cell, the flat-pair
fraction of Section~\ref{sec:introduction}, with the transition at the quantization bin
$\Delta_q$, twice the isolation radius of Definition~\ref{def:pwc-loss}. Methods built on
such probes search at a fixed resolution rather than estimating a derivative, and the
tables below measure that cost. One jump measurement resists the obvious per-sample
model: the INT8 batch jump does not decay as $J = j/n_B$, landing $13$--$300\times$ above
that prediction over a $64\times$ range of batch size, so it is structural rather than a
sampling artifact. Appendix~\ref{app:discontinuity-sets} derives each discontinuity set
and Appendix~\ref{app:measured-jump} measures the jumps.

\begin{table}[ht]
\centering
\caption{Non-differentiable benchmark gallery: each model instantiates a distinct discontinuity geometry with measured jump $J>0$ (zero-fraction of solved wall crossings and median $J$ shown; protocol in Appendix~\ref{app:measured-jump}), so by Theorem~\ref{thm:smoothing-blowup} the attainable target is stationarity at a fixed resolution.}
\label{tab:gallery-overview}
\small
\begin{tabular}{lllccc}
\toprule
Benchmark & Discontinuity set $\mathcal{D}$ & Anchor & zero frac. & median $J$ & Section \\
\midrule
SNN (hard LIF spikes)            & spike-threshold hyperplanes & App.~\ref{app:discontinuity-sets}(a) & $0.67$ & $1.9{\times}10^{-7}$ & \ref{sec:nondiff} \\
INT8 quantization                & quantization-grid walls     & App.~\ref{app:discontinuity-sets}(b) & $0.10$ & $9.7{\times}10^{-6}$ & \ref{sec:nondiff} \\
Hard MoE (argmax routing)        & Voronoi walls               & App.~\ref{app:discontinuity-sets}(c) & $0.00$ & $1.2{\times}10^{-3}$ & \ref{sec:nondiff} \\
Staircase activations            & integer level sets          & App.~\ref{app:discontinuity-sets}(d) & $0.00$ & $3.2{\times}10^{-6}$ & \ref{sec:nondiff} \\
Argmax attention                 & score-tie hyperplanes       & App.~\ref{app:discontinuity-sets}(e) & $0.00$ & $2.0{\times}10^{-5}$ & \ref{sec:nondiff} \\
MAX-SAT 1M variables             & piecewise-constant          & Cor.~\ref{cor:piecewise-constant} & -- & -- & \ref{sec:maxsat} \\
\bottomrule
\end{tabular}
\end{table}

\subsection{Non-Differentiable Model Training}\label{sec:nondiff}

\begin{figure}[t]
\centering
\includegraphics[width=\linewidth,keepaspectratio]{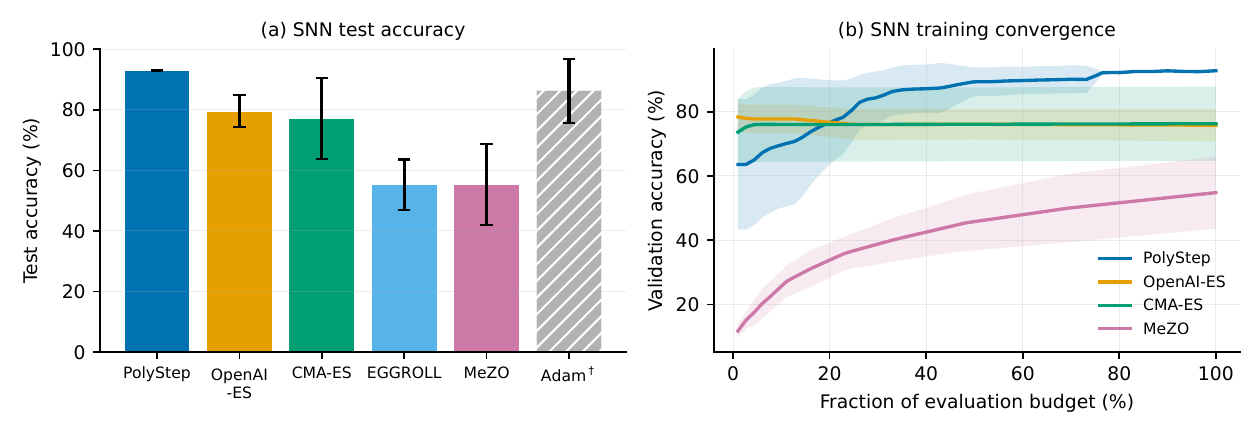}
\caption{SNN benchmark summary (MNIST, hard LIF, 5 seeds). (a)~Test accuracy at the validation-selected checkpoint, axes as in Table~\ref{tab:snn-accuracy}. (b)~Validation trajectories for the four methods with logged curves, mean $\pm 1$ std, against the fraction of each method's evaluation budget; EGGROLL and the Adam$^\dagger$ relaxation appear in panel~(a) only.}
\label{fig:primary-summary}
\end{figure}

The spike threshold is the discontinuity this method is built for, and neuromorphic
deployment is where training without a backward pass is a requirement rather than a
preference. The benchmark is SpikingMNISTNet with hard LIF thresholds,
$T_{\mathrm{seq}}{=}15$ timesteps on standard MNIST.

\begin{table}[t]
\caption{SNN test accuracy (\%, MNIST, hard LIF, 5 seeds, mean $\pm$ std at the validation-selected checkpoint); best gradient-free result in bold. Population baselines are matched on \emph{evaluations}, so their step counts follow their populations; the finite-difference baselines are budgeted on steps. The two axes agree on this architecture. Adam$^\dagger$ trains a surrogate relaxation of the same architecture.}
\label{tab:snn-accuracy}
\centering
\small
\begin{threeparttable}
\begin{tabular}{lrrS[table-format=2.1(3)]}
\toprule
{Method} & {Steps} & {Evaluations} & {Test accuracy (\%)} \\
\midrule
\sysname{} & 3{,}180 & 15.54M & \bfseries 93.0 \pm 0.2 \\
OpenAI-ES & 485{,}645 & 15.54M$^{e}$ & 79.6 \pm 5.2 \\
CMA-ES & 64{,}684 & 15.54M$^{e}$ & 77.1 \pm 13.3 \\
EGGROLL & 485{,}645 & 15.54M$^{e}$ & 55.3 \pm 8.3 \\
MeZO & 3{,}180 & 6.36K$^{s}$ & 55.4 \pm 13.3 \\
SPSA & 3{,}180 & 6.36K$^{s}$ & 53.9 \pm 3.0 \\
Random search & 3{,}179 & 3.18K$^{s}$ & 16.4 \pm 4.4 \\
Adam$^\dagger$ & 3{,}180 & 3.18K & 86.3 \pm 10.6 \\
\bottomrule
\end{tabular}
\begin{tablenotes}[para,flushleft]
\scriptsize
$^{e}$matched on evaluations. $^{s}$budgeted on optimizer steps: the row spends the evaluation total implied by \sysname{}'s step count at reference population $32$, at its own population size, so recorded steps differ where the population does. \sysname{} and Adam set the budget the others are matched to. $^\dagger$Adam trains a differentiable surrogate relaxation of the same architecture, not the hard model.
\end{tablenotes}
\end{threeparttable}
\end{table}

\sysname{} leads the best tuned gradient-free baseline on every seed
(Table~\ref{tab:snn-accuracy}, Figure~\ref{fig:primary-summary}), at the design floor
$p = 0.0625$ of the exact sign-flip test, and does so on all five seeds of MNIST, INT8,
staircase, argmax attention and hard MoE as well.

\paragraph{Surrogate-gradient training.}
The comparison that matters here is against a designed surrogate, not against no
gradient at all. A tuned Adam-surrogate baseline (BPTT with a smooth spike
approximation~\citep{neftci2019surrogate}) trails \sysname{} with a far wider spread,
because the surrogate slope that makes the relaxation trainable is a hyperparameter the
hard model does not have: the sweep spans $20$~pp across $\alpha \in [0.5, 10]$.
Modern SNN pipelines (SLTT~\citep{meng2023towards},
OTTT~\citep{xiao2022online}) exceed $99\%$ with surrogate gradients and a recipe built
around them; that gap is real and we do not close it. What \sysname{} supplies instead
is that no surrogate is designed at all, that evaluation is forward-only and so
composes with neuromorphic hardware in the loop, and that one optimizer covers
thresholds, stochastic firing and hard resets. Cosine $r_s$ and $r_p$ schedules
destabilize this objective (Section~\ref{sec:instability}), so the per-task sweep
settles on flat $\varepsilon{=}0.5$, $r_s{=}2.0$ and biased rotation, which seeds each
step's rotation toward the previous step's descent direction.

\paragraph{The four remaining architectures.}
The same discontinuity sits in a quantizer bin, a floor level, a routing wall and a
score tie. Each is deployed for its own reason, integer inference on edge hardware,
integer-valued features, sparse expert routing and hard attention, and together they
test whether the result depends on one operator rather than on the jump.
\sysname{} trains all four (Appendix~\ref{app:gallery-tables}), and where a smooth
relaxation exists the surrogate-gradient Adam$^\dagger$ setting exceeds it by
$0.8$--$3.2$~pp, on INT8, argmax and staircase, the SNN being the exception. Binary and
ternary weight networks are not measured under this protocol.

\paragraph{Quantized head, pretrained backbone.}
This experiment isolates the quantizer from everything else: freeze a pretrained backbone,
train a small head, and switch only whether that head is quantized. We freeze GPT-2 124M
and train its $1{,}538$-parameter SST-2 head, with every gradient-free method tuned by
the same nine-configuration validation sweep at a matched $500$K-evaluation budget
(Appendix~\ref{app:gpt2-headquant}). The ordering reverses with the quantizer
(Table~\ref{tab:gpt2-headquant}).

On the smooth control the derivative estimators lead: a two-point step
costs $2$ evaluations against \sysname{}'s ${\sim}2{,}300$, so at matched evaluations
MeZO takes three orders of magnitude more steps. That advantage does not survive the
quantizer, so what binds is the estimator and not the budget. MeZO's own sweep on the
INT8 head selects a probe radius one hundred quantization bins wide, which is no longer
a derivative estimate but the fixed-resolution search this paper analyzes; Adam's
gradient through either quantized head is exactly zero. The flat-pair measurement of
Section~\ref{sec:gallery-overview} is this head.

\begin{table}[t]
\centering
\caption{GPT-2 SST-2 head fine-tuning through a quantizer (frozen 124M backbone, 1,538-parameter head; test accuracy \%, 5 seeds, mean $\pm$ std at the validation-selected checkpoint). Gradient-free methods are matched at $500$K evaluations and tuned by the same nine-configuration validation sweep; best gradient-free result per column in bold. Adam's gradient through the INT8 head is exactly zero; the binary bias (2 of 1,538 parameters) stays differentiable, a partial control.}
\label{tab:gpt2-headquant}
\small
\begin{tabular}{lccc}
\toprule
Method & INT8 & Binary & Smooth control \\
\midrule
\sysname{} & $67.1 \pm 6.9$ & $\mathbf{72.7 \pm 3.3}$ & $51.2 \pm 1.4$ \\
MeZO & $\mathbf{67.3 \pm 10.9}$ & $52.0 \pm 3.5$ & $62.6 \pm 2.5$ \\
EGGROLL & $65.8 \pm 2.5$ & $71.7 \pm 4.0$ & $\mathbf{63.7 \pm 1.0}$ \\
\midrule
Adam & $49.8 \pm 1.1$ & $50.1 \pm 1.1$ & $81.6 \pm 0.6$ \\
\bottomrule
\end{tabular}
\end{table}

\begin{table}[t]
\caption{Every method on every architecture at \emph{matched optimizer steps}: each baseline read at the step count \sysname{} spends ($3{,}180$; $6{,}330$ on hard MoE). Validation accuracy (\%), five-seed means (a truncated trajectory carries no test score); standard deviations in Table~\ref{tab:snn-accuracy} and Appendix~\ref{app:gallery-tables}. MNIST is the one column where \sysname{} reuses a transport plan and so evaluates on every third step (Section~\ref{sec:ablation}).}
\label{tab:step-sweep}
\centering
\small
\begin{tabular}{lcccccc}
\toprule
Method & SNN & MNIST & INT8 & Stair & Argmax & MoE \\
\midrule
\sysname{} & $\mathbf{92.9}$ & $\mathbf{96.7}$ & $\mathbf{97.0}$ & $\mathbf{94.3}$ & $\mathbf{87.8}$ & $\mathbf{91.1}$ \\
OpenAI-ES & $77.6$ & $86.2$ & $87.6$ & $86.2$ & $80.7$ & $82.6$ \\
CMA-ES & $72.8$ & $89.7$ & $88.5$ & $74.5$ & $80.2$ & $77.4$ \\
EGGROLL & $37.8$ & $73.2$ & $72.5$ & $41.6$ & $71.4$ & $51.7$ \\
MeZO & $54.8$ & $82.5$ & $79.9$ & $60.9$ & $72.4$ & $54.9$ \\
SPSA & $53.6$ & $21.9$ & $81.6$ & $44.4$ & $67.4$ & $25.6$ \\
Random search & $16.1$ & $9.2$ & $29.0$ & $17.9$ & $43.0$ & $13.5$ \\
\bottomrule
\end{tabular}
\end{table}

\paragraph{The step axis.}
Table~\ref{tab:step-sweep} places all six architectures and all six gradient-free
baselines on the sequential-step axis. A run budgeted on steps keeps its recorded
selection; a run budgeted on evaluations is read off its trajectory at the best
validation point on or before that step count. The table therefore answers progress per
sequential step, not progress per evaluation with the population untied. Published
hard-MoE work
\citep{fedus2022switch,lepikhin2021gshard,zhou2022expertchoice} applies a
straight-through estimator in the backward pass, so the argmax router is never
differentiated; we are not aware of prior work training hard MoE end to end with
forward passes alone and no substitution on the routing operator.

\subsubsection{The Evaluation Axis}\label{sec:other-axes}

Untying the budget leaves four of the six architectures unchanged and reverses the
other two.

\emph{Four are decided before any budget binds.} On the SNN, MNIST, INT8 and staircase
the baselines stop improving long before their evaluations run out, peaking between
$0.2\%$ and $25\%$ of budget and declining or flattening after; a larger population
reaches the same ceiling sooner and no higher. Against the closest baseline in each,
validation margins run $2.5$ to $14.0$~pp at matched evaluations and $7.0$ to
$15.3$~pp at matched steps (Table~\ref{tab:axes}).

\emph{On argmax attention \sysname{} loses.} Given \sysname{}'s $42.01$M evaluations
and a population able to spend them ($2{,}048$, learning rate retuned on validation),
OpenAI-ES takes $20{,}514$ steps and reaches $86.95\%$ test over seeds $42/123/456$
against \sysname{}'s $86.41\%$ read on the same three, leading on every paired seed in a
fifth of the wall-clock. The matched-step margin, $6.6$~pp on test and $7.1$~pp on the
validation figure Table~\ref{tab:step-sweep} prints, is carried by the step axis alone.

\emph{On hard MoE the two methods are level.} At the full $100.55$M evaluations the
same configuration reaches $90.52\%$ over three seeds against \sysname{}'s $90.62\%$,
a difference inside \sysname{}'s own $\pm 0.2$~pp spread. MoE is where the
step cap supplied the largest advantage, $496\times$ the evaluations, and where
removing the cap removes the result.

\emph{The step-deficit diagnosis fails a direct test.} If the argmax result were a step
deficit, more steps should recover it. \sysname{}'s counterpart to the population size
is the subspace rank, which trades search dimension against step count at a fixed budget,
and accuracy is monotone in rank: halving the subspace yields $1.9\times$ the steps at a
$1.0$~pp cost and quartering it yields $3.7\times$ at $2.4$~pp
(Appendix~\ref{app:wallclock-matched}). The reported configuration is the best of the
three, so the result is not a tuning artifact.

\paragraph{Memory scaling.}\label{sec:snn-memory}
Retaining no graph has a structural consequence on recurrent models: BPTT stores
activations at every timestep, at $O(T_{\mathrm{seq}})$ peak memory, while \sysname{}
evaluates one forward pass per probe, so its memory grows with model and batch size
rather than with temporal depth. On SNNVGG11Small at batch~4, over
$T_{\mathrm{seq}} \in [25, 400]$, \sysname{} rises $62\%$ across a $16\times$ depth
jump, attributable to tensor sizes rather than graph retention, against
\memorySavingsMax{} less peak memory than BPTT at the longest horizon measured
(Figure~\ref{fig:scalability-plate}, right). This matters where surrogate gradients are
unavailable and is not a general BPTT replacement.

\subsection{Discrete Optimization at Scale: MAX-SAT}\label{sec:maxsat}

MAX-SAT is the limiting case: the objective is integer-valued, so no smoothing of any
one operator recovers a derivative, and the instance size can be pushed until a
population no longer covers the search space. We run random 3-SAT at the critical ratio
$\alpha{=}4.27$ from $100$ to $10^6$ variables, on $\sigma(\mathbf{x})$ with hard
rounding $\lfloor\cdot\rceil$. At $10^6$ variables the instance has $4.27$M clauses,
our largest non-differentiable run.

\begin{table}[t]
\centering
\caption{MAX-SAT scaling (100--1M variables, $\alpha{=}4.27$, 5 seeds, mean $\pm$ std), percent of clauses satisfied; best per block in bold. Methods are matched on optimizer steps ($1{,}000$ per size), so evaluation counts differ; the evaluation-matched probe at $5{,}000$ variables reverses the ordering (Section~\ref{sec:matching-axis}). At the two smallest sizes a single clause is $0.23$ and $0.05$~pp. Domain-specialized solvers (right of separator) set the ceiling.}
\label{tab:maxsat-scaling}
\small
\begin{tabular}{lccc@{\hspace{0.4em}}|@{\hspace{0.4em}}ccc}
\toprule
 & \multicolumn{3}{c}{\textbf{General-purpose GFO}} & \multicolumn{3}{c}{\textbf{Domain-specialized}} \\
\cmidrule(lr){2-4} \cmidrule(lr){5-7}
\textbf{Variables} & \textbf{\sysname{}} & \textbf{CMA-ES} & \textbf{OpenAI-ES} & \textbf{probSAT}$^{\ddagger}$ & \textbf{SLS} & \textbf{RC2}$^\dagger$ \\
\midrule
100         & $98.0 \pm 0.7$ & $99.0 \pm 0.4$ & $\mathbf{99.4 \pm 0.1}$ & $\mathbf{99.8}$ & $\mathbf{99.8}$ & $\mathbf{99.8}$ \\
500         & $98.1 \pm 0.2$ & $\mathbf{98.4 \pm 0.3}$ & $98.2 \pm 0.2$ & $\mathbf{100.0}$ & $99.9$ & timeout \\
1{,}000     & $\mathbf{98.1 \pm 0.2}$ & $97.3 \pm 0.1$ & $96.9 \pm 0.2$ & $\mathbf{100.0}$ & $99.9$ & timeout \\
5{,}000     & $\mathbf{98.2 \pm 0.1}$ & $95.2 \pm 0.2$ & $92.9 \pm 0.2$ & $\mathbf{99.9}$ & $99.6$ & timeout \\
20{,}000    & $\mathbf{98.2 \pm 0.0}$ & $92.9 \pm 0.2$ & $90.5 \pm 0.1$ & $\mathbf{99.8}$ & $99.1$ & timeout \\
100{,}000   & $\mathbf{98.1 \pm 0.01}$ & $90.7 \pm 0.1$ & $88.9 \pm 0.0$ & $\mathbf{99.6}$ & $97.4$ & timeout \\
$10^6$$^{\S}$ & $\mathbf{92.6 \pm 0.02}$ & OOM & $87.8 \pm 0.0$ & $\mathbf{98.9}$ & $89.4$ & timeout \\
\bottomrule
\end{tabular}
\vspace{0.3em}

{\raggedright\scriptsize $^\dagger$RC2 is a core-guided exact solver optimized for structured/industrial instances; it does not complete on random 3-SAT at $\alpha{=}4.27$ for $n{\ge}500$ within any practical time budget.
$^{\ddagger}$probSAT is given $1.5\times$ \sysname{}'s wall-clock on the same instance, ten restarts, up to $10^7$ flips; it completes in $13$--$89$ seconds throughout.
$^{\S}$At $10^6$ variables each method is measured in one campaign;
CMA-ES is infeasible at that size because the covariance matrix is $O(n^2)$.
The evolution-strategy columns are measured in two independent campaigns at different
perturbation scales and agree (OpenAI-ES gives $88.9\%$ at $10^5$ and
$99.4\%$ at $100$ in both), so the decay of the evolution strategies with $n$
is a property of the methods and not of our harness.\par}
\end{table}

This is a scaling stress test of forward-only optimization, not a competitor to
dedicated SAT solvers: probSAT~\citep{balint2013probsat} leads at every scale and the
table marks that ceiling. The reference floor is the $7/8 = 87.5\%$ worst-case
random-assignment guarantee \citep{johnson1974approximation}, and staying clear of it
as the variable count scales is what a vanishing per-step cost difference makes hard.

Two regimes separate at $1{,}000$ variables (Table~\ref{tab:maxsat-scaling}). Below it
the evolution strategies lead by a few clauses, about six of $427$ at the widest such
margin. Above it \sysname{} is flat across four decades while both evolution strategies
decay toward the floor. A population stays competitive while it still covers the space,
and untying the budget shows where that line falls: at $5{,}000$ variables OpenAI-ES
reaches $98.96\%$ against \sysname{}'s $98.15\%$ at \sysname{}'s own evaluation budget.
CMA-ES becomes infeasible at $10^6$, and its diagonal and limited-memory variants
\citep{loshchilov2017lmcma,akimoto2020ddcma} scale by discarding the inter-variable
covariance that motivates CMA-ES at all, so none was run. The vanishing seed spread at
scale reflects the instance and not the seed: the probe reach spans multiple plateaus,
the complement of Theorem~\ref{prop:plateau-freeze}'s freeze regime, so every seed reads
the same geometry off its cost rows, which a wide step radius is what makes possible.

The $10^6$-variable solve takes $1{,}216$ seconds on one RTX~5090 at about $3$\,GB peak
memory, below the $3{,}744$ seconds the $10^5$ full-evaluation run takes, the difference
being an incremental clause index (Appendix~\ref{app:scalability}). Standard MAX-SAT
benchmarks \citep{hoos2000satlib} test dedicated solvers on structured instances up to
$10^4$--$10^5$ variables, and to our knowledge no prior general-purpose gradient-free
optimizer has been demonstrated on random 3-SAT at this scale under a sequential-step
budget.

\subsection{Theory Mode}\label{sec:theory-mode}

None of the tuned configurations satisfies the schedule condition,
Assumption~\ref{asm:main}(v) (Table~\ref{tab:hypothesis-coverage}). To measure what
that costs, we run \emph{theory mode}, the configuration the theorem describes:
flat $\varepsilon$, step radius $r_{s,t} = r_0 (t+1)^{-0.6}$ so that
$\sum r_{s,t} = \infty$ and $\sum r_{s,t}^2 < \infty$, independently sampled
rotations, mollifier jitter meeting condition~(iv)'s $W^{2,1}$ requirement
(Lemma~\ref{lem:smooth-kernel}), the orthoplex
frame, HybridSubspace, and none of the acceleration heuristics of
Appendix~\ref{app:turbo-mode}. Every other knob is held at the tuned value, so the
gap below is the price of the hypotheses and not of a second hyperparameter change.
The orthoplex is the one restriction the theorem does not require (condition~(i)
admits the simplex the tuned runs use), and it makes theory mode spend more
evaluations per step than the row it is compared against (Section~\ref{sec:ablation}).

\begin{table}[t]
\centering
\caption{Theory mode, which satisfies every settable condition of Assumption~\ref{asm:main}, versus the tuned configuration (5 seeds, test accuracy at the validation-selected checkpoint). The gap is the part of the reported performance that rests on components the analysis does not cover.}
\label{tab:theory-mode}
\small
\begin{tabular}{@{}lccc@{}}
\toprule
Benchmark & Theory mode & Tuned & Gap \\
\midrule
SNN (hard LIF) & $54.7 \pm 16.0$\% & \snnBestAcc{} & $38.3$~pp \\
MNIST          & $91.7 \pm 0.3$\% & \mnistBestAcc{} & $5.1$~pp \\
MAX-SAT (100K) & $96.9 \pm 0.1$\% & \maxsatHundredKAcc{} & $1.2$~pp \\
\bottomrule
\end{tabular}
\end{table}

\paragraph{Attributing the gap.}
Theory mode changes five things at once, so Table~\ref{tab:theory-ladder} switches the
four settable hypotheses one at a time in both directions. Both directions attribute the
gap to condition~(v)'s square-summable step radius, and the other three flips are an
order of magnitude smaller. The fifth change, disabling the acceleration heuristics, has no ladder row of
its own, since probe jitter and plan reuse are mutually exclusive
(Section~\ref{sec:ablation}), so it is folded into the jitter rows and the residuals.
The two directions agree on the attribution and not on the endpoints, and the difference
is arithmetic: removing the decay lands on the tuned configuration carrying the three
remaining hypotheses, whose separate contributions sum to slightly more than the
realized figure because they interact. The ladder's seed-42 theory mode sits inside the
five-seed spread of Table~\ref{tab:theory-mode}.

\begin{table}[ht]
\centering
\caption{Which conditions of Assumption~\ref{asm:main} the accuracy gap is made of (SNN, seed 42, test accuracy at the validation-selected checkpoint). Upper block starts from the tuned configuration and adds one hypothesis; lower block starts from theory mode and removes one. Each $\Delta$ is against the first row of its own block, not against the row above.}
\label{tab:theory-ladder}
\small
\begin{tabular}{llcc}
\toprule
Configuration & Condition & Test acc. (\%) & $\Delta$ vs.\ block head \\
\midrule
tuned configuration &  & $92.83$ &  \\
$+$ smooth radius jitter & (iv) & $93.28$ & $+0.45$ \\
$+$ orthoplex frame & (i) & $93.26$ & $+0.43$ \\
$+$ independent rotations & (ii) & $92.97$ & $+0.14$ \\
$+$ decaying step radius & (v) & $67.96$ & $-24.87$ \\
\midrule
theory mode &  & $65.40$ &  \\
$-$ smooth radius jitter & (iv) & $67.28$ & $+1.88$ \\
$-$ orthoplex frame & (i) & $63.26$ & $-2.14$ \\
$-$ independent rotations & (ii) & $64.83$ & $-0.57$ \\
$-$ decaying step radius & (v) & $93.61$ & $+28.21$ \\
\bottomrule
\end{tabular}
\vspace{0.3em}

{\raggedright\scriptsize Conditions~(iii), (vi), (vii) and~(viii) are not flippable:
(iii) is on in both, (vi) and~(viii) are properties of the trajectory, not
settings, and (vii)'s transverse half is a loss-regularity assumption
(Table~\ref{tab:hypothesis-coverage}).\par}
\end{table}

\paragraph{The horizon effect.}
Over the $3{,}180$ steps the budget allows, theory mode's schedule
$r_{s,t} = r_0(t+1)^{-0.6}$ has already shrunk the step radius by a factor of
$126$, and Proposition~\ref{prop:fragility}'s envelope pins each step's
displacement between $(1-2\varrho)r_{s,t}\varepsilon$ and $r_{s,t}\varepsilon$:
the iterate stops moving well before it stops improving, the condition being
asymptotic and the run not. The prediction holds: a merely mis-scaled schedule
would recover its accuracy at larger $r_0$, yet at
$r_0 \times 2, \times 4, \times 10$ and $\times 100$ theory mode reaches $56.2$,
$47.6$, $9.8$ and $20.8\%$, all below the $65.4\%$ the seed-42 ladder
configuration reaches at $r_0$ (Table~\ref{tab:theory-ladder}), so the loss is the
decay itself and not its starting point.

The deployed configuration departs measurably in exactly one hypothesis,
condition~(v)'s step-radius decay, asymptotically required and prematurely restrictive
at this budget. The ladder is one seed: enough to attribute a $25$~pp effect, not the
$0.4$~pp ones.
\subsection{MNIST}\label{sec:vision}

MNIST is a smooth benchmark, a setting favorable to zeroth-order methods, and serves here as a sanity check.

\begin{table}[t]
\caption{MNIST test accuracy (\%, 5 seeds, mean $\pm$ std at the validation-selected checkpoint), with both cost axes and the axis each row was budgeted on; best gradient-free result in bold.}
\label{tab:mnist-accuracy}
\centering
\small
\begin{threeparttable}
\begin{tabular}{lrrS[table-format=2.1(2)]}
\toprule
{Method} & {Steps} & {Evaluations} & {Test accuracy (\%)} \\
\midrule
\sysname{} & 3{,}180 & 10.19M & \bfseries 96.8 \pm 0.1 \\
OpenAI-ES & 318{,}397 & 10.19M$^{e}$ & 88.0 \pm 0.4 \\
CMA-ES & 128{,}416 & 10.19M$^{e}$ & 94.4 \pm 0.2 \\
EGGROLL & 221{,}493 & 10.19M$^{e}$ & 78.0 \pm 0.8 \\
MeZO & 3{,}180 & 6.36K$^{s}$ & 83.3 \pm 0.9 \\
SPSA & 3{,}180 & 6.36K$^{s}$ & 21.3 \pm 3.5 \\
Random search & 3{,}179 & 3.18K$^{s}$ & 9.1 \pm 2.2 \\
Adam$^\dagger$ & 3{,}180 & 3.18K & 97.7 \pm 0.1 \\
\bottomrule
\end{tabular}
\begin{tablenotes}[para,flushleft]
\scriptsize
$^{e}$matched on evaluations. $^{s}$budgeted on optimizer steps: the row spends the evaluation total implied by \sysname{}'s step count at reference population $32$, at its own population size, so recorded steps differ where the population does. \sysname{} and Adam set the budget the others are matched to. $^\dagger$Adam trains the same differentiable model with exact gradients, the information-advantaged reference.
\end{tablenotes}
\end{threeparttable}
\end{table}

\sysname{} trails Adam by $0.9$~pp (Table~\ref{tab:mnist-accuracy}). The configuration is a cosine temperature schedule at
subspace rank~8 with no momentum and transport-plan reuse every three steps; momentum
costs $0.9$~pp here, on an objective smooth enough that the probes already produce
informative cost differences. MNIST also appears elsewhere under other configurations,
each isolating one factor; only the primary $30$-epoch subspace result is a claim.
Every gradient-free method plateaus below Adam, which is the information asymmetry
between a $d$-dimensional gradient and a scalar.

\subsection{Reinforcement-Learning Policy Search}\label{sec:rl-policy-search}

A deployed policy is often quantized and often runs behind a simulator or a rig that
admits no gradient. Forward-only policy search consumes only the scalar episodic return,
so quantizing the policy costs it nothing, and the setting separates that from the
sample efficiency a policy-gradient method buys with a differentiable path. Across
CartPole-v1 and Acrobot-v1 at three policy
precisions (Float32, INT8, binary), \sysname{} reaches $0.995$--$1.000$ normalized
return and leads OpenAI-ES in all six settings. PPO matches it in Float32 and collapses to
the random floor under INT8 and binary quantization, where its gradient is exactly zero
and no straight-through estimator is used; DQN fails even in Float32 at this budget. The full grid, learning curves, interaction costs
(which run both ways between the two forward-only methods), and limitations are in
Appendix~\ref{app:rl-policy-search};
Proposition~\ref{prop:rl-policy-search} (Appendix~\ref{sec:rl-policy-theory})
connects batched rollouts to the exact weights.

\subsection{LSTM Time-Series Forecasting}\label{sec:lstm-timeseries}

ETTh1 is a smooth regression objective with exact gradients, the regime
Section~\ref{sec:limitations-disc} recommends against, so the question is only whether
the optimizer breaks there. We train a $23$K-parameter LSTM on the oil-temperature
benchmark~\citep{zhou2021informer} at a $96$-step lookback and horizon
(Table~\ref{tab:lstm-timeseries}, Appendix~\ref{app:lstm-details}).

It does not break, and that is all this benchmark establishes. On validation \sysname{}
matches Adam and leads every gradient-free baseline; on test \sysname{}, Adam and
EGGROLL sit within seed noise of one another, with EGGROLL ahead on the point estimate,
so we report this one as a loss. Every learned method misses a persistence baseline, and
the cause is the split rather than the optimizers: the test segment sits at a level the
training segment never visits, so a model fit on train predicts around the wrong mean,
while persistence is invariant to that shift. Neither \sysname{} nor the baselines are
tuned here, the two-stage sweep covering the architectures of
Table~\ref{tab:step-sweep}, so \sysname{} runs a hand-picked configuration against
reference defaults, an asymmetry in our favor.

\subsection{Two Predicted Failure Modes}\label{sec:instability}

Two patterns in these runs follow from Section~\ref{sec:theory} rather than from
implementation defects. The first is late-training accuracy collapse under a decayed
temperature. Propositions~\ref{prop:fragility} and~\ref{prop:stable-schedules} pin the
displacement in a confined excursion whose amplitude depends on the row margin and on
$r_s\varepsilon$ and not on $d$, and the collapse gap is unchanged between the two model
sizes swept (Section~\ref{sec:ablation}). The second is seed-level variance without
biased rotation. Near the jump set the smoothed gradient exceeds $J/(2\delta) - G$
(Theorem~\ref{thm:smoothing-blowup}), so a run that passes close to $\mathcal{D}$ under
an unfavorable rotation sequence receives a large, poorly aligned update; biased
rotation suppresses it by correlating successive rotations with the previous descent
direction. The design rule that follows, flat $\varepsilon$ with decaying $r_s$, is the
schedule Theorem~\ref{thm:piecewise-smooth} certifies, and
Appendix~\ref{app:schedule-fragility} separates it from the alternatives by measurement
against Eq.~\ref{eq:fragility-bound}.

\subsection{Ablation Studies}\label{sec:ablation}

A sweep of $253$ runs over $66$ configurations on MNIST isolates subspace mode, the two
radii and the particle dimension; the temperature schedule, blockwise strategy,
schedule fragility and convergence curves are in Appendix~\ref{app:ablation}.

\paragraph{Ablation protocol.}
This sweep scores by maximum test accuracy over epochs rather than by the protocol of
Section~\ref{sec:setup}. Every comparison is \sysname{} against \sysname{} at a fixed budget and
seed set, so the bias is common to all of them and what we read off is the
\emph{ranking} of design choices; no number here is compared against the primary
tables. A setting with every acceleration feature disabled would separate the core
update from the engineering around it, and is not measured here
(Appendix~\ref{app:turbo-mode}).

\paragraph{What the update rule contributes.}
Four rules sharing identical cost evaluation separate the mechanism from the probe
geometry: the two that interpolate smoothly across the cost row work, and the two that
commit to a single vertex collapse to near chance (Table~\ref{tab:ablation-summary}).
That is the empirical face of Theorem~\ref{prop:plateau-freeze}'s family, and the column
constraint adds nothing here because it does not bind
(Section~\ref{sec:ot-binding}). Reusing the transport plan across steps
is the most effective acceleration feature and the most fragile, since on a jumping
objective the transport matrix changes discontinuously between steps and a reused plan
answers a different problem; the five non-differentiable architecture benchmarks
therefore solve afresh at every step, as does theory mode everywhere.

\paragraph{Where the search happens.}
Does the subspace or the update rule carry the result? Among the four subspace modes of
Section~\ref{sec:subspace}, per-layer random projection at rank~8 is highest
(Table~\ref{tab:ablation-summary}), and the spreads matter more than the means: the
global projection's $\pm 17.2$ is seed-level instability rather than a uniform deficit,
a sharper failure mode than full space's small and consistent gap, and the adaptive mode
changes its projection each step, which leaves the solver nothing to build on. Per-layer
projection preserves layer geometry while compressing each layer, which is why it is the
recommendation. A probe radius of $r_p{=}1.0$ transfers across tasks, smaller values
underexploring and larger ones landing the probes in flat regions; the step radius is
task-dependent and interacts with the temperature schedule.

\begin{table}[t]
\centering
\caption{Ablation summary (MNIST, maximum test accuracy over epochs under the protocol above, so comparable within the table only). Subspace modes are read at fixed steps; particle dimension is read in the sweep's orthoplex configurations, and the simplex configurations every reported run uses return identical per-seed accuracies at both dimensions swept.}
\label{tab:ablation-summary}
\small
\begin{tabular}{@{}llc@{}}
\toprule
Choice & Setting & Test acc.\ (\%) \\
\midrule
\multirow{4}{*}{Subspace mode}
 & per-layer, rank 8 & $\mathbf{93.7 \pm 0.2}$ \\
 & single global projection & $86.1 \pm 17.2$ \\
 & full space & $84.2 \pm 0.2$ \\
 & adaptive & $71.5 \pm 1.3$ \\
\midrule
\multirow{3}{*}{Particle dimension (orthoplex)}
 & $d_p = 2$ & $\mathbf{95.7}$ \\
 & $d_p = 4$ & $94.9$ \\
 & $d_p = 8$ & $93.7$ \\
\midrule
\multirow{4}{*}{Update rule}
 & entropic OT & $\mathbf{96.4 \pm 0.2}$ \\
 & per-row softmax & $\mathbf{96.4 \pm 0.2}$ \\
 & min-cost greedy & ${\sim}12$ \\
 & top-$k$ averaging & ${\sim}12$ \\
\bottomrule
\end{tabular}
\end{table}

\paragraph{Two choices that cost us.}
A lower particle dimension scores higher at fixed steps, since at fixed
$d_{\mathrm{sub}}$ a smaller $d_p$ yields proportionally more particles, each an
independent local search, which outweighs the richer transport signal of a
higher-dimensional polytope. That trend belongs to the orthoplex and does not carry to
the reported runs, so we claim no headroom for the deployed configuration. The polytope
is the one choice selected on the evaluation axis rather than the step axis, and it
works against us: per step the orthoplex probes more directions and scores higher, but
at $d_p{=}8$ it costs $1.78\times$ the evaluations, so the simplex leads at matched
evaluations and every reported run uses it. Both are centered frames of unit vertices,
so Theorem~\ref{thm:piecewise-smooth} covers either. Cosine temperature schedules cost
$2.24$~pp at 102K and $2.46$~pp at 1M parameters against flat ones, with up to
$4.5\times$ variance amplification, consistent with
Proposition~\ref{prop:fragility}'s dimension-free envelope, which bounds the
displacement rather than the accuracy (Appendix~\ref{app:ablation}).

\subsubsection{Binding of the Transport Constraint}\label{sec:ot-binding}

In the subspace regime every primary run occupies, the column marginal is measured
slack, $g_v \approx 0$, so softmax equals the transport solution rather than
approximating it: a rank sweep finds the two solvers indistinguishable at every rank
with no crossover, and the $\lambda$ sweep holds the column-marginal violation at the
numerical floor, ${\sim}3\times10^{-7}$, across five decades.

Whether the constraint helps once it does bind is not settled by our data.
Appendix~\ref{app:ot-highparticle} gives both cases: a full-space MNIST run where
entropic OT leads softmax by $23$~pp, scored by maximum test accuracy over epochs and at
seed spreads of $\pm 11.0$ and $\pm 13.9$; and full-space
MAX-SAT at four instance sizes under the reported protocol, where the two differ by
between $-0.084$ and $+0.028$~pp with alternating sign. What the constraint supplies is
therefore the separation of Theorem~\ref{prop:plateau-freeze} and
Proposition~\ref{prop:ghost-drift}, which is proved, and not an accuracy gain, which is
not.
\section{Related Work}\label{sec:background}

\subsection{Gradient-Free Optimization}\label{sec:bg-gfo}

Zeroth-order methods update parameters from function evaluations
alone~\citep{nesterov2017random}. Evolution strategies sample populations of candidates
and update a search distribution by fitness. CMA-ES \citep{hansen2016cma} adapts a full
covariance matrix, strong below $d \approx 1000$ but costing $O(d^2)$, with
limited-memory and diagonal variants extending its range
\citep{loshchilov2017lmcma,akimoto2020ddcma}; Natural Evolution Strategies
\citep{wierstra2014natural} estimate the natural gradient of expected fitness at lower
memory cost, and OpenAI-ES \citep{salimans2017evolution} simplifies to isotropic
perturbations shared through random seeds.

EGGROLL~\citep{sarkar2025eggroll} structures each perturbation as a rank-$r$ matrix,
cutting per-perturbation cost from $O(mn)$ to $O(r(m+n))$ while the population-averaged
update stays full rank, which makes populations in the hundreds of thousands tractable
on billion-parameter models and supports pretraining a recurrent language model in pure
integer arithmetic. That is the closest prior art to our claim and a baseline here
(Section~\ref{sec:experiments}). The distinction is not scale but what the update does
with its perturbations: EGGROLL, like every evolution strategy, averages fitnesses into
a gradient estimate, which a hard threshold starves, where \sysname{} ranks a structured
probe set and needs only that some probe crosses a wall, not that the crossings average
to a direction. Consensus-based optimization \citep{pinnau2017cbo,dedeyn2025cbo} also
drifts particles toward a softmax-weighted average of evaluated candidates, but weights
the particles themselves toward one global consensus point rather than per-particle
polytope vertices through a transport plan.

SPSA \citep{spall1992multivariate} uses simultaneous random perturbations in two
directions, needing two evaluations per step regardless of dimension but with high
variance. The score-function estimator \citep{williams1992simple} is unbiased on
discontinuous losses, but it differentiates a sampling distribution and so needs a
stochastic policy or a relaxed discrete latent to carry the randomness, with variance
growing in the plateau fraction; our setting is a deterministic forward pass with no
distribution to differentiate. Guided ES \citep{maheswaranathan2019guidedes} biases the
search distribution toward a subspace spanned by surrogate gradients, which is exactly
the guidance a hard threshold does not supply. The classical direct-search family
\citep{audet2017derivativefree} is closest to us in oracle model, pattern search and
Nelder--Mead also ranking candidates rather than differencing them, but its convergence
theory assumes continuous and usually locally Lipschitz objectives, which
Theorem~\ref{thm:smoothing-blowup} rules out here; model-based derivative-free and
Bayesian optimization assume smooth surrogates
\citep{conn2009derivative,frazier2018tutorial}.

\paragraph{Stationarity without gradients.}
A parallel line of work settles what a zeroth-order method can
guarantee on a non-smooth objective; we adopt that frame.
\citet{zhang2020complexity} show that finding a Clarke-stationary point of a
Lipschitz function to small tolerance is intractable, and introduce
$(\delta,\nu)$-Goldstein stationarity, smallness of the gradient
of a $\delta$-smoothing, as the tractable substitute.
\citet{lin2022gradientfree} give the first gradient-free algorithm attaining it,
at $O(d^{3/2}\delta^{-1}\nu^{-4})$ stochastic oracle complexity via
uniform smoothing; \citet{chen2023faster} improve the $\nu$ dependence
with variance reduction, \citet{cutkosky2023online} via online-to-non-convex
conversion, and \citet{kornowski2024algorithm} reach the optimal dimension
dependence $O(d\,\delta^{-1}\nu^{-3})$. Randomization is necessary for
any dimension-free bound \citep{jordan2023deterministic}, and the dimension
dependence is not an artifact of the algorithms: finding a point $\delta$-close to
a $\nu$-stationary one can require exponentially many queries in $d$
\citep{kornowski2021oracle}. The line has since moved to \emph{second-order} Goldstein stationarity
\citep{lei2026secondorder}, which asks for a locally Lipschitz gradient and so
moves further from a jumping loss rather than toward it. Weakening the oracle instead,
\citet{elbakkali2026comparison} reach a $(\delta,\nu)$-Goldstein point from
pairwise \emph{comparisons} alone, the closest oracle to ours, since a
softmax over a cost row uses the row only through its ordering and gaps.

Closest in shape to our own guarantee, \citet{paul2025gaussiansmoothing} obtain
almost-sure convergence to a neighborhood of the Clarke stationary set whose diameter is
set by the smoothing parameter, the Lipschitz analogue of our bias floor; at $J > 0$
there is no Clarke set for an iterate to be near. \citet{liu2026zoposition} survey where
the field stands. Every result in that line assumes a \emph{locally} Lipschitz
objective, and the assumption cannot be dropped, Goldstein stationarity being defined
through the Clarke subdifferential, which by Rademacher's theorem exists only for
locally Lipschitz functions. It has been pushed from global to local
\citep{xia2025beyondlipschitz} under a subgradient growth condition, which is as far as
it goes, and the conservative fields of \citet{boltepauwels2020conservative} sit in the
same frame, their calculus requiring path differentiability. A loss that jumps has no
Clarke subdifferential or conservative field to be small.
\paragraph{Stationarity for discontinuous objectives.}
Outside the Lipschitz literature three routes define stationarity for discontinuous
objectives; a forward-only oracle reaches none of them, for different reasons.

\emph{Directional direct search} is the oldest and the closest in oracle model.
\citet{vicente2012discontinuous} analyze directional direct search on truly
discontinuous objectives through Rockafellar upper subderivatives, from function values
alone, and prove nonnegativity along the limit directions of unsuccessful polls. That
analysis asks the objective to be directionally Lipschitz with respect to the limit
direction, which a jump across a hypersurface does not satisfy, and its final theorem
has since been refuted by an explicit counterexample and repaired with a revealing poll
step \citep{audet2024counterexample}; the failure is exactly a method never seeing the
other side of a jump. Appendix~\ref{app:theory-remarks} sets both against
Corollary~\ref{cor:piecewise-constant}, and \citet{dzahini2025directsearch} survey the
family.

\emph{Vanishing mollification} defines the derivative object as a limit of
smoothings. \citet{gupal1977discontinuous} minimizes a discontinuous function by a
stochastic finite-difference method, forward-only and the closest prior art to
ours, with that route's step-size/smoothing trade-off recently quantified by
\citet{andrieu2025gradientfree}. \citet{zang1981smoothing} replaces the step
function by a smooth one, and \citet{ermoliev1995mollifier} build the
\emph{mollifier subgradient} from cluster points of
$\nabla\Ls_{\delta_k}(\theta_k)$ along $\delta_k \to 0$. A necessary condition at a minimizer is a different object from
a target an algorithm can certify while running, and
Theorem~\ref{thm:smoothing-blowup} closes only the latter: the defining sequences
must avoid $\mathcal{D}$, where $\|\nabla\Ls_\delta\| = \Omega(J/\delta)$, while a
thresholded objective drives the iterate onto it.

\emph{Preference geometry} drops function values altogether.
\citet{scheinberg2026comparison} optimize from a comparison oracle over objectives that
may be nonsmooth, nonconvex or discontinuous, and certify through a regularity radius
read off the preference level sets. That is the closest existing substitute for
stationarity on a loss like ours, and it buys generality with hypotheses of a different
kind: regularity of the preference relation and a local growth condition, where our
certificate is the gradient of a kernel the algorithm itself realizes.

\emph{Structural differentiation} builds the object from the discontinuity itself:
approximate gradients of local linear approximations
\citep{batukhtin1984discontinuous,batukhtin1993discontinuous}, Clarke-inspired
generalized differentiation \citep{moreau2000discontinuous}, and \emph{pseudo
B-stationarity} for an orthant indicator composed with a nonsmooth map
\citep{cui2023piecewise}. Each needs the discontinuity exposed (a named step
function, an epigraphical lifting, or one-sided limits), and a scalar oracle
supplies none of them.

The only target reachable from function values alone is therefore the
fixed-resolution one of Section~\ref{sec:problem}, which
Theorem~\ref{thm:piecewise-smooth} attains; where the loss does not jump,
Corollary~\ref{cor:goldstein} upgrades it to Goldstein stationarity within the
search subspace at a sharp radius.

\paragraph{Backward-pass-free methods.}
MeZO \citep{malladi2023mezo} runs in-place ZO-SGD at inference-level memory,
sparse variants cut its perturbed-parameter count \citep{liu2024lozo}, and
DeepZero \citep{chen2024deepzero} reaches CIFAR-10 and ImageNet scale with
coordinate-wise zeroth-order estimation. Both estimate a derivative, and MeZO is
measured here on all six benchmark architectures: what fails on a jumping
forward pass is the estimate, which needs a derivative to exist.
Quantized evolution strategies \citep{xu2026qes} fine-tune directly in quantized
space, using an accumulated high-precision residual to decide when a discrete
weight flips, and ZeroQAT \citep{tan2025zeroqat} runs quantization-aware training
forward-only, estimating the gradient from quantized inference under an adaptive
smoothing scale: in both, signal is reconstructed at a fixed scale rather than a
vanishing one, the mechanism Theorem~\ref{thm:smoothing-blowup} describes rather
than a counterexample to it.

A larger family removes the backward pass while keeping a differentiability
requirement, on the loss (forward-mode gradients, PseuZO, activity perturbation,
PEPITA) or through a modified architecture or objective (DFA, Forward-Forward,
NoProp, ADMM splitting, gradient-free quantized training)
\citep{baydin2022forward,jo2024pseuzo,dalm2023activity,dellaferrera2022pepita,nokland2016dfa,hinton2022forward,torres2025ffadvances,li2025noprop,taylor2016admm,cohen2024gft};
each needs a derivative to exist somewhere, and a hard threshold supplies none.
Hybrid pipelines for networks with black-box physical layers pair zeroth-order
updates on the hardware parameters with a learned low-rank differentiable
surrogate for gradient propagation \citep{chertkov2025blackbox}: the surrogate
reinstates the differentiability the black-box layer removed.

The same missing derivative is acknowledged on the gradient-based side.
\citet{kwun2025lotion} observe that the quantized loss is piecewise constant, its
gradient zero almost everywhere and undefined at the thresholds, and smooth it
through randomized rounding, which preserves the global minima of the quantized loss.
That is not a counterexample to Theorem~\ref{thm:smoothing-blowup}, which quantifies
over kernels carrying a density supported in a ball of radius $h$: rounding noise is
supported on the quantization lattice and has no such density.
\citet{balauca2025gaussiansmoothingcertified} report the complementary negative result
for the estimator, that randomized gradient smoothing returns zero on a quantized
function because the true gradient is zero almost everywhere. \citet{zhou2025qzo} call MeZO-style perturbation of
the quantized weights infeasible and perturb the continuous quantization scales
instead, and \citet{zhou2025quzo} debias the
low-precision zeroth-order estimate through stochastic rounding.
Working from the same missing gradient,
\citet{shu2026caqzo} align the probe geometry with the quantizer's codebook,
stepping one grid cell in the compander domain, and
\citet{yang2025fogzo} debias STE itself with zeroth-order probes.
\citet{xu2026sadam} also probe random directions on quantization-aware training, but
gradient-based: the probes set a scalar damping factor on an AdamW step, and the target
is $(\delta,\nu)$-Clarke stationarity, which local Lipschitz continuity supplies and an
attained jump does not. Each changes the
objective, the variable, or the estimator to recover a derivative. We address the
complementary setting: models whose forward pass \emph{jumps}, where no gradient
proxy exists, not even a biased one.

\subsection{Spiking Neural Networks and Non-Differentiable Models}\label{sec:bg-snn}

Spiking neural networks (SNNs) use binary spike events as activations, introducing
hard non-differentiability at each spike threshold. Surrogate gradient methods
\citep{neftci2019surrogate} replace the spike function with a smooth
surrogate in the backward pass, enabling gradient flow through temporal unrolling
(BPTT); practitioners' toolkits implement it \citep{eshraghian2023snntorch}, and
it is the default a forward-only method is measured against. They achieve high accuracy on neuromorphic benchmarks but
require storing the full computational graph across $T_{\mathrm{seq}}$ timesteps:
$O(T_{\mathrm{seq}})$ memory.

\citet{mukhoty2023localzo} bring zeroth-order estimation inside this setting,
replacing the surrogate derivative
at each neuron with a \emph{local} zeroth-order estimate that recovers the standard
surrogate in expectation: a derivation of why surrogates work
rather than a way of doing without them. It still backpropagates through the
unrolled network, so it inherits the $O(T_{\mathrm{seq}})$ graph, and the estimator is
local to a neuron rather than to the loss. It is the closest existing use of a
zeroth-order idea on a hard-threshold SNN, but a different object from a
forward-only optimizer: we replace the backward pass, not the derivative inside it.

FFGAF-SNN~\citep{xu2025ffgafsnn}
adapts Forward-Forward to spiking networks and reports $99.58\%$ on MNIST with a
convolutional SNN, but differentiates a local goodness loss within each block,
typically through a surrogate spike, so it needs block-level gradients even
though it needs none globally. It is the better choice whenever local gradients are
available; \sysname{} applies where they are not, such as hardware-in-the-loop
SNNs and proprietary neuromorphic primitives.

Hard-threshold SNNs have also been trained gradient-free:
\citet{patankar2026gfsnn} apply EGGROLL's low-rank perturbations to a LIF network
without surrogate gradients and reach $79.2\%$ on N-MNIST. Forward-only SNN training
that is not function-value-only also exists: pseudo-zeroth-order training injects noise
in the forward pass but corrects it with direct top-down feedback signals
\citep{xiao2024opzo}, so it needs more than the loss scalar. That result and ours
agree on the premise, that a hard threshold admits no gradient proxy, and differ on
the update: a low-rank evolution strategy still forms a noisy gradient estimate from
its perturbations, where \sysname{} ranks a structured probe set by cost.
\citet{li2026satr} give a forward-only trust-region method for recurrent spiking
networks with binary spikes and binary weights, reconstructing a usable signal at a
fixed trust-region scale, not a vanishing one: again the fixed-resolution mechanism
of Theorem~\ref{thm:smoothing-blowup}. SpikingGamma
\citep{koopman2026spikinggamma} removes the surrogate by smoothing the transmission
delays instead of the threshold, which restores a differentiable path rather than
doing without one.

Memory-efficient BPTT variants such as Spatial Learning Through Time
\citep{meng2023towards} reach near-$O(1)$ memory while keeping gradient-based
updates; \sysname{} scales sub-linearly in memory without any gradient access.

Binary and ternary weight networks \citep{courbariaux2016binarynet} use the
straight-through estimator (STE) \citep{bengio2013ste} to approximate gradients through the sign function.
\citet{yin2019understanding} prove that STE's coarse gradient correlates
positively with the population gradient under restrictive assumptions
(binary ReLU networks, Gaussian input), and \citet{jeong2025beyond} extend
this to finite-sample analysis for two-layer binary networks. Outside such
structural assumptions STE remains fragile: the bias depends on the surrogate
slope and the data distribution, and no analogous theory covers hard MoE routing
or staircase activations. Replacing STE's identity Jacobian with a learned
sensitivity surrogate \citep{yi2026jacquant} keeps the hard quantizer in the forward
pass and still supplies the backward map from a fitted model of it.
Gumbel-Softmax and Concrete relaxations \citep{jang2017categorical,maddison2017concrete}
replace the discrete sample itself with a temperature-controlled soft one, making
routing and argmax differentiable at the price of training a different function from
the one deployed, the same price STE incurs through its backward proxy.
NITRO-D \citep{pirillo2024nitrod} rebuilds the optimizer and activation primitives
as native-integer counterparts, a differentiable pipeline in the integer domain;
\sysname{} trains the hard forward pass itself and avoids gradient computation
altogether.
\section{Discussion}\label{sec:discussion}

\subsection{Scope and Positioning}\label{sec:positioning}

One update rule, unchanged across tasks, covers hard-threshold network training, a
combinatorial solver at $10^6$ variables, quantized RL policy search and time-series
regression; task-specific code touches only the cost oracle and the per-task tuning
every method receives equally. The requirement is an interface rather than a domain: a
scalar objective, no derivative, and a parameter count in the thousands to low millions.
\sysname{} should not be used where gradients exist, since Adam is faster and more
accurate there (Sections~\ref{sec:vision}, \ref{sec:lstm-timeseries}), and it is not a
foundation-model method (Section~\ref{sec:limitations-disc}). A practitioner with a
jumping objective and no gradient chooses between a hand-designed surrogate, a
domain-specific heuristic, and a population method;
Section~\ref{sec:experiments} measures that choice on both budget axes.

\subsection{Limitations}\label{sec:limitations-disc}

\paragraph{Accuracy gap on differentiable tasks.}
Where exact gradients exist, gradient-based methods keep the accuracy edge:
$0.9$~pp on MNIST, and on ETTh1 every learned method trails a persistence baseline
(Sections~\ref{sec:vision}, \ref{sec:lstm-timeseries}).

\paragraph{Where the signal fails.}
Gradient-free methods remain limited where small parameter perturbations produce
no informative cost differences, a challenge expected to worsen on sparse
event-driven data (e.g., neuromorphic spike streams) with discrete,
high-dimensional outputs; Sections~\ref{sec:other-axes} and~\ref{sec:maxsat} state
where the evaluation axis reverses or erases the step-axis margins.

\paragraph{The scaling ceiling.}
Each step costs $\tfrac{d_p+1}{d_p} K d_{\mathrm{sub}}$ forward passes, so at a fixed
evaluation budget the number of \emph{steps} falls linearly as the search dimension
grows: a $500$K-evaluation budget yields about $100$ steps at
$d_{\mathrm{sub}} = 4096$ against the $3{,}180$ the MNIST result needed. The mechanism
is architecture-independent and fixed under tuning. A low-rank
evolution strategy \citep{sarkar2025eggroll} instead costs its population size per
step, independent of $d_{\mathrm{sub}}$, because it draws one perturbation per
candidate where \sysname{} probes a structured batch spanning the subspace. The
exchange is a usable direction on a discontinuous objective for a step cost linear in
the search dimension.

At $4.2$M parameters
(SST-2 transformer from scratch) \sysname{} reaches $49.4\%$, at chance, on a rank-64
subspace covering under $0.002\%$ of the parameter space; the all-parameter $124$M
GPT-2 fine-tune likewise collapses at a $128$-dimensional projection, a compression
ratio of $1.0\times10^{-6}$: even a distortion-$\tfrac12$ Johnson--Lindenstrauss
guarantee needs rank ${\approx}664$ at this parameter count and query budget, five
times the deployed rank, and a distortion-$\tfrac1{10}$ one needs ninety times
(Appendix~\ref{app:theory-remarks}). Restoring those ranks raises
$d_{\mathrm{sub}}$, and with it the per-step cost, one for one. What fails is subspace
coverage, not the solver; we have no clean baseline measurement at that scale and make
no claim about gradient-free methods in general there. Every forward-only method meets
the same evaluation-complexity floors: a $\sqrt{d}$ query penalty even on smooth convex objectives
\citep{duchi2015optimal,jamieson2012query}, linear-in-$d$ complexities for Goldstein
stationarity \citep{lin2022gradientfree,kornowski2024algorithm}, randomization
necessary for any dimension-free rate \citep{jordan2023deterministic}, and
exponentially many queries to approach Clarke stationarity
\citep{kornowski2021oracle}. The ceiling lifts where pretraining has already done the
compression and only the head jumps, which is what the frozen-backbone experiment of
Section~\ref{sec:nondiff} measures. Those results are evidence that the update composes
with pretrained features and not a claim about language modeling; MeZO's reported
${\sim}91\%$ is measured on a $105\times$ larger backbone whose forward pass is
differentiable end to end.

\paragraph{Per-step compute cost.}
The per-step count is the simplex case of the general $P \times V \times K$ count from Section~\ref{sec:formulation}; the orthoplex would cost $2K d_{\mathrm{sub}}$. On MNIST ($K{=}1$) it is $10.19$M model forwards over $3{,}180$ steps ($\lceil 8538/8 \rceil \cdot 9 = 9{,}612$ per evaluating step, with only every third step evaluating under three-step plan reuse) against $3{,}180$ forward-backward passes for Adam over the same data and epoch count, a three-order-of-magnitude evaluation gap. Adam's backward pass yields a $d$-dimensional gradient; each of ours yields a scalar, inherent to ZO methods. DeepZero \citep{chen2024deepzero} mitigates this with coordinate-wise estimation, which needs the difference quotient to carry signal; we incur the full zeroth-order cost for a ranking that does not.
Wall-clock belongs to the implementation and the device, not the update rule, so
it is not a column in the result tables (Section~\ref{sec:matching-axis};
Appendix~\ref{app:wallclock-matched}).

\paragraph{RL sample efficiency.}
The RL results are direct policy-search evidence, not a claim that \sysname{}
replaces PPO: PPO uses trajectory structure, a value function and many gradient
updates per batch, and remains the sample-efficiency reference wherever gradients
and standard policy-gradient assumptions hold; \sysname{} applies where the policy
or the simulator does not admit them, including hard action selection, quantized
policies, and hardware in the loop.

\paragraph{Vectorized evaluation.}
Models built only from linear layers take a fused batched-matrix-multiply path;
attention, convolution and normalization layers fall back to a general vectorizing
transform, which runs about $3\times$ slower at identical parameter counts in our
profiling (details and remedies: Appendix~\ref{app:system}).

\subsection{Future Directions}\label{sec:future}

Three directions follow from the cost ceiling: pretrained embeddings to shrink the searched dimension (the head-only GPT-2 result is initial evidence), subspace parameterizations that enlarge the trainable fraction, and hybrid recipes that apply OT-guided updates only to non-differentiable components while backpropagating elsewhere. Every evaluation here is a forward pass, with no gradient tape, activation checkpointing or optimizer state, and distributed execution broadcasts parameters and gathers $O(n_{\mathrm{probes}})$ scalars where BPTT AllReduces $O(D)$ gradients (Section~\ref{sec:snn-memory}), which is not a current advantage.
\section{Conclusion}\label{sec:conclusion}

\sysname{} trains networks whose forward passes jump using forward passes alone:
softmax-weighted polytope probing with barycentric projection, the exact
$\lambda = 0$ endpoint of a KL-penalized optimal-transport problem over the same
polytope geometry. The analysis begins with two impossibility results and delivers
what remains attainable: an exact smoothed-gradient identity for any bounded
measurable loss, a rate to subspace stationarity at fixed resolution, and a bias
floor whose bound has an interior optimal probe radius. One optimizer trains
spiking, quantized, argmax-routed, staircase and hard-MoE networks, solves
million-variable MAX-SAT, and searches quantized RL policies, leading every
gradient-free baseline at matched optimizer steps, an ordering the evaluation
axis reverses or levels on the architectures of Section~\ref{sec:other-axes}. Where gradients exist Adam remains faster and
more accurate. The per-step cost of $\Theta(d_{\mathrm{sub}})$ evaluations is a ceiling
no tuning moves, and it is what leaves \sysname{} at chance on $4.2$M parameters trained
from scratch: an optimizer for hard-threshold networks, combinatorial objectives, black-box
simulators and hardware in the loop, below foundation-model scale. \sysname{} is
open source at \url{https://github.com/anindex/polystep}.

\section*{Broader Impact Statement}
This work concerns optimization for models whose forward pass is not
differentiable. Its direct effect is to widen the set of architectures deployable
on low-precision and event-driven hardware (spiking networks, integer-only
inference, hardware-in-the-loop control), and since forward-only training needs no
gradient tape or activation checkpointing, reduced training energy is the main
positive we foresee. Two considerations weigh the other way: a training signal that
is only ever a scalar is harder to interrogate than a gradient, and the scaling
ceiling of Section~\ref{sec:limitations-disc} means this is not a route to
training large models without gradients. We see no
application-specific ethical concern that would independently affect publication.

\section*{Reproducibility Statement}
Every number in this paper is produced by the released code from result files; the
result tables and accuracy macros are generated rather than hand-maintained
(Section~\ref{sec:setup}), and the five memory macros are transcribed from the
profiler output. Each method trains on a training split, selects its
checkpoint on a validation split, and touches the test split once; a run that violates
this is stamped and rejected by the aggregator, so a clean aggregate is itself the
leakage check. Appendix~\ref{app:reproducibility} gives the checklist, the seeds,
the determinism settings, and the one respect in which our results are
\emph{not} bitwise reproducible across processes, a property of a jumping
objective rather than of the implementation. Hyperparameter
sweeps, their cost, and the selected configurations are released alongside the code,
which also reproduces every numerical evaluation of Appendix~\ref{app:theory-numerics}.

\bibliography{references}
\bibliographystyle{tmlr}

\newpage
\appendix

\section{Proofs and Technical Remarks}\label{app:proofs}

This appendix proves the theorems and corollaries stated in
Section~\ref{sec:theory}. Notation follows that section: $\mP \in \R_{\geq 0}^{P \times V}$
is a transport plan, $\mC \in \R^{P \times V}$ a cost matrix, $\va \in \Delta_P$
the source marginal (uniform), $\vb \in \Delta_V$ the target marginal (uniform),
$\varepsilon > 0$ the entropic temperature, and $\lambda \in [0, +\infty]$
the KL-penalty strength. We write
$\Ent(\mP) = \sum_{i,j} P_{ij}\,(\log P_{ij} - 1)$ for the \emph{negentropy}
and, as in Eq.~\ref{eq:intro-kl},
$\KL(\vp \| \vq) = \sum_j p_j \log(p_j/q_j) - p_j + q_j$ for the generalized
Kullback-Leibler divergence (with $0 \log 0 = 0$), which on probability vectors
agrees with $\sum_j p_j\log(p_j/q_j)$ and off them differs from it by a term linear
in $\vp$. That term is what removes the additive constant from the column potential:
the plain form would shift $g$ by $-\lambda$ and change nothing else. The sign convention matters:
$\Ent$ as defined here is strictly convex, so $\varepsilon \Ent(\mP)$ with
$\varepsilon > 0$ is the entropic regularizer of a convex program, and
$\partial \Ent / \partial P_{ij} = \log P_{ij}$, with no additive constant. Both
$\sum P\log P$ and $\sum P(\log P - 1)$ are convex and differ by a linear term, hence
only in whether the KKT condition carries an additive $-1$; with $\Ent$ as defined
here it does not, so Eq.~\ref{eq:kkt-form} is written without one. What is \emph{not}
admissible is the concave Shannon entropy
$-\sum_{ij} P_{ij}\log P_{ij}$, which would give the opposite sign
throughout, yielding $P^\star \propto \exp(+\mC/\varepsilon)$ rather than the
softmax limit of Theorem~\ref{thm:kl-softmax-endpoints}.

\subsection{Proof of Theorem~\ref{thm:kl-softmax-endpoints}}\label{app:proof-kl-softmax}

\paragraph{(1) Existence, uniqueness, continuity.}
The objective in Eq.~\ref{eq:intro-kl} is the sum of a linear
term $\langle \mC, \mP\rangle$, a strictly convex term $\varepsilon \Ent(\mP)$
on the simplex $\{\mP \geq 0, \mP\bm{1} = \va\}$, and a non-negative
convex term $\lambda \KL(\mP^\top\bm{1}\|\vb)$ for $\lambda \geq 0$. The
feasible set is non-empty (take $P^0_{ij} = a_i b_j$), compact in $\R^{PV}$,
and the objective is lower semi-continuous and coercive on this set. By
Weierstrass it admits a minimum; by strict convexity in $\mP$ (recall
$\varepsilon > 0$) the minimum is unique.

Continuity of $\lambda \mapsto \mP^\star_\lambda$ holds on the \emph{closed} half-line
$[0,+\infty)$, and the endpoint is what matters: the theorem's $\lambda = 0$
claim is that softmax is the limit of the family, and identifying the minimizer
\emph{at} $\lambda = 0$ (part~3 below) does not by itself show
$\mP^\star_\lambda \to \mP^\star_0$ as $\lambda \downarrow 0$. Observing that the
scaling exponent $\alpha = \lambda/(\lambda+\varepsilon)$ tends to zero is not
enough, since the fixed-point scalings vary with $\lambda$ at the same time.

Apply Berge's maximum theorem on $\Lambda := [0, L]$ for any $L < \infty$. The
constraint correspondence is the constant compact set
$\{\mP \geq 0,\ \mP\bm{1} = \va\}$, hence trivially continuous and compact-valued.
For joint continuity of the objective on that set $\times\, \Lambda$, the only term
in question is $\lambda\KL(\mP^\top\bm{1}\|\vb)$. Because $\vb$ has full support,
$\KL(\cdot\|\vb)$ is continuous on the column simplex under the convention
$0\log 0 = 0$ and bounded there by $\log(1/\min_j b_j) < \infty$; so
$(\mP,\lambda) \mapsto \lambda\KL(\mP^\top\bm{1}\|\vb)$ is jointly continuous and
vanishes uniformly as $\lambda \downarrow 0$. Berge then gives an upper
hemicontinuous argmin correspondence, which strict convexity in $\mP$ makes
single-valued, and a single-valued upper hemicontinuous correspondence into a compact
space is continuous. Hence $\lambda \mapsto \mP^\star_\lambda$ is continuous on
$[0,L]$ for every $L$, so on $[0,+\infty)$, and in particular
$\lim_{\lambda \downarrow 0}\mP^\star_\lambda = \mP^\star_0$.

\paragraph{(2) KKT and dual potentials.}
We assume the marginals $\va, \vb$ have full support
($a_i, b_j > 0$ for all $i, j$); strict positivity ensures
$\log b_j$ and the entropic dual potentials are finite, and is
preserved by $\mP^\star_\lambda$ for $\lambda < \infty$ via interior-point
optimality of the entropic objective (so $\log q^\star_j$ is also
well-defined throughout).
Introducing $\vf \in \R^P$ for the row-marginal constraint
$\mP \bm{1} = \va$, the Lagrangian is
\[
\mathcal{L}(\mP, \vf) = \langle \mC, \mP\rangle + \varepsilon \Ent(\mP)
+ \lambda \KL(\mP^\top\bm{1} \| \vb) - \langle \vf, \mP\bm{1} - \va\rangle.
\]
Stationarity $\partial\mathcal{L}/\partial P_{ij} = 0$ at the optimum
gives, after rearrangement,
\begin{equation}\label{eq:kkt-form}
\log P^\star_{ij} = \frac{f_i + g^\star_j - C_{ij}}{\varepsilon} ,
\quad
g^\star_j = \lambda\,\big(\log b_j - \log q^\star_j\big),
\end{equation}
so that in scaling form $v_j := e^{g^\star_j/\varepsilon}$ satisfies
$v = (\vb/w)^{\alpha}$ with $\alpha = \lambda/(\lambda+\varepsilon)$.
Here $q^\star_j = \sum_i P^\star_{ij}$, obtained by differentiating
$\lambda\KL(\mP^\top\bm{1}\|\vb)$ with respect to $P_{ij}$, which gives
$\lambda(\log q_j - \log b_j)$: the $+1$ of $\log(q_j/b_j) + 1$ is canceled by the
$-q_j$ term of the generalized divergence, so no additive constant survives. In scaling form, writing
$\mP = \mathrm{diag}(u)\,\mK\,\mathrm{diag}(v)$ with $\mK = e^{-\mC/\varepsilon}$,
$w = \mK^\top u$ and $\vq = v \odot w$, Eq.~\ref{eq:kkt-form} eliminates to
\[
v \;=\; (\vb / w)^{\alpha}, \qquad \alpha := \frac{\lambda}{\lambda + \varepsilon},
\]
the scaling iteration of \citet[Section~3]{chizat2018scaling}. So $\alpha$ is the
\emph{exponent} of that update, not a potential and not a multiplier.

\paragraph{(3) Endpoint identification.}
At $\lambda = 0$, $g^\star_j = 0$, and substituting into Eq.~\ref{eq:kkt-form}
gives $P^\star_{ij} = e^{(f_i - C_{ij})/\varepsilon}$. Imposing the row constraint
$\sum_j P^\star_{ij} = a_i$ fixes $f_i$ and yields
$P^\star_{ij} = a_i\,\mathrm{softmax}(-C_{i:}/\varepsilon)_j$, exactly
Eq.~\ref{eq:softmax-rule}, with no additive constant in the exponent since
$\partial \Ent/\partial P_{ij} = \log P_{ij}$.

At $\lambda \to +\infty$ the argument is by penalty: setting $\alpha = 1$ in
the finite-$\lambda$ condition would presume the limit it is meant to
establish. Write $F_0(\mP) := \langle\mC,\mP\rangle + \varepsilon \Ent(\mP)$ and let
$\mP^\star_{\mathrm{Sink}}$ be the balanced optimum, which is row-feasible and has
$\KL(\vq\|\vb) = 0$. Optimality of $\mP^\star_\lambda$ against that competitor gives
\[
F_0(\mP^\star_\lambda) + \lambda\,\KL(\vq_\lambda\|\vb) \;\leq\; F_0(\mP^\star_{\mathrm{Sink}}) .
\]
$F_0$ is continuous on the compact row-feasible set, hence bounded below there, so
$\KL(\vq_\lambda\|\vb) \leq (F_0(\mP^\star_{\mathrm{Sink}}) - \inf F_0)/\lambda = O(1/\lambda) \to 0$.
By compactness $\mP^\star_\lambda$ has limit points; each is row-feasible with
$\KL(\vq\|\vb) = 0$, hence balanced, and each satisfies
$F_0(\mP) \leq F_0(\mP^\star_{\mathrm{Sink}})$ by the display above, hence minimizes
$F_0$ among balanced plans. Strict convexity of $F_0$ makes that minimizer unique,
so every limit point equals $\mP^\star_{\mathrm{Sink}}$ and
$\mP^\star_\lambda \to \mP^\star_{\mathrm{Sink}}$, the entropic OT solution of
Eq.~\ref{eq:ot-objective}.

\paragraph{(4) Marginal-violation monotonicity.}
$\lambda \mapsto \KL(\vq_\lambda \,\|\, \vb)$ is
non-increasing on $[0, +\infty]$. Fix
$0 \leq \lambda_1 \leq \lambda_2 \leq +\infty$. By optimality of
$\mP^\star_{\lambda_2}$ for the $\lambda_2$-objective and feasibility
of $\mP^\star_{\lambda_1}$ (the row marginal $\mP\bm{1} = \va$ is
common to both objectives),
\begin{equation}\label{eq:opt-ineq-1}
\langle \mC, \mP^\star_{\lambda_2}\rangle + \varepsilon \Ent(\mP^\star_{\lambda_2})
+ \lambda_2 \KL(\vq_{\lambda_2} \| \vb)
\;\leq\;
\langle \mC, \mP^\star_{\lambda_1}\rangle + \varepsilon \Ent(\mP^\star_{\lambda_1})
+ \lambda_2 \KL(\vq_{\lambda_1} \| \vb).
\end{equation}
By optimality of $\mP^\star_{\lambda_1}$ for the $\lambda_1$-objective
and feasibility of $\mP^\star_{\lambda_2}$,
\begin{equation}\label{eq:opt-ineq-2}
\langle \mC, \mP^\star_{\lambda_1}\rangle + \varepsilon \Ent(\mP^\star_{\lambda_1})
+ \lambda_1 \KL(\vq_{\lambda_1} \| \vb)
\;\leq\;
\langle \mC, \mP^\star_{\lambda_2}\rangle + \varepsilon \Ent(\mP^\star_{\lambda_2})
+ \lambda_1 \KL(\vq_{\lambda_2} \| \vb).
\end{equation}
Adding Eq.~\ref{eq:opt-ineq-1} and Eq.~\ref{eq:opt-ineq-2} and canceling
the common $\langle\mC,\cdot\rangle + \varepsilon \Ent(\cdot)$ terms,
\[
\lambda_2 \KL(\vq_{\lambda_2}\|\vb) + \lambda_1 \KL(\vq_{\lambda_1}\|\vb)
\;\leq\;
\lambda_2 \KL(\vq_{\lambda_1}\|\vb) + \lambda_1 \KL(\vq_{\lambda_2}\|\vb),
\]
which rearranges to
\[
(\lambda_2 - \lambda_1)\,\big[\KL(\vq_{\lambda_1}\|\vb) - \KL(\vq_{\lambda_2}\|\vb)\big]
\;\geq\; 0.
\]
For $\lambda_2 > \lambda_1$ this gives $\KL(\vq_{\lambda_2}\|\vb) \leq \KL(\vq_{\lambda_1}\|\vb)$,
the monotonicity statement of Theorem~\ref{thm:kl-softmax-endpoints}. The case
$\lambda_2 = \lambda_1$ is trivial.

The vanishing limit $\KL(\vq_\lambda\|\vb) \to 0$ as $\lambda \to +\infty$ is the
$O(1/\lambda)$ bound established in part~(3), and monotonicity extends to
$\lambda_2 = \infty$ through it.

Both statements are direct specializations of the unbalanced-OT
monotonicity calculus of \citet{chizat2018scaling}, restricted to the
one-sided setting where the row penalty is infinite (hard row marginal)
and the column penalty equals $\lambda$. \qed

\subsection{Proof of Theorem~\ref{thm:smoothing-blowup}}\label{app:proof-smoothing-blowup}

The whole argument reduces to a one-dimensional statement about the values of
$\Ls_h$ at two points straddling the sheet; it uses no property of the
kernel beyond its support and never replaces the sheet by its tangent plane.

\paragraph{Setup.}
Let $\theta^\dagger \in \mathcal{D}$ be a point of an attained jump. By
Definition~\ref{def:pwc-loss}(b), $\mathcal{D}$ is definable of dimension at most
$d-1$, and a definable set admits a $C^p$ cell decomposition for every finite $p$
\citep{coste2000omin}, so sheets of the kind the attained-jump hypothesis names do exist.
Throughout, $S$ is that hypothesis' sheet: the connected $C^1$ hypersurface of reach at
least $\tau$ on which the jump is attained, and $\theta^\dagger \in S$ is the point the
theorem fixes. Take
$U := B(\theta^\dagger, r_{\mathrm{iso}})$, the isolating ball of
Eq.~\ref{eq:isolation}, inside which $\mathcal{D}$ coincides with $S$ and $S$
therefore separates $U$ into $U^+$ and $U^-$, on each of
which $\Ls$ is $C^1$ with gradient bounded by $G$. That $G$ is finite is part of
Definition~\ref{def:pwc-loss} and does not follow from piecewise $C^1$ alone: the bound is
asked \emph{up to} $S$, not on compact subsets of the open components, and
$\Ls = 1 - \sqrt{x}$ on $(0,\tfrac14)$ has an attained unit jump at $0$ with
$|\Ls'| = (2\sqrt x)^{-1}$ unbounded there. Orient the unit normal $n$ toward $U^+$, so
that the traces satisfy $\Ls^+ - \Ls^- \geq J$ on $S \cap U$. Choose coordinates in
which $\theta^\dagger = 0$, $n = e_d$ and
$U^{+} = U \cap \{y : y_d > g(y_\perp)\}$ with $g(0) = 0$ and $\nabla g(0) = 0$.

Two consequences of reach at least $\tau$ replace any Hessian bound on $g$ and are
all the argument uses \citep{federer1959curvature}. First, the tangent-distance
inequality: every $y \in S$ satisfies
$\mathrm{dist}(y - \theta^\dagger,\, T_{\theta^\dagger}S) \leq
\|y - \theta^\dagger\|^2/(2\tau)$. In the graph coordinates below, with
$s = \|y_\perp\|$ and height $g(s)$, this reads
$g \leq (s^2 + g^2)/(2\tau)$; solving the quadratic for the branch containing
$g(0) = 0$ gives the height bound
$g(s) \leq \tau - \sqrt{\tau^2 - s^2}$ for $s \leq \tau$, attained by the circle of
radius $\tau$ and equal to $\kappa s^2/2$ with $\kappa := 1/\tau$ only to second
order in $s$. A bound on $\|D^2g\|$ over the
whole reach ball is neither available (a circle of radius $\tau$ has tangent-graph second
derivative $\tau^2/(\tau^2 - x^2)^{3/2} > 1/\tau$ away from the tangency) nor needed.
Second, the rolling-ball property: the open balls $B(\theta^\dagger \pm \tau n, \tau)$
meet $S$ nowhere, so for $h \leq \tau/2$ every ball
$B(\theta^\dagger + rn, h)$ with $|r| \in [h, 2h]$ is contained in one
of them and lies wholly on one side of $S$. The requirement $h \leq \tau/2$ is
where the theorem's $\tau/2$ comes from.

Every step below straddles $S$ at some multiple of $h$,
and each is valid only while the straddle stays inside $U$: outside it,
$\{y_d > g(y_\perp)\}$ need not be a component of the complement of $\mathcal{D}$,
so a segment may cross a second sheet and the $C^1$ bound $G$ no longer controls it.
Steps~1 and~2 reach distance $2h$ from $\theta^\dagger$ and Step~3 reaches
$3h$, which is where $r_{\mathrm{iso}}/2$ and $r_{\mathrm{iso}}/3$ come from.

The decomposition $\Ls = J\,\mathbb{1}[y \cdot n > 0] + \Ls_{\mathrm{sm}}$ into a
planar jump plus a smooth remainder is
available only when $S$ is a piece of the tangent hyperplane: for curved $S$ the
remainder still jumps by the full $J$ on the lens between $S$ and that plane, so
it is not $C^1$ and not even continuous. The lens has relative volume
$\Theta(\kappa h)$ inside $B(0,h)$, which is why the conclusion survives,
but its amplitude is $J$ at every $h$, which is why the decomposition itself
does not. The argument below avoids the decomposition entirely.

\paragraph{Regularity.}
For any kernel $\mu_h \in L^1$ and bounded $\Ls$, the smoothing
$\Ls_h = \Ls * \check\mu_h$ is continuous. Write
$D(r) := \Ls_h(\theta^\dagger + rn)$, a continuous function on $[-2h, 2h]$.
The two bounds are read as statements about the $C^{0,1}$ and $C^{1,1}$ seminorms
of $\Ls_h$ along that segment, the second taken in the second-difference form
$\sup_{r,\ell}|D(r+\ell) - 2D(r) + D(r-\ell)|/\ell^2$, which is defined (possibly infinite) for
every continuous $D$ and needs no derivative to exist; this is what makes them true
without a differentiability hypothesis on $\mu_h$. Eq.~\ref{eq:blowup} is a statement about
$\mathrm{Lip}(D)$ and Eq.~\ref{eq:blowup-hess} about that second-difference
seminorm; where $\Ls_h$ is $C^{1,1}$ they equal
$\sup\|\nabla\Ls_h\|$ and $\mathrm{Lip}(D')$, and where it is not there is nothing to
rescue, because no classical derivative was claimed. An arbitrary $L^1$ density does not make
the convolution differentiable: a step loss against a density proportional to
$|u|^{-1/2}$ gives an absolutely continuous smoothing with a singular derivative. The mean value
theorem below is applied to $D$ in the form
$\sup_{[a,b]}|D'| \geq |D(b) - D(a)|/(b-a)$, valid for any Lipschitz $D$ with $D'$
its a.e.\ derivative, so no classical differentiability is used anywhere.
The reading is forced. For the
uniform kernel in one dimension $\Ls_h$ is piecewise affine, so a supremum of
classical Hessians over the points where they exist reads $0$, and
Eq.~\ref{eq:blowup-hess} would be false under that reading while the loss it
describes has a jump of size $1$.

\paragraph{Step 1: the straddle offset.}
Let $\mu_h$ be a probability measure with a density supported in
$B(0,h)$, and let $\ell$ be the smallest offset along the normal at which both
kernel balls clear the sheet,
\[
\ell \;:=\; \inf\{\,\ell' > 0 \;:\; B(\theta^\dagger + \ell'n,\,h) \subseteq U^+
\text{ and } B(\theta^\dagger - \ell'n,\,h) \subseteq U^-\,\} .
\]
Assume $2h \leq r_{\mathrm{iso}}$, so that every point of
$B(\theta^\dagger \pm h n, h)$ lies within $2h$ of $\theta^\dagger$
and hence inside the isolating ball $U$; on $U$ membership of $U^{+}$ is decided by
the graph condition alone. For $\|u\| \leq h$ the point $\ell n + u$ then lies in
$U^+$ exactly when $\ell > g(u_\perp) - u_d$, so
$\ell = \sup_{s \leq h}\varphi(s)$ with
$\varphi(s) := \big(\tau - \sqrt{\tau^2 - s^2}\big) + \sqrt{h^2 - s^2}$, the
extreme case $\|u_\perp\| = s$, $u_d = -\sqrt{h^2 - s^2}$, at the extremal
height the tangent-distance bound permits. Since
$\varphi'(s) = s\big((\tau^2-s^2)^{-1/2} - (h^2-s^2)^{-1/2}\big) \leq 0$ on
$(0,h)$ whenever $h \leq \tau$, the function $\varphi$ is nonincreasing
there and
\begin{equation}\label{eq:straddle-offset}
\ell \;=\; h \quad \text{whenever } h \leq \tau .
\end{equation}
The extremal sheet is the circle of radius $\tau$, and the bound is an upper bound
for any other: a flat sheet
has $\ell = h$ however loose the height bound. An upper bound is what the argument needs, since offsetting further only puts the
ball deeper on its own side.
Beyond $h = \tau$ the tangent-distance inequality constrains nothing at
transverse distance $s > \tau$ and no bound follows from reach alone; the theorem
never operates there, since $h \leq \tau/2$. The straddle-offset numerics of
Appendix~\ref{app:theory-numerics} additionally evaluate the \emph{quadratic model}
$g(s) = \kappa s^2/2$, whose offset is
$h + (\kappa h - 1)_+^2/(2\kappa)$ above its own threshold
$\kappa h = 1$; that formula describes the model, not a consequence of reach.
This is the only place the geometry of $S$ enters, and it enters
for free: below the reach of the sheet, curvature does not widen the straddle at
all. Assume $h \leq \tau$ from
here, so $\ell = h$.

\paragraph{Step 2: gradient lower bound.}
Fix $u$ with $\|u\| < h$ and consider the segment
$r \mapsto rn + u$ for $r \in [-h, h]$. The whole segment lies within
$2h \leq r_{\mathrm{iso}}$ of $\theta^\dagger$, so it meets $\mathcal{D}$ only
in $S$. Its tangential coordinate
$u_\perp$ is constant while $y_d = r + u_d$ is strictly increasing, so it meets
$S = U \cap \{y_d = g(y_\perp)\}$ at exactly one parameter $r^\times$, at a point $p$;
by Step~1 its endpoints lie in $U^+$ and $U^-$, and by isolation the two
sub-segments meet no discontinuity at all. Splitting at $p$ and using the
$C^1$ bound on each side,
\[
\Ls(h n + u) - \Ls(-h n + u)
\;=\; \underbrace{\Ls(h n + u) - \Ls^+(p)}_{\geq\, -G\,|h - r^\times|}
\;+\; \underbrace{\Ls^+(p) - \Ls^-(p)}_{\geq\, J}
\;+\; \underbrace{\Ls^-(p) - \Ls(-h n + u)}_{\geq\, -G\,|r^\times + h|} ,
\]
and the two path lengths sum to $2h$. Taking expectations over
$u \sim \mu_h$,
\begin{equation}\label{eq:two-point}
D(h) - D(-h) \;\geq\; J - 2h G .
\end{equation}
The mean value theorem applied to $D$ on $[-h,h]$ now yields an
$r_1$ with $D'(r_1) \geq J/(2h) - G$, and since
$D'(r) = \nabla\Ls_h(\theta^\dagger + rn)\cdot n$ with $\|n\| = 1$,
\begin{equation}\label{eq:grad-lb}
\big\|\nabla \Ls_h(\theta^\dagger + r_1 n)\big\| \;\geq\; \frac{J}{2h} - G
\;\geq\; \frac{J}{4h}
\qquad\text{for } h \leq \min\{J/(4G),\,\tau/2,\,r_{\mathrm{iso}}/2\} ,
\end{equation}
which is the first bound of Eq.~\ref{eq:blowup}. Nothing here refers to the shape
of $\mu_h$, to its density, or to a tangent plane, and the bound is
dimension-free: it depends on the jump and the smoothing radius only.

\paragraph{Step 3: Hessian lower bound.}
Assume now $3h \leq r_{\mathrm{iso}}$, which is what this step costs over the
last: for $|r| \in [h, 2h]$ the ball $B(\theta^\dagger + rn, h)$
reaches distance $3h$ from $\theta^\dagger$ and so still lies inside the
isolating ball $U$. Being on one side of $S$ by the rolling-ball containment of the
Setup and meeting no other sheet, it lies
entirely in $U^+$ or entirely in $U^-$, so $\Ls$ is $C^1$ on it, $D$ is
differentiable there, and
$|D'(r)| = |\E[\nabla\Ls(\theta^\dagger + rn + u)\cdot n]| \leq G$. The bound is
obtained as a second difference, the form in which the Regularity paragraph defines
the $C^{1,1}$ seminorm; a mean value theorem applied to $D'$ is unavailable, since
$D'$ need not be continuous on $(-h,h)$, and for the uniform kernel in one
dimension it is a step function whose pointwise second derivative is zero wherever
it exists. By Step~2, $D(h) - D(-h) \geq J - 2h G$, so one of the two
halves carries at least half of it: either $D(h) - D(0) \geq (J - 2h G)/2$
or $D(0) - D(-h) \geq (J - 2h G)/2$. In the first case take the second
difference at $(r, \ell) = (h, h)$; since $D(2h) - D(h) \leq Gh$
by the derivative bound above,
\begin{equation}\label{eq:hess-lb}
\frac{\big|D(2h) - 2D(h) + D(0)\big|}{h^2}
\;\geq\; \frac{(J - 2h G)/2 \;-\; Gh}{h^{2}}
\;=\; \frac{J}{2h^2} - \frac{2G}{h}
\;\geq\; \frac{J}{8h^{2}}
\quad\text{for } h \leq \min\{\tfrac{J}{16G},\tfrac{\tau}{2},\tfrac{r_{\mathrm{iso}}}{3}\} ,
\end{equation}
and in the second case, symmetrically, $(r, \ell) = (-h, h)$ with
$D(-h) - D(-2h) \leq Gh$ from the same derivative bound. The middle expression stays above
$J/(8h^2)$ up to $h \leq 3J/(16G)$; the display keeps the stricter
$J/(16G)$ so that both halves of the theorem quote one threshold family.

This is the lower half of Eq.~\ref{eq:blowup-hess}, and it too holds for every
kernel with a density supported in $B(0,h)$: the scaling hypothesis
$\mu_h(x) = h^{-d}\mu(x/h)$ with $\mu \in W^{2,1}$ is needed only
for the matching \emph{upper} bound, which is standard:
$\|\nabla^2(\Ls * \check\mu_h)\|_\infty \leq \|\Ls\|_\infty\,\|\nabla^2\mu_h\|_{L^1}
= O(\|\Ls\|_\infty/h^{2})$, since
$\mu_h(x) = h^{-d}\mu(x/h)$ gives
$\|\nabla^2\mu_h\|_{L^1} = h^{-2}\|\nabla^2\mu\|_{L^1}$.
Hence the smoothness constant appearing in the descent inequality satisfies
$\Lambda_h = \Theta(h^{-2})$. Only the lower bound of
Eq.~\ref{eq:blowup-hess} carries $J$; the upper one carries $\|\Ls\|_\infty\|D^2\mu\|_1$,
so the two-sided statement is in $h$ alone.

\paragraph{Step 4: consequences.}
Claim~(1) is immediate: the bound holds at some point within $h$ of
$\mathcal{D}$ for every $h$, so $\sup_\theta\|\nabla\Ls_h\|$
diverges. Any assertion that $\|\nabla \Ls_{h_t}(\theta_t)\| \to 0$
along $h_t \to 0$ must therefore carry a hypothesis keeping $\theta_t$
away from $\mathcal{D}$, which is not available: $\mathcal{D}$ is where a
thresholded objective drives the optimizer, since that is where the loss changes.

Claim~(2): a conservative field in the sense of
\citet{boltepauwels2020conservative} is defined for path-differentiable
functions, and path differentiability implies local Lipschitz continuity along
absolutely continuous curves, hence continuity. A function with $J > 0$ is not
continuous at $\theta^\dagger$, so it admits no conservative gradient there and
$\partial^c \Ls$, $\partial^\circ \Ls$ are both undefined. The same obstruction
applies to \citet{schechtman2024gradient}, whose variational-stratification
result assumes a definable family of \emph{locally Lipschitz} functions.

Claim~(3) is where the two radii must be kept apart. The kernel the algorithm
realizes is the tail kernel of Eq.~\ref{eq:surrogate-def}, of support radius
$h = \delta_{\mathrm{out}} = r_p\xi_K(1+\eta_{\max})\varepsilon$ and scale
$\delta = \bar r_p\varepsilon$, both $\Theta(r_p\varepsilon)$; the step it takes has
length $\Theta(r_s\varepsilon)$. Substituting
$\Lambda_\delta = \Lambda_h = \Theta(h^{-2})$ into the second-order error allowance
$\Lambda_\delta\,r_s^2\varepsilon^2$
gives an allowance of order $(r_s/r_p)^2$: the two factors of $\varepsilon$ cancel, so
decaying the temperature does not shrink this bound at all, though the realized
remainder is free to undershoot it. Telescoping a descent
inequality with it over $T$ steps accumulates an allowance of order $(r_s/r_p)^2\,T$, and
dividing by $\sum_{t \leq T}\varepsilon_t = \Theta(\sqrt T)$ under
$\varepsilon_t \propto t^{-1/2}$ leaves a bound of order $\sqrt T$, which is
vacuous. \qed

\paragraph{Sharpness.}
The constant $1/2$ in Eq.~\ref{eq:grad-lb} cannot be improved without a
hypothesis on the kernel's shape: for the uniform kernel on $[-h,h]$ in
one dimension, $\Ls_h$ is exactly affine across the jump with slope
$J/(2h)$, so Eq.~\ref{eq:two-point} holds with equality and the mean value
theorem is not lossy. In higher dimensions the realized constant is strictly larger
(for the uniform ball in $\R^4$ it is $8/(3\pi)$, computed below), so
Eq.~\ref{eq:grad-lb} is a floor rather than an estimate. The obstruction is
structural rather than an artifact of our probe
distribution because the floor is reached by transporting the entire jump across
a window of width $2h$: a forward-only method may choose where to sample,
but if its samples span the sheet at all then the smoothed loss must traverse $J$
within the span, and if they do not span it then the sheet is invisible to the
method.

\paragraph{The constants, in closed form.}
For the uniform ball in $\R^4$ applied to a unit step, $\Ls_h(x)$ is the fraction
of $B(x,h)$ with $y_1 \geq 0$, so
$\partial_{x_1}\Ls_h = \mathrm{vol}_3\big(B_3(\sqrt{h^2-x_1^2})\big)/\mathrm{vol}_4\big(B_4(h)\big)
= \tfrac43\pi(h^2-x_1^2)^{3/2}\big/\tfrac12\pi^2h^4$, maximal at $x_1 = 0$ with
value $\sup_x\|\nabla\Ls_h\| = 8/(3\pi h)$. Differentiating once more,
$|\partial^2_{x_1}\Ls_h| \propto |x_1|(h^2-x_1^2)^{1/2}$ is maximal at
$|x_1| = h/\sqrt2$, giving $\Lambda_h = 4/(\pi h^2)$. Both exceed the
lower bounds of Eq.~\ref{eq:blowup} and Eq.~\ref{eq:blowup-hess} by an absolute
factor, as they must. Appendix~\ref{app:theory-numerics} evaluates these against the
realized kernel and against two controls that separate the jump from the smoothing.

\subsection{Proof of Lemma~\ref{lem:generic-section}}\label{app:proof-generic-section}

Write $E_i \subset \R^{d_{\mathrm{sub}}}$ for the coordinate $d_p$-plane assigned to
particle $i$.

\begin{lemma}[Generic sections of a definable discontinuity set]\label{lem:generic-section}
Let $\mathcal{D} \subset \R^d$ be definable in an o-minimal structure with
$\dim\mathcal{D} \leq d-1$. Let $\Pi \in \R^{d\times d_{\mathrm{sub}}}$ have columns
whose laws place zero mass on every hyperplane $\{x : a\cdot x = 0\}$, $a \neq 0$.
Then for every particle $i$ and Lebesgue-almost every
base point $\theta$, $\dim(\mathcal{D}\cap(\theta + \Pi E_i)) \leq d_p - 1$. The
exceptional set of base points is Lebesgue-null in $\R^d$ for \emph{every} projection,
generic or not.
\end{lemma}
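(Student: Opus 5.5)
The plan is a fibration and dimension count. Write $A := \Pi E_i \subset \R^d$ for the linear $d_p$-plane swept by particle $i$. When $\Pi$ restricted to $E_i$ has rank below $d_p$, the affine section $\theta + A$ has dimension below $d_p$, so every subset of it, including $\mathcal{D}\cap(\theta+A)$, has dimension at most $d_p-1$ and there is nothing to prove. The column-law hypothesis, that no column lies in a fixed hyperplane, makes full rank almost sure for generic projections. It is used only for this reduction, which is why the lemma can state the null exceptional set for every projection.

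So assume $\dim A = d_p$ and fix a complement $A^\perp$. The affine planes $\theta + A$ are indexed by $\theta' := \mathrm{proj}_{A^\perp}\theta \in \R^{d-d_p}$. I then introduce the definable set
\[
\mathcal{B} := \{\theta' \in A^\perp : \dim(\mathcal{D}\cap(\theta'+A)) = d_p\}.
\]
It is definable because, in an o-minimal structure, dimension of fibers of a definable family is a definable function of the parameter (van den Dries; \citealp{coste2000omin}). The fiber-dimension formula then gives
\[
\dim \mathcal{D} \;\geq\; \dim\{(\theta',y) \in \mathcal{D}: \theta'\in\mathcal{B}\} \;=\; \dim\mathcal{B} + d_p ,
\]
applied to the definable map $\mathcal{D}\to A^\perp$, $y\mapsto \mathrm{proj}_{A^\perp}y$, restricted over $\mathcal{B}$. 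Since $\dim\mathcal{D}\leq d-1$, this yields $\dim\mathcal{B}\leq d-d_p-1 < \dim A^\perp$.

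A definable set of dimension below the ambient dimension is Lebesgue-null. The exceptional set of base points is $\mathcal{B}\times A$, the preimage of $\mathcal{B}$ under the orthogonal projection onto $A^\perp$, so it is null in $\R^d$ by Fubini. Off this set the section has dimension at most $d_p-1$, since it is a definable subset of a $d_p$-plane whose dimension is not $d_p$. Nothing here used genericity of $\Pi$ beyond the rank split, which gives the ``every projection'' clause.

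The step needing the most care is the fiber-dimension inequality. I will cite the o-minimal fiber theorem: for definable $f: X\to Y$, the set $Y_k=\{y:\dim f^{-1}(y)=k\}$ is definable and $\dim f^{-1}(Y_k)=\dim Y_k+k$. This also shows that measurability of $\mathcal{B}$ is automatic.
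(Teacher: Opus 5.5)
Your argument is correct and is essentially the paper's own proof of the general definable case: the bad base points form a union of cosets of $\Pi E_i$, and the o-minimal fiber-dimension formula gives $\dim\overline{\mathrm{Bad}} + d_p \le \dim\mathcal{D} \le d-1$, so the exceptional set is null for every projection. One quibble: your proof never actually uses the column-law hypothesis, because the rank-deficient case is trivial on its own, and a condition on the individual column laws would not give almost-sure full rank without a joint condition anyway; the paper uses that hypothesis only in a separate affine base case, to get the stronger statement that holds for every base point, almost surely in $\Pi$.
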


The dimension bound says $\mathcal{D}$ meets the probe plane in a set of measure zero
within that plane. The column hypothesis is met by a matrix with i.i.d.\ absolutely
continuous entries, and by the thin-QR $Q$ factor of such a draw.
HybridSubspace does \emph{not} meet the hypothesis on $\Pi$: each of
its columns is supported in one layer's coordinate block, so it places full mass on the
hyperplanes orthogonal to the other blocks, which is why
Lemma~\ref{lem:blockwise-section} carries the deployed case.

The quantifier matters: because the iterates depend on $\Pi$, an almost-every-$\theta$
statement does not by itself cover them. Neither lemma is used by
Theorem~\ref{thm:piecewise-smooth}, whose Step~2 expands nothing; they say the cost matrix
is generically read off the smooth pieces.

This is less than a general generic-section theorem: only what the convergence
argument needs, which can be established by hand for the discontinuity sets that
arise.

\paragraph{Requirements.}
Fix a particle $i$ with probe plane $\theta + \Pi E_i$, with $E_i$ and $\Pi$ as in the
lemma. Almost surely in $\Pi$, for \emph{every} base point $\theta$, the set
$\mathcal{D} \cap (\theta + \Pi E_i)$ has measure zero within the plane.

\paragraph{The affine base case.}
Where a threshold's pre-activation is affine in the parameters (the single-layer case of
Appendix~\ref{app:discontinuity-sets}), $\mathcal{D}$ coincides, on each cell of a
finite polyhedral decomposition of parameter space, with a finite union of affine
hyperplanes $H_m = \{x : a_m \cdot x = b_m\}$, $a_m \neq 0$, and it suffices to treat
one hyperplane on one cell. Depth-two and deeper pre-activations give quadrics and
general semialgebraic sheets, for which the affine reading is false; those are covered by
Lemma~\ref{lem:blockwise-section}, and this lemma treats the base case.

The plane $\theta + \Pi E_i$ meets $H_m$ in a set of positive measure within the
plane if and only if the plane is contained in $H_m$, which requires
$a_m \perp \Pi E_i$, that is, $a_m^\top \Pi e = 0$ for every one of the $d_p$ basis
vectors $e$ of $E_i$. Writing $\pi_1, \dots, \pi_{d_p}$ for the corresponding columns of
$\Pi$, this is the event $\bigcap_{k \leq d_p}\{a_m^\top \pi_k = 0\}$, and it suffices to
rule out one of them. Each event asks
$\pi_k$ to lie on the hyperplane $\{x : a_m \cdot x = 0\}$, which the hypothesis on
the column laws rules out with probability one. So
\[
\Pr\big[\theta + \Pi E_i \subseteq H_m \ \text{for some } \theta\big] = 0 ,
\]
and, this being a statement about the \emph{direction} $\Pi E_i$ alone, it holds
simultaneously for every base point $\theta$. A union over the finitely many
hyperplanes $H_m$, the finitely many cells, and the finitely many particles $i$
preserves the null set. On the complement, $\mathcal{D} \cap (\theta + \Pi E_i)$ is
contained in a finite union of proper affine subspaces of the plane, hence has
measure zero within it. \qed

\paragraph{The general definable case.}
For $\mathcal{D}$ merely definable with $\dim \mathcal{D} \leq d-1$ the conclusion
still holds, and by an argument that needs no genericity at all. Fix any linear
$\Pi$ and any coordinate plane $E$, and let
\[
\mathrm{Bad}(\Pi, E) \;:=\; \{\theta \in \R^d : \dim\big(\mathcal{D} \cap (\theta + \Pi E)\big) = d_p\} .
\]
$\mathrm{Bad}$ is definable, and by construction it is a union of cosets of
$\Pi E$: whether the conclusion fails depends only on which translate of the plane
$\theta$ lies in. Let $\overline{\mathrm{Bad}}$ denote the corresponding subset of the quotient
$\R^d / \Pi E$. Every $\theta \in \mathrm{Bad}$ contributes a $d_p$-dimensional
subset of $\mathcal{D}$, and these subsets are disjoint across distinct cosets, so
the fiber-dimension formula for definable sets gives
$\dim \overline{\mathrm{Bad}} + d_p \leq \dim\mathcal{D} \leq d-1$, whence
$\dim \mathrm{Bad} = \dim \overline{\mathrm{Bad}} + d_p \leq d - 1$. So $\mathrm{Bad}$ is
Lebesgue-null in $\R^d$ for \emph{every} projection, generic or not. This is the
statement Lemma~\ref{lem:blockwise-section} uses, and it is why the transversality
measurement of Section~\ref{sec:thm-piecewise-smooth} does not settle the question
it appears to settle.
The naive form ``for a.e.\ $L$ and
\emph{all} $\theta$, $\dim(\mathcal{D} \cap (\theta + L)) \leq \dim\mathcal{D} + d_p - d$''
is false as stated: for $\mathcal{D} = \{0\} \subset \R^2$ and $d_p = 1$ the
right-hand side is $-1$, yet a line through the origin meets $\mathcal{D}$ in a
point. What survives, and all we use, is the bound
$\dim(\mathcal{D} \cap (\theta + L)) \leq d_p - 1$.

\subsection{Proof of Lemma~\ref{lem:blockwise-section}}\label{app:proof-blockwise-section}

\begin{lemma}[Sections at the realized iterates]\label{lem:blockwise-section}
Let $\mathcal{D} \subset \R^d$ be semialgebraic with $\dim\mathcal{D} \leq d-1$, as
Appendix~\ref{app:discontinuity-sets} verifies for every architecture we train, and
write $\mathcal{D} \subseteq \bigcup_m \{h_m = 0\}$ for the cell polynomials the proof
constructs. Run \sysname{} with any fixed projection $\Pi$, an initialization
$\theta_0$ whose law is absolutely continuous, and per-particle step-radius jitter with
a density (Eq.~\ref{eq:step-jitter}). Then almost surely $h_m(\theta_t) \neq 0$ for
every $m$ and every $t$. Hence $\theta_t \notin \mathcal{D}$, and, each $h_m$
restricted to the plane being a polynomial that does not vanish at the base point,
$\dim(\mathcal{D}\cap(\theta_t + \Pi E_i)) \leq d_p - 1$ for every particle $i$ and
every $t$, without assuming transversality.
\end{lemma}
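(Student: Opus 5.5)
The plan is an induction on $t$ that carries, at every step, the claim that the law of $\theta_t$ gives zero mass to each zero set $Z_m := \{h_m = 0\}$. This is the statement made by conditioning on every draw except the current step-radius jitter. First I fix the cell polynomials. $\mathcal{D}$ is semialgebraic of dimension at most $d-1$, so its Zariski closure is a proper algebraic set. Hence it lies in $\bigcup_m\{h_m = 0\}$ for finitely many nonzero polynomials $h_m$; for instance, take a generator of a nonzero ideal element vanishing on each cell of a semialgebraic cell decomposition. Each $Z_m$ is then a proper real algebraic hypersurface, and so is Lebesgue-null in $\R^d$. The base case $t=0$ follows at once, since the law of $\theta_0$ is absolutely continuous and a finite union of null sets is null.

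For the inductive step, condition on $\mathcal{F}_t$ together with the step-$t$ rotations, probe jitter, minibatch and cost row. After this conditioning, Eq.~\ref{eq:step-jitter} leaves only the per-particle jitters $\eta_1,\dots,\eta_P$ random. The update reads $\theta_{t+1} = \theta_t + \Pi\sum_i (1+\eta_i)\,w_i$, where $w_i := r_s\varepsilon\, a_i^{-1}\sum_v T^*_{iv}R_i v_v$ is embedded in block $E_i$. The map $\eta \mapsto \theta_{t+1}$ is affine in $\eta\in\R^P$. Composing, $\eta\mapsto h_m(\theta_{t+1}(\eta))$ is a polynomial $\tilde h_m$ on $\R^P$. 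The $\eta_i$ are independent with densities, so their joint law is absolutely continuous on $\R^P$. Therefore $\Pr[\tilde h_m(\eta)=0]=0$ as soon as $\tilde h_m$ is not identically zero, because the zero set of a nonzero polynomial is null. The whole step therefore reduces to excluding the degenerate case $\tilde h_m\equiv 0$.

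I expect that exclusion to be the main obstacle. It can genuinely fail: if every $w_i = 0$, which happens on a plateau by Theorem~\ref{prop:plateau-freeze}, then $\theta_{t+1}=\theta_t$. More generally, the affine image $\{\theta_t + \Pi\sum_i(1+\eta_i)w_i\}$ may lie inside $Z_m$. I would handle this in two ways. First, if $\tilde h_m \equiv 0$, then in particular $\tilde h_m(0) = h_m(\theta_t + \Pi\sum_i w_i) = 0$, and the degenerate case $w=0$ gives $h_m(\theta_t)=0$, which the inductive hypothesis excludes almost surely. In general, though, I need the line or flat through the point to escape $Z_m$. So I would try to make the induction stronger, carrying absolute continuity of the law of $\theta_t$ itself rather than mere avoidance. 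The argument for that would be that the one-step map, for fixed auxiliary draws, is $\theta\mapsto\theta+\Pi\sum_i(1+\eta_i)w_i(\theta)$, where $w_i(\theta)$ is measurable and piecewise smooth through $\mathcal{C}$.

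If pushing absolute continuity forward fails, since the map can collapse volume on plateaus, I would fall back on the fact that the exceptional event is exactly the event that the jitter flat lies in $Z_m$. I would then bound its probability by the law of $\theta_t$ on $Z_m$, using that the flat contains $\theta_t+\Pi\sum_i w_i$. If even that is too weak, I would add the observation that on the event $\{\tilde h_m\equiv0\}$ the point $\theta_t$ itself is a value of the flat when one allows $\eta_i=-1$. Here polynomial identity in $\eta$ extends beyond the jitter support. That forces $h_m(\theta_t)=0$, contradicting the inductive hypothesis. This last trick is the one I would try first, since it avoids any absolute-continuity bookkeeping.

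Finally, take a countable union over $m$ and $t$ to obtain the almost-sure statement $h_m(\theta_t)\neq0$ for all $m,t$, which gives $\theta_t\notin\mathcal{D}$. For the section claim, restrict $h_m$ to the plane $\theta_t+\Pi E_i$, parametrized by $z\in\R^{d_p}$. This gives a polynomial that is nonzero at $z=0$, hence not identically zero. Its zero set in the plane therefore has dimension at most $d_p-1$. Since $\mathcal{D}\cap(\theta_t+\Pi E_i)$ lies in the finite union of these zero sets, the dimension bound follows with no transversality assumption on $\Pi$.
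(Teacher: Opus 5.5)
Your proposal is correct and follows the paper's proof step for step: a nonzero polynomial per cell from the proper Zariski closure, affinity of $\theta_{t+1}$ in the per-particle jitters $\eta \in \R^P$, and the exclusion of $Q \equiv 0$ by evaluating at $\eta = (-1,\dots,-1)$, where the displacement vanishes and $Q$ reduces to $h_m(\theta_t) \neq 0$. The remaining steps also match: induction from the absolutely continuous initialization, and restriction of $h_m$ to the probe plane; the detour through pushing absolute continuity of the law of $\theta_t$ forward is unnecessary, and the paper deliberately avoids it because the one-step law is a point mass on a plateau.
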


The conclusion runs through the $h_m$ and not through $\theta_t \notin \mathcal{D}$,
which alone would not give it: for $\mathcal{D} = [1,2]\times\{0\}$ and the line
$\R \times \{0\}$ the base point at the origin lies off $\mathcal{D}$ while the section
is one-dimensional.

HybridSubspace is not the projection Lemma~\ref{lem:generic-section} assumes: it is
block-diagonal by layer, with an identity block for every one-dimensional parameter and for
any weight matrix the rank budget does not compress. For a
wall $\{n^\top x = c\}$ there are three cases: transversal, and the section is an affine
$(d_p-1)$-subspace; parallel with $\theta$ off the wall, and the section is empty; parallel
with $\theta$ on the wall, and the section is the whole plane. Only the third violates the
conclusion, and it needs the base point to sit exactly on $\mathcal{D}$. The lemma is
stated for semialgebraic $\mathcal{D}$ because a depth-two pre-activation
$w_2^\top W_1 x$ is bilinear in the joint parameter vector, so its threshold set is a
quadric and not a hyperplane.

Write $\mathcal{S} = \mathrm{range}(\Pi)$, of dimension $d_{\mathrm{sub}}$; the
iterates never leave the affine subspace $\theta_0 + \mathcal{S}$:
every update is $\Pi$ applied to a subspace displacement. All statements below are
within that subspace, which is also where Theorem~\ref{thm:piecewise-smooth}'s
conclusion lives (Remark~\ref{rem:plane-smoothing}).

\paragraph{Step 0: one polynomial per cell.}
By Appendix~\ref{app:discontinuity-sets} each architecture's $\mathcal{D}$ is
semialgebraic with $\dim\mathcal{D} \leq d-1$. A semialgebraic set of dimension at
most $d-1$ has Zariski closure of the same dimension, hence a \emph{proper}
algebraic subset of $\R^d$, so it is contained in $\{x : h(x) = 0\}$ for some
nonzero polynomial $h$ \citep[Prop.~2.8.2]{bochnak1998real}. Writing $\mathcal{D}$
as a countable locally finite union of cells and taking one such $h_m$ per cell, it
suffices to show that almost surely $h_m(\theta_t) \neq 0$ for every $m$ and $t$,
and then to take a countable union.

The argument runs on a polynomial rather than on an affine wall because the
affine reading is false for the architectures we train: the threshold set
of a depth-two pre-activation is a quadric even inside a fixed
activation cell. The polynomial argument is no longer than the affine one and covers
both.

\paragraph{Step 1: affinity in the jitter.}
Write $\theta_{t+1} = \theta_t + \sum_i (1+\eta_i)\, s_i$ with
$s_i := \Pi\iota_i u_i$, where $u_i \in E_i$ is the barycentric displacement of
particle $i$ and $\iota_i$ is the inclusion of block $i$. The displacement is
measurable with respect to $\mathcal{F}_t$ together with the step-$t$ rotations,
probe jitter and minibatch, since the cost row producing it is read at the probe
points; the step jitters $\eta_i$ are drawn independently of all of those, per
particle, with a density supported in $(-\eta_{\max}, \eta_{\max})$. Condition on
everything the displacement depends on, leaving only the $\eta_i$ random, so that the
$s_i$ are fixed. The map
$\eta \mapsto \theta_{t+1}$ is affine, hence
\[
Q(\eta) \;:=\; h_m\Big(\theta_t + \sum_i (1+\eta_i)\, s_i\Big)
\]
is a polynomial in $\eta = (\eta_1,\dots,\eta_P) \in \R^P$ of degree at most
$\deg h_m$.

\paragraph{Step 2: $Q \not\equiv 0$.}
Suppose $Q \equiv 0$ on $\R^P$. Polynomials are determined by their values on any
open set, so this holds at every $\eta$, in particular at
$\eta = (-1,\dots,-1)$, where the displacement vanishes identically and
$Q(-1,\dots,-1) = h_m(\theta_t)$. Then $h_m(\theta_t) = 0$, which the induction
hypothesis of Step~3 forbids. Hence $Q \not\equiv 0$, its zero set is Lebesgue-null
in $\R^P$, and since the $\eta_i$ are independent with densities their joint law is
absolutely continuous on $\R^P$. Therefore
$\Pr[h_m(\theta_{t+1}) = 0 \mid \mathcal{F}_t] = 0$, and averaging over the
conditioning gives the same conclusion unconditionally.

Step~2 does not assume the joint one-step law has a density on $\mathcal{S}$, which
would be false: on a plateau every $u_i$ vanishes
(Theorem~\ref{prop:plateau-freeze}) and the law is a point mass. That case needs
no separate treatment, being the case $Q \equiv h_m(\theta_t) \neq 0$, a nonzero
constant whose zero set is empty.

\paragraph{Step 3: induction.}
$\Pr[h_m(\theta_0) = 0] = 0$ by absolute continuity of the initialization, since
$\{h_m = 0\}$ is Lebesgue-null. On the event $\{h_m(\theta_t) \neq 0\}$, Step~2
gives $\Pr[h_m(\theta_{t+1}) = 0 \mid \mathcal{F}_t] = 0$ almost surely. On the
complement the conditional probability can be $1$, which is why the induction
hypothesis is needed and not merely convenient. A union over the countably many
cells, the finitely many particles, and $t \in \mathbb{N}$ preserves the null set,
so almost surely $\theta_t \notin \mathcal{D}$ for every $t$.

\paragraph{Step 4: conclusion.}
Step~3 establishes more than $\theta_t \notin \mathcal{D}$: it gives
$h_m(\theta_t) \neq 0$ for every cell polynomial $h_m$, and that is what the
section-dimension claim needs. Fix a particle $i$ and write
$\mathcal{E}_{i,t} := \theta_t + \Pi E_i$ for its probe plane, an affine space of dimension $d_p$.
The restriction $h_m|_{\mathcal{E}_{i,t}}$ is a polynomial on $\mathcal{E}_{i,t}$ with
$h_m|_{\mathcal{E}_{i,t}}(\theta_t) \neq 0$, so $h_m|_{\mathcal{E}_{i,t}} \not\equiv 0$ and its zero set is a proper
algebraic subset of $\mathcal{E}_{i,t}$, of dimension at most $d_p - 1$. Since
$\mathcal{D} \cap \mathcal{E}_{i,t} \subseteq \{h_m = 0\} \cap \mathcal{E}_{i,t}$ cell by cell, and the cells are
countable and locally finite,
\[
\dim\big(\mathcal{D} \cap (\theta_t + \Pi E_i)\big) \;\leq\; d_p - 1
\]
for every $i$ and $t$. This subsumes the three-case analysis of
Section~\ref{sec:thm-piecewise-smooth}, which is the same statement specialized to
an affine wall: the transversal case is $h_m|_{\mathcal{E}_{i,t}}$ of degree one and not constant,
and the parallel-with-base-point-off case is $h_m|_{\mathcal{E}_{i,t}}$ a nonzero constant. The
bad third case, a plane lying inside a wall, is exactly $h_m|_{\mathcal{E}_{i,t}} \equiv 0$, which
$h_m(\theta_t) \neq 0$ excludes.
The induction never asserts the conclusion for all base points simultaneously, only
for the realized ones, by a one-step argument measurable with respect to
$\mathcal{F}_t$; the circularity that motivated the uniform statement of
Lemma~\ref{lem:generic-section} does not arise. \qed

\paragraph{Hypotheses used.}
It needs the initialization to be absolutely continuous, which every standard
scheme satisfies, and it needs per-particle step-radius jitter with a density; a
$W^{2,1}$ density is required elsewhere, by Lemma~\ref{lem:smooth-kernel}, but not
here. It does \emph{not} need transversality, a generic projection, $\Pi$ to have
i.i.d.\ entries, $\mathcal{D}$ to be a union of hyperplanes, or the one-step law to
be absolutely continuous on $\mathcal{S}$.
Appendix~\ref{app:theory-numerics} confirms both conclusions directly on real HybridSubspace
planes at real iterates.

\paragraph{The remaining role of genericity.}
The lemma covers any fixed projection, full space included, because it runs on the
realized iterates, which the induction keeps off every $\{h_m = 0\}$. What it does
not give is a statement uniform over \emph{every} base point, and there coordinate
planes are not generic.
Concretely: for a spike threshold
$\mathcal{D} = \{\theta : \theta_j = c\}$ and a coordinate plane
$E = \mathrm{span}(e_{k}, e_{k+1})$ with $j \notin \{k, k+1\}$, every point of
$\theta + E$ with $\theta_j = c$ lies in $\mathcal{D}$, so the section has full
measure in the plane. In the language of the argument above, $a_m = e_j$ is exactly
orthogonal to the plane. Such a base point lies on $\mathcal{D}$, the
probability-zero event the induction excludes, so the example bounds the uniform
statement rather than the lemma. The same construction works for argmax Voronoi walls
$\{\theta_{j_1} = \theta_{j_2}\}$ and for $\mathrm{floor}$ level sets.
Appendix~\ref{app:theory-numerics} exhibits all three.

\subsection{Proof of Lemma~\ref{lem:smooth-kernel}}\label{app:proof-smooth-kernel}

Here $m$, $Z$ and $\Ls_\delta = (\Ls * m)/Z$ are as in Eq.~\ref{eq:surrogate-def}, and
$q_{\mathrm{mix}}$ is the mixture radius density of Eq.~\ref{eq:mixture-radius}.

\begin{lemma}[Regularity of the surrogate along the probe plane]\label{lem:smooth-kernel}
Fix a particle $i$ with probe plane $E_i$, and let the radius jitter $\eta$ have a density
$q \in W^{2,1}$ compactly supported in $(-\eta_{\max},\eta_{\max})$,
$\eta_{\max}\in(0,1)$. Then for every bounded measurable $\Ls$: $\Ls_\delta$ is $C^{1,1}$ as
a function on $E_i$, with $\|\nabla_{E_i}\Ls_\delta\|_\infty = O(\|\Ls\|_\infty/\delta)$ and
$\Lambda_\delta = \mathrm{Lip}(\nabla_{E_i}\Ls_\delta) = O(\|\Ls\|_\infty\delta^{-2})$; and
the induced probe density is radial with profile
$\tilde p(\rho) = q_{\mathrm{mix}}(\rho)/(\sigma_{d_p-1}\rho^{d_p-1})$.
\end{lemma}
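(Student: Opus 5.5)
I would prove the lemma in three steps: identify the probe density, then turn the $W^{2,1}$ hypothesis on $q$ into regularity of the tail kernel $m$, then pass that regularity to $\Ls_\delta$ by convolution.

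\emph{Probe density.} Under condition~(ii), $R v_v$ is uniform on the unit sphere $S^{d_p-1}\subset E_i$. The radius $\rho_v$ is independent of $R$ and has marginal law $q_{\mathrm{mix}}$ from Eq.~\ref{eq:mixture-radius}. The induced law of $u=\rho_v Rv_v$ on $E_i$ is therefore radial. Polar coordinates, $\mathrm{d}u=\rho^{d_p-1}\mathrm{d}\rho\,\mathrm{d}\sigma$, give its profile as $\tilde p(\rho)=q_{\mathrm{mix}}(\rho)/(\sigma_{d_p-1}\rho^{d_p-1})$. The profile vanishes near $\rho=0$, because each mixture component of $q_{\mathrm{mix}}$ is supported in $[r_p\xi_k(1-\eta_{\max})\varepsilon,\,r_p\xi_k(1+\eta_{\max})\varepsilon]$ and $\eta_{\max}<1$. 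Hence $\tilde p$ is supported on a compact annulus bounded away from the origin, with inner radius $\rho_{\min}=r_p\xi_1(1-\eta_{\max})\varepsilon$. On that annulus $\rho^{1-d_p}$ is smooth. Each mixture component is an affine rescaling of $q\in W^{2,1}$, so $\tilde p\in W^{2,1}$ as a function of $\rho$.

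\emph{Regularity of $m$.} The tail kernel $m(u)=\int_{\|u\|}^\infty\tilde p$ is constant on the ball $\{\|u\|<\rho_{\min}\}$, vanishes outside the outer reach $\delta_{\mathrm{out}}$, and satisfies $m'(\rho)=-\tilde p(\rho)$. The radial function $m$ is therefore smooth on the inner ball. On the annulus its radial profile lies in $W^{3,1}$, since $m'=-\tilde p\in W^{2,1}$. The radial-to-Cartesian map introduces only factors of $\rho^{-1}$, which are bounded there. The key consequence is that $\nabla m$ and $D^2m$ are integrable on $E_i$: $\nabla m=-\tilde p(\|u\|)\,u/\|u\|$, and $D^2m$ involves $\tilde p'$ together with $\tilde p/\rho$. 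For the scaling, all radii are $\Theta(\delta)$ with $\delta=\bar r_p\varepsilon$, where the constants depend on $K$, $\eta_{\max}$ and $q$. Writing $m(u)=\delta^{-(d_p-1)}M(u/\delta)$ for a fixed profile $M$, one gets $Z=\Theta(\delta)$, $\|\nabla m\|_{L^1}=\Theta(1)$ and $\|D^2m\|_{L^1}=\Theta(\delta^{-1})$. The first of these is consistent with $Z=\E\|u\|/d_p$. These give $\|\nabla m\|_1/Z=O(\delta^{-1})$ and $\|D^2m\|_1/Z=O(\delta^{-2})$.

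\emph{Transfer to the surrogate.} On $E_i$ we have $\nabla(\Ls*m)=\Ls*\nabla m$ and $D^2(\Ls*m)=\Ls*D^2m$ whenever $\Ls$ is bounded and the kernel derivatives are integrable; this is differentiation under the integral via dominated convergence on difference quotients. Young's inequality then gives $\|\nabla_{E_i}\Ls_\delta\|_\infty\le\|\Ls\|_\infty\|\nabla m\|_1/Z$ and $\mathrm{Lip}(\nabla_{E_i}\Ls_\delta)\le\|\Ls\|_\infty\|D^2m\|_1/Z$, which are the claimed $O(\|\Ls\|_\infty/\delta)$ and $O(\|\Ls\|_\infty\delta^{-2})$.

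The main obstacle is the derivative count in the second step. Only $q\in W^{1,1}$ is needed for $D^2m\in L^1$: $D^2m$ involves $\tilde p'$, which is $L^1$ under $W^{1,1}$, and $\tilde p/\rho$, which is bounded on the annulus. The extra derivative supplied by $W^{2,1}$ is kept for the $W^{3,1}$ claim used elsewhere. I would carry out this check in the weak sense on the annulus. I would also verify that $m$ has no jump at $\rho_{\min}$ or $\delta_{\mathrm{out}}$: $m$ is continuous because $\tilde p\in L^1$, and $\nabla m$ is continuous because $\tilde p$ vanishes at both ends of its support, which holds since $q$ has compact support and $W^{1,1}$ functions in one dimension are continuous. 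Without that, $D^2m$ would acquire singular surface measures. This is exactly the failure for uniform jitter, where $\tilde p$ jumps at the ends of its support and the surrogate is only $C^{0,1}$.
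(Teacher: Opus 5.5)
Your proof is correct and follows the paper's route. The steps match: the radial probe profile comes from polar coordinates, with support bounded away from the origin; derivatives of $m$ lower to derivatives of $\tilde p$, so $D^2 m \in L^1$, which already holds at $q \in W^{1,1}$ as the paper also notes; and $\Ls * m$ is differentiated under the integral, with $\|\partial^\alpha m\|_{L^1}/Z$ scaled against $Z = \Theta(\delta)$, a scaling you spell out more explicitly than the paper. One aside is wrong by a level of regularity: with uniform jitter, $\nabla m = -\tilde p(\|u\|)\,u/\|u\|$ still has bounded variation, so $\Ls_\delta$ remains $C^{1,1}$ with $\Lambda_\delta$ controlled by the total variation $|D^2 m|(E_i)$, and only $D^2 m \in L^1$ and the third-derivative constants are lost; what is merely Lipschitz is the probe-law smoothing, or the tail smoothing with no jitter at all, where $m$ is a step function.
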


Weaker jitter weakens the conclusion. For $\eta \sim \mathrm{Uniform}$, whose $q'$ is a
pair of Dirac masses, the tail kernel retains $\nabla m$ of bounded variation, so
$\Ls_\delta$ remains $C^{1,1}$ with $\Lambda_\delta$ carried by the total variation
$|D^2 m|$, but $m \notin W^{2,1}$ and the third-derivative constants that
condition~(vii)'s kernel half consumes are unavailable; with no jitter at all $m$ is a
step function, $\nabla m$ is a surface measure, and $\Ls_\delta$ is merely Lipschitz.

The regularity comes from the kernel and not from $\Ls$, which is what lets
Theorem~\ref{thm:piecewise-smooth} run on a loss that jumps.

\paragraph{The kernel.}
Conditional on the vertex $v$ and rotation $R \sim \mathrm{Uniform}(\mathrm{SO}(d_p))$,
the direction $Rv$ is uniform on the unit sphere $S^{d_p-1}$ of the particle
plane, independently of the radius $\rho = r_p(1+\eta)\varepsilon$. The probe
displacement $u = \rho\,\hat u$ with $\hat u \sim \mathrm{Unif}(S^{d_p-1})$
therefore has density, with respect to Lebesgue measure on the $d_p$-plane,
\begin{equation}\label{eq:annulus-density}
p(u) \;=\; \frac{q_{\mathrm{mix}}(\rho)}{\sigma_{d_p-1}\, \rho^{\,d_p-1}},
\qquad
q_{\mathrm{mix}}(\rho) \;=\; \frac{1}{K}\sum_{k=1}^{K} \frac{1}{r_p\xi_k\varepsilon}\,
q\!\Big(\frac{\rho}{r_p\xi_k\varepsilon} - 1\Big),
\qquad \rho = \|u\| ,
\end{equation}
where $\sigma_{d_p-1}$ is the surface measure of $S^{d_p-1}$ and each Jacobian
factor $1/(r_p\xi_k\varepsilon)$ comes from the change of variables
$\eta \mapsto \rho$ at scale $k$. The radius law is a mixture and not a single
annulus, because the probes sit at the $K$ fractions $\xi_k = k/(K+1)$ of the
probe radius (Eq.~\ref{eq:probe-points}) and each carries its own jitter factor.
The support is the union of the closed annuli
$r_p\xi_k(1-\eta_{\max})\varepsilon \leq \rho \leq r_p\xi_k(1+\eta_{\max})\varepsilon$,
which excludes the origin because $\eta_{\max} < 1$ and $\xi_1 > 0$;
consequently $\rho^{-(d_p-1)}$ is smooth and bounded on the support. Every
statement below holds for each mixture component and therefore for the mixture,
since a finite convex combination of $W^{2,1}$ densities is $W^{2,1}$.

\paragraph{(1) Regularity of the tail smoothing.}
The lemma's object is $\Ls_\delta = (\Ls * m)/Z$ with $m$ the tail kernel, and its
regularity comes through the radial chain. $m$ is radial, $m(u) = M(\|u\|)$ with
$M(\rho) = \int_\rho^\infty \tilde p(s)\,ds$ and
$\tilde p(\rho) = q_{\mathrm{mix}}(\rho)/(\sigma_{d_p-1}\rho^{d_p-1})$ the radial profile of
Eq.~\ref{eq:annulus-density}, so $\nabla m = -\tilde p(\rho)\,u/\rho$: each
derivative of $m$ lowers to a derivative of $\tilde p$, and
$D^{k+1}m \in L^1$ exactly when $\tilde p \in W^{k,1}$. On the support $\rho$ is
bounded away from zero, so $\rho^{-(d_p-1)}$ is smooth there and $\tilde p$ inherits
$q_{\mathrm{mix}}$'s regularity: $q \in W^{2,1}$ gives $\tilde p \in W^{2,1}$ and hence
$m \in W^{3,1}$. For bounded measurable $\Ls$, every derivative
$\partial^\alpha \Ls_\delta = (\Ls * \partial^\alpha m)/Z$ with $|\alpha| \leq 3$
exists and is bounded by $\|\Ls\|_\infty \|\partial^\alpha m\|_{L^1}/Z$
(differentiation under the integral by dominated convergence), giving
$\|\nabla_{E_i} \Ls_\delta\|_\infty = O(\|\Ls\|_\infty/\delta)$,
$\Lambda_\delta = O(\|\Ls\|_\infty\delta^{-2})$, and the finite third derivative
that condition~(vii) consumes. No $C^\infty$ hypothesis is used: a mollifier satisfies
$q \in W^{2,1}$, and the score $\nabla\log p$ may diverge at the support boundary
without harm, since only $\|\partial^\alpha m\|_{L^1}$ enters.

\paragraph{(2) The gradient identity.}
Differentiating the \emph{probe-law} smoothing
$\Ls_p(\theta) := \int \Ls(\theta + u)\,p(u)\,du
= \int \Ls(y)\, p(y - \theta)\,dy$, a different object from the lemma's tail smoothing
$\Ls_\delta$, under the integral,
\[
\nabla \Ls_p(\theta) = -\int \Ls(y)\, (\nabla p)(y - \theta)\, dy
= -\,\E_{u \sim p}\big[\Ls(\theta + u)\, \nabla \log p(u)\big] ,
\]
and differentiating Eq.~\ref{eq:annulus-density} gives the radial score
\begin{equation}\label{eq:annulus-stein}
\nabla \log p(u)
= \Big(\frac{q_{\mathrm{mix}}'(\rho)}{q_{\mathrm{mix}}(\rho)} - \frac{d_p-1}{\rho}\Big)\frac{u}{\rho},
\qquad
q_{\mathrm{mix}}'(\rho) = \frac{1}{K}\sum_{k=1}^{K} \frac{1}{(r_p\xi_k\varepsilon)^2}\,
q'\!\Big(\frac{\rho}{r_p\xi_k\varepsilon} - 1\Big).
\end{equation}
Eq.~\ref{eq:annulus-stein} is the score-weighted estimator of the smoothing by the probe
law $p$. It is not the estimator the algorithm computes, which weights by the unit
direction and by Theorem~\ref{thm:exact-step} differentiates the tail kernel $m/Z$
instead; the score form matters because it is what fixes the regularity of $p$
and hence of $m$. The single-scale form
$q'(\eta)/(q(\eta) r_p\varepsilon)$ is the $K = 1$, $\xi_1 = 1$ corner of this and
must not be substituted for it: since $\xi_k \leq K/(K+1) < 1$, a mixture
component lies wholly below the single annulus whenever
$\xi_k(1+\eta_{\max}) < 1-\eta_{\max}$, which holds for the inner fractions at
every setting we run, so most probes lie at radii where the single annulus's
density vanishes and the ratio is undefined; the two radial scores differ by an
order-unity factor on the band they share (Appendix~\ref{app:theory-numerics}).

\paragraph{Failure of the ball identity.}
If $\eta \sim \mathrm{Uniform}[-\eta_{\max},\eta_{\max}]$ then
$q = (2\eta_{\max})^{-1}\mathbb{1}_{[-\eta_{\max},\eta_{\max}]}$, $q'$ is a
difference of Dirac masses at $\pm\eta_{\max}$, and $\nabla \log p$ is not a
function: the score acquires singular surface terms on the two boundary spheres,
and the \emph{probe-law} smoothing $\Ls * \check p$ is Lipschitz but not $C^1$ in
general. The tail smoothing degrades less: $\nabla m = -\tilde p(\rho)\,u/\rho$ has
bounded variation on the annulus, so $\Ls_\delta$ stays $C^{1,1}$ through the
translation bound
$\|\nabla m(\cdot - x) - \nabla m(\cdot - y)\|_{L^1} \leq |x-y|\,|D^2 m|(E_i)$,
but $D^2 m$ is a measure rather than an $L^1$ function, $m \notin W^{2,1}$, and the
$W^{3,1}$ chain that the cross-block constants of condition~(vii) consume stops at
its first step, so the descent argument's third-derivative appeals are unavailable.

The uniform-ball identity,
$\nabla \E_{u \sim \mathrm{Ball}}[\Ls(\theta + \varepsilon u)]
= (d/\varepsilon)\,\E_{u \sim \mathrm{Sphere}}[\Ls(\theta+\varepsilon u)\,u]$
\citep[Lem.~2.1]{flaxman2005online}, is the divergence-theorem boundary term for
the \emph{uniform ball} kernel. It is a special case of
Eq.~\ref{eq:annulus-stein} only when $p$ is the uniform ball density, for which
the interior score vanishes and all the mass of $\nabla p$ sits on the boundary
sphere. The annulus kernel has non-vanishing interior score and two boundary
spheres of opposite orientation, so the ball formula differs in shape and not
only by a constant factor.

Appendix~\ref{app:theory-numerics} evaluates Eq.~\ref{eq:annulus-stein} against finite
differences for the single annulus and for the mixture, and quantifies the gap to the
ball estimator, which is a gap between two smoothing targets, not an error in
\citet{flaxman2005online}.

\subsection{Path-Weighted Occupancy of a Tube}\label{app:proof-occupancy}

The lemma is stated for a general tube radius $r_{\mathrm{tube}}$, since its argument is
purely geometric; saturation is the case $r_{\mathrm{tube}} = \delta_{\mathrm{out}}$,
the outer reach a probe attains, not the mean radius $\delta$.

\begin{lemma}[Path-weighted occupancy of a tube]\label{lem:occupancy}
Let $\mathcal{D}$ be, cell by cell, a locally finite union of hypersurfaces within
$\mathcal{K}$, and let $L_{\mathrm{exc}} > 0$ lower-bound the path length between the end
of one visit to the tube $\{\mathrm{dist}(\cdot,\mathcal{D}) \leq r_{\mathrm{tube}}\}$, for a
radius $r_{\mathrm{tube}} > 0$, and the start
of the next, and write $\mathcal{O}_t := \{\mathrm{dist}(\theta_t,\mathcal{D}) > r_{\mathrm{tube}}\}$ for the
steps taken outside the tube, and $\rho_T := \max_{t<T}\|\Delta\theta_t\|$ for the
largest joint step. Suppose each visit \emph{crosses transversally}: it enters and
leaves through opposite sides and covers at most $2r_{\mathrm{tube}} + \rho_T$ of path length
inside the tube, the transverse width plus one overshooting step at each face; the
length cap is part of the hypothesis, since a crossing that doubles back inside the
tube could cover more. Then along any trajectory with
$\sum_{t<T}\|\Delta\theta_t\| > 0$ (a motionless trajectory has occupancy zero by
convention),
\[
\frac{\sum_{t < T} \|\Delta\theta_t\|\, \mathbb{1}[\mathcal{O}_t^c]}{\sum_{t < T}\|\Delta\theta_t\|}
\;\leq\; \frac{2r_{\mathrm{tube}} + \rho_T}{L_{\mathrm{exc}}} \;+\; \frac{2\,(2r_{\mathrm{tube}} + \rho_T)}{\sum_{t<T}\|\Delta\theta_t\|} ,
\]
the second term's constant explicit because it is not uniform in $r_{\mathrm{tube}}$.
\end{lemma}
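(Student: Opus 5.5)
The plan is a counting argument on maximal runs of in-tube steps: bound the in-tube path length per visit by the crossing hypothesis, bound the number of visits by the excursion length $L_{\mathrm{exc}}$, and divide by the total path length. Write $\Lambda_T := \sum_{t<T}\|\Delta\theta_t\| > 0$ for the total path length. A step $t$ is charged to the numerator when $\mathbb{1}[\mathcal{O}_t^c] = 1$, that is, when its \emph{starting} point $\theta_t$ lies in the tube $\{\mathrm{dist}(\cdot,\mathcal{D}) \leq r_{\mathrm{tube}}\}$. I will partition the index set $\{t < T : \mathcal{O}_t^c\}$ into maximal blocks of consecutive indices. Each block is one visit, and there are $N$ of them.

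\emph{Per-visit cap.} For a single visit, the charged length is the sum of $\|\Delta\theta_t\|$ over its block. This equals the path length the trajectory covers while starting inside the tube, including the final step that carries the iterate out. By the transversal-crossing hypothesis, that length is at most $2r_{\mathrm{tube}} + \rho_T$: the transverse width plus one overshooting step. The hypothesis is phrased to include the overshoot, and that is what lets the step-indexed indicator be compared with geometric path length. The numerator is therefore at most $N(2r_{\mathrm{tube}} + \rho_T)$.

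\emph{Counting visits.} Between the end of one visit and the start of the next, the trajectory covers path length at least $L_{\mathrm{exc}}$. These gaps are disjoint stretches of the path, and there are at least $N-1$ complete ones. Hence $(N-1)L_{\mathrm{exc}} \leq \Lambda_T$, so $N \leq 1 + \Lambda_T/L_{\mathrm{exc}}$. Substituting into the per-visit cap and dividing by $\Lambda_T$ gives
\[
\frac{\text{numerator}}{\Lambda_T} \;\leq\; \frac{2r_{\mathrm{tube}}+\rho_T}{L_{\mathrm{exc}}} + \frac{2r_{\mathrm{tube}}+\rho_T}{\Lambda_T}.
\]
This is already within the stated bound. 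The factor $2$ in the second term I would reserve for boundary effects: a visit truncated by the horizon at $t=0$ or at $t=T-1$ need not enter and leave through opposite faces. I would charge each of these two possibly partial visits separately, with at most $2r_{\mathrm{tube}}+\rho_T$ each, and count only the complete visits against the gaps. This yields $N_{\mathrm{complete}} \leq \Lambda_T/L_{\mathrm{exc}} + 1$ and an additive remainder of at most $2(2r_{\mathrm{tube}}+\rho_T)$, which is the displayed constant. The constant depends on $r_{\mathrm{tube}}$, so I would keep it explicit rather than absorb it.

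The main obstacle is the interface between the discrete indicator and the geometric notion of a visit. A step can start inside the tube and jump clean across it. A step can also start outside, land inside, and not be charged. The definition of visits and gaps must make the gaps genuinely disjoint from the charged steps, so that no length is counted twice. I would first try to define a visit's endpoints by the index blocks themselves, and read the $L_{\mathrm{exc}}$ hypothesis as a statement about the path between the last charged step of one block and the first charged step of the next. If that reading is too strong, I would fall back to charging one extra $\rho_T$ per gap, which the constant $2r_{\mathrm{tube}}+\rho_T$ still absorbs.
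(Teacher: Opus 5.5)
Your proposal is correct and follows the paper's own renewal count. You cap each visit's charged length at $W := 2r_{\mathrm{tube}}+\rho_T$ by the crossing hypothesis, bound the number of visits through $(N-1)L_{\mathrm{exc}} \leq \Lambda_T$ using pairwise-disjoint gaps, and divide by $\Lambda_T$. Your main count already gives the sharper $W/L_{\mathrm{exc}} + W/\Lambda_T$; the paper's factor $2$ comes from charging $N$ completed visits plus one in progress against only $N-1$ gaps. The argument uses only that the gaps are pairwise disjoint, not that they are disjoint from the charged steps, so neither the double-counting worry nor the boundary refinement is needed. The refinement as written would in fact give $3W/\Lambda_T$: it omits the gaps separating the $p$ partial visits from their neighbours, which would give $N_{\mathrm{complete}} \leq \Lambda_T/L_{\mathrm{exc}} + 1 - p$.
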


Each particle's step is a convex combination of unit vertices scaled by
$(1+\eta_i)\,r_{s,t}\varepsilon$, so for a nonincreasing schedule
$\rho_T \leq \sqrt{P}\,(1+\eta_{\max})\,r_{s,0}\varepsilon$; at $P$ in the thousands
the joint step, not the tube width, carries the constant. The per-particle reading
with numerator $3r_{\mathrm{tube}}$ is recovered when $\rho_T \leq r_{\mathrm{tube}}$,
which under condition~(v) holds only once $r_{s,t}$ has decayed below
$r_{\mathrm{tube}}/(\sqrt{P}(1+\eta_{\max})\varepsilon)$.
The bound needs the excursion length, not the sheet separation: a trajectory oscillating
across a single sheet satisfies the crossing hypothesis at every visit yet has occupancy
tending to one (Appendix~\ref{app:theory-remarks}). Being
path-weighted, the bound says nothing about a trajectory that stops moving: if the
smooth part's minimizer lies on $\mathcal{D}$ the average is $\Theta(1)$.
Both hypotheses are testable on a realized trajectory;
Appendix~\ref{app:theory-numerics} reports the bound holding when they are met and
failing when either is dropped.

A renewal count. Fix $T$, write $\rho_T := \max_{t<T}\|\Delta\theta_t\|$, and let $N$
be the number of visits to the tube
$\{\mathrm{dist}(\cdot,\mathcal{D}) \leq r_{\mathrm{tube}}\}$ completed before $T$. By
hypothesis each visit crosses, entering and leaving through opposite sides, so the
path length inside the tube during one visit is at most $2r_{\mathrm{tube}} + \rho_T$: the
transverse width contributes $2r_{\mathrm{tube}}$, and the entering and leaving steps overshoot
the faces by at most one step each, together at most $\rho_T$. Hence
\[
\sum_{t<T}\|\Delta\theta_t\|\,\mathbb{1}[\mathcal{O}_t^c] \;\leq\; (2r_{\mathrm{tube}} + \rho_T)\,N + (2r_{\mathrm{tube}} + \rho_T) ,
\]
the extra term covering a visit in progress at $T$. Between the end of one visit and
the start of the next the trajectory covers at least $L_{\mathrm{exc}}$ of path
length, so with $\Sigma_T := \sum_{t<T}\|\Delta\theta_t\|$ we have
$\Sigma_T \geq (N-1)L_{\mathrm{exc}}$, hence
$(N-1)/\Sigma_T \leq 1/L_{\mathrm{exc}}$. Splitting $N+1 = (N-1) + 2$,
writing $W := 2r_{\mathrm{tube}} + \rho_T$ for the path length one visit can cover,
\begin{align*}
\frac{\sum_{t<T}\|\Delta\theta_t\|\,\mathbb{1}[\mathcal{O}_t^c]}{\Sigma_T}
&\;\leq\; \frac{W\,(N+1)}{\Sigma_T}
\;=\; W\,\frac{N-1}{\Sigma_T} + \frac{2W}{\Sigma_T} \\
&\;\leq\; \frac{W}{L_{\mathrm{exc}}} + \frac{2W}{\Sigma_T} ,
\end{align*}
which is the displayed bound with an explicit constant. The error term is in path
length and not in the visit count, which matters because $N$ need not grow: a
trajectory that stops visiting the tube has a bounded numerator against a growing
$\Sigma_T$, and the ratio tends to zero rather than to
$(2r_{\mathrm{tube}}+\rho_T)/L_{\mathrm{exc}}$. For $N \leq 1$ the numerator is at most
$2(2r_{\mathrm{tube}}+\rho_T)$ and the same bound holds trivially. When consecutive visits are to
distinct sheets separated by at least $2L_{\mathcal{D}}$ (write $L_{\mathcal{D}}$ for
half the minimal sheet separation), a trajectory leaving the
tube of one sheet must cover at least $2L_{\mathcal{D}} - 2r_{\mathrm{tube}}$ before entering the
next, which gives $L_{\mathrm{exc}} = 2L_{\mathcal{D}} - 2r_{\mathrm{tube}}$, so the lemma's bound
reads $(2r_{\mathrm{tube}}+\rho_T)/(2L_{\mathcal{D}} - 2r_{\mathrm{tube}})$. \qed

\subsection{Proof of Theorem~\ref{thm:piecewise-smooth}}\label{app:proof-piecewise-smooth}

Throughout, $\varepsilon > 0$ is fixed (Assumption~\ref{asm:main}(v)), so every
$\varepsilon$-dependent constant below is a finite constant of the problem, the
structural difference from the smooth analysis of \citet{le2023sinkhorn}, where
$\varepsilon \to 0$ is admissible because $J = 0$.

\paragraph{Step 1: the two kernels.}
The costs are evaluations of the raw, possibly discontinuous $\Ls$, and nothing below
expands $\Ls$, so no probe needs to avoid $\mathcal{D}$ and the cost matrix
Eq.~\ref{eq:cost-matrix} is well defined for any bounded measurable loss.
(Lemmas~\ref{lem:generic-section} and~\ref{lem:blockwise-section} still say the probes
land off $\mathcal{D}$ almost surely, which is why the cost row is generically read off
the smooth pieces; the proof does not use it.)

Disintegrate the probe displacement as $u = \rho\,\hat u$ with $\hat u$ uniform on
$S^{d_p-1}$, independent of $\rho \sim q_{\mathrm{mix}}$ of Eq.~\ref{eq:mixture-radius}. The induced
density on the probe plane is radial,
$p(u) = \tilde p(\|u\|)$ with $\tilde p(\rho) = q_{\mathrm{mix}}(\rho)/(\sigma_{d_p-1}\rho^{d_p-1})$ and
$\sigma_{d_p-1}$ the unit sphere's surface measure. Define $M(\rho) := \int_\rho^\infty \tilde p$,
$m(u) := M(\|u\|)$ and $\Ls_\delta := (\Ls * m)/Z$ as in Eq.~\ref{eq:surrogate-def}.
Three properties are immediate and are all the surrogate needs.

\emph{(a) $m$ is compactly supported and $W^{1,1}$, and $W^{k+1,1}$ when $q \in W^{k,1}$.} $\tilde p$
vanishes beyond the outer probe reach $\delta_{\mathrm{out}}$, so $M$ does too; $M$ is
absolutely continuous with $M' = -\tilde p$, so $\nabla m(u) = -\tilde p(\|u\|)\,\hat u$
almost everywhere, and $\nabla m \in L^1$. A second derivative exists in $L^1$ exactly when
$\tilde p{\,}' \in L^1$, which is condition~(iv).

\emph{(b) $Z = \E\|u\|/d_p$.} In polar coordinates and integrating by parts,
\[
\int m \;=\; \int_0^\infty M(\rho)\,\sigma_{d_p-1}\rho^{d_p-1}\,\mathrm{d}\rho
\;=\; \Big[M(\rho)\,\tfrac{\sigma_{d_p-1}\rho^{d_p}}{d_p}\Big]_0^\infty
+ \int_0^\infty \tilde p(\rho)\,\tfrac{\sigma_{d_p-1}\rho^{d_p}}{d_p}\,\mathrm{d}\rho
\;=\; \frac{1}{d_p}\int_0^\infty \rho\,q_{\mathrm{mix}}(\rho)\,\mathrm{d}\rho ,
\]
the boundary term vanishing at both ends. With $\E\|u\| = r_p\bar\xi\E[1+\eta]\varepsilon$
this is $Z = \bar r_p\varepsilon/d_p$.

\emph{(c) $\Ls_\delta$ has the regularity the descent step uses, from $m$ and not from
$\Ls$.} For bounded measurable $\Ls$, $\Ls * m$ is differentiable with
$\nabla(\Ls * m) = \Ls * \nabla m$ because $\|m(\cdot - h) - m - \langle h,\nabla m\rangle\|_{L^1} = o(\|h\|)$
for $m \in W^{1,1}$, and the same argument at one more order gives
$\Lambda_\delta := \mathrm{Lip}(\nabla \Ls_\delta) \leq \|\Ls\|_\infty \|D^2 m\|_{1}/Z = O(\|\Ls\|_\infty\delta^{-2})$
and $\|\nabla\Ls_\delta\|_\infty = O(\|\Ls\|_\infty/\delta)$, consistent with
Theorem~\ref{thm:smoothing-blowup}. Both are finite and fixed because $\varepsilon$ is
(condition~(v)).

\paragraph{Step 2: proof of Theorem~\ref{thm:exact-step}.}
\emph{The exact identity.} Since $\|v_v\| = 1$ and $R$ is Haar on $\mathrm{SO}(d_p)$
(supplied by Assumption~\ref{asm:main}(i)--(ii)), each
$Rv_v$ is uniform on $S^{d_p-1}$ and, because each $\rho_v$ is independent of $R$,
independent of the radius, so
$\rho_v R v_v$ has law $p$ for every $v$ and
\[
\E\Big[\sum_v \Ls(\theta + \rho_v Rv_v)\,Rv_v\Big]
\;=\; V\,\E_{u\sim p}\big[\Ls(\theta + u)\,\hat u\big]
\;=\; V\!\int \Ls(\theta+u)\,\tilde p(\|u\|)\,\hat u\,\mathrm{d}u
\;=\; -V\!\int \Ls(\theta+u)\,\nabla m(u)\,\mathrm{d}u ,
\]
which by (c) is $V\,\nabla_\theta (\Ls * m)(\theta) = V Z\,\nabla_{E_i}\Ls_\delta(\theta)$.
This is Eq.~\ref{eq:stein-softmax-body}. Nothing about $\Ls$ beyond boundedness and
measurability is used, no event is conditioned on, and no Taylor expansion is performed.
With $K = 1$ and no jitter, $m$ is the indicator of a ball and the display is
\citet[Lem.~2.1]{flaxman2005online}. That reduction is the one case where the
right-hand side needs a caveat: without a jitter density $m \notin W^{1,1}$ and
$\Ls_\delta$ is only Lipschitz, so for a merely bounded measurable $\Ls$ the display
holds at every $\theta$ at which $\nabla\Ls_\delta$ exists, hence almost everywhere, and
everywhere once $\Ls$ is continuous. Under the theorem's own hypotheses a jitter density
is present and the question does not arise. The tail construction carries the identity to
the radial mixture the probes realize, at the price of the surrogate being smoothed by
$m/Z$ rather than by the probe law.

\emph{The softmax remainder.} Write $\tilde C_{i,v} := C_{i,v} - V^{-1}\sum_{v'} C_{i,v'}$,
so $\mathrm{softmax}(-C_{i:}/\varepsilon) = \mathrm{softmax}(-\tilde C_{i:}/\varepsilon)$ by
shift invariance, and $\max_v|\tilde C_{i,v}| \leq \Delta_i$. On $\{\Delta_i \leq \varepsilon\}$
the elementary bound
$|\mathrm{softmax}(z)_v - V^{-1}(1 + \tilde z_v)| \leq \kappa_0 V^{-1}\|\tilde z\|_\infty^2$
at $\|\tilde z\|_\infty \leq 1$ gives, after multiplying by $Rv_v$, summing, and using
$\sum_v v_v = 0$ to kill the constant,
\[
\Big\|\sum_v p_v(-\mC_{i:}/\varepsilon)\,Rv_v \;+\; \frac{1}{V\varepsilon}\sum_v C_{i,v}\,R v_v\Big\|
\;\leq\; \kappa_0 \Big(\frac{\Delta_i}{\varepsilon}\Big)^{2} .
\]
Taking expectations and substituting the identity turns the second term into
$(Z/\varepsilon)\,\nabla_{E_i}\Ls_\delta = c\,\nabla_{E_i}\Ls_\delta$ with
$c = \bar r_p/d_p = r_p/(2d_p)$, using $\bar\xi = K^{-1}\sum_k k/(K+1) = 1/2$ for every
$K$. The pointwise remainder bound holds only on the covered draws, so what the
expectation yields is
\[
\|e_i\| \;\leq\; \kappa_0\,\E\big[(\Delta_i/\varepsilon)^2\,\mathbb{1}[\Delta_i \leq \varepsilon]\big]
\;+\; \Pr[\Delta_i > \varepsilon]
\;+\; \E\big[(\Delta_i/\varepsilon)\,\mathbb{1}[\Delta_i > \varepsilon]\big] ,
\]
the uncovered draws entering through two pointwise bounds: the step is a convex
combination of unit vectors, so $\|\sum_v p_v Rv_v\| \leq 1$, and centering the
row gives $\|(V\varepsilon)^{-1}\sum_v C_{iv}\,Rv_v\| \leq \Delta_i/\varepsilon$.
The restricted expectation of the linear term is bounded by the second of these,
not by a multiple of $\|\nabla_{E_i}\Ls_\delta\|$: only the full, unrestricted
expectation equals $-c\,\nabla_{E_i}\Ls_\delta$, and the restriction to
$\{\Delta_i > \varepsilon\}$ can defeat the cancellation that equality relies on.
Unconditionally $\|e_i\| \leq 1 + c\|\nabla_{E_i}\Ls_\delta\|_\infty$. That is
Eq.~\ref{eq:step-error}. \qed

Only $\|v_v\| = 1$ and $\sum_v v_v = 0$
enter above, so $c$ is frame-independent for a reason weaker than tightness. For unit
tight frames, as the orthoplex and the simplex admitted by condition~(i) both are, the
identity
$\sum_v (Rv_v)(Rv_v)^\top = R(\sum_v v_v v_v^\top)R^\top = (V/d_p)I$, which holds pointwise
in $R$ and not merely in expectation, reaches the same $c$ through the second moment: the
$1/V$ of the barycentric step cancels the $V/d_p$, and the probe fractions halve the
result. The identity of Eq.~\ref{eq:stein-softmax-body} is what makes it the same number.
Reading Eq.~\ref{eq:stein-softmax-body} through $\E_R[(Rv)(Rv)^\top] = I/d_p$ alone would
give $r_p/d_p^2$; the sum over $V$ vertices raises it to $r_p/d_p$ and $\bar\xi = 1/2$
halves it. The two corrections cancel exactly at $d_p = 2$, so a $d_p{=}2$ check separates
neither, and the identity is confirmed by Monte Carlo at $d_p \in \{2,4,8\}$
(Appendix~\ref{app:theory-numerics}).

\paragraph{Step 2a: the surrogate as tail smoothing.} The natural alternative expands $\Ls$ along the probe segment and
identifies the expected step with the gradient of the smoothing by the \emph{probe law},
conditional on the probe annulus lying in one connected component of
$\R^d\setminus\mathcal{D}$. Both parts fail. The two smoothings are different
functionals of $\Ls$: on the plane wave $\Ls(\theta) = \sin\langle\vk,\theta\rangle$ in
$d_p = 2$ the sphere average at radius $\rho$ is exactly
$J_0(\|\vk\|\rho)\sin\langle\vk,\theta\rangle$, so the surrogate gradient carries a Bessel
factor and changes sign with it: at $\|\vk\|\delta = 1.5$ one has $J_0(1.5) = +0.512$
against $J_0(3.0) = -0.260$ at twice the radius, and no bound of the form ``the two agree
up to a constant'' exists, the constant being
$J_0(2\|\vk\|\delta)/J_0(\|\vk\|\delta)$. Next to a discontinuity the probe-law and tail
smoothings likewise separate, by $31\%$ in gradient at a base point one probe radius from a
unit jump (Appendix~\ref{app:theory-numerics}). And no weaker conditioning recovers it,
because the event it needs is empty in practice: on the INT8 network $\mathcal{D}$ is the
coordinate grid at pitch $0.01$ over $1.6\times10^4$ weights, the nearest wall sits
$6.3\times10^{-7}$ from the iterate, and the joint kernel reaches
$\sqrt{P}\,\delta_{\mathrm{out}}$, so the straddle indicator is $1$ at every step of every
run (Section~\ref{sec:thm-piecewise-smooth}). Step~2 avoids both by never expanding $\Ls$.

\paragraph{Step 2b: from the batch to the population.}
Step~2 proves the identity for a fixed bounded measurable loss. In the descent argument
the loss evaluated at step $t$ is the minibatch loss $\Ls_t$, so the identity is applied
conditionally on $\mathcal{F}_t$ \emph{and} on the batch, under which the rotations and
radii are still fresh (condition~(viii)). The conclusion transfers to the population loss
by conditional Fubini: condition~(viii) makes $(\theta,\text{batch}) \mapsto \Ls_t(\theta)$
jointly measurable and bounds it on $\mathcal{K}_{\delta_{\mathrm{joint}}}$ by a
deterministic constant, so the kernel integral and the batch expectation may be
exchanged, and $\E[\Ls_t \mid \mathcal{F}_t] = \Ls$ turns the batch surrogate into
$\Ls_\delta$. Nothing else in the proof sees the batch.

\paragraph{Step 3: covered and uncovered draws.}
The only place the discontinuity can still hurt is the softmax remainder, and it does so
through one scalar per particle: the cost-row spread over the temperature. Let
$A_i(t) := \{\Delta_i(t) > \varepsilon\}$ be the draws Step~2's expansion does not cover.
$A_i(t)$ is \emph{not} $\mathcal{F}_t$-measurable, since $\Delta_i$ depends on the
rotation and radii drawn at step $t$; the two quantities condition~(viii) bounds are its
conditional probability $\Pr[A_i(t)\mid\mathcal{F}_t]$ and its conditional share $\pi(t)$
of the identity Eq.~\ref{eq:stein-softmax-body}, both of which are. This is why the
condition is stated on those two and not on a realized indicator.

Splitting the conditional expectation on $A_i(t)$, and writing ``linear'' for
$-(V\varepsilon)^{-1}\sum_v C_{iv}Rv_v$,
\[
\E\Big[\sum_v p_v Rv_v \,\Big|\, \mathcal{F}_t\Big]
= \underbrace{\E\big[(\text{step} - \text{linear})\mathbb{1}_{A_i^c}\big]}_{\text{softmax remainder}}
+ \underbrace{\E\big[\text{step}\,\mathbb{1}_{A_i}\big]}_{\|\cdot\|\,\leq\, \Pr[A_i\mid\mathcal{F}_t]}
+ \underbrace{\E\big[\text{linear}\big]}_{=\,-c\,\nabla_{E_i}(\Ls * m_i)/Z}
- \underbrace{\E\big[\text{linear}\,\mathbb{1}_{A_i}\big]}_{\|\cdot\|\,\leq\,\pi(t)\,c\|\nabla_{E_i}\Ls_\delta\|} .
\]
The last term is bounded in norm, not identified with a multiple of
$\nabla_{E_i}\Ls_\delta$: condition~(viii) constrains its length and not its direction, and
a saturated contribution orthogonal to the gradient has the same length as a parallel one.
What Step~4 uses is therefore the inner product. One bookkeeping point: the norms in
condition~(viii) are of the \emph{per-block} field
$\E[\text{linear}] = -c\,\nabla_{E_i}(\Ls * m_i)/Z = -c(\nabla_{E_i}\Ls_\delta + w_i)$,
so Cauchy--Schwarz gives
$|\langle \nabla_{E_i}\Ls_\delta, \E[\text{linear}\,\mathbb{1}_{A_i}]\rangle|
\leq \pi(t)\,c\,\|\nabla_{E_i}\Ls_\delta\|\,\big(\|\nabla_{E_i}\Ls_\delta\| + \|w_i\|\big)$:
the first product degrades the descent coefficient by $\pi(t)\,c$, and the cross term
$\pi(t)\,c\,\|\nabla_{E_i}\Ls_\delta\|\|w_i\|$ is split in Step~4 by Young's inequality,
$\pi c\|\nabla\|\|w_i\| \leq \tfrac{c(1-\bar\pi)}{4}\|\nabla\|^2
+ \tfrac{\bar\pi^2 c}{1-\bar\pi}\|w_i\|^2$, its second half joining the coupling floor
with a factor $\bar\pi^2/(1-\bar\pi) = O((1-\bar\pi)^{-1})$ that the $O(\cdot)$ of
Eq.~\ref{eq:bias-floor}'s third term absorbs.

The first term is
$O(\E[(\Delta_i/\varepsilon)^2\mathbb{1}_{A_i^c}])$ by Step~2, and where the smooth pieces
are $C^{2,1}$ and the probes see no wall, $\Delta_i = O(\bar r_p\varepsilon G)$ makes it
$O(r_p^2)$ per block pointwise in the rotation. Under Haar rotations on a centered frame
(conditions~(i) and~(ii))
the second-order part is odd in $Rv$ and integrates to zero, leaving $O(r_p^3)$, which is
the exponent the floor carries: $(\|e_i\|/c)^2 = O(d_p^2 r_p^4)$ per block at
$c = r_p/(2d_p)$, and $O(P d_p^2 r_p^4)$ after summing over blocks. The cancellation is
measured in Appendix~\ref{app:theory-numerics} and does not depend on the loss having a
third derivative. The second is bounded by
the step's unit norm times the saturation probability. The third is the gradient
term itself and carries no error. The fourth is the only one that
touches the gradient coefficient, and condition~(viii) is exactly the statement that it
does not consume it: the coefficient becomes $c(1 - \pi(t)) \geq c(1-\bar\pi) > 0$.

Three properties of this event do the work. There is no separate ascent allowance:
zeroing the gradient coefficient on saturated steps would owe a floor term for the
objective increase those steps can cause; degrading it to $c(1-\pi(t))$ instead leaves
every step with a descent guarantee. It is \emph{observable}: at $\lambda = 0$ the
transport row gives the spread exactly, $\Delta_i/\varepsilon =
\log(\max_v T_{iv}/\min_v T_{iv})$, with the bracket
$\log(1+\omega_i) \leq \Delta_i/\varepsilon \leq 2\,\mathrm{artanh}(\omega_i)$,
$\omega_i := V\max_v|T_{iv}/a_i - 1/V|$, the upper half for $\omega_i < 1$
(every logged row here is at or below $0.79$), converting a logged $\omega_i$
(the deviation bounds $(1\pm\omega_i)/V$ on the extreme entries give both
halves); at $\lambda > 0$ the row mixes the spread
with the column potentials, the bracket fails, and the solver's returned $g$
recovers the spread exactly,
$\Delta_i = \max_v(g_v - \varepsilon\log T_{iv}) - \min_v(g_v - \varepsilon\log T_{iv})$, so
condition~(viii) can be checked
on the run rather than assumed. And it is \emph{implied by, not equivalent to, straddling}:
a probe may cross a wall whose jump is far below $\varepsilon$ and leave the row covered,
which is why no measured row's spread reaches $\varepsilon$ even where the
straddle indicator is $1$.

\paragraph{Step 4: descent inequality with the bias floor.}
By Steps~2 and~3 the expected displacement of particle $i$ is
\begin{equation}\label{eq:step-error-decomposition}
\E[\Delta x_i \mid \mathcal{F}_t]
= -r_{s,t}\varepsilon\big(c\,\nabla_{E_i}\Ls_\delta(\theta_t) + u_i + e_i\big),
\qquad c = \frac{r_p\bar\xi\E[1+\eta]}{d_p} = \frac{r_p}{2d_p},
\end{equation}
where $u_i$ is the saturated share of the identity, the last term of Step~3's display
carried with its sign, and
$\|e_i\| = O(\E[(\Delta_i/\varepsilon)^2\mathbb{1}_{A_i^c}]) + \Pr[A_i\mid\mathcal{F}_t]
+ c\,\|w_i\|$. The saturated share is \emph{not} folded into $e_i$, and the
coefficient is not written $c(1-\pi(t))$ in a vector identity:
condition~(viii) bounds the length of $u_i$ and not its direction, so a vector form
$c(1-\pi(t))\nabla_{E_i}\Ls_\delta + e_i'$ would leave in $e_i'$ a residual of norm up
to $2\pi(t)\,c\,\|\nabla_{E_i}\Ls_\delta\|$, and passing that residual through the Young
step below would inject a term proportional to
$\bar\pi^2\|\nabla_{\mathcal{S}}\Ls_\delta\|^2$ into the floor, defeating the descent
claim. Instead $u_i$ is consumed only through the inner product of Step~3,
$\langle \nabla_{E_i}\Ls_\delta, u_i\rangle \geq -\pi(t)\,c\,\|\nabla_{E_i}\Ls_\delta\|^2$,
which degrades the descent coefficient to $c(1-\pi(t))$ directly, and Young's inequality
is applied to $e_i$ alone.

The last term of $\|e_i\|$ is the cross-block discrepancy between the blockwise smoothed field
$\nabla_{E_i}(\Ls * m_i)/Z$, which is what Step~2 produces, and the gradient of the joint
surrogate, which is what the descent inequality needs: each block satisfies
$\|w_i\| = O(P\delta^2\Lambda_3)$ by the transverse Taylor bound below, so the
concatenation satisfies
$(\sum_i\|w_i\|^2)^{1/2} = \|\Phi - \nabla\Ls_\delta\| = O(P^{3/2}\delta^2\Lambda_3)$
under condition~(vii)'s transverse-regularity reading, the $\sqrt{P}$ of the
concatenation being owed rather than slack (Remark~\ref{rem:block-coupling}), and the
floor contribution is $O(P^3\delta^4\Lambda_3^2)$.

Here $\Lambda_3$ is
condition~(vii)'s transverse Hessian of the \emph{per-block} smoothed field: block
$i$'s discrepancy is the transverse smoothing error
$w_i = \Phi_i - \Phi_i * \bar m_{-i}$ with $\bar m_{-i}$ mean-zero, so a second-order Taylor
expansion in the transverse variables bounds $\|w_i\|$ by the transverse covariance
trace $(P-1)\,d_p\,\delta_{\mathrm{blk}}^2$ times $\sup\|D^2_\perp \Phi_i\|$; with
$d_p\,\delta_{\mathrm{blk}}^2 = \Theta(\delta^2)$ (the per-coordinate second moment of a
block displacement of scale $\delta$ in $d_p$ coordinates) this is the displayed
$O(P\delta^2\Lambda_3)$, and no
derivative of the jointly smoothed field can replace $D^2_\perp \Phi_i$
(Remark~\ref{rem:block-coupling} exhibits the failure). Where the transverse sections
are not $C^{1,1}$ at the kernel scale, the unconditional estimate
$\|w_i\| \leq 2\|\Phi_i\| = O(\mathrm{osc}_{\delta_{\mathrm{blk}}}(\Ls)\,\delta_{\mathrm{blk}}^{-1})$
replaces the Taylor form. $\Lambda_3$
equals the smooth part's own third-derivative bound where the joint kernel of radius
$\delta_{\mathrm{joint}} = \sqrt{P}\,\delta_{\mathrm{out}}$ clears $\mathcal{D}$. Only the
coupling term carries a factor $c$, because it is a difference of \emph{gradient} fields
rather than of displacements, and that factor cancels when the floor is formed. The joint
radius is the right one: a wall whose normal has mass in two blocks can be missed by every
single-block kernel and still be crossed by the sum of two block displacements
(Appendix~\ref{app:theory-remarks}).

A second-order expansion of $\Ls_\delta$ along the joint displacement is legitimate because
$\Ls_\delta$ is $C^{1,1}$ along $\mathcal{S}$ by Step~1(c), with
$\Lambda_\delta = O(\|\Ls\|_\infty\delta^{-2})$ finite at fixed $\varepsilon$. It also
produces cross-block terms, since the probes for particle $i$ perturb only block $i$ while
the step moves all blocks; those are bounded by $\Lambda_\times\|\Delta\theta_t\|^2$ with
$\Lambda_\times$ the cross-block second-derivative bound of
Remark~\ref{rem:block-coupling}. The joint displacement is over all $P$ blocks, so
$\|\Delta\theta_t\|^2 = \sum_i\|\Delta x_i\|^2 \leq (1+\eta_{\max})^2 P r_{s,t}^2\varepsilon^2$,
the jitter factor entering because Eq.~\ref{eq:step-jitter} scales the step by $1+\eta_i$, and the
second-order term carries a factor $P$; it enters the rate constant and not the exponent,
since $\sum_t r_{s,t}^2 < \infty$ regardless.

Applying Young's inequality at the \emph{degraded} constant $c(1-\bar\pi)$ rather than at
$c$,
$|\langle \nabla_{E_i}\Ls_\delta, e_i\rangle| \leq \tfrac{c(1-\bar\pi)}{4}\|\nabla_{E_i}\Ls_\delta\|^2 + \|e_i\|^2/(c(1-\bar\pi))$,
and spending a further $\tfrac{c(1-\bar\pi)}{4}\|\nabla\|^2$ on the saturated cross term
of Step~3,
per block and summing gives Eq.~\ref{eq:clarke-descent}. Taking it at $c$ instead would leave the coefficient
$c(\tfrac12 - \bar\pi)$, which is negative for $\bar\pi \geq 1/2$ and would need
condition~(viii) strengthened to $\bar\pi < 1/2$; the degraded-constant form keeps
$\bar\pi < 1$ and pays for it with the second power of $(1-\bar\pi)^{-1}$ in the floor.
\begin{equation}\label{eq:clarke-descent}
\E\big[\Ls_\delta(\theta_{t+1}) - \Ls_\delta(\theta_t) \,\big|\, \mathcal{F}_t\big]
\;\leq\;
- \tfrac{c(1-\bar\pi)}{2}\, r_{s,t}\, \varepsilon\, \big\| \nabla_{\mathcal{S}} \Ls_{\delta}(\theta_t) \big\|^2
+ \tfrac{c}{1-\bar\pi}\, r_{s,t}\,\varepsilon\, B_t
+ \tfrac{1}{2}\,(\Lambda_\delta + \Lambda_{\times})\,(1+\eta_{\max})^2\, P\, r_{s,t}^2\, \varepsilon^2 ,
\end{equation}
a conditional \emph{one-step} inequality, so it carries the per-step floor
$B_t = \sum_i (\|e_i(t)\|/c)^2$ built from step-$t$ quantities: $O(P d_p^2 r_p^4)$ from
the softmax remainder \emph{in its clearing case} (Haar rotations, the covered draws
$A_i^c$ on $C^{2,1}$ pieces crossing no wall, as in Corollary~\ref{cor:optimal-rp};
otherwise the $\bar q$-based form that Step~5 sums applies), $O(P d_p^2\sigma(t)/r_p^2)$ from the saturation probability, using
$\sum_i \Pr[A_i\mid\mathcal{F}_t]^2 \leq \sum_i \Pr[A_i\mid\mathcal{F}_t] = P\sigma(t)$, and
$O(P^3\delta^4\Lambda_3^2)$ from the coupling; the horizon constants $\bar q$ and
$\bar\sigma$ of Eq.~\ref{eq:bias-floor} enter only when Step~5 sums, since a
horizon-averaged floor does not dominate the saturation error at an individual step. The Young step costs a constant factor in the
leading term and is why $B$ appears in squared-gradient units on both sides. Unlike the
corresponding step in the smooth analysis, nothing here is discarded: every contribution to
$B$ carries the same power of $r_{s,t}$ as the descent term, so none can be absorbed into
the $r_{s,t}^2$ error, which is why they appear as a floor rather than as vanishing
corrections.

\paragraph{Scope of Lemma~\ref{lem:occupancy}.}
Condition~(viii) is assumed rather than derived.
Lemma~\ref{lem:occupancy} bounds the \emph{path-weighted} occupancy of the
$r_{\mathrm{tube}}$-tube by $(2r_{\mathrm{tube}}+\rho_T)/L_{\mathrm{exc}} + O(\Sigma_T^{-1})$, read here at
$r_{\mathrm{tube}} = \delta_{\mathrm{out}}$, the reach a probe attains, whereas Eq.~\ref{eq:crossing-fraction}
weights by $r_{s,t}$; passing between them needs a lower envelope
$\|\Delta\theta_t\| \geq c_0 r_{s,t}\varepsilon$ on the steps carrying the numerator, and
given such a $c_0$ the path-weighted bound, with the displacement bound
$\|\Delta\theta_t\| \leq \sqrt{P}\,(1+\eta_{\max})\,r_{s,t}\varepsilon$ in the denominator and
$\rho_T \leq \sqrt{P}\,(1+\eta_{\max})\,r_{s,0}\varepsilon$ in the numerator, would give
$\bar\pi = O\big(\sqrt{P}(\delta_{\mathrm{out}} + \sqrt{P}\,r_{s}\varepsilon)/(c_0 L_{\mathrm{exc}})\big)$:
a $\sqrt{P}\delta_{\mathrm{out}}$ term from the tube and a $P r_s\varepsilon$ term from the joint step,
the second decaying under condition~(v). No such $c_0$ follows: a non-constant
cost row can still step exactly nowhere, the costs $(0,0,1,1)$ on the orthoplex
barycentering to zero, which is the cancellation
Theorem~\ref{prop:plateau-freeze} exploits elsewhere. Occupancy also bounds straddling
rather than saturation, and saturation is the smaller event, so the bound points in the
direction we need and is loose. \emph{Given} such a $c_0$, the lemma would supply the
quantitative $\bar\sigma = O(\sqrt{P} r_p\varepsilon/L_{\mathrm{exc}})$ scaling that
Corollary~\ref{cor:optimal-rp} takes as an additional hypothesis; absent one, the lemma
is the reason to expect condition~(viii) rather
than a proof of it, and the corollary's middle term rests on the stated hypothesis alone.
The one implication that does hold runs in the direction
Theorem~\ref{prop:plateau-freeze} states it: where the annulus lies inside a single level
set the row is constant, so the step is exactly zero and $\pi(t) = 0$, and such steps
never appear in the numerator. The converse fails, by the same orthoplex costs
$(0,0,1,1)$ above: a step can be exactly zero on a non-constant row, and then the
saturated contribution need not vanish.

\paragraph{Step 5: rate.}
Write $N_t := \|\nabla_{\mathcal{S}}\Ls_\delta(\theta_t)\|^2$. Take total expectations in Eq.~\ref{eq:clarke-descent} and
sum over $t = 0,\ldots,T-1$; the left side telescopes and $\Ls_\delta$ inherits the lower
bound on $\Ls$, so
\begin{equation}\label{eq:telescoped}
\begin{aligned}
\tfrac{c(1-\bar\pi)}{2}\,\varepsilon\;\E\Big[\sum_{t<T} r_{s,t}\, N_t\Big]
&\;\leq\; C' + \tfrac{c}{1-\bar\pi}\,\varepsilon\,\E\Big[\sum_{t<T} r_{s,t}\,B_t\Big],\\
C' &:= \E\Ls_\delta(\theta_0) - \inf\Ls + \tfrac12(\Lambda_\delta + \Lambda_\times)(1+\eta_{\max})^2 P \varepsilon^2 \sum_{t<T} r_{s,t}^2 ,
\end{aligned}
\end{equation}
with $C' < \infty$ uniformly in $T$ because $\sum_t r_{s,t}^2 < \infty$ by condition~(v).
The initial value sits inside an expectation because the initialization may be random; for
a deterministic $\theta_0$ the two agree. The sum of per-step floors is bounded by the
horizon constants of Eq.~\ref{eq:bias-floor}, term by term: the saturation part is linear
in $\sigma(t)$, so $\E\sum_{t<T} r_{s,t}\sigma(t) \leq \bar\sigma\,S_T$ by the definition
of $\bar\sigma$ as the $r_{s,t}$-weighted average; the softmax part is bounded pointwise
by the deterministic constant $\bar q$, pointwise because the sum averages \emph{squares} of
per-step remainders and an average of squares is not bounded by the square of an average
(the reason $\bar q$ and $\bar\pi$ are both stated pointwise, ahead of and inside
Assumption~\ref{asm:main});
and the coupling part is constant in $t$. Hence
$\E[\sum_{t<T} r_{s,t} B_t] \leq B\,S_T$ with $B = B(r_p,\varepsilon,\bar\sigma)$ of
Eq.~\ref{eq:bias-floor}, and the horizon constants enter nowhere earlier.
Every constant of Eq.~\ref{eq:clarke-descent} is carried: the $\tfrac12$ from the Young
step, the $P$ from the joint displacement, and the cross-block $\Lambda_\times$ alongside
$\Lambda_\delta$.

The weights are now deterministic, which is what splitting on the conditional mass rather
than on a realized indicator buys: condition~(viii) has already been spent on the
coefficient, so dividing by $S_T := \sum_{t<T} r_{s,t}$ needs no almost-sure lower bound on
a random denominator. Since the minimum of a family is at most any weighted
average of it,
\[
\min_{t<T} N_t \;\leq\; \frac{\sum_{t<T} r_{s,t}\,N_t}{S_T} \qquad\text{pathwise},
\]
and taking expectations with Eq.~\ref{eq:telescoped} gives
$\E[\min_{t<T} N_t] \leq 2C'/(c(1-\bar\pi)\varepsilon S_T) + 2B/(1-\bar\pi)^2$; the
factor $2$ on the floor is absorbed into the $O(\cdot)$ constants of
Eq.~\ref{eq:bias-floor}, and the result is Eq.~\ref{eq:pws-rate}. Both placements of the minimum satisfy the bound: the weights
$r_{s,t}$ are deterministic, so the telescoped sum also bounds
$\min_{t<T}\E[N_t] \leq \sum_t r_{s,t}\E[N_t]/S_T$ by the same right-hand side, and
$\E[\min_t N_t] \leq \min_t\E[N_t]$ recovers the displayed form. Restricting the
minimum to an index set selected by a random event would not follow and is not used.

Condition~(viii) enters in two places: its mass part $\bar\pi$ in the coefficient above, and
its probability part $\bar\sigma$ in the middle term of $B$. At $\bar\pi \to 1$ the
saturated draws consume the whole identity and both the rate constant and the floor
diverge.
With $r_{s,t} = r_0(t+1)^{-(1/2+\gamma)}$, $S_T = \Theta(T^{1/2-\gamma})$ and
$\sum_t r_{s,t}^2 < \infty$ for $\gamma > 0$, the bound is $O(T^{-(1/2-\gamma)})$. Taking
$\gamma \to 0$ violates $\sum_t r_{s,t}^2 < \infty$; the same telescoping still
applies per horizon at $\gamma = 0$ and yields $O(T^{-1/2}\log T)$, so the condition
costs a logarithmic factor against the $1/\sqrt T$ benchmark rather than forbidding the
schedule.

\paragraph{Step 6: no almost-sure statement.}
The Robbins--Siegmund theorem \citep{robbins1971convergence} does not apply to
$Y_t := \Ls_\delta(\theta_t) - \inf\Ls \geq 0$, in either arrangement of
Eq.~\ref{eq:clarke-descent}.
Relative to Eq.~\ref{eq:clarke-descent} the display below absorbs the constants
$\tfrac12$, $(1-\bar\pi)^{\pm1}$, $\Lambda_\times$ and the factor $P$, none of which
affects summability.
Arranged with a nonnegative drift,
$\E[Y_{t+1}\mid\mathcal{F}_t] \leq Y_t - c\varepsilon r_{s,t}\|\nabla_{\mathcal{S}}\Ls_\delta(\theta_t)\|^2 + \big(c\varepsilon r_{s,t} B + \Lambda_\delta\varepsilon^2 r_{s,t}^2\big)$,
the additive term is not summable, because $\sum_t r_{s,t} B = \infty$ whenever
$B > 0$ and $\sum_t r_{s,t} = \infty$, and the latter is precisely what the rate
requires. Arranged with the signed drift
$c\varepsilon r_{s,t}(\|\nabla_{\mathcal{S}}\Ls_\delta\|^2 - B)$, the drift can be
negative and the theorem's nonnegativity hypothesis fails.

A sharper argument does not recover it, because the displayed inequality is
compatible with perpetual oscillation. Take $B = 1$, $\Lambda_\delta = 0$,
$r_t = t^{-3/4}$, and an adapted $a_t \in \{0,2\}$ switched to $0$ when
$Y_t \leq 1.6$ and to $2$ when $Y_t \geq 2.4$, with
$Y_{t+1} = Y_t - r_t(a_t - B)$. Then $Y_t \geq 0$, $\sum_t r_t = \infty$,
$\sum_t r_t^2 < \infty$, every sequence is adapted, and the inequality holds with
equality, yet $Y_t$ cycles indefinitely: the switching rule confines $Y_t$ to
$[1.6 - r_t,\, 2.4 + r_t]$ from the first time it enters that band, and since
$r_t \to 0$ while $\sum_t r_t = \infty$ the iterate crosses the band in both
directions infinitely often and converges to nothing. An almost-sure conclusion would
require a stopped-excursion
argument that we do not have, so Theorem~\ref{thm:piecewise-smooth} claims a bound
in expectation only.

\paragraph{Step 7: interpretation.}
Step~5 bounds $\|\nabla_{\mathcal{S}}\Ls_\delta\|$, the gradient \emph{within the
search subspace}, and that is all it bounds. The natural reading is therefore in
terms of the restricted function
$\phi(z) := \Ls(\theta_0 + \Pi z)$ on $\R^{d_{\mathrm{sub}}}$.
The radius is set by the support of the kernel that actually smooths $\Ls$, and that
kernel is the \emph{joint} product over the $P$ blocks
(Remark~\ref{rem:block-coupling}), not the single-block annulus: each block
displacement has norm at most $\delta_{\mathrm{out}}$, so the joint displacement has
norm at most $\sqrt{P}\,\delta_{\mathrm{out}} = \delta_{\mathrm{joint}}$
(Eq.~\ref{eq:joint-radius}). The mean realized radius $\delta = r_p\varepsilon/2$ of
Eq.~\ref{eq:surrogate-def} is not the support radius of anything and must not be
substituted here; it is smaller than $\delta_{\mathrm{joint}}$ by the factor
$2K(1+\eta_{\max})\sqrt{P}/(K+1)$.

For locally Lipschitz $\Ls$, a displacement of length $\rho$ in $\R^d$ along the
range of $\Pi$ pulls back to a displacement of length at most $\rho/\sigma_{\min}(\Pi)$
in $z$, so $\phi_\delta$ is an average of $\phi$ over points within
$\delta' := \delta_{\mathrm{joint}}/\sigma_{\min}(\Pi)$ of $z$ and
$\nabla\phi_\delta(z)$ is an average of gradients at those points, lying in
$\partial_{\delta'} \phi(z) := \mathrm{conv}\{\partial^\circ \phi(y) : \|y-z\| \leq \delta'\}$.
The smallest singular value is the right one here: $\sigma_{\max}$ would give the
inner radius of the pre-image ellipsoid rather than the outer one, and the
containment needs the outer. Under Assumption~\ref{asm:main}(iii)
the per-layer columns are QR-orthonormal, so
$\Pi^\top\Pi = I$, every singular value equals $1$, and
$\delta' = \delta_{\mathrm{joint}}$ exactly. Only the support is used, not the shape
of the kernel, so the annulus needs no appeal to a uniform-ball smoothing. The
conclusion is $(\delta_{\mathrm{joint}},\sqrt{B}/(1-\bar\pi))$-Goldstein stationarity
\emph{of $\phi$}, not of $\Ls$, which is the radius and tolerance
Corollary~\ref{cor:goldstein} states, the radius shown there to be sharp.

Vanishing $\nabla_{\mathcal{S}}\Ls_\delta$
does not imply $0 \in \partial_{\delta_{\mathrm{joint}}} \Ls$. For
$\Ls(\theta) = 1 - \exp(-\|\theta - (0,1)\|^2)$ with
$\mathcal{S} = \mathrm{span}(e_1)$ and $\theta = (0,0)$, the subspace gradient
vanishes while $\nabla\Ls(\theta) = (0,-0.7358)$; the second component of
$\nabla\Ls(y)$ is $2(y_2-1)e^{-\|y-(0,1)\|^2}$, strictly negative on the whole ball
$\|y\| < 1$, so $0 \notin \partial_{\delta_{\mathrm{joint}}} \Ls$ at every
$\delta_{\mathrm{joint}} < 1$.
Whatever lies outside the search subspace is invisible to the method and is not
certified by the theorem.
For $\Ls$ with $J > 0$ the Clarke subdifferential is undefined
(Theorem~\ref{thm:smoothing-blowup}(2)), so $\partial_{\delta_{\mathrm{joint}}} \Ls$ is undefined as
well and no Goldstein reading is available. The conclusion is then exactly what
Step~5 states: approximate stationarity of $\Ls_\delta$ within $\mathcal{S}$, which
by Theorem~\ref{thm:smoothing-blowup} and Remark~\ref{rem:plane-smoothing} is the
strongest object this class of method can certify. \qed

\subsection{Provenance of the Hypotheses and Constants}\label{app:theory-remarks}

Each item says why a hypothesis or a definition of Section~\ref{sec:theory} cannot
be weakened.

\paragraph{The mass part of condition~(viii).}
It is an inequality between norms rather than a ratio
because the ratio is $0/0$ where the step vanishes (the plateau of
Theorem~\ref{prop:plateau-freeze}), and it is asked pointwise in $t$
because $\pi(t)$ multiplies $\|\nabla_{\mathcal{S}}\Ls_\delta(\theta_t)\|^2$ in
the descent inequality and an $r_{s,t}$-weighted average of $\pi$ does not bound the
$r_{s,t}\|\nabla\|^2$-weighted one.
Both statistics are $\mathcal{F}_t$-measurable; a run observes their
\emph{realized} counterparts, read exactly off the transport row (as after
Eq.~\ref{eq:step-error}), while the conditional constants the theorem consumes
remain conditional (Section~\ref{sec:hypothesis-coverage} states the gap and the
measurements).
A geometric condition in its place, that the joint probe kernel clear $\mathcal{D}$, would
be violated at every step of every run reported here.

\paragraph{The loss-side constant of condition~(vii).}
Where the smooth pieces are $C^{2,1}$ and the joint kernel
clears $\mathcal{D}$ it is the smooth part's own third-derivative bound and carries no
$\delta$; across a transverse wall the unconditional bound
$\|\Phi - \nabla\Ls_\delta\|
= O(\sqrt{P}\,\mathrm{osc}_{\delta_{\mathrm{blk}}}(\Ls)\,\delta_{\mathrm{blk}}^{-1})$
on the realized step field of Remark~\ref{rem:block-coupling},
finite for any bounded measurable $\Ls$, replaces the Taylor estimate at the price of a
coupling floor that grows as the radius falls. Here $\delta_{\mathrm{blk}}^2$ is the
per-coordinate second moment of one block's kernel and $\mathrm{osc}_r(\Ls)$ the
oscillation over a radius-$r$ ball.

\paragraph{The linearization term and the measured $\bar q$.}
The softmax remainder on covered draws is pointwise in $t$ like $\bar\pi$
because the descent sum averages \emph{squares} of per-step remainders. Where the
probes see only smooth $C^{2,1}$ pieces and cross no wall, the vector remainder
$e_i$ cancels to $O(r_p^3)$ (as after Eq.~\ref{eq:step-error}) and the term
reduces to $O(P d_p^2 r_p^4)$; the reduction runs through $e_i$, not through
$\bar q = \Theta(r_p^2)$, and is not automatic: a probe crossing a wall of jump
$a < \varepsilon$ leaves the row covered while $\|e_i\| =
\Theta(\tanh(a/2\varepsilon) - a/2\varepsilon)$ carries no $r_p$
($\Ls = a\mathbb{1}[x_1 > 0]$; Appendix~\ref{app:tanh-remainder}). The term is
therefore stated in $\bar q$;
Section~\ref{sec:hypothesis-coverage} reports the measured
$\max_{i,t}\Delta_i/\varepsilon$, whose square bounds its realized counterpart.

\paragraph{The saturation term and $\bar\sigma$.}
Under the additional hypothesis
$\bar\sigma = O(\sqrt{P} r_p\varepsilon/L_{\mathrm{exc}})$, with $L_{\mathrm{exc}}$
the excursion length between visits to the wall tube (Lemma~\ref{lem:occupancy}),
the term becomes $O(P^{3/2} d_p^2\varepsilon/(r_p L_{\mathrm{exc}}))$, linear
in $\varepsilon$ and growing as the probe radius shrinks, the term
Corollary~\ref{cor:optimal-rp} trades against; a
uniform-base-point model supplies the hypothesis, and
Lemma~\ref{lem:occupancy} is consistent with it up to an additive
$O(P r_s\varepsilon/L_{\mathrm{exc}})$ term decaying under condition~(v).

\paragraph{Sharpness of the coupling term.}
The $P^3$ is sharp in both powers beyond the first, with attaining
witnesses in Remark~\ref{rem:block-coupling}; the same accounting puts $P$ in
front of the linearization term. $\Lambda_3$ is the only point where regularity of
$\Ls$ itself enters; where it is unavailable the oscillation fallback of
condition~(vii) governs, and where it is $\delta$-free the floor is removable at
the price of Corollary~\ref{cor:optimal-rp}.

\paragraph{Form of Corollary~\ref{cor:piecewise-constant}.}
The conclusion is a disjunction because
conditioning on $\{\tau_{\mathrm{ex}} > \tau_{\mathrm{ent}} + T\}$ would reweight
against exactly the paths that hit $\mathcal{A}$ and leave. Condition~(d) is
assumed rather than derived: the naive volume route fails (a compactly supported
mollifier has density infimum zero at the annulus boundary, and ambient volume
carries no information inside a proper search subspace); a uniform
relative-volume hypothesis on an inner annulus would supply a $p_0$, but none is
available for the trained losses. Condition~(b) is a probability over the
rotation because each $Rv$ sweeps the whole sphere and no vertex index can be
reserved for $\mathcal{A}$. Conditions~(a) and~(vi) set the constant-$\varepsilon$
regime and the compact on which the one-step success probability is defined; the
geometric tail then consumes~(d) alone. Condition~(viii) is not
assumed: under~(b) a two-valued row has spread $\Delta_i \geq 1/K$ ($\Delta_i = 1$
at $K = 1$, the deployed setting), so for $\varepsilon < 1/K$ the softmax saturates
on exactly the draws~(b) requires.

\paragraph{Witnesses for Remark~\ref{rem:block-coupling}.}\label{app:coupling-witnesses}
Independent mean-zero block noise of per-coordinate variance $\delta_{\mathrm{blk}}^2$
throughout; scalar blocks, vector blocks embedding coordinate-wise.
\emph{Non-conservativity.} For $\Ls(x,y) = x^3y^2$,
$\Phi_x = \E[3(x+u)^2]\,y^2 = 3(x^2+\delta_{\mathrm{blk}}^2)y^2$ and $\Phi_y = 2x^3y$, so
$\partial_y \Phi_x - \partial_x \Phi_y = 6\delta_{\mathrm{blk}}^2 y \neq 0$.
\emph{Per-block Hessian is forced.} For $\Ls(x,y) = \sin(\ell x_1)\sin(k y_1)$ with
kernel transform $\hat\kappa(\zeta) := \E[\cos(\zeta u)]$,
$\Phi_x = \ell\cos(\ell x_1)\,\hat\kappa(\ell)\sin(k y_1)$, and its transverse smoothing
multiplies by $\hat\kappa(k)$, so the discrepancy has amplitude
$\ell\,|\hat\kappa(\ell)|\,(1 - \hat\kappa(k)) \to \ell\,|\hat\kappa(\ell)| \neq 0$ as
$k \to \infty$, while every third derivative of the jointly smoothed field carries a
factor $k^{b}\hat\kappa(k)$ with $b \leq 3$ and vanishes in the same limit, the
$W^{3,1}$ kernel of condition~(iv) having $\hat\kappa(k) = o(k^{-3})$.
\emph{The factor $P$ within a block.} For $\Ls(x) = \tfrac12 x_1\sum_{j\geq2}x_j^2$:
smoothing block $1$ alone leaves $\partial_{x_1}\Ls = \tfrac12\sum_{j\geq2}x_j^2 = 0$
at the origin, while joint smoothing gives
$\partial_{x_1}\E\Ls(x+u) = \tfrac12\sum_{j\geq2}(x_j^2 + \delta_{\mathrm{blk}}^2)
= (P-1)\delta_{\mathrm{blk}}^2/2$ there.
\emph{The factor $P^3$ in the floor.} For
$\Ls(x) = \tfrac12\sum_i x_i\sum_{j\neq i}x_j^2$, every point supported in a single
block has $\Ls = 0$, so every cost row at the origin is constant and the step is
exactly zero by Theorem~\ref{prop:plateau-freeze}; joint smoothing gives
$\E\Ls(x+u) = \Ls(x) + \tfrac{P-1}{2}\delta_{\mathrm{blk}}^2\sum_i x_i$, whose
gradient at the origin has every entry $(P-1)\delta_{\mathrm{blk}}^2/2$ and norm
$\sqrt{P}(P-1)\delta_{\mathrm{blk}}^2/2 = \Theta(P^{3/2}\delta^2)$ at
$\Lambda_3 = O(1)$, defeating a $P^2\delta^4\Lambda_3^2$ floor.
(The witnesses are polynomials; a smooth cutoff equal to one on the joint-kernel
neighborhood makes them admissible for Definition~\ref{def:pwc-loss} without
changing any quantity used.)
Appendix~\ref{app:theory-numerics} reports the numerical confirmations.

\paragraph{The sub-temperature wall remainder.}\label{app:tanh-remainder}
For $\Ls = a\,\mathbb{1}[x_1 > 0]$ and a probe pair straddling the wall, the cost
difference within the antipodal pair is $a$, the softmax puts weights
$e^{\mp a/2\varepsilon}/(e^{a/2\varepsilon} + e^{-a/2\varepsilon})$ on the two signs,
and the realized step along the pair axis is $-\tanh(a/2\varepsilon)$ against the
linearization's $-a/2\varepsilon$; the remainder is
$\Theta(\tanh(a/2\varepsilon) - a/2\varepsilon)$, a function of $a/\varepsilon$
alone, carrying no power of $r_p$. A jump $a \leq \varepsilon$ leaves the row
covered ($\Delta_i \leq \varepsilon$) while the remainder stops shrinking with the
probe radius, which is why the first floor term is stated in the measured $\bar q$
rather than reduced to $O(r_p^4)$ unconditionally.

\paragraph{Two-block vs.\ joint clearance.}
Let $\mathcal{D} = \{n^\top x = 0\}$ with $n = (e_1 + e_{d_p+1})/\sqrt2$, so the normal
has equal mass in blocks $1$ and $2$. A single-block displacement of norm at most
$\delta_{\mathrm{out}}$ changes $n^\top x$ by at most $\delta_{\mathrm{out}}/\sqrt2$,
while two simultaneous block displacements can change it by $\sqrt2\,\delta_{\mathrm{out}}$.
For $\delta_{\mathrm{out}}/\sqrt2 < n^\top\theta < \sqrt2\,\delta_{\mathrm{out}}$ every
per-block event $\mathcal{C}_i$ therefore holds and the joint product kernel still crosses
$\mathcal{D}$. On $\Ls = \mathbb{1}[n^\top x \geq 0]$ the smooth part's $\Lambda_3$ is
zero and every blockwise cost row is constant, yet the joint smoothing has a nonzero
gradient, so a coupling bound stated on $\bigcup_i \mathcal{C}_i^c$ would read zero where the true
discrepancy is not. This is why the coupling term of Eq.~\ref{eq:bias-floor} takes its
constant at the joint radius $\delta_{\mathrm{joint}} = \sqrt{P}\,\delta_{\mathrm{out}}$,
the same radius Corollary~\ref{cor:goldstein} certifies at, rather than per block.

\paragraph{Folded and disconnected sheets.}
Two failures the definition has to exclude, beyond the annihilating bump below. If $S$
is allowed to be disconnected within $B(\theta^\dagger, r_{\mathrm{iso}})$, then
$\Ls = \mathbb{1}_{(0,a)}$ with $S = \{0, a\}$ satisfies the isolation identity at
arbitrarily large $r_{\mathrm{iso}}$, since the complement is still a disjoint union of
two open sets, and the bound fails for $a \ll h$ exactly as before. If $S$ is
connected but has only a second-fundamental-form bound $\kappa$ rather than a positive
reach, a sheet may fold back to within $h$ of itself while remaining locally flat
to order $\kappa$, and the normal segment the coupling argument uses then crosses
$\mathcal{D}$ more than once. Requiring $S$ connected, both sides connected, and reach
at least $\tau$ removes both, and $\tau$ replaces $1/\kappa$ throughout
Theorem~\ref{thm:smoothing-blowup}.

\paragraph{The isolation radius and the two-sided trace.}
Naming $r_{\mathrm{iso}}$ rather than saying ``locally'' matters because every
statement straddles $S$ at some radius $h$, and it is only inside
$B(\theta^\dagger, r_{\mathrm{iso}})$ that $S$ is the whole of $\mathcal{D}$. A second
sheet closer than the straddle can cancel the first exactly. In one dimension
$\Ls = \mathbb{1}_{(0,a)}$ has an attained jump $J = 1$ at the origin with $G = 0$,
$\kappa = 0$ and $r_{\mathrm{iso}} = a$, yet for a scaled kernel $\mu_h$,
$\|(\Ls * \mu_h)'\|_\infty = \|\mu_h - \mu_h(\cdot - a)\|_\infty
\leq a\|\mu'\|_\infty h^{-2}$, which is \emph{below} $J/(4h)$ once
$a \ll h$: the two jumps of the bump annihilate. So $r_{\mathrm{iso}}$ enters the
admissible range of $h$ in Theorem~\ref{thm:smoothing-blowup} and cannot be
dropped from it.

The weaker definition by pointwise oscillation will not do either. The function
$\Ls = \mathbb{1}[\theta_1 = 0]$ has oscillation $1$ at every point of
$\{\theta_1 = 0\}$, yet its smoothing by any absolutely continuous kernel is
identically zero, so no blow-up occurs. What matters is that the jump is carried by a
set of positive codimension-one measure, which is what an attained jump requires and
what every architecture in Appendix~\ref{app:discontinuity-sets} has.

\paragraph{The two subdifferentials that survive at a jump.}
The worked example behind Remark~\ref{rem:jump-subdifferentials}: for
$f = \mathbb{1}[x \geq 0]$ at $0$, the difference quotient
$(f(x) - f(0) - vx)/|x| \to -\infty$ as $x \uparrow 0$ for every $v$, so the
Fr\'echet subdifferential is empty, $\hat\partial f(0) = \varnothing$, and there is
nothing to set to zero; the limiting subdifferential is nonempty and worse, since
$f$ is locally constant on either side, so $\partial f(0) = \{0\}$ and the
criterion declares the jump point stationary.

\paragraph{$\Lambda_h$ as a seminorm.}
$\Lambda_h := \mathrm{Lip}(\nabla\Ls_h)$ is the $C^{1,1}$ seminorm and not a
supremum of classical Hessians, which for the piecewise-affine $\Ls_h$ of the
uniform kernel in one dimension would read $0$ and make
Eq.~\ref{eq:blowup-hess} false. The
Lipschitz constant reads $+\infty$ there, which is correct, and it is the constant the
descent inequality consumes: a descent lemma needs $\nabla\Ls_h$ Lipschitz, not
$\Ls_h$ twice differentiable. The two agree whenever $\Ls_h \in C^2$, which
the scaled $W^{2,1}$ family gives. The scaling hypothesis buys only the upper bound: a
kernel supported in $B(0,h)$ yet concentrated at scale $h^2$ obeys both lower
bounds while having smoothness constant $\Theta(J/h^4)$. The constants in
Eq.~\ref{eq:blowup} are absolute and dimension-free.

\paragraph{The $\varepsilon$-independent error term.}
This is the one place the parameterization of
Remark~\ref{rem:radius-parameterization} must be handled explicitly, because the claim
is about \emph{decaying} $\varepsilon$ and so cannot appeal to the flat-$\varepsilon$
equivalence. Both conventions give the same conclusion and the second gives it more
directly: with multiplier radii the two factors of $\varepsilon$ in
$\Lambda_h r_s^2\varepsilon^2$ cancel against $h = \Theta(r_p\varepsilon)$,
while with scheduled radii $\Lambda_{\delta} = \Theta(r_{p,t}^{-2})$ and the step is
$\Theta(r_{s,t})$, so $\varepsilon$ never enters the error term at all and decaying it
cannot touch it a fortiori.

\paragraph{Weighting of condition~(viii).}
The supremum is over \emph{all} horizons rather than a $\limsup$, which costs little and
buys the rate at every finite $T$: under a $\limsup$ the bound would be asserted on an
unquantified initial segment. The weighting by $r_{s,t}$ is the quantity the descent sum
accumulates. Both quantities the condition bounds are conditional on $\mathcal{F}_t$ and so
are averages rather than indicators, the saturation event
$A_i(t) = \{\Delta_i(t) > \varepsilon\}$ not being $\mathcal{F}_t$-measurable. A geometric event in
its place would be $\mathcal{F}_t$-measurable and $\{0,1\}$-valued, which would make a
pointwise form of it vacuous; the saturation event is neither, and the weighted form is
the one the sum needs.

Lemma~\ref{lem:occupancy} is the geometric reason to expect the condition rather than a
proof of it: it weights by path length where Eq.~\ref{eq:crossing-fraction} weights by
$r_{s,t}$, and it bounds straddling, the larger event; Appendix~\ref{app:proof-piecewise-smooth}
gives both gaps.

\paragraph{Placements of the minimum.}
The usual $\min_t \E[\cdot]$ form of a nonconvex rate is obtained by pulling a
deterministic per-step coefficient through the expectation. Here the object being minimized
is itself random and $\E[\min_t N_t] \leq \min_t \E[N_t]$ strictly in general, so what
Step~5 proves is the weaker of the two statements. In one respect
it is the more useful: it bounds what a \emph{single trajectory} visits, where
$\min_t\E[\cdot]$ bounds only some index of a sequence of expectations. Restricting the
minimum to steps at which no particle's probe annulus meets $\mathcal{D}$, with the
corresponding factor carried outside the expectation, is neither needed nor sound:
Eq.~\ref{eq:step-error} identifies the step on every draw, and nothing bounds the
correlation between saturation and the size of the gradient.

\paragraph{Error sources and the step constant.}
The convention is forced, in both directions. The constant must be the per-block
field's own: the jointly smoothed third derivative can tend to zero while the
discrepancy does not (Remark~\ref{rem:block-coupling}), so no joint-field reading
bounds the coupling. And the per-block transverse derivatives fall on the loss, so at
a transverse wall the constant is unavailable and the unconditional oscillation bound
governs: the coupling term
$O(P\,\mathrm{osc}^2_{\delta_{\mathrm{blk}}}(\Ls)\,\delta_{\mathrm{blk}}^{-2})$
grows as the probe radius and temperature fall. Where the joint kernel does lie in one
smooth component the derivative governing the
coupling remainder is the smooth part's and is $\delta$-independent. The squaring arises because $e$ enters the descent inequality as
$\langle \nabla_{\mathcal{S}}\Ls_\delta, e\rangle$, so the largest gradient norm
compatible with descent is $\|e\|/c$.
The three sources of $e$ divide by $c$ differently. The linearization remainder and the
saturation probability are absolute displacement errors, so each acquires the
factor $(d_p/r_p\bar\xi)^2$. The coupling term does not: $\Phi - \nabla\Ls_\delta$
is a discrepancy between two \emph{gradient} fields
(Remark~\ref{rem:block-coupling}), so it enters the step already multiplied by $c$,
the $c$ cancels, and the term carries neither $d_p$ nor any power of $r_p$ beyond
those inside $\delta$.
The linearization term is $O(r_p^3)$ by Theorem~\ref{thm:exact-step}, and the
vanishing third moment that produces it holds for \emph{any} polytope, not only the
orthoplex.

\paragraph{Excursion length, not sheet separation.}
The crossing count needs the excursion length $L_{\mathrm{exc}}$: sheet separation
bounds the path between visits to \emph{different} sheets and says
nothing about re-crossing the same sheet. A trajectory that oscillates across a
single sheet between $\pm(\delta + a)$ satisfies the crossing hypothesis at every
visit, yet has occupancy $\delta/(\delta+a)$, which tends to $1$ as $a \to 0$
however large $L_{\mathcal{D}}$ is. With one sheet the separation hypothesis is
vacuous. Stating the bound in terms of $L_{\mathrm{exc}}$ makes the quantity the
proof uses the quantity the hypothesis constrains, and the geometric reading
survives as the special case above. Separation, not reach, is what the special case
needs: reach bounds curvature, and two parallel planes have infinite reach at any
separation.
A trajectory that enters the tube and slides \emph{along} a sheet, or that steps out
and back across a short outside path, has $L_{\mathrm{exc}} \to 0$ and no
$\delta$-dependent bound.
Both hypotheses are needed and both are testable on a realized trajectory; Appendix~\ref{app:theory-numerics}
reports the bound holding when they are met and the counterexample above appearing when
either is dropped.

Two consequences follow.

First, the saturation term of Eq.~\ref{eq:bias-floor} is displayed \emph{linear} in
$\bar\sigma$ where the Young step of Step~4, applied to $\|e_i\|$, produces
$\bar\sigma^2$ before the relaxation
$\sum_i\Pr[A_i]^2 \leq \sum_i\Pr[A_i]$. The linear form is not an artifact of that
relaxation: a pathwise Young's inequality on the saturation \emph{indicator},
$\|\nabla\|\,\mathbb{1}_{A_i} \leq \tfrac{c(1-\bar\pi)}{4}\|\nabla\|^2
+ \mathbb{1}_{A_i}/(c(1-\bar\pi))$ using $\mathbb{1}_{A_i}^2 = \mathbb{1}_{A_i}$,
holds draw by draw before any conditioning, so taking $\E[\,\cdot\mid\mathcal{F}_t]$
delivers the floor contribution $\Pr[A_i\mid\mathcal{F}_t]/(c(1-\bar\pi))$, linear in
the saturation probability, with no relaxation step and no measurability of $A_i$
with respect to $\mathcal{F}_t$ invoked. The square is also order-equal for an
on--off event: where the particle is within reach of a wall the conditional
probability is $\Theta(1)$, and elsewhere it vanishes, so the time-average of squares
matches the linear form's scaling. Under
the model $\bar\sigma = O(\sqrt{P} r_p\varepsilon/L_{\mathrm{exc}})$ the term is
$\Theta(\varepsilon/r_p)$ and grows as the probe radius falls: the interior optimum
of Corollary~\ref{cor:optimal-rp} is a property of the bound's mechanism, not of a
loosened intermediate. The corollary is still stated for the bound $\hat B$, because
$B$ itself is not measured.

The empirical evidence does not isolate this mechanism: the interior accuracy peak of
Section~\ref{sec:ablation} coexists with $\bar\sigma$ measured at $0.00$
(Section~\ref{sec:hypothesis-coverage}), so the term the corollary trades against is
inactive there; a more likely explanation at our radii is finite-sample noise in the
cost matrix, which no term of Eq.~\ref{eq:bias-floor} models.

Second, the bound is on the path-weighted average and is a statement about a trajectory
that keeps moving. If the smooth part's minimizer lies \emph{on} $\mathcal{D}$ the iterate
settles inside the tube, the average is $\Theta(1)$, and no $\varepsilon$ dependence
survives. That case is exactly the one in which the optimizer sits on the jump.

\paragraph{Sharpness of the $\sqrt{P}$ Goldstein radius.}
The $\sqrt{P}$ is sharp for the class quantified over. Fix
$\rho < \sqrt{P}\,\delta_{\mathrm{out}}$, let $u^\star$ have every block at full reach
and aligned with a unit $\hat n$, so $\|u^\star\| = \sqrt{P}\,\delta_{\mathrm{out}}$, and take
$\mu = \tfrac12(\delta_{u^\star} + \delta_{-u^\star})$, admissible since
$\mathrm{supp}\,\mu \subset \bar B(0, \sqrt{P}\delta_{\mathrm{out}})$. For
$\varphi(z) = \max(0, \langle \hat n,z\rangle - t_0)$ with
$\rho < t_0 < \langle \hat n, u^\star\rangle$, the loss vanishes identically on $B(0,\rho)$, so
$\partial_\rho\varphi(0) = \{0\}$, while
$\nabla\varphi_\delta(0) = \tfrac12 \hat n \neq 0$. Letting $t_0 \uparrow \sqrt{P}\delta_{\mathrm{out}}$
rules out every smaller radius, so no constant below $\sqrt{P}$ is available.

\paragraph{Failure of the ambient form.}
The statement is about $\varphi$, and the ambient form is false. Writing
$\nabla\Ls_\delta(\theta) \in \partial_{\delta_{\mathrm{joint}}}\Ls(\theta)$ instead
would assert that a gradient computed inside $\mathcal{S}$ lies in the \emph{full}
Goldstein set, and the subspace restriction destroys exactly the information that
would need: for $\Ls(x,y) = x + y$ with $\mathcal{S} = \mathrm{span}(e_1)$, every
smoothing has $\nabla\varphi_\delta = 1$, which embeds as $(1,0)$, while
$\partial_\rho \Ls \equiv \{(1,1)\}$ at every radius. Only the projected statement
survives, which is the same qualification that Eq.~\ref{eq:pws-rate} carries and for
the same reason: this method optimizes within $\mathcal{S}$ and certifies within
$\mathcal{S}$.

\paragraph{The constants condition~(vii) costs.}
Condition~(vii)'s kernel half bounds derivatives of the
\emph{smoothed} field and is finite for any bounded loss under condition~(iv); its
transverse half falls on the loss and is not automatic
(condition~(vii-L) of Assumption~\ref{asm:main}). Where one additionally wants
the smooth part's own constants, which is what the $\delta$-free reading of
$\Lambda_3$ needs, the hypothesis is that the smooth pieces of $\Ls$ are $C^{2,1}$ with second- and third-derivative
bounds $\Lambda_2, \Lambda_3$, on the enlarged compact
$\mathcal{K}_{\delta_{\mathrm{joint}}} := \{\theta : \mathrm{dist}(\theta,\mathcal{K}) \leq \delta_{\mathrm{joint}}\}$
rather than on $\mathcal{K}$, since the joint displacement leaves $\mathcal{K}$ by up
to $\sqrt{P}\,\delta_{\mathrm{out}}$; $C^{1,1}$ is not enough, because the $O(r_p^3)$ softmax remainder and the
$O(\delta^2\Lambda_3)$ coupling term in Eq.~\ref{eq:bias-floor} are third-order
statements and under $C^{1,1}$ alone the remainder is only $O(r_p^2)$. In addition
the probe displacement is small relative to the scale on which $\nabla\Ls_\delta$
varies:
$r_p\varepsilon\,\Lambda_2 \leq \tfrac12 \|\nabla_{\mathcal{S}}\Ls_\delta(\theta_t)\|$
whenever $\|\nabla_{\mathcal{S}}\Ls_\delta(\theta_t)\| > 2\sqrt{B}$, with $B$ the bias
floor of Eq.~\ref{eq:bias-floor}. The threshold is $2\sqrt{B}$ and not $\sqrt{B}$
because the right-hand side carries the factor $\tfrac12$, which leaves the band
$(\sqrt{B}, 2\sqrt{B}]$ uncovered; the conclusion therefore holds with floor $4B$, and the good-event bookkeeping of
Appendix~\ref{app:proof-piecewise-smooth} multiplies it by a further
$(1-\bar\pi)^{-1}$. Both are constants of the hypotheses, and since
Eq.~\ref{eq:bias-floor} states $B$ with $O(\cdot)$ constants they sit inside them, so
Eq.~\ref{eq:pws-rate} carries $B$ unchanged.

\paragraph{Detectability as a subset-sum constraint.}
Non-constancy of the row is not enough, and Eq.~\ref{eq:pwc-detect} is exactly what
is: with weight $\alpha$ on $\mathcal{Z}$ and $\alpha e^{-1/\varepsilon}$ off it,
$\sum_v v_v = 0$ collapses the step to
$\sum_v w_v R v_v = \alpha(1 - e^{-1/\varepsilon})\,R\!\sum_{v \in \mathcal{Z}} v_v$.
On the orthoplex this says $\mathcal{Z}$ separates an antipodal pair: costs $(0,0,1,1)$ on
$(\pm e_1, \pm e_2)$ give $\mathcal{Z} = \{\pm e_1\}$, sum zero, and a step of exactly zero
however unequal the entries. On the simplex it holds for \emph{every} proper nonempty
$\mathcal{Z}$, the $d_p+1$ vertices carrying no linear relation beyond $\sum_v v_v = 0$, so
there the condition reduces to non-constancy of the row, and every reported run
uses the simplex. Either way $\|\sum_{v\in \mathcal{Z}} v_v\| \geq 1$ when
Eq.~\ref{eq:pwc-detect} holds, since the simplex gives
$\|\sum_{v\in \mathcal{Z}}v_v\|^2 = |\mathcal{Z}|(d_p{+}1{-}|\mathcal{Z}|)/d_p \geq 1$ and the orthoplex leaves an
unpaired unit vector.

\paragraph{Adjacent guarantees.}
\citet{glasmachers2020global} proves that the $(1+1)$-ES converges to
a critical point for measurable objectives, extending criticality past the locally
Lipschitz case, but for an isotropic single-offspring search rather than a
structured probe set. \citet{vicente2012discontinuous} analyzes directional direct
search on discontinuous objectives through Rockafellar upper subderivatives; that
analysis needs the objective to be directionally Lipschitz with respect to the
limit direction, which a jump across a hypersurface does not satisfy. The theorem
in its final section, and a corollary of that theorem, have since been refuted by an
explicit counterexample \citep{audet2024counterexample}, which also supplies a
modified poll step that restores them; the rest of the analysis stands. The runtime-analysis literature bounds hitting
times on plateaus precisely \citep{antipov2021plateau}, and more precisely still via
discrete Fourier analysis \citep{doerr2024plateau}, which recovers exact expected
runtimes on sequential plateaus and the optimal mutation rate for them, but on
pseudo-Boolean search spaces, where a plateau is a Hamming ball rather than a level set
in $\R^d$, and where the step is a bit flip rather than a displacement whose reach is
what decides visibility.

We therefore claim no priority. Set against Glasmachers, our bound is \emph{weaker} in an
important respect: his conclusion is independent of the initial state, while ours is
conditional on first entering the reach set and cannot be made otherwise, because
Theorem~\ref{prop:plateau-freeze} makes $\tau_{\mathcal{A}} = \infty$ almost
surely from a plateau beyond probe reach. Neither result is unconditional in the
stronger sense: \citet{glasmachers2020global} also shows that a non-optimal critical
point induces premature convergence with positive probability, so what that paper
supplies is a \emph{characterization} of when global convergence holds rather than a
blanket guarantee. Corollary~\ref{cor:piecewise-constant} contributes something narrower: an explicit
per-step success probability for a structured probe set,
in terms of the geometry the probes realize, together with an exact characterization
of when that probability is zero. It is a bound for an exploration process, not a
descent guarantee, and
Theorem~\ref{prop:plateau-freeze} marks its boundary sharply: on a plateau
wider than the probe reach there is no guarantee to be had, because there is no
motion.

\paragraph{Two parameterizations of the radii.}
Write $r_{p,t}, r_{s,t}, \delta_t$ for the realized quantities; every statement below is
in terms of those, and substituting $r_p\varepsilon$ and $r_s\varepsilon$ recovers the
displayed forms.

\begin{remark}[Two parameterizations of the radii, and which one each run uses]\label{rem:radius-parameterization}
Eq.~\ref{eq:surrogate-def} is written in the parameterization in which $r_p$ and $r_s$ are
\emph{multipliers on} $\varepsilon$. The implementation admits a second, in which a radius
is supplied as a schedule and is the \emph{physical} radius itself, with $\varepsilon$ not
applied. At flat $\varepsilon$ the two coincide up to a constant, and condition~(v)
requires flat $\varepsilon$, so inside the theorem's hypotheses nothing needs restating.
Where $\varepsilon$ is not flat the distinction is real, and
the note to Table~\ref{tab:hypothesis-coverage} states which of the eight reported
configurations schedules which radius.
\end{remark}

One translation matters for tuning: the interior optimum
$r_p^\star = \Theta(\varepsilon^{1/5})$ of Eq.~\ref{eq:optimal-rp} is a statement about the
multiplier, so the physical optimum is $\Theta(\varepsilon^{6/5})$.

\paragraph{The cost of a smaller temperature.} The second-order error
$\Lambda_\delta r_{s,t}^2\varepsilon^2$ does not re-inflate as $\varepsilon$ falls: with
$\delta = \Theta(r_p\varepsilon)$ the two factors of $\varepsilon$ cancel and it is
$O((r_s/r_p)^2)$ (Theorem~\ref{thm:smoothing-blowup}(3)). The floor is thus removable, at
a steep exchange rate. At fixed $r_p$, where $\varepsilon \gg r_p^5$ and the saturation
term dominates, reaching the floor needs $S_T = \Theta(\varepsilon^{-2})$, hence
$T = \Theta(\varepsilon^{-2/(1/2-\gamma)})$: a factor $\alpha$ off the floor costs
$\alpha^{-4}$ in iterations, and $\alpha^{-5}$ at the optimized radius of
Corollary~\ref{cor:optimal-rp}, where the floor falls only as $\varepsilon^{4/5}$. Below
the crossover the bound flattens at $O(P d_p^2 r_p^4)$, carrying no $\varepsilon$, since
the softmax remainder is set by probe displacement over temperature and that is $r_p$, not
$r_p\varepsilon$; past it, reducing $\varepsilon$ buys nothing unless $r_p$ comes down too,
which is why the optimum is interior (Appendix~\ref{app:theory-remarks} works both regimes
out). All of this concerns $\hat B$, for which we have a matching lower bound only on the
linearization term. The three terms name what a different algorithm would change: a softmax
rather than an exact argmin, a temperature smaller than the jumps the probes see, and
probing one block at a time while updating all of them.

\paragraph{Rate in context.} Eq.~\ref{eq:pws-rate} is a per-iteration rate, and
each iteration issues $P \times V \times K$ queries, so the query complexity
carries a factor $d_{\mathrm{sub}}$ rather than the ambient $d$.
On Lipschitz objectives \citet{lin2022gradientfree} reach a
$(\delta,\nu)$-Goldstein point in
$O(d^{3/2}\delta^{-1}\nu^{-4})$ stochastic queries, improved to
$O(d\,\delta^{-1}\nu^{-3})$ by \citet{kornowski2024algorithm}; every
such bound is dimension-dependent, and randomization is known to be necessary for any
dimension-free rate \citep{jordan2023deterministic}. Subspace compression does
not evade that dependence, it changes which dimension appears, replacing $d$ by
$d_{\mathrm{sub}}$ at the cost of restricting the search to a random affine
subspace (Section~\ref{sec:limitations-disc} reports where that becomes binding).
Two caveats compound. Those complexities are stated for Lipschitz
objectives, so they are not directly comparable to our $J > 0$ setting, where by
Theorem~\ref{thm:smoothing-blowup} the $\nu \to 0$ limit they describe does
not exist. And the targets differ, not only the constants: those bounds certify
Goldstein stationarity of the \emph{full} objective to arbitrary accuracy, whereas
Eq.~\ref{eq:pws-rate} certifies stationarity within a random subspace and never
below $\sqrt{B}$. ``Replacing $d$ by $d_{\mathrm{sub}}$'' is therefore a change of
what is certified as much as of cost, not a free improvement in dimension dependence.

\paragraph{The compression ratio against the Johnson--Lindenstrauss threshold.}
Section~\ref{sec:limitations-disc} reports the all-parameter GPT-2 fine-tune
collapsing at a rank-$128$ projection of $d = 1.24\times10^{8}$ parameters, a
compression ratio of $1.0\times10^{-6}$. The comparison scale is the smallest rank
at which the Johnson--Lindenstrauss guarantee is available at all: a random
projection to $k$ dimensions preserves all pairwise distances among $N$ points to
distortion $\epsilon_{\mathrm{JL}}$ once
$k \geq 4\ln N/(\epsilon_{\mathrm{JL}}^2/2 - \epsilon_{\mathrm{JL}}^3/3)$
\citep{dasgupta2003elementary}. At $N = 10^{6}$ loss queries, an overestimate of any
reported run that only loosens the threshold in our disfavor, distortion
$\epsilon_{\mathrm{JL}} = 1/2$ needs $k \geq 664$, five times the deployed rank, and
$\epsilon_{\mathrm{JL}} = 1/10$ needs $k \geq 11{,}842$, ninety times it; the
corresponding ratios are $5.3\times10^{-6}$ and $9.5\times10^{-5}$ against the
deployed $1.0\times10^{-6}$. Below those ranks the guarantee is unavailable; the
argument does not show distortion must occur.

\subsection{Technical Remarks on the Convergence Analysis}\label{app:convergence-remarks}

These qualify the statements of Section~\ref{sec:theory} without changing them. Each is
cited from the point in the body where it bites.

\begin{remark}[Per-particle radius jitter]\label{rem:per-particle-jitter}
The argument needs one scalar to have a density, not the joint step law to be
absolutely continuous on $\mathcal{S}$. The stronger statement is false:
on a plateau every displacement vanishes
(Theorem~\ref{prop:plateau-freeze}) and the one-step law is a point mass.
We nevertheless draw the jitter independently per particle rather than once per
step.
\end{remark}

The plateau case is harmless here: nothing moves, so an iterate off $\mathcal{D}$
stays off it. With a shared radius the argument needs the \emph{aggregate}
displacement to be non-orthogonal to the wall normal; with independent radii it needs
only \emph{some} block to be, which is $P$ chances instead of one, at no cost.

\begin{remark}[Restriction of the smoothing to the search subspace]\label{rem:plane-smoothing}
Claim~(1) is stated on $E_i$ and cannot be strengthened to
$\Ls_\delta \in C^\infty(\R^d)$. The probe law is supported on a
$d_p$-dimensional plane, so as a measure on $\R^d$ it is singular, and
convolution with it averages only along the plane: any jump of $\Ls$ across a
direction transverse to $E_i$ survives untouched. Aggregating over the $P$
particles improves this to smoothness along $\mathcal{S}$, but no
further. Every stationarity statement below is therefore a statement about
$\nabla_{\mathcal{S}} \Ls_\delta$, the gradient within $\mathcal{S}$.
\end{remark}

The restriction is the formal counterpart of the subspace-coverage ceiling of
Section~\ref{sec:limitations-disc}, where a rank-64 projection of a $4.2$M
parameter model covers too little of the space to produce informative cost
differences.

\begin{remark}[Which architectures the corollary reaches]\label{rem:definable}
Corollary~\ref{cor:goldstein} requires local Lipschitzness, which distinguishes our
architectures rather than covering them uniformly. Hard-LIF spikes,
$\mathrm{floor}$ staircases, INT8 rounding and argmax routing all have $J > 0$
(Appendix~\ref{app:discontinuity-sets}), so the Goldstein reading is unavailable for
them, not because we have not proved it but because $\partial\Ls$ does not exist.
Losses that are continuous but
non-differentiable, such as clipped or piecewise-affine activations and hinge-type
objectives, have $J = 0$ and are locally Lipschitz and definable; for these
Corollary~\ref{cor:goldstein} applies.
\end{remark}

For the $J > 0$ architectures what remains is Eq.~\ref{eq:pws-rate} itself, stationarity of
$\Ls_\delta$ at resolution $\delta$, and Theorem~\ref{thm:smoothing-blowup} shows
nothing stronger is available to any forward-only method. For the $J = 0$ ones, the
associated Clarke reading in the $\delta \to 0$ limit is available through the
variational stratification of \citet{schechtman2024gradient}.

Under a decaying $\varepsilon_t$, write
$\mathcal{R}_t := \{\theta : \mathrm{dist}(\theta,\mathcal{A}) \leq r_s\varepsilon_t\}$
for the reach set at step $t$; it shrinks with $t$.

\begin{remark}[Decaying schedules]\label{rem:pwc-schedule}
Condition~(d) fixes a uniform per-step hitting constant $p_0 > 0$ at a fixed
$\varepsilon$. Under $\varepsilon_t = \varepsilon_0 t^{-\alpha}$ it must be replaced
by a $t$-dependent $p_t$, and the conclusion then needs $\sum_t p_t = \infty$.
\emph{Provided} the iterate stays in $\mathcal{R}_t$,
the conditional hitting probability remains
$\Theta(1)$ (the step annulus of radius $\Theta(r_s\varepsilon_t)$ meets
$\mathcal{A}$ in a fixed fraction, with displacement controlled by
Proposition~\ref{prop:fragility}), so the sum diverges and Borel--Cantelli gives
infinitely many visits to $\mathcal{A}$ almost surely. The proviso is not removable,
and it fails in the strongest possible way: on a plateau wider than the probe reach,
Theorem~\ref{prop:plateau-freeze} makes every conditional hitting probability
exactly zero, the sum is finite, and there are no visits at all.
\end{remark}

Whether $\sum_t p_t$ diverges therefore has only its two ends settled here. Between them
the rate depends on how the iterate approaches
$\mathcal{A}$, which the corollary does not model, and we assert nothing about it.
This is a second instance of the general principle of
Theorem~\ref{thm:smoothing-blowup}: shrinking $\varepsilon$ costs more than it buys.

\subsection{Discontinuity Sets of the Trained Architectures}\label{app:discontinuity-sets}

We verify Definition~\ref{def:pwc-loss} for each architecture, deriving the geometry
of $\mathcal{D}$. The jump magnitude $J$ is not derivable the same way,
since ``the architecture contains a threshold'' does not imply that crossing one moves
the loss; it is measured instead, in Appendix~\ref{app:theory-numerics}. The cases
below record only which side of $J > 0$ each architecture falls on and why, since by
Theorem~\ref{thm:smoothing-blowup} and Remark~\ref{rem:definable} that is what decides
whether a Clarke reading is available.

\paragraph{Positivity of the isolation radius.}
Definition~\ref{def:pwc-loss} asks for an $r_{\mathrm{iso}} > 0$ on which the sheet $S$
is the whole of $\mathcal{D}$, and Theorem~\ref{thm:smoothing-blowup} spends it. It is
available for every architecture here, by one general argument and one sharp
computation.

Generally: $\mathcal{D}$ is definable, so it admits a $C^p$ cell decomposition
\citep{coste2000omin} into finitely many strata. Let $S$ be the relative interior of a
top-dimensional stratum and let $\mathcal{D}' := \mathcal{D} \setminus S$ be the union
of the others, a closed set disjoint from $S$. For any compact $S_0 \subset S$,
$r_{\mathrm{iso}} := \mathrm{dist}(S_0, \mathcal{D}') > 0$ because a positive distance
between a compact and a disjoint closed set is attained. Finiteness of the
decomposition is what makes this work, and it is exactly what definability buys.

Sharply, for the architecture whose walls are an evenly spaced coordinate family:
the INT8 discontinuity set is $\{\theta_j = (m + 1/2)\Delta_q\}$ over integers
$m$, so within one coordinate's family consecutive sheets are exactly $\Delta_q$
apart and $r_{\mathrm{iso}} = \Delta_q/2$, away from the codimension-two set where
families for different $j$ meet; the staircase walls $\{a(\theta) \in \mathbb{Z}\}$
are level sets of the pre-activation and carry no such uniform spacing in the
parameters.

Theorem~\ref{thm:smoothing-blowup} is then in force only for
$\delta \leq \Delta_q/4$, which is precisely the regime in
which a probe pair cannot cross a bin, where a finite-difference
estimator returns exactly zero and which
Appendix~\ref{app:flat-pair} measures at a flat-pair fraction of $0.995$. The blow-up
and the vanishing difference quotient are not two separate obstructions to
gradient-based optimization of a quantized network. They are the same length scale seen
from two sides: below the bin the smoothing gradient diverges, above it the object being
estimated is no longer a derivative.

\paragraph{(a) Hard-LIF SNN ($J > 0$ on a strict subset of $\mathcal{D}$).}
The spike is $s = \mathbb{1}[u \geq u_{\mathrm{th}}]$ with membrane potential $u$
an affine function of the incoming weights at fixed input. The discontinuity set
is the union over (example, neuron, timestep) of hyperplanes
$\{\theta : u(\theta) = u_{\mathrm{th}}\}$, a finite union of affine hyperplanes,
hence semialgebraic of dimension $d - 1$. Not every crossing jumps: the LIF reset
makes a count-preserving crossing leave the loss exactly unchanged
(Appendix~\ref{app:measured-jump}), so the jump is carried
by a strict subset of $\mathcal{D}$, which is enough for Definition~\ref{def:pwc-loss},
since that definition asks for a relatively open piece and not for all of $\mathcal{D}$.

\paragraph{Notation for the cases below.} Write $n_B$ for the minibatch size, the
loss being averaged over the batch, and $j$ for the \emph{example-local} jump, the
change in a single example's loss when its own decision flips. When a wall is owned by
one example, so that crossing it changes that example's decision and no other's, the
batch average moves by $J = j/n_B$. Whether a wall is example-local is a property of
the architecture, and it is the property that separates the cases.

\paragraph{(b) INT8 quantization ($J > 0$, and $J \neq j/n_B$).}
$\mathrm{round}(\cdot)$ is piecewise constant with jumps at half-integers of the
quantization grid. In parameter space the discontinuity set is a finite union of
hyperplanes $\{\theta_j = (m + 1/2)\Delta_q\}$, one per parameter per grid
boundary; semialgebraic, dimension $d-1$. A crossing changes one quantized weight
by one level. The wall is a condition on a \emph{weight alone}, with no data in it,
so no single example owns a crossing, the batch-averaging identity $J = j/n_B$
fails structurally, and batch averaging does not drive $J$ to zero as $n_B$ grows;
the measurement, the dead-unit crossings that leave the loss fixed, and why we
assert no exponent are in Appendix~\ref{app:measured-jump}.

\paragraph{(c) Hard-MoE argmax routing ($J > 0$).}
For gating weights $\{w_j\}$ and input $x$, the router selects
$j^\star = \arg\max_j \langle x, w_j\rangle$. The discontinuity set is the union
of Voronoi walls $\{\langle x, w_{j_1}\rangle = \langle x, w_{j_2}\rangle\}$ over
example and expert pairs: a finite union of hyperplanes, semialgebraic of
dimension $d-1$. Crossing a wall swaps the active expert, and since distinct experts
compute distinct functions the loss moves; this architecture carries the largest jumps
of the five.

\paragraph{(d) Staircase $\mathrm{floor}(\cdot)$ ($J > 0$).}
As in (b), with $\mathrm{floor}$ in place of $\mathrm{round}$; the level sets are
$\{\theta : a(\theta) \in \mathbb{Z}\}$ for the pre-activation $a$, semialgebraic
of dimension $d-1$. The step in the activation reaches the loss unless the downstream
weight is zero, which is a codimension-one condition on the parameters and not a
property of the architecture.

\paragraph{(e) Argmax attention ($J > 0$).}
Replacing the attention softmax by a hard argmax over keys makes the
discontinuity set the union of the corresponding score-tie sets; identical
structure to (c). It is a second case where $J = j/n_B$ is not exact, because the routed
key enters the output through a continuous path as well, so examples that did not
re-route still drift by an amount far below the jump
(Appendix~\ref{app:theory-numerics}).
The tie set is a hyperplane only while one projection is frozen: once both are
trainable it is a quadric, the case the closing paragraph of this subsection treats.

\paragraph{(f) Continuous non-smooth references ($J = 0$).}
Clipped and piecewise-affine activations, hinge-type losses, and the smooth
surrogate twins used by the Adam baselines are continuous, locally Lipschitz, and
definable. For these, under Assumption~\ref{asm:main},
Corollary~\ref{cor:goldstein} upgrades Eq.~\ref{eq:pws-rate} to
$(\delta_{\mathrm{joint}}, \sqrt{B}/(1-\bar\pi))$-Goldstein stationarity of $\Ls$
itself within $\mathcal{S}$ in the limit, and not merely smallness of the
surrogate gradient.

In every case the discontinuity set is real-semialgebraic of dimension $d-1$, and
that, rather than any hyperplane structure, is what the proofs use. Only when a single
layer is trainable is the pre-activation entering a threshold an affine function of
the parameters; with two or more trainable layers it is \emph{multilinear}, since
$w_2^\top W_1 x$ is bilinear in $(w_2, W_1)$, so the threshold set is a quadric and
the cell-by-cell decomposition is into algebraic rather than polyhedral pieces.
Semialgebraicity survives this: each threshold set is the zero set of a polynomial
in the parameters at fixed input, composition with polynomial layers and
semialgebraic activations preserves semialgebraicity, and a finite union of
semialgebraic sets of dimension $d-1$ is semialgebraic of dimension $d-1$. This is
exactly the hypothesis of Lemma~\ref{lem:blockwise-section}, whose proof works with
one defining polynomial per cell and never with a normal vector.
Semialgebraic sets are first-order definable in the o-minimal structure of
semialgebraic sets, so Definition~\ref{def:pwc-loss} holds throughout and
Theorem~\ref{thm:piecewise-smooth} applies to (a)--(f). The line separating
(a)--(e) from (f) is $J$: it is exactly the line
Theorem~\ref{thm:smoothing-blowup} draws, and exactly the line across which the
Goldstein reading of Corollary~\ref{cor:goldstein} is or is not available.

\subsection{Proof of Corollary~\ref{cor:optimal-rp}}\label{app:proof-optimal-rp}

\begin{proof}
Under the corollary's hypotheses the three terms of Eq.~\ref{eq:bias-floor} specialize
to $\hat B(r_p,\varepsilon) = \beta_1 r_p^4 + \beta_2\varepsilon/r_p + \beta_3 r_p^4\varepsilon^4$: the
linearization term in its $e_i$-based form is $O(P d_p^2 r_p^4)$; the saturation term
$O(P d_p^2\bar\sigma/r_p^2)$ becomes $\beta_2\varepsilon/r_p$ under the assumed
$\bar\sigma = O(\sqrt{P}\,r_p\varepsilon/L_{\mathrm{exc}})$; and a $\delta$-free
$\Lambda_3$ makes the coupling term $O(P^3 r_p^4\varepsilon^4\Lambda_3^2)$ at
$\delta = \Theta(r_p\varepsilon)$. The first and third terms increase in $r_p$ and the
second decreases, with $\hat B \to \infty$ at both ends of $(0,\infty)$, so the
minimum is interior. Setting $\partial_{r_p}\hat B = 0$ gives
$4(\beta_1 + \beta_3\varepsilon^4)\,r_p^3 = \beta_2\varepsilon\,r_p^{-2}$, so
$r_p^\star = [\beta_2\varepsilon/(4(\beta_1 + \beta_3\varepsilon^4))]^{1/5}$ and
$\hat B(r_p^\star,\varepsilon) = \Theta\big((\beta_2\varepsilon)^{4/5}(\beta_1 + \beta_3\varepsilon^4)^{1/5}\big)$.
For $\varepsilon^4 \leq \beta_1/\beta_3$ this is $r_p^\star = \Theta(\varepsilon^{1/5})$,
$\hat B^\star = \Theta(\varepsilon^{4/5})$, the displayed asymptotics; for
$\varepsilon^4 \gg \beta_1/\beta_3$ it is $r_p^\star = \Theta(\varepsilon^{-3/5})$,
$\hat B^\star = \Theta(\varepsilon^{8/5})$.
\end{proof}

Under the measured $\bar q$-based form of term one, add the hypothesis
$\bar q(r_p) = Q\,r_p^2(1 + o(1))$ with $Q > 0$ uniform in the asymptotic regime, so
that the term reads $\Theta(r_p^2)$; the identical optimization then gives
$r_p^\star = \Theta(\varepsilon^{1/3})$ and
$\hat B^\star = \Theta(\varepsilon^{2/3})$, so the interior optimum survives under
either reading of the linearization term, only the exponent depending on it. The
hypothesis is not automatic: at a stationary point of a quadratic the spread is
$\Delta_i/\varepsilon = O(r_p^2)$, giving $\bar q = O(r_p^4)$ and different exponents,
and on a constant loss $\bar q = 0$ and the balance disappears.

\subsection{Proof of Corollary~\ref{cor:goldstein}}\label{app:proof-goldstein}

\begin{proof}
Work entirely in the subspace coordinates. $\varphi(z) = \Ls(\theta + \Pi z)$ is
locally Lipschitz on the open set $\{z : \theta + \Pi z \in U\}$
because $\|\Pi z\| = \|z\|$; openness is what lets the mean-value step below run on
balls of radius $\delta_{\mathrm{joint}} + t$ for small $t > 0$. The argument uses nothing of $\mu$ beyond
$\mathrm{supp}\,\mu \subseteq \bar B(0,\delta_{\mathrm{joint}})$; the joint tail kernel
$m^{\otimes P}/Z^P$ of Eq.~\ref{eq:surrogate-def} and the joint probe law both
qualify, each supported in that ball. Fix $z$ at which
$\varphi_\delta = \varphi * \mu$ is differentiable and a unit
$w \in \R^{d_{\mathrm{sub}}}$. For each $u \in \mathrm{supp}\,\mu$, Lebourg's mean
value theorem \citep{clarke1990optimization} applied to $\varphi$ on the segment
$[z - u,\, z - u + tw]$ gives
$\varphi(z - u + tw) - \varphi(z - u) = t\,\langle g_u, w\rangle$ for some
$g_u \in \partial\varphi(y_u)$ with
$y_u \in \bar B(z, \delta_{\mathrm{joint}} + t)$. Averaging over $\mu$ and
dividing by $t$, the difference quotient of $\varphi_\delta$ at $z$ along $w$ is at
most the support function of the convex compact
$\partial_{\delta_{\mathrm{joint}}+t}\varphi(z)$; letting $t \downarrow 0$ and using
$\partial_{\delta_{\mathrm{joint}}}\varphi(z)
= \bigcap_{t>0}\partial_{\delta_{\mathrm{joint}}+t}\varphi(z)$, with the support
functions of the nested compacts converging monotonically,
$\langle\nabla\varphi_\delta(z), w\rangle$ is dominated by the support function of
$\partial_{\delta_{\mathrm{joint}}}\varphi(z)$ in every direction $w$, hence
$\nabla\varphi_\delta(z)$ lies in that convex compact set: the first inclusion. The
second is Clarke's chain rule for the linear map $z \mapsto \theta + \Pi z$ with
$\Pi^\top\Pi = I$: $\partial\varphi(y) \subseteq \Pi^\top\partial\Ls(\theta + \Pi y)$
for every $y$, and $\|\Pi(y-z)\| = \|y-z\|$ carries the radius.
\end{proof}

\subsection{Proof of Theorem~\ref{prop:plateau-freeze}}\label{app:proof-plateau-freeze}

\begin{proof}
A constant row is fixed by every relabeling, $\mC_{i:}\circ\sigma = \mC_{i:}$, so
equivariance gives $F(\mC_{i:}) = F(\mC_{i:})\circ\sigma$ for every $\sigma$. Taking
$\sigma$ the transposition of $v$ and $v'$ gives $T_{iv} = T_{iv'}$, so the row is uniform
and $T_{iv} = a_i/V$. The display follows by centering. For softmax the first step is
direct: $\mathrm{softmax}(-\mC_{i:}/\varepsilon)$ of a constant row is the uniform
vector.
\end{proof}

\subsection{Proof of Proposition~\ref{prop:ghost-drift}}\label{app:proof-ghost-drift}

\begin{proof}
The KKT conditions of Theorem~\ref{thm:kl-softmax-endpoints} give
$T_{iv} = \exp\big((f_i + g_v - C_{iv})/\varepsilon\big)$. A constant row
$C_{iv} \equiv c_i$ makes that factor common to the row, so
$T_{iv} \propto \exp(g_v/\varepsilon)$, and the row constraint $\sum_v T_{iv} = a_i$
normalizes it to Eq.~\ref{eq:ghost-weights}. The potentials $g$ solve the column
condition; row $i$ enters it only through Eq.~\ref{eq:ghost-weights}, in which its
constant costs have canceled against the row constraint, so $g$ does not depend on
row $i$'s costs, and in particular not on $R_i$. Writing $w_v := \mathrm{softmax}_v(g_v/\varepsilon)$, the step is
$a_i \sum_v w_v R_i v_v = a_i R_i \big(\sum_v w_v v_v\big)$, nonzero exactly when
$\sum_v w_v v_v \neq 0$ (on the simplex, every non-uniform $w$, the
vertices carrying no relation beyond $\sum_v v_v = 0$; on the orthoplex, every
$w$ that is not antipodally symmetric), while
$\E_{R}\big[a_i R\sum_v w_v v_v\big] = a_i\,\E[R]\sum_v w_v v_v = 0$
because $\E[R] = 0$ under Haar measure on $\mathrm{SO}(d_p)$: $\E[R] = Q\,\E[R]$
for every $Q \in \mathrm{SO}(d_p)$ forces $\E[R] = 0$ for $d_p \geq 2$. A law with
$\E[R] \neq 0$ makes the same expression generically nonzero.
\end{proof}

\paragraph{The conditional crossing rate.}
Assume throughout, as the proposition does, that row $i$ is constant for every rotation
in the support, so that $w$ does not depend on $R_i$. Write $m_w := \sum_v w_v v_v$ and
$x_t$ for the frozen particle's displacement from its entry point, $x_0 = 0$, so that
$x_{t+1} - x_t = r_s\varepsilon\, R_t m_{w_t}$ with $R_t$ Haar and independent across
steps; the $1/a_i$ of Eq.~\ref{eq:bary-proj} has already normalized the row. Isotropy of Haar measure gives
$\E[R\,a a^\top R^\top] = (\|a\|^2/d_p)\,I$, so the step has mean zero and total
variance $(r_s\varepsilon)^2\|m_{w_t}\|^2$. The centered unit simplex is a tight frame,
$\sum_v v_v v_v^\top = \tfrac{V}{d_p} I$ with $V = d_p+1$, and its vertex matrix
annihilates exactly the uniform direction, so
$\|m_w\|^2 = \tfrac{d_p+1}{d_p}\,\|w - \bar w\|^2$ with $\bar w$ the uniform
vector: the covariance is nondegenerate exactly when $w$ is non-uniform, the
neighbor qualification of the proposition. The identity is the simplex's, and the
rate below is stated for the simplex alone: an antipodal frame carries the extra
relations $v + (-v) = 0$, on which $w = (0.4,0.4,0.1,0.1)$ over
$(\pm e_1, \pm e_2)$ is non-uniform with $m_w = 0$, so no floor on
$\|w - \bar w\|$ bounds the step below. The process
$\|x_t\|^2 - \sum_{u<t} (r_s\varepsilon)^2\|m_{w_u}\|^2$
is a martingale; stopping it at the exit time $\tau$ of the inradius-$W$ ball about the entry point,
where $W \leq \|x_\tau\| \leq W + r_s\varepsilon$, yields
$\E\|x_\tau\|^2 = \E\big[\sum_{u<\tau} (r_s\varepsilon)^2\|m_{w_u}\|^2\big]$. Under the floor
$\|w_t - \bar w_t\| \geq w_0$ the summands are at least
$\underline{s}^2 := \tfrac{d_p+1}{d_p} (r_s\varepsilon)^2 w_0^2$, so
$\underline{s}^2\,\E[\tau] \leq (W + r_s\varepsilon)^2$; under a cap
$\|w_t - \bar w_t\| \leq w_1$ they are at most
$\bar s^2 := \tfrac{d_p+1}{d_p} (r_s\varepsilon)^2 w_1^2$, so $\bar s^2\,\E[\tau] \geq W^2$.
Together,
\[
\frac{W^2}{\bar s^2} \;\leq\; \E[\tau] \;\leq\; \frac{(W + r_s\varepsilon)^2}{\underline{s}^2}.
\]
The floor alone gives the upper bound, which is the crossing rate; the cap alone
gives the lower. Optional stopping applies because $\tau$ has finite expectation
under the floor (the variance floor makes exit from any bounded set almost sure)
and the increments are bounded by $r_s\varepsilon$.

\subsection{Hitting Time on a Piecewise-Constant Loss}\label{app:proof-pwc-corollary}

\begin{corollary}[Hitting-time bound for a piecewise-constant loss]\label{cor:piecewise-constant}
Let $\Ls : \R^d \to \{0,1\}$, let $\mathcal{A} := \Ls^{-1}(\{0\})$ be the success
region, and write
$\mathcal{R} := \{\theta : \mathrm{dist}(\theta, \mathcal{A}) \leq r_s\varepsilon\}$
for the reach set. Run \sysname{} at a constant step radius $r_s$ and assume
\emph{(d) a uniform per-step hitting constant} $p_0 > 0$:
\[
\Pr[\theta_{t+1} \in \mathcal{A} \mid \mathcal{F}_t] \;\geq\; p_0
\qquad \text{on } \{\theta_t \in \mathcal{R}\setminus\mathcal{A}\} .
\]
Then, writing $\tau_{\mathrm{ent}} := \inf\{t : \theta_t \in \mathcal{R}\}$ for the first entry
into the reach set, $\tau_{\mathrm{ex}} := \inf\{t > \tau_{\mathrm{ent}} : \theta_t \notin \mathcal{R}\}$ for
the first exit, and $\tau_{\mathcal{A}} := \inf\{t : \theta_t \in \mathcal{A}\}$,
\[
\Pr\big[\tau_{\mathcal{A}} > \tau_{\mathrm{ent}} + T \ \text{ and }\ \tau_{\mathrm{ex}} > \tau_{\mathrm{ent}} + T \;\big|\; \mathcal{F}_{\tau_{\mathrm{ent}}}\big]
\;\leq\; (1 - p_0)^T
\qquad\text{on } \{\tau_{\mathrm{ent}} < \infty\} ,
\]
so within $T$ steps of entering $\mathcal{R}$ the iterate has hit $\mathcal{A}$ or
left $\mathcal{R}$, with probability at least $1 - (1-p_0)^T$. \end{corollary}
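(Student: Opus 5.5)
The plan is a standard geometric-trials argument driven by the strong Markov property at $\tau_{\mathrm{ent}}$. Write $G_k$ for the event that the iterate has neither hit $\mathcal{A}$ nor left $\mathcal{R}$ during steps $\tau_{\mathrm{ent}}+1,\dots,\tau_{\mathrm{ent}}+k$, so that
\[
G_k := \{\theta_{\tau_{\mathrm{ent}}+s} \in \mathcal{R}\setminus\mathcal{A}\ \text{for } 0 \leq s \leq k\} ,
\]
with the convention that $G_0$ requires only $\theta_{\tau_{\mathrm{ent}}} \in \mathcal{R}\setminus\mathcal{A}$. If instead $\theta_{\tau_{\mathrm{ent}}} \in \mathcal{A}$, then $\tau_{\mathcal{A}} \leq \tau_{\mathrm{ent}}$ and the event being bounded is empty, so the inequality is trivial. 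The target event is contained in $G_{T-1}$ intersected with $\{\theta_{\tau_{\mathrm{ent}}+T} \notin \mathcal{A}\}$. Indeed, $\tau_{\mathcal{A}} > \tau_{\mathrm{ent}}+T$ excludes hits at times up to $\tau_{\mathrm{ent}}+T$, and $\tau_{\mathrm{ex}} > \tau_{\mathrm{ent}}+T$ keeps all intermediate iterates in $\mathcal{R}$. The step at time $\tau_{\mathrm{ent}}+T$ only needs to avoid $\mathcal{A}$.

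Next I would bound the conditional probabilities one step at a time. Each $G_k$ is $\mathcal{F}_{\tau_{\mathrm{ent}}+k+1}$-measurable. Because $\tau_{\mathrm{ent}}+k$ is a stopping time and hypothesis (d) holds almost surely on $\{\theta_t \in \mathcal{R}\setminus\mathcal{A}\}$ for every deterministic $t$, it also holds at that stopping time. The extension is made by decomposing over $\{\tau_{\mathrm{ent}} = n\}$ and using $\mathcal{F}_{\tau}\cap\{\tau=n\} = \mathcal{F}_n\cap\{\tau=n\}$. This gives
\[
\Pr[\theta_{\tau_{\mathrm{ent}}+k+1} \notin \mathcal{A} \mid \mathcal{F}_{\tau_{\mathrm{ent}}+k}] \leq 1-p_0 \quad \text{on } G_k .
\]
Since $G_{k+1} \subseteq G_k \cap \{\theta_{\tau_{\mathrm{ent}}+k+1}\notin\mathcal{A}\}$, the tower property yields
\[
\Pr[G_{k+1}\mid\mathcal{F}_{\tau_{\mathrm{ent}}}] \leq (1-p_0)\Pr[G_k\mid\mathcal{F}_{\tau_{\mathrm{ent}}}] .
\]
The same bound applies to the final step from $G_{T-1}$ to $\{\theta_{\tau_{\mathrm{ent}}+T}\notin\mathcal{A}\}$. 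Inducting from $\Pr[G_0 \mid \mathcal{F}_{\tau_{\mathrm{ent}}}] \leq 1$ then gives $(1-p_0)^T$ on $\{\tau_{\mathrm{ent}}<\infty\}$.

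The only delicate step is the measure-theoretic transfer of (d) from deterministic times to the random times $\tau_{\mathrm{ent}}+k$. I would handle it by summing over $n$ with indicators $\mathbb{1}[\tau_{\mathrm{ent}}=n]$, all of which are $\mathcal{F}_n$-measurable. The constant step radius and the fixed $\varepsilon$ ensure that $\mathcal{R}$ is a fixed set, so (d) is time-homogeneous and this sum is legitimate. No property of the algorithm is used beyond (d) itself, in line with the text's remark that all of the content sits in $p_0$. The disjunctive form of the conclusion, as opposed to conditioning on staying in $\mathcal{R}$, is exactly what avoids the reweighting issue noted in Appendix~\ref{app:theory-remarks}.
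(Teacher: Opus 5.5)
Your proposal is correct and follows the paper's argument. The paper iterates the one-step contraction $\E[M_{k+1}\mid\mathcal{F}_{\tau_{\mathrm{ent}}+k}]\le(1-p_0)M_k$ for the survival indicator $M_k$ of $\{\tau_{\mathcal{A}}>\tau_{\mathrm{ent}}+k,\ \tau_{\mathrm{ex}}>\tau_{\mathrm{ent}}+k\}$, which is your $G_k$ recursion; you also spell out the transfer of (d) to the stopping times $\tau_{\mathrm{ent}}+k$ and the case $\theta_{\tau_{\mathrm{ent}}}\in\mathcal{A}$, both of which the paper leaves implicit. One small fix: the tower step needs $G_k$ to be $\mathcal{F}_{\tau_{\mathrm{ent}}+k}$-measurable, not merely $\mathcal{F}_{\tau_{\mathrm{ent}}+k+1}$-measurable as you wrote, and this holds because $\theta_t$ is $\mathcal{F}_t$-measurable under the paper's filtration.
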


The bound is the geometric tail of one Bernoulli trial per step, so all of its
content sits in $p_0$, which is assumed and not derived. What makes $p_0$ plausible
is the probe geometry, and Theorem~\ref{prop:plateau-freeze} fixes what that geometry
has to deliver: reachability on the \emph{probe} annulus, not the step annulus, since
a target the step could reach but no probe sees produces a constant row and no
motion. Three ingredients carry it, alongside conditions~(i)--(iv) and~(vi) of
Assumption~\ref{asm:main} and a success region of positive Lebesgue measure with
$\partial\mathcal{A}$ null:

\begin{enumerate}[label=(\alph*),nosep,leftmargin=1.6em]
\item a constant step radius, replacing condition~(v), so the reach set does not
shrink under the schedule;
\item \emph{probe detectability with probability $q_{\mathrm{det}} > 0$}: at every
$\theta \in \mathcal{R}$, with probability at least $q_{\mathrm{det}}$ over the
rotation, the realized cost row is two-valued (automatic at $K = 1$, since $\Ls$ is)
and its zero-cost set
$\mathcal{Z} := \{v : \text{every probe of } Rv \text{ lies in } \mathcal{A}\}$
satisfies
\begin{equation}\label{eq:pwc-detect}
\textstyle\sum_{v \in \mathcal{Z}} v_v \;\neq\; 0 ;
\end{equation}
\item step-radius jitter as well as probe-radius jitter
(Remark~\ref{rem:per-particle-jitter}), so that the one-step law has a density on an
annulus rather than concentrating on a sphere.
\end{enumerate}

No derivation of $p_0$ from (a)--(c) is available
(Appendix~\ref{app:proof-pwc-corollary}), which is why the corollary assumes it
rather than deriving it, and condition~(viii) of Assumption~\ref{asm:main} is not
assumed at all, the softmax saturating on exactly the draws~(b) requires
(Appendix~\ref{app:theory-remarks}).
In~(b), non-constancy of the row is not enough: orthoplex costs
$(0,0,1,1)$ on $(\pm e_1, \pm e_2)$ give a zero-sum $\mathcal{Z}$ and a step of
exactly zero. On the simplex Eq.~\ref{eq:pwc-detect} holds for every proper nonempty
$\mathcal{Z}$, hence reduces to non-constancy, and every reported run uses the
simplex (Appendix~\ref{app:theory-remarks}).

The statement's form is forced: the disjunction, the assumption of~(d), and
the rotation-probability form of~(b) each have a counterexample against
the alternative, and the bound cannot be unconditional, since started on a
plateau with $\mathcal{A}$ beyond probe reach Theorem~\ref{prop:plateau-freeze}
gives $\tau_{\mathcal{A}} = \infty$ almost surely
(Appendix~\ref{app:theory-remarks}). We claim no priority over the adjacent
guarantees for isotropic and pseudo-Boolean search
\citep{glasmachers2020global,vicente2012discontinuous,antipov2021plateau,doerr2024plateau},
set against the corollary in Appendix~\ref{app:proof-pwc-corollary};
Corollary~\ref{cor:piecewise-constant} adds a per-step success probability for a
structured probe set in the geometry the probes realize, and an exact
characterization of when it is zero.

\paragraph{Localization.}
Two hypotheses of Definition~\ref{def:pwc-loss} hold only after restricting to the
compact set $\mathcal{K}$ of condition~(vi), enlarged by the \emph{joint} kernel radius
$\delta_{\mathrm{joint}} = \sqrt{P}\,\delta_{\mathrm{out}}$, which is the enlargement
conditions~(vii) and~(viii) name and the one a joint displacement actually reaches; the
single-block outer reach $\delta_{\mathrm{out}}$, and a fortiori the mean radius
$\delta$, is too small.
Globally, cross-entropy is unbounded, and the discontinuity set of
$\mathrm{floor}(\cdot)$ contains a copy of $\mathbb{Z}$, which is not definable in
any o-minimal structure. On
$\mathcal{K}_{\delta_{\mathrm{joint}}} := \{ \theta : \mathrm{dist}(\theta, \mathcal{K}) \leq \delta_{\mathrm{joint}}\}$
both are fine: the set is still compact, so the loss is bounded and only finitely many
quantization levels occur, and the discontinuity set is a finite union of definable
pieces. All statements are to be read on $\mathcal{K}_{\delta_{\mathrm{joint}}}$.

\paragraph{Setup.}
$\Ls : \R^d \to \{0, 1\}$ is the indicator of $\mathcal{A}^c$, with
$\partial \mathcal{A}$ of Lebesgue measure zero. Run \sysname{} from
$\theta_0 \in \R^d \setminus \partial \mathcal{A}$.
The constant-$\varepsilon$ statement comes first, then
the two schedule-dependent weakenings stated in Remark~\ref{rem:pwc-schedule}.

\paragraph{Step 1: Constancy off the boundary.}
At any $\theta$ with $\theta \notin \partial \mathcal{A}$, there is a
neighborhood $B_r(\theta) \subset \mathcal{A}$ or $B_r(\theta) \subset
\mathcal{A}^c$ on which $\Ls$ is constant. Once the probe annulus fits inside
that neighborhood the smoothed gradient $\nabla \Ls_\delta$ vanishes
identically there, and the cost matrix $\mC$
from Eq.~\ref{eq:cost-matrix} is constant across all probes lying in
the same open level set. Theorem~\ref{thm:piecewise-smooth} is therefore
satisfied vacuously at such points, which is why this regime needs the separate
hitting-time argument below.

\paragraph{Step 2: non-constant cost rows.}
If the probes give a constant cost row the softmax assigns uniform weights
$T^*_{iv} = a_i/V$ and the barycentric step of Eq.~\ref{eq:bary-proj} is
$\Delta x_i = (r_s\varepsilon/V)\,R \sum_v v_v = 0$ for every centered polytope and
every rotation, which is Theorem~\ref{prop:plateau-freeze}. No hitting-time
statement is available in that case: the iterate is
stationary, not slow. This is why condition~(b) is stated on the probe annulus.

Under condition~(b) the realized row is two-valued: $0$ on the zero-cost set $\mathcal{Z}$ and
some $x > 0$ off it, with $\mathcal{Z}$ proper and nonempty; at $K = 1$ the loss is binary so
$x = 1$, and averaging $K$ binary probes gives $x \geq 1/K$. The softmax weights are
therefore
$w_v = \alpha_{\mathcal{Z}}$ on $\mathcal{Z}$ and $w_v = \alpha_{\mathcal{Z}} e^{-x/\varepsilon}$ off it, with
$\alpha_{\mathcal{Z}} = \big(|\mathcal{Z}| + (V-|\mathcal{Z}|)e^{-x/\varepsilon}\big)^{-1} \geq 1/V$. Since
$\sum_v v_v = 0$, the off-$\mathcal{Z}$ block can be eliminated,
\[
\Delta x_i \;=\; r_s\varepsilon \sum_v w_v\, R v_v
\;=\; r_s\varepsilon\,\alpha_{\mathcal{Z}}\big(1 - e^{-x/\varepsilon}\big)\, R\!\!\sum_{v \in \mathcal{Z}} v_v ,
\]
and no bound on the individual entries is needed: the elimination is exact, not an
estimate. Eq.~\ref{eq:pwc-detect} makes the right-hand side nonzero and supplies
$\|\sum_{v\in \mathcal{Z}}v_v\| \geq 1$, so
$\|\Delta x_i\| \geq c_1 r_s\varepsilon$ with
$c_1 = V^{-1}(1 - e^{-1/(K\varepsilon)}) > 0$ independent of $\theta$ and $t$. The bound
is vacuous as $\varepsilon \to \infty$ and useful for $\varepsilon$ below $1/(K\log V)$,
where $c_1 \geq 1/(2V)$. The step also points along $R\sum_{v\in \mathcal{Z}}v_v$, toward the
zero-cost vertices, which is the direction the hitting argument needs. Combined with the
per-particle step jitter of condition~(c), and on the detection event of
condition~(b) (probability at least $q_{\mathrm{det}}$), the moved blocks of $\theta_{t+1}$ are
distributed with a density on their slice of the annulus
$A(\theta_t, r_s\varepsilon) := \{y : c_1 r_s(1-\eta_{\max})\varepsilon \leq \|y - \theta_t\| \leq r_s(1+\eta_{\max})\varepsilon\}$;
blocks whose rows are constant step exactly zero, which is why condition~(d) is stated in
the relative measure of the \emph{reachable} set rather than of the full annulus.

\paragraph{Necessity of the jitter.}
Without jitter ($\eta_{\max} = 0$) the one-step law of $\theta_{t+1}$ is
supported on a \emph{sphere} of fixed radius around $\theta_t$, a set of
Lebesgue measure zero. A hypothesis of the form
$\mathrm{vol}(\mathcal{A} \cap B(\theta, r)) \geq \rho$ then carries no
information about $\Pr[\theta_{t+1} \in \mathcal{A}]$: a positive-volume
$\mathcal{A}$ can miss any given sphere entirely, and bounding the induced
surface measure would require a separate hypothesis on the geometry of
$\partial\mathcal{A}$ relative to that sphere. With $\eta_{\max} > 0$,
the step radius has the jitter density of Eq.~\ref{eq:step-jitter}, so the step law is
absolutely continuous on the annulus $A(\theta_t, r_s\varepsilon)$ (the same construction
as in Lemma~\ref{lem:smooth-kernel}, applied to the step rather than the probe radius),
and a volume hypothesis becomes exactly the right currency. The bound is
interior only: Step~3 below states why it cannot be made uniform. This is the same jitter that
Lemma~\ref{lem:blockwise-section} needs for transversality and
Lemma~\ref{lem:smooth-kernel} needs for smoothness: one modification of the
probe distribution, three consequences.

\paragraph{Step 3 (constant-$\varepsilon$ regime, main statement): hitting-time bound.}
With $\varepsilon$ \emph{constant} along the trajectory, define
$p_t := \Pr[\theta_{t+1} \in \mathcal{A} \mid \mathcal{F}_t]$ on
$\{\theta_t \in \mathcal{R}\setminus\mathcal{A}\}$. Condition~(d) supplies
$p_t \geq p_0 > 0$ uniformly on that event.

$p_0$ is assumed rather than derived from a volume bound on $\mathcal{A}$. The
displayed route would be
$\Pr[\theta_{t+1}\in\mathcal{A}\mid\theta_t] \geq \int_{\mathcal{A}\cap A(\theta_t, r_s\varepsilon)} c_2 (r_s\varepsilon)^{-d_p} dy$,
which needs the one-step density bounded below by $c_2(r_s\varepsilon)^{-d_p}$
throughout the annulus. It is not: the step jitter has a compactly supported
mollifier density, whose infimum on the annulus is zero at both radial boundaries.
A uniform \emph{relative-volume} hypothesis on an inner annulus, where the mollifier
profile is bounded below, would restore that route; we do not have one uniformly in
$\theta$ for the trained losses, which is why (d) is stated directly.
And the volume in question is not ambient. The iterate never leaves
$\theta_0 + \mathcal{S}$, so $\mathrm{vol}(\mathcal{A})$ in $\R^d$ says nothing;
what matters is the relative measure of $\mathcal{A} \cap (\theta_t + \mathcal{S})$,
which for $d_{\mathrm{sub}} \ll d$ can be zero while the ambient volume is positive.
Condition~(d) states the quantity the proof consumes.

Let $\tau_{\mathrm{ent}}$ be the first entry into $\mathcal{R}$ and $\tau_{\mathrm{ex}}$ the first exit after
it. On $\{\tau_{\mathrm{ent}} < \infty\}$, the process
$M_k := \mathbb{1}[\tau_{\mathcal{A}} > \tau_{\mathrm{ent}} + k,\ \tau_{\mathrm{ex}} > \tau_{\mathrm{ent}} + k]$ satisfies
$\E[M_{k+1}\mid\mathcal{F}_{\tau_{\mathrm{ent}}+k}] \leq (1-p_0) M_k$, since on
$\{M_k = 1\}$ the iterate is in $\mathcal{R}\setminus\mathcal{A}$ and hits
$\mathcal{A}$ with conditional probability at least $p_0$. Iterating,
\[
\Pr\big[\tau_{\mathcal{A}} > \tau_{\mathrm{ent}} + T \ \text{ and }\ \tau_{\mathrm{ex}} > \tau_{\mathrm{ent}} + T \;\big|\; \mathcal{F}_{\tau_{\mathrm{ent}}}\big]
\;\leq\; (1 - p_0)^T ,
\]
which is the claimed bound. The conclusion is a disjunction: within $T$ steps the
iterate hits $\mathcal{A}$ or leaves $\mathcal{R}$. It cannot be turned into
$\Pr[\tau_{\mathcal{A}} \leq \tau_{\mathrm{ent}} + T \mid \tau_{\mathrm{ex}} > \tau_{\mathrm{ent}} + T]$ by conditioning,
because $\{\tau_{\mathrm{ex}} > \tau_{\mathrm{ent}}+T\}$ is not $\mathcal{F}_{\tau_{\mathrm{ent}}}$-measurable and
conditioning on it reweights against the paths that hit and then exit, which the
hypotheses permit. Nor does the bound claim $\theta_{\tau_{\mathrm{ent}}+T} \in \mathcal{A}$:
under condition~(b) the per-step displacement is non-zero even inside
$\mathcal{A}$, so the iterate may exit and re-enter, and the statement is about
first entry rather than absorption.\qed

\paragraph{Step 3a (Schedule-dependent rate, weakening (a)).}
Under a decaying schedule $\varepsilon_t = \varepsilon_0/t^\alpha$ the per-step
detection probability $p_t$ is no longer constant, and what replaces condition~(d)
is a hypothesis rather than a computation. \emph{Assume} a conditional detection
probability $p_t$ with $\sum_t p_t = \infty$.

The natural-looking derivation of it does not
work. One would like to say that if the cells of $\mathcal{A}$ are convex of dimension
$D$ then coverage scales as $p_t = \Omega(t^{-\alpha D})$, giving
$\sum_t p_t = \infty$ exactly when $\alpha D \leq 1$. Convexity and dimension alone
imply no such thing. A full-dimensional cell viewed from a distance proportional to
the shrinking radius can retain $\Theta(1)$ normalized hit probability; a cell of
dimension below $d_p$ has probability zero under an absolutely continuous step law;
and tangential contact makes the probability smaller than any fixed power. Obtaining
$p_t = \Omega(t^{-\alpha D})$ needs a uniform interior-cone or relative-volume
condition on $\mathcal{A}$ together with a density lower bound on the relevant
sub-annulus, none of which our architectures are known to satisfy. So the exponent is
not claimed and the threshold $\alpha D \leq 1$ is illustrative of what such a
condition would buy. Under the assumption $\sum_t p_t = \infty$,
the conditional (Levy) form of the second Borel--Cantelli lemma yields
$\Pr[\theta_t \in \mathcal{A} \text{ infinitely often}] = 1$. The conditional
form is required: the events $\{\theta_t \in \mathcal{A}\}$ are Markovian and not
independent, so the classical statement does not apply. What we use is that
$\sum_t \Pr[\theta_{t+1} \in \mathcal{A} \mid \mathcal{F}_t] = \infty$ almost
surely, which the coverage bound supplies. The conclusion holds
\emph{on the event that the iterate remains within probe reach of $\mathcal{A}$};
outside that event
Theorem~\ref{prop:plateau-freeze} gives no visits at all. Whether the iterate
remains within reach is not settled here: the conclusion is conditional on that
event, and we neither prove it has full probability nor that it has positive
probability for a given objective.

\paragraph{Step 3b: statements that do not follow.}
Eq.~\ref{eq:fragility-bound}, which bounds the per-step
displacement, does not combine with Theorem~\ref{thm:piecewise-smooth} to give that
$\theta_t$ visits $\mathcal{A}$ infinitely often almost surely:
Theorem~\ref{thm:piecewise-smooth} bounds an
expectation, and by Remark~\ref{rem:no-as-claim} no almost-sure statement can be
extracted from it. The same oscillating counterexample that blocks the
Robbins--Siegmund route also satisfies every expectation bound here while never
entering $\mathcal{A}$. The infinite-visit conclusion is available only through
Step~3a, from the conditional Borel--Cantelli argument applied to the hitting
probabilities themselves, and only under that step's hypothesis that the
conditional hitting probabilities have a divergent sum.

\subsection{Proof of Proposition~\ref{prop:fragility}}\label{app:proof-fragility}

\paragraph{Step 1: Single-vertex concentration of softmax.}
For row $i$ with $C_{i, v^{(1)}_i} \leq C_{i, v^{(2)}_i} \leq \ldots \leq C_{i, v^{(V)}_i}$
and row margin $\Delta^{\mathrm{mar}}_i := C_{i, v^{(2)}_i} - C_{i, v^{(1)}_i} \geq \Delta$
(distinct from the row \emph{spread} $\Delta_i$ of Theorem~\ref{thm:exact-step}, which is
the best-to-worst difference and is not used here),
the softmax weight on the best vertex is
\[
T^*_{i, v^{(1)}_i}/a_i = \frac{e^{-C_{i, v^{(1)}_i}/\varepsilon}}{\sum_v e^{-C_{i,v}/\varepsilon}}
\geq \frac{1}{1 + (V-1) e^{-\Delta^{\mathrm{mar}}_i/\varepsilon}}
\geq \frac{1}{1 + (V-1) e^{-\Delta/\varepsilon}}.
\]
Setting $(V-1) e^{-\Delta/\varepsilon} \leq \varrho/(1-\varrho)$ requires
$\varepsilon \leq \Delta / \log\big((V-1)(1-\varrho)/\varrho\big)$, and since
$(V-1)(1-\varrho)/\varrho \leq V\varrho^{-1}$ for $\varrho \leq 1/2$ the hypothesis
$\varepsilon \leq \Delta / \log(V \varrho^{-1})$ of the proposition is the stronger
of the two and therefore sufficient. Thus
$T^*_{i, v^{(1)}_i}/a_i \geq 1 - \varrho$.

\paragraph{Step 2: Lower bound on displacement.}
The barycentric step (Eq.~\ref{eq:bary-proj}) for particle $i$ is
$\Delta x_i = r_s \varepsilon \sum_v (T^*_{iv}/a_i)\, R_i v_v$.
Decomposing into the dominant term plus residual,
$\Delta x_i = r_s \varepsilon (1 - \varrho_i)\, R_i v^{(1)}_i + r_s \varepsilon\, \psi_i$
with $\varrho_i \leq \varrho$ and $\|\psi_i\|_2 \leq \varrho$ (since the
remaining mass spreads over polytope vertices of unit Euclidean norm).
By the reverse triangle inequality,
$\|\Delta x_i\|_2 \geq (1 - \varrho_i)\, r_s \varepsilon - \varrho r_s \varepsilon
= (1 - \varrho_i - \varrho) r_s \varepsilon \geq (1 - 2\varrho) r_s \varepsilon$,
which is Eq.~\ref{eq:fragility-bound}. For the upper bound, the barycentric step is
a convex combination of unit vectors scaled by $r_s\varepsilon$, so
$\|\Delta x_i\|_2 \leq r_s\varepsilon$ by the triangle inequality.

\paragraph{Step 3: limits of the envelope.}
The displacement-norm bound of Step~2 holds at every step, and the
\emph{direction} $R_i v^{(1)}_i(R_i)$ is resampled with the rotation: the cost row
$C_{i,:}$ is itself a function of $R_i$ through the probe locations
$p_{i,v} = \theta + r_p\varepsilon R_i v$, so the winning index is
rotation-dependent, while $\|R_i v^{(1)}_i(R_i)\| = 1$ under every realization.
It does not follow that $x_i^{(t)}$ oscillates with amplitude
$\Theta(r_s\varepsilon)$ about $x_i^\star$. A sum of increments with fixed norm and resampled
direction is a random walk, whose excursion grows as $\sqrt{t}\,r_s\varepsilon$
without bound; the envelope is equally consistent with that, and Appendix~\ref{app:theory-numerics}
exhibits it by keeping the envelope and replacing the softmax weights with uniform
ones.

\paragraph{Step 4: the drift.}
What rules the random walk out is that the direction is not merely resampled but
\emph{biased inward}. Write $z := x_i^{(t)} - x_i^\star$ and suppose
$\|z\| \ll r_p\varepsilon$, the regime in which the proposition's margin
hypothesis is self-consistent at the stationary amplitude. Expanding the cost of
vertex $v$ about $x_i^\star$,
$C_{i,v} = \Ls(x_i^\star) + \tfrac12 \langle H (z + r_p\varepsilon R v), z + r_p\varepsilon R v\rangle + o(\cdot)$,
the terms not involving $v$ cancel under the shift-invariance of the softmax and
the ranking of the row is governed at leading order by
$r_p\varepsilon\,\langle Hz, R v\rangle$. Two consequences follow, and neither
needs the direction law to be uniform.

\emph{(a) Confinement.} The term that
selects the winning vertex is not the term that carries $z$. The $v$-dependent part
of the cost is $r_p\varepsilon\langle Hz, Rv\rangle$ plus
$\tfrac12 r_p^2\varepsilon^2\langle HRv, Rv\rangle$, and in the regime
$\|z\| \ll r_p\varepsilon$ the second is the larger. It is also, however,
\emph{identical for $v$ and $-v$}: the orthoplex is antipodal, so the curvature term
selects which antipodal \emph{pair} wins and cancels exactly in the comparison
within that pair. The sign is therefore set by the linear term alone, and within the
winning pair $\pm u$ the chosen sign minimizes $\langle Hz, \pm u\rangle$.

That is a statement about $\langle Hz, \cdot\rangle$, and the drift needs one about
$\langle z, \cdot\rangle$. The two agree only when $H$ is isotropic: for $H = hI$ the
curvature term is the same for every unit vertex, the winner is the vertex most
anti-aligned with $z$, and every realized step is inward. For general $H$ they do not
agree and \emph{the pointwise claim is false}, and not as a corner case: at
$H = \mathrm{diag}(100,1)$ a constant fraction of individual steps move \emph{outward}
(Appendix~\ref{app:theory-numerics}).

Only the average can be signed, and the proved sign is in the $H$-metric:
Eq.~\ref{eq:drift-H-metric} gives
$\E[\Delta V_H] \leq -2r_s\varepsilon\,(\gamma_{\mathrm{fr}} - 2\varrho)\|Hz\|
+ r_s^2\varepsilon^2(1+2\varrho)^2\lambda_{\max}(H)$, with
$\gamma_{\mathrm{fr}} := \min_{\|a\| = 1}\E_\varsigma|\langle a,u\rangle|$ the
frame-overlap constant of its proof below. Since
$\|Hz\| \geq \lambda_{\min}(H)\|z\|$, the drift is negative once
$\|z\| \geq m\, r_s\varepsilon$ with
$m := \lambda_{\max}(H)(1+2\varrho)^2/(2\lambda_{\min}(H)(\gamma_{\mathrm{fr}} - 2\varrho))$:
a Foster--Lyapunov condition on $V_H$ with a bounded exceptional set, which bounds
$\limsup_t \E V_H(z_t)$ at $O((r_s\varepsilon)^2)$ and hence, by norm equivalence,
$\limsup_t \E\|z_t\|$ at $O(r_s\varepsilon)$; concluding positive recurrence or
a stationary law would additionally need irreducibility and a petite-set condition,
which we do not verify (the same qualification as in the proof of
Proposition~\ref{prop:stable-schedules}). It does not give convergence, and inside
the ball a single step can overshoot. A Euclidean drift constant does not follow: a
signed $\langle Hz, \Delta z\rangle$ does not by itself sign
$\langle z, \Delta z\rangle$, and we have no closed form for one under general $H$;
the Euclidean radial \emph{average} is measured rather than proved, with its
anisotropic degradation, in Appendix~\ref{app:theory-numerics}, where conditioning
weakens and then flattens it rather than driving it to zero. No isotropy of $H$ is
used in the $V_H$ argument: anisotropy costs the constant, through
$\lambda_{\max}/\lambda_{\min}$ and the pair the curvature selects being chosen
against $Hz$ rather than $z$, not the $\Theta(r_s\varepsilon)$ scaling.

\emph{(b) Amplitude.} In this regime, outside the curvature near-tie event, whose
probability vanishes with the excursion-to-probe ratio $\|z\|/(r_p\varepsilon)$,
the step direction depends on $z$ only through
$\hat z$, so the recursion $z \mapsto z + r_s\varepsilon\, u(\hat z, R)$ is
covariant under the joint rescaling $(z, r_s\varepsilon) \mapsto (\alpha z, \alpha r_s\varepsilon)$
up to that event. Any stationary law of $\|z\|$, if one exists, is therefore
proportional to $r_s\varepsilon$ up to a near-tie correction that vanishes with the
excursion-to-probe ratio, exact in that joint limit, with a constant depending only
on the polytope and $d_p$; the
realized excursion scale measured in Appendix~\ref{app:theory-numerics} is constant
in $r_s$ and $\varepsilon$ as the covariance predicts. Since the basin of the smoothed minimum is $\Theta(\varepsilon)$ wide, the
displacement-to-basin ratio is $\Theta(r_s)$ and does not shrink when
$\varepsilon \to 0$ at fixed $r_s$. Stabilizing requires $r_{s,t} \to 0$, which is
Proposition~\ref{prop:stable-schedules}. \qed

\paragraph{Where the drift is absent.}
Part~(a) needs $Hz \neq 0$, that is, a cost row that ranks the vertices. On a
plateau it does not, and there Theorem~\ref{prop:plateau-freeze} gives the
sharper conclusion that the step is exactly zero rather than diffusive. The
random-walk regime, in which the envelope holds and the drift vanishes, is
therefore not realized by \sysname{} on either side of that boundary; it is the
reason the envelope argument cannot be the proof, not a behavior of the
algorithm.

\subsection{Proof of Proposition~\ref{prop:stable-schedules}}\label{app:proof-stable-schedules}

\begin{proof}
Under the assumed overlap the cost row is ranked at leading order by
$r_p\varepsilon\langle H(\theta_t - \theta^\star), R v\rangle$ \emph{within each
antipodal pair}: the competing $\tfrac12 r_p^2\varepsilon^2\langle HRv,Rv\rangle$
term is larger in this regime but identical for $v$ and $-v$, so it selects which
pair wins and cancels in the comparison inside it. Without antipodal pairs that
cancellation is unavailable and the leading-order winner is chosen by curvature
alone, so we do not claim the saturated statement there. The step direction therefore
depends on $\theta_t$ only through
$(\theta_t-\theta^\star)/\|\theta_t-\theta^\star\|$, with no isotropy assumed. The expansion is of the
\emph{surrogate} $\Ls_\delta$, whose regularity is that of the tail kernel,
$m \in W^{3,1}$ under condition~(iv), and not of the raw loss; near
$\mathcal{D}$ its Hessian is bounded by $\Lambda_\delta = \Theta(\delta^{-2})$ rather
than by a fixed $H$, so the expansion is valid only for
$\|z\| \ll \|H\|/\Lambda_\delta$, a radius that shrinks as $\delta^2$ near a jump.

\emph{Linearized regime, for every $H \succ 0$.} As $r_p \to 0$ at fixed
$\varepsilon$ the softmax is unsaturated and Theorem~\ref{thm:exact-step} gives
$\E_R[\Delta\theta_t] = -c\,r_s\varepsilon\,\nabla_{E_i}\Ls_\delta + r_s\varepsilon\,O(r_p^3)$, which
at a quadratic minimum is $-c\,r_s\varepsilon\,Hz$ with $z := \theta_t - \theta^\star$.
Hence
\begin{equation}\label{eq:drift-lambda-min}
\E_R\big[\langle \Delta\theta_t, \hat z\rangle\big]
\;=\; -\,c\,r_s\varepsilon\,\frac{\langle Hz, z\rangle}{\|z\|} + r_s\varepsilon\,O(r_p^3)
\;\leq\; -\,c\,r_s\varepsilon\,\lambda_{\min}(H)\,\|z\| + r_s\varepsilon\,O(r_p^3) ,
\end{equation}
so the drift is negative for \emph{every} positive definite $H$, with the explicit
constant $\lambda_{\min}(H)$, and its normalization by $\|Hz\|$ is at least
$1/\mathrm{cond}(H)$. No isotropy, and no measurement, is used.
The remainder is carried rather than dropped, because it is not uniformly negligible:
the signed term is linear in $\|z\|$ while the remainder is not, so the bound is
negative only outside a ball of radius
$\|z\| \gtrsim r_p^3/(c\,\lambda_{\min}(H))$. Arbitrarily close to
$\theta^\star$ the remainder dominates and no sign is claimed, which is consistent
with (b), since the excursion the proposition describes has exactly that character:
confinement to a neighborhood, not convergence to a point.

\emph{Saturated regime.} Eq.~\ref{eq:drift-lambda-min} does not transfer, because a
saturated softmax replaces the average over vertices by an argmin and the $O(r_p^3)$
remainder is no longer small. For $H = h I$ the conclusion is nonetheless exact: the
curvature term is the same for every unit vertex, so the winner is the vertex most
anti-aligned with $z$ and every realized step is inward. For general $H$ the individual step need not be, and the
counterexample sits inside the hypotheses rather than outside them. Take
$H = \mathrm{diag}(100,1)$, $u \propto (-0.1,1)$ and $z \parallel (1,1)$ with
$\|z\| \leq 0.15\,r_p\varepsilon$, so the curvature term dominates as the regime
requires. The pair containing $u$ then wins on curvature ($0.99$ against $49.5$),
and the cost-minimizing sign within it is $+u$, whose radial component is $+0.63$:
the step moves \emph{outward}. Minimizing $\langle Hz, v\rangle$ is not minimizing
$\langle z, v\rangle$ unless $H$ is isotropic, so only the average over $R$ is signed.

The average is signed for every $H \succ 0$, and the argument is short in the
$H$-metric. The winning antipodal pair $\pm u$, $u = Re_{k^\star}$, is selected by the
curvature term $\tfrac12 r_p^2\varepsilon^2\langle HRv,Rv\rangle$, which is even in $v$
and so does not depend on $z$; the sign within the pair is then chosen with
$\langle Hz,u\rangle < 0$. The realized step is the softmax barycenter, not the
winning vertex itself: at concentration $1-\varrho$ the off-winner mass is at most
$\varrho$ on unit vertices, so the normalized barycenter is
$\sigma u + e$ with $\sigma = -\operatorname{sign}\langle Hz,u\rangle$ and
$\|e\| \leq 2\varrho$. Writing $V_H(z) := \langle Hz,z\rangle$ and
$\Delta z = r_s\varepsilon(\sigma u + e)$,
\begin{equation}\label{eq:drift-H-metric}
\begin{aligned}
\E[\Delta V_H]
&\leq -2r_s\varepsilon\,\big(\E\big|\langle Hz,u\rangle\big| - 2\varrho\,\|Hz\|\big)
+ r_s^2\varepsilon^2\,(1+2\varrho)^2\,\lambda_{\max}(H)\\
&\leq -2r_s\varepsilon\,(\gamma_{\mathrm{fr}} - 2\varrho)\,\|Hz\| + r_s^2\varepsilon^2(1+2\varrho)^2\lambda_{\max}(H),
\end{aligned}
\end{equation}
with $\varsigma$ the law of the winning axis $u$, which may itself depend on $z$, and
$\gamma_{\mathrm{fr}} := \inf_{z \neq 0}\min_{\|a\| = 1}\E_{\varsigma(z)}|\langle a,u\rangle|$
the infimum of the overlap over the laws that arise, which is what makes the
confinement constant uniform in $t$;
the hypothesis $\varrho < \gamma_{\mathrm{fr}}/2$ keeps the coefficient positive, and
is why the proposition carries it.
Here $\gamma_{\mathrm{fr}} > 0$: $u$ is one of the $d_p$ axes of a Haar-distributed frame, so
$\varsigma \leq d_p\,\mathrm{Unif}(S^{d_p-1})$ and $\varsigma$ charges no great subsphere, hence
$a \mapsto \E_\varsigma|\langle a,u\rangle|$ is continuous and strictly positive on a compact
sphere. Since $\|Hz\| \geq \lambda_{\min}(H)\|z\|$, Eq.~\ref{eq:drift-H-metric} is
negative once
$\|z\| \geq (1+2\varrho)^2\lambda_{\max}(H)\,r_s\varepsilon/\big(2(\gamma_{\mathrm{fr}}-2\varrho)\,\lambda_{\min}(H)\big)$.
What the sign needs is $\gamma_{\mathrm{fr}} > 0$, which is proved; its size is a separate question
and is measured in Appendix~\ref{app:theory-numerics}.

Given it, write $W_t := V_H(z_t)^{1/2}$, the $H$-metric norm of the excursion, and
abbreviate $\bar\gamma := \gamma_{\mathrm{fr}} - 2\varrho > 0$ and
$\bar\lambda := (1+2\varrho)^2\lambda_{\max}(H)$, the drift and curvature constants
of Eq.~\ref{eq:drift-H-metric} at concentration $1-\varrho$. Since
$V_H = \langle Hz, z\rangle \leq \|Hz\|\,\|z\| \leq \|Hz\|^2/\lambda_{\min}(H)$,
Eq.~\ref{eq:drift-H-metric} reads
$\E[\Delta V_H \mid \mathcal{F}_t] \leq -2r_s\varepsilon\,\bar\gamma\sqrt{\lambda_{\min}(H)}\,W_t + r_s^2\varepsilon^2\bar\lambda$,
a drift linear in $W_t$ rather than in $V_H$. Such a drift forces constant-speed
approach, not geometric decay: no bound
$\E[V_H(z_t)] \leq \beta^t V_H(z_0) + O(r_s^2\varepsilon^2)$ with a uniform
$\beta < 1$ follows from it, the multiplicative contraction degrading to one as
$V_H$ grows. What does follow is a constant-speed inward drift of $W$. By
$\sqrt{a^2 - b} \leq a - b/(2a)$ for $b \leq a^2$ and Jensen,
\[
\E[W_{t+1} \mid \mathcal{F}_t]
\;\leq\; W_t - r_s\varepsilon\,\bar\gamma\sqrt{\lambda_{\min}(H)}
+ \frac{r_s^2\varepsilon^2\bar\lambda}{2W_t}
\;\leq\; W_t - \tfrac12\,r_s\varepsilon\,\bar\gamma\sqrt{\lambda_{\min}(H)}
\]
whenever
$W_t \geq W^\star := r_s\varepsilon\,\big(\bar\lambda/(\bar\gamma\sqrt{\lambda_{\min}(H)}) \vee 2\bar\gamma\sqrt{\lambda_{\min}(H)}\big)$,
the second entry of the maximum being what licenses $b \leq a^2$. The transient
from $W_0$ therefore lasts
$O\big(W_0/(r_s\varepsilon\,\bar\gamma\sqrt{\lambda_{\min}(H)})\big)$ steps in
expectation, and since a single step moves $W$ by at most
$(1+2\varrho)\sqrt{\lambda_{\max}(H)}\,r_s\varepsilon$, negative drift outside a threshold
with bounded increments gives $\limsup_t \E\,W_t = O(W^\star + \sqrt{\bar\lambda}\,r_s\varepsilon)$
by Hajek's drift bound \citep{hajek1982hitting}, hence
$\limsup_t \E\|z_t\| = O(r_s\varepsilon)$: the excursion is bounded rather than
growing.

Two qualifications.
The bound is a $\limsup$ and not a $\sup_t$: the transient cannot
be removed, since from an initialization far from $\theta^\star$ the early iterates
are large however small $r_s\varepsilon$ is, and a step of size $r_s\varepsilon$ needs
$\Omega(\|z_0\|/r_s\varepsilon)$ steps merely to arrive.
And the conclusion is the drift bound itself, not positive recurrence: at constant
$r_s$ the recursion is time-homogeneous and Eq.~\ref{eq:drift-H-metric} is a
Foster--Lyapunov \emph{drift condition}, but concluding existence or uniqueness of a
stationary law from it additionally needs irreducibility and a petite-set condition on
the chain, which we do not verify. Under condition~(v) the recursion is
time-inhomogeneous and no stationary law is even defined.

\emph{(b) Amplitude.} In the hard-argmin limit $\varrho \to 0$ the winning pair
is selected by the curvature term, which does not depend on $z$ once the
curvature pair-margin dominates the linear term, an event whose complement, the
curvature near-tie, has probability vanishing with the excursion-to-probe
ratio; the sign
within the pair is chosen by $\operatorname{sign}\langle Hz,u\rangle$, which is
invariant under $z \mapsto \alpha z$ for $\alpha > 0$; the step is then $r_s\varepsilon\,\sigma u$,
so the one-step law is equivariant under jointly rescaling $z$ and
$r_s\varepsilon$, and at constant $r_s$ any stationary excursion is exactly
proportional to $r_s\varepsilon$, with excursion-to-basin ratio $\Theta(r_s)$.
At $\varrho > 0$ the softmax weights depend on the unscaled margins, the
equivariance is only approximate, and what is proved is the upper bound
$\limsup_t \E\,W_t = O(r_s\varepsilon)$ above, an excursion-to-basin ratio
$O(r_s)$. In either reading $r_{s,t} \to 0$ drives the ratio to zero, the
flat-$\varepsilon$ schedule of Theorem~\ref{thm:piecewise-smooth}; holding $r_s$
constant while $\varepsilon_t \to 0$ leaves the hard-argmin ratio fixed at
$\Theta(r_s)$ and is not stable in that limit.
\end{proof}

The drift identity of Eq.~\ref{eq:drift-H-metric} has its signed term nonpositive
at \emph{every} $z$, not merely outside a ball, and that buys an almost-sure
confinement statement; condition~(vi) consumes it in
Section~\ref{sec:hypothesis-coverage}.

\begin{proposition}[Almost-sure confinement on the quadratic model]\label{prop:confinement-ville}
Assume the saturated regime of Proposition~\ref{prop:stable-schedules} (antipodal
frame; overlap condition Eq.~\ref{eq:overlap-condition}) holds at every step
$t < T_{\mathrm{ov}} \leq \infty$ on the quadratic model
$V_H(z) := \langle Hz, z\rangle$, with $T_{\mathrm{ov}}$ (set by the schedule
exiting the overlap condition), the step radii $r_{s,t}$ satisfying
$\sum_{t < T_{\mathrm{ov}}} r_{s,t}^2 < \infty$, the fixed $\varepsilon$, and $z_0$
all \emph{deterministic}. Write
$\bar\lambda := (1+2\varrho)^2\lambda_{\max}(H)$ and set
$X_t := V_H(z_t) + \bar\lambda\,\varepsilon^2 \sum_{t \leq u < T_{\mathrm{ov}}} r_{s,u}^2$.
Then $(X_t)_{t < T_{\mathrm{ov}}}$ is a nonnegative supermartingale, and for every
$a > 0$,
\[
\Pr\Big[\sup_{t < T_{\mathrm{ov}}} V_H(z_t) \,\geq\, a\Big]
\;\leq\; \frac{V_H(z_0) + \bar\lambda\,\varepsilon^2 \sum_{t < T_{\mathrm{ov}}} r_{s,t}^2}{a} .
\]
In particular $\sup_{t<T_{\mathrm{ov}}}\|z_t\|$ is finite almost surely, and for any
$\alpha \in (0,1)$ the iterates stay, with probability at least $1-\alpha$, in the
explicit compact
$\{V_H \leq (V_H(z_0) + \bar\lambda\,\varepsilon^2\sum_t r_{s,t}^2)/\alpha\}$.
\end{proposition}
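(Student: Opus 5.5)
The plan is to reuse the one-step $H$-metric drift bound of Eq.~\ref{eq:drift-H-metric}, established in the proof of Proposition~\ref{prop:stable-schedules}, now with a time-varying step radius $r_{s,t}$. In the saturated regime with an antipodal frame and $\varrho < \gamma_{\mathrm{fr}}/2$, that bound reads, at every $t < T_{\mathrm{ov}}$,
\[
\E[V_H(z_{t+1}) - V_H(z_t) \mid \mathcal{F}_t] \;\leq\; -2r_{s,t}\varepsilon(\gamma_{\mathrm{fr}} - 2\varrho)\|Hz_t\| + r_{s,t}^2\varepsilon^2\bar\lambda .
\]
The first term on the right is nonpositive at \emph{every} $z_t$, not only outside the confinement ball. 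We therefore drop it and keep only $\E[V_H(z_{t+1})\mid\mathcal{F}_t] \leq V_H(z_t) + \bar\lambda\varepsilon^2 r_{s,t}^2$.

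Two checks come first. The derivation of Eq.~\ref{eq:drift-H-metric} never used constancy of $r_s$: the step is $r_{s,t}\varepsilon(\sigma u + e)$, and the expansion $V_H(z+\Delta z) = V_H(z) + 2\langle Hz,\Delta z\rangle + \langle H\Delta z,\Delta z\rangle$ is exact on the quadratic model. The second-order term is bounded pointwise by $\lambda_{\max}(H)\, r_{s,t}^2\varepsilon^2 (1+2\varrho)^2$, since $\|\sigma u + e\| \leq 1+2\varrho$. Only the cross term requires the expectation over the rotation, and that is where the sign of $\langle Hz,u\rangle$ within the winning pair enters.

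Next I add the deterministic compensator. Because $r_{s,u}$, $\varepsilon$ and $T_{\mathrm{ov}}$ are deterministic, the tail sum $\bar\lambda\varepsilon^2\sum_{u\geq t+1} r_{s,u}^2$ is a constant, and the one-step bound gives
\[
\E[X_{t+1}\mid\mathcal{F}_t] \;\leq\; V_H(z_t) + \bar\lambda\varepsilon^2 r_{s,t}^2 + \bar\lambda\varepsilon^2\sum_{u\geq t+1} r_{s,u}^2 \;=\; X_t .
\]
Since $H \succ 0$ we have $X_t \geq 0$. Integrability follows because each step is bounded by $(1+2\varrho)r_{s,t}\varepsilon$ and $z_0$ is deterministic, so $z_t$ is bounded for every fixed $t$. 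This makes $(X_t)$ a nonnegative supermartingale. Ville's maximal inequality then gives $\Pr[\sup_t X_t \geq a] \leq X_0/a$, and since $V_H(z_t) \leq X_t$ the displayed bound follows with $X_0 = V_H(z_0) + \bar\lambda\varepsilon^2\sum_t r_{s,t}^2$. Letting $a\to\infty$ gives almost-sure finiteness of $\sup_t V_H(z_t)$, and $\lambda_{\min}(H) > 0$ converts this to finiteness of $\sup_t\|z_t\|$. Choosing $a = X_0/\alpha$ yields the explicit compact.

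The main obstacle is confirming that Eq.~\ref{eq:drift-H-metric} holds as a conditional bound at every $z$, including inside the ball and near the curvature near-tie event, with $\gamma_{\mathrm{fr}}$ uniform in $z$. If it were only valid outside a ball, the signed term could not simply be dropped. I would argue this from the definition of $\gamma_{\mathrm{fr}}$ as an infimum over $z \neq 0$. At $z = 0$ the cross term vanishes identically, so the bound holds there too. Near ties do not affect the sign, because within the selected pair the softmax weights the lower-cost sign more heavily, so $\E|\langle Hz,u\rangle|$ enters with a negative sign regardless of which pair wins. The remaining $2\varrho$ loss is covered by the hypothesis $\varrho < \gamma_{\mathrm{fr}}/2$.
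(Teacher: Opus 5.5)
Your proposal is correct and follows the paper's proof. You drop the signed term of Eq.~\ref{eq:drift-H-metric}, which is nonpositive at every $z_t$ under the frame-overlap hypothesis; you add the deterministic tail compensator to obtain a nonnegative supermartingale; and you apply Ville's maximal inequality together with $V_H(z_t) \le X_t$. Your extra checks are left implicit in the paper and are consistent with it: integrability from bounded steps and deterministic $z_0$, and the fact that the drift bound never used a constant step radius.
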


\begin{proof}[Proof of Proposition~\ref{prop:confinement-ville}]
By Eq.~\ref{eq:drift-H-metric},
$\E[\Delta V_H \mid \mathcal{F}_t]
\leq -2r_{s,t}\varepsilon\,(\gamma_{\mathrm{fr}}-2\varrho)\|Hz_t\|
+ r_{s,t}^2\varepsilon^2\,\bar\lambda
\leq r_{s,t}^2\varepsilon^2\bar\lambda$ for every $z_t$, the signed term being
nonpositive always under $\varrho \leq \gamma_{\mathrm{fr}}/2$, with
$\bar\lambda = (1+2\varrho)^2\lambda_{\max}(H)$.
Hence
$\E[X_{t+1}\mid\mathcal{F}_t]
\leq V_H(z_t) + r_{s,t}^2\varepsilon^2\bar\lambda
+ \bar\lambda\,\varepsilon^2\sum_{t+1 \leq u < T_{\mathrm{ov}}} r_{s,u}^2 = X_t$,
and $X_t \geq 0$. Ville's maximal inequality for nonnegative supermartingales gives
$\Pr[\sup_t X_t \geq a] \leq X_0/a$, and $V_H(z_t) \leq X_t$ pointwise.
\end{proof}

Three boundaries. It is a statement on
the quadratic model of $\Ls_\delta$ near a minimum, the same object as
Proposition~\ref{prop:stable-schedules}, and certifies nothing about the trained
loss. It reaches only as far as the saturated regime does: under condition~(v)
$r_{s,t} \to 0$ eventually exits the overlap condition, past which no signed-drift
statement is available at the $O(r_s\varepsilon)$ scale: the linearized
identity Eq.~\ref{eq:drift-lambda-min} signs the drift only outside a ball of
radius $\Omega\big(r_p^3/(c\,\lambda_{\min}(H))\big)$, which does not shrink with
$r_{s,t}$, so the confinement statement is overlap-limited exactly as its
hypothesis says. And at constant
$r_s$, the tuned SNN schedule, the compensator sum diverges over an infinite
horizon and the bound is read per horizon $T$, growing linearly in $T$. The tail
bound is evaluated on the quadratic model in Appendix~\ref{app:theory-numerics}.

\subsection{Finite-Horizon RL as Policy Search}\label{sec:rl-policy-theory}\label{app:proof-rl-policy-search}

The results of Section~\ref{sec:theory} apply to any
loss evaluable as a scalar; reinforcement learning supplies one, the negative
expected return, but adds environment stochasticity to the cost matrix. Let
$\mathcal{M}$ be a finite-horizon MDP with horizon $H_{\mathrm{ep}}$, rewards with
$|r_t| \leq R_{\max}$, and
\begin{equation}\label{eq:rl-loss}
\Ls_{\mathrm{RL}}(\theta)
 := -\E_{\tau \sim p_\theta}\Big[\textstyle\sum_{t=0}^{H_{\mathrm{ep}}-1} r_t\Big].
\end{equation}

\begin{proposition}[Concentration of the empirical cost matrix]\label{prop:rl-policy-search}
Fix one iteration with $N$ candidate policies $\{\theta_i\}$ from the polytope
probes. Estimate Eq.~\ref{eq:rl-loss} for each with $M$ independent rollouts. With
$\widehat{C}_i$ the empirical and $C_i$ the true negative expected return, with
probability at least $1-\alpha$,
\begin{equation}\label{eq:rl-concentration}
\max_{i \leq N} |\widehat{C}_i - C_i|
\;\leq\;
2 H_{\mathrm{ep}} R_{\max}\sqrt{\frac{\log(2N/\alpha)}{2M}} .
\end{equation}
\end{proposition}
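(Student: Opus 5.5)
This is a direct application of Hoeffding's inequality to each candidate, followed by a union bound over the $N$ candidates.

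\emph{Setting up the bounded variables.} Fix candidate $i$ and write $G_{i,m} := -\sum_{t=0}^{H_{\mathrm{ep}}-1} r_t$ for the negative return of rollout $m$. Because $|r_t| \leq R_{\max}$, each $G_{i,m}$ lies in the interval $[-H_{\mathrm{ep}}R_{\max},\, H_{\mathrm{ep}}R_{\max}]$, which has width $2H_{\mathrm{ep}}R_{\max}$. The rollouts are independent given $\theta_i$, and $\E[G_{i,m}] = C_i$ by Eq.~\ref{eq:rl-loss}. The estimator is the average $\widehat C_i = M^{-1}\sum_m G_{i,m}$.

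\emph{Per-candidate bound.} Hoeffding's inequality gives
$\Pr[|\widehat C_i - C_i| \geq s] \leq 2\exp\!\big(-2Ms^2/(2H_{\mathrm{ep}}R_{\max})^2\big)$.

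\emph{Union bound.} Summing over the $N$ candidates and setting the total failure probability to $\alpha$ gives the condition $2N\exp(-2Ms^2/(4H_{\mathrm{ep}}^2R_{\max}^2)) = \alpha$. Solving for $s$ yields
$s = 2H_{\mathrm{ep}}R_{\max}\sqrt{\log(2N/\alpha)/(2M)}$, which is Eq.~\ref{eq:rl-concentration}.

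\emph{Handling the probes.} The candidates $\theta_i$ are random, since they depend on the rotations and jitter. I would therefore apply Hoeffding conditionally on the candidate set, using only that each candidate's rollouts are drawn fresh and independently. The bound does not depend on the $\theta_i$, so the conditional statement carries over to the unconditional one. Independence across candidates is not needed, because the union bound does not use it.

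The only real obstacle is stating this conditioning correctly: the conclusion must be read conditionally on the probe draws and with rollouts fresh for this iteration. No step is technically hard.
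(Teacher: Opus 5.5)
Your proof is correct and is the paper's argument: Hoeffding for each candidate, with returns in an interval of width $2H_{\mathrm{ep}}R_{\max}$, then a union bound at per-candidate failure probability $\alpha/N$. Your remarks that only independence of the $M$ rollouts within a candidate is needed, and that the bound is read conditionally on the probe draws, match the paper's note that common random numbers across candidates are not a hypothesis, and state the conditioning slightly more explicitly than the paper does.
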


The softmax/OT weights computed from empirical rollouts are therefore the exact
\sysname{} weights for a uniformly perturbed cost matrix with perturbation radius
$O(H_{\mathrm{ep}}R_{\max}\sqrt{\log N/M})$. Deterministic
or hard-action policies remain valid because
\sysname{} never differentiates through action selection or the simulator.
A further union at level $\alpha/T$ makes Eq.~\ref{eq:rl-concentration} uniform
over $T$ iterations at the cost of a $\sqrt{\log T}$ factor in the radius;
time-uniform confidence sequences \citep{howard2021time} trade the union bound
for an iterated-logarithm factor.

\paragraph{Proof of Proposition~\ref{prop:rl-policy-search}.}
Eq.~\ref{eq:rl-concentration} requires independence only across the $M$ rollouts for each fixed candidate, so common random numbers across candidates are not a hypothesis of the bound; the pairwise differences they stabilize are quantities the sup-norm bound above does not see.

For a fixed candidate policy $\theta_i$, define the bounded random return
$G_i(\tau)=\sum_{t=0}^{H_{\mathrm{ep}}-1} r_t$ and empirical negative-return estimate
$\widehat{C}_i=-M^{-1}\sum_{m=1}^{M}G_i(\tau_{i,m})$.
Because $|r_t|\leq R_{\max}$, each $G_i(\tau)$ lies in
$[-H_{\mathrm{ep}}R_{\max}, H_{\mathrm{ep}}R_{\max}]$, an interval of width $2H_{\mathrm{ep}}R_{\max}$.
Hoeffding's inequality gives
\[
\Pr\!\left(|\widehat{C}_i-C_i| \geq u\right)
\leq
2\exp\!\left(-\frac{2M u^2}{(2H_{\mathrm{ep}}R_{\max})^2}\right).
\]
Setting the right-hand side to $\alpha/N$ and applying a union bound over
the $N$ candidate policies yields Eq.~\ref{eq:rl-concentration}.

The softmax solver maps costs to weights via
$w_i(C)=\exp(-C_i/\varepsilon)/\sum_j\exp(-C_j/\varepsilon)$, which is
Lipschitz in the sup norm with constant at most $1/\varepsilon$ up to a
problem-dependent factor bounded by one for two-way logit differences.
Thus an empirical cost perturbation of radius $\eta$ changes every pairwise
logit gap by at most $2\eta/\varepsilon$.
The OT plan therefore matches the exact plan for a uniformly perturbed
cost matrix, and the perturbation vanishes as
$M^{-1/2}$ for fixed $H_{\mathrm{ep}},R_{\max},N$.
Common random numbers do not change the expectation $C_i$, but correlate
rollout noise across candidates and reduce variance of the differences
$\widehat{C}_i-\widehat{C}_j$, which are the quantities that determine
the row-wise softmax/OT ordering. \qed

\section{Dataset Details}\label{app:datasets}

All datasets load through standard libraries, with no preprocessing beyond what
follows.

\paragraph{MNIST.}
60,000 training images and 10,000 test images of handwritten digits, $28 \times 28$
grayscale. Pixel values are normalized to $[0, 1]$ by division by 255. No
augmentation. Loaded via \texttt{torchvision.datasets.MNIST}
\citep{lecun1998gradient}.

\paragraph{Fashion-MNIST.}
Used by the argmax-attention benchmark (Table~\ref{tab:argmax-accuracy}): 60,000
training and 10,000 test images of clothing items, $28 \times 28$ grayscale, same
normalization as MNIST, no augmentation. Loaded via
\texttt{torchvision.datasets.FashionMNIST} \citep{xiao2017fashion}.

\paragraph{Multi-domain synthetic (hard MoE).}
The hard-MoE benchmark (Table~\ref{tab:moe-accuracy}) uses synthetic data drawn from
several distinct input domains so that argmax routing has structure to recover; the
generator ships with the code and is seeded by the run seed.

\paragraph{SNN benchmark.}
The SNN accuracy benchmark (Table~\ref{tab:snn-accuracy}) uses SpikingMNISTNet on
standard MNIST data with $T_{\mathrm{seq}}{=}15$ LIF timesteps. The memory scaling benchmark
uses the larger SNNVGG11Small model (${\sim}$2.4M parameters) with $T_{\mathrm{seq}}$ varied from 25 to 400.

\paragraph{SST-2.}
Binary sentiment classification from the Stanford Sentiment Treebank
\citep{socher2013recursive}, GLUE version: 67,349 training
sentences and 872 validation sentences. Tokenized with the BERT tokenizer (vocabulary
size 30,522), truncated to 128 tokens.

\paragraph{ETTh1.}
Electricity Transformer Temperature dataset~\citep{zhou2021informer}: 17,420 hourly
observations of oil temperature (OT column). Standard Informer split: 8,640/2,880/2,880
(train/val/test). Univariate prediction, z-score normalized on
train-split statistics only. Lookback window $L{=}96$, prediction
horizon $H{=}96$.

\section{Model Architectures}\label{app:architectures}

\paragraph{MNISTNet.}
Two-layer MLP: \texttt{Flatten} $\to$ \texttt{Linear(784, 128)} $\to$
\texttt{ReLU} $\to$ \texttt{Linear(128, 10)}.
Total parameters: 101,770 ($\approx$102K).
Input: flattened $28 \times 28$ grayscale image.

\paragraph{SNNVGG11Small.}
VGG-11-style architecture with LIF (Leaky Integrate-and-Fire) neuron layers replacing
ReLU activations. Accepts a temporal sequence of input frames (parameterized by $T$).
The temporal dimension is processed sequentially; no computational graph is retained across
timesteps, giving sub-linear memory scaling in $T_{\mathrm{seq}}$. Total parameters: approximately 2.4M.

\paragraph{TransformerClassifier.}
Two-layer transformer encoder for binary text classification. Embedding dimension: 128.
Number of attention heads: 2. Feed-forward dimension: 256. Maximum sequence length: 128.
Vocabulary size: 30,522 (BERT tokenizer). A learned CLS token is used for classification.
Total parameters: approximately 4.2M.

\paragraph{TimeSeriesLSTM.}
A single-layer LSTM with input width 1 and hidden width 64
followed by \texttt{Linear(64, 96)}. Total parameters: 23,392. Input: 96-step lookback
window of shape $(96, 1)$. Output: direct 96-step prediction, the last LSTM hidden
state passed through the linear head.

\section{Hyperparameter Details}\label{app:hyperparameters}

Table~\ref{tab:hyperparameters} gives the settings for every experiment in
Section~\ref{sec:experiments}.

\paragraph{Grid axes and sweep findings.}
\sysname{}'s grid entries are multipliers on $\varepsilon$, $r_s$ and $r_p$ applied
to a per-task configuration. The two-round sweep of Section~\ref{sec:setup}
identified per-task scheduling, subspace rank and step radius as the dominant
levers, and found no measurable effect from probe count or absorb interval.
Curvature-aware radius adaptation was disabled after numerical instability (NaN) on
CNN architectures. The second stage recenters each axis on the configuration the first
selected, so a selection at a grid boundary extends the range outward rather than being
accepted as tuned when its optimum was never swept. For five of the six baselines the
first-stage selection fell at a boundary, and extending the range changed the selection
for two of them.

\paragraph{Transferring the probe scale to the baselines.}
\sysname{}'s probe radius is the norm of a displacement while a baseline's $\sigma$ is
per coordinate, and a Gaussian with per-coordinate $\sigma$ in dimension $d$ has norm
$\sigma\sqrt d$. Transferred unconverted, every baseline probed $\sqrt d$ times too
far, $66\times$ on the SNN, which collapsed its accuracy; the table generator drops
every result produced without the conversion. Under a schedule the
transferred scale is the schedule's \emph{final} value, held constant, which is an
asymmetry in our favor: on ETTh1 \sysname{} anneals from $10.0/\sqrt d$ down to the
$2.0/\sqrt d$ the baselines hold throughout, so it sees an early wide exploration they
never do. We match the endpoint because the initial value would put the baselines
outside the region their own step rules are calibrated for. It is not a matched
schedule and we do not call it one.

\paragraph{Note on library defaults.}
The released implementation uses conservative defaults (e.g., $K{=}1$, $\varepsilon{=}0.1$ fixed, $r_s{=}1.0$, $r_p{=}2.0$) suitable for quick experimentation.
The per-task configurations in Table~\ref{tab:hyperparameters} differ substantially and are required to reproduce the reported results, so set hyperparameters explicitly as the experiment runner scripts do.

\begin{table}[ht]
\caption{Hyperparameter settings for \sysname{} and the tuned baselines in Section~\ref{sec:experiments}; the remaining baselines run reference defaults with budgets set by the matching protocol of Section~\ref{sec:matching-axis}, and the RL baselines are configured in Section~\ref{sec:rl-policy-search}. Schedules are cosine unless stated. Unless a row states otherwise, \sysname{} uses the simplex with $K{=}1$ probe, at $d_p{=}8$ in subspace mode and $d_p{=}2$ in full-space mode, as Section~\ref{sec:setup} states.}
\label{tab:hyperparameters}
\centering
\footnotesize
\setlength{\tabcolsep}{3pt}
\begin{tabular}{@{}llccc>{\raggedright\arraybackslash}p{0.44\linewidth}@{}}
\toprule
Benchmark & Method & Epochs & Batch & Rank & Key parameters \\
\midrule
\multicolumn{6}{l}{\textit{Vision: MNIST (smooth)}} \\
MNIST & \sysname{} & 30 & 512 & 8 (Hybrid) & $\varepsilon{:}10{\to}0.1$, $r_s{:}5{\to}1$, $r_p{:}10{\to}2$, amort$=$3 \\
MNIST & CMA-ES & 2000 gen & 512 & -- & $\sigma{=}0.5$, pop $16$ \\
MNIST & OpenAI-ES & 2000 gen & 512 & -- & $\sigma{=}0.02$, pop $50$ \\
MNIST & SPSA & 10K iter & 512 & -- & $c{=}0.1$ \\
MNIST & Adam & 20 & 512 & -- & lr$=0.001$ \\
\midrule
\multicolumn{6}{l}{\textit{Non-differentiable benchmarks (\sysname{}; baselines as in MNIST row)}} \\
SNN (LIF) & \sysname{} & 40 & 512 & 4 (Hybrid) & flat $\varepsilon{=}0.5$, $r_s{=}2.0$, $r_p{=}1.0$, amort$=$1, biased rotation \\
INT8 & \sysname{} & 30 & 512 & 8 (Hybrid) & $\varepsilon{:}5{\to}0.3$, $r_s{:}32{\to}8$, $r_p{:}2{\to}0.5$, mom $0.3{\to}0.5$ \\
Argmax & \sysname{} & 30 & 512 & 8 (Hybrid) & $\varepsilon{:}5{\to}0.3$, $r_s{:}32{\to}8$, $r_p{:}2{\to}0.5$ \\
Staircase & \sysname{} & 30 & 512 & 4 (Hybrid) & $\varepsilon{:}5{\to}1.0$, $r_s{:}64{\to}32$, $r_p{:}2{\to}1$ \\
Hard MoE & \sysname{} & 30 & 512 & 4 (Hybrid) & flat $\varepsilon{=}0.5$, $r_s{:}12{\to}4$, $r_p{=}1.0$, biased rotation \\
SNN & Adam$^\dagger$ & 40 & 512 & -- & lr$=0.001$ (surrogate, $\alpha{=}5$) \\
\midrule
\multicolumn{6}{l}{\textit{Discrete: MAX-SAT (random 3-SAT, $\alpha{=}4.27$, 100--$10^6$ vars)}} \\
MAX-SAT & \sysname{} & -- & -- & full-space & $\varepsilon{:}5{\to}0.5$, $r_s{:}3000{\to}600\cdot\!\sqrt{n/10^5}$, mom $0.5{\to}0.95$, amort$=$3 ($1$ at $10^6$) \\
\midrule
\multicolumn{6}{l}{\textit{RL: classic control}} \\
CartPole/Acrobot & \sysname{} & 200 gen & -- & full-space & MLP width 16, $\varepsilon{:}1.0{\to}0.1$, $r_s{=}0.5$, $r_p{=}1.0$, 5 seeds \\
\midrule
\multicolumn{6}{l}{\textit{Time-Series: ETTh1}} \\
ETTh1 & \sysname{} & 30 & 64 & 8 (Hybrid) & $\varepsilon{:}10{\to}0.1$, $r_s{:}5{\to}1$, $r_p{:}10{\to}2$, mom $0.5{\to}0.95$, amort$=$3 \\
ETTh1 & OpenAI-ES & 2000 gen & 64 & -- & $\sigma{=}0.02$, lr$=0.01$, pop $50$ \\
ETTh1 & SPSA & 10K iter & 64 & -- & $c{=}0.1$ \\
ETTh1 & CMA-ES & 2000 gen & 512 & -- & $\sigma{=}0.5$ (separable) \\
ETTh1 & Adam & 50 & 64 & -- & lr$=0.001$ \\
\midrule
\multicolumn{6}{l}{\textit{NLP: SST-2 (limitation study)}} \\
SST-2 (scratch) & \sysname{} & 5 & 32 & 64 (Hybrid) & $\varepsilon_0{=}0.1$, single seed \\
GPT-2 head & \sysname{} & 5 & 32 & full, 1{,}538 & $\varepsilon_0{=}0.1$ (Appendix~\ref{app:gpt2-finetune}) \\
SST-2 & Adam & 10 & 32 & -- & lr$=0.0001$ \\
\bottomrule
\end{tabular}
\vspace{0.2em}

{\raggedright\scriptsize $^\dagger$Adam-surrogate trains SmoothSpikingMNISTNet (smooth spike, same architecture as hard-LIF).\par}
\end{table}

\paragraph{Solver acceleration features.}\label{app:turbo-features}
Table~\ref{tab:turbo-features} lists the solver acceleration features enabled in each benchmark. Added for wall-clock efficiency, they are specific to \sysname{} and unavailable to baselines.

\begin{table}[ht]
\caption{Acceleration features enabled in the primary runs. \checkmark = active; -- = disabled. Hard MoE (not a column) also runs biased rotation. The Sinkhorn-path options (Anderson acceleration, adaptive $\varpi$, data-dependent initialization, dual momentum) are disabled in every primary run: the softmax solver has no iterations for them to accelerate, and their dose-response is measured separately in this appendix.}
\label{tab:turbo-features}
\centering
\small
\begin{tabular}{lcccc}
\toprule
Feature & MNIST & SNN & ETTh1 & MAX-SAT \\
\midrule
EMA amortization & \checkmark & -- & \checkmark & \checkmark \\
Biased rotation & -- & \checkmark & -- & -- \\
\bottomrule
\end{tabular}
\end{table}

\paragraph{MNIST configuration.}
MNIST uses EMA-amortized OT (plan reuse over 3 steps at moving-average weight
0.7); biased rotation is off, having no measured effect
on this objective, and the Sinkhorn-path options are inert under the softmax
solver the primary run uses.
The released benchmark configurations default to $K{=}1$ and cosine epsilon annealing for all tasks (the library constructor itself defaults to a fixed $\varepsilon = 0.1$), which provides ${\sim}2.8\times$ wall-clock speedup with equivalent accuracy (validated at 5 epochs across 3 seeds).
The entropic regularization already smooths the cost matrix enough to make multi-probe averaging ($K{>}1$) redundant.

\begin{table}[ht]
\caption{ETTh1 forecasting ($H{=}96$, z-scored on training statistics, 5 seeds, mean $\pm$ std); best gradient-free result per column in bold. Learned rows report test error at the validation-selected checkpoint and are matched on optimizer steps; persistence is the non-learning floor.}
\label{tab:lstm-timeseries}
\centering
\small
\begin{tabular}{lccc}
\toprule
Method & MSE ($\downarrow$) & MAE ($\downarrow$) & val MSE ($\downarrow$) \\
\midrule
Persistence (naive) & $0.069$ & $0.202$ & $0.138$ \\
\midrule
\sysname{} & \timeseriesBestMse{} & $0.428 \pm 0.023$ & $\mathbf{0.109}$ \\
EGGROLL & $\mathbf{0.240 \pm 0.007}$ & $\mathbf{0.416 \pm 0.006}$ & $0.118$ \\
CMA-ES & $0.281 \pm 0.006$ & $0.451 \pm 0.006$ & $0.123$ \\
OpenAI-ES & $0.289 \pm 0.015$ & $0.442 \pm 0.014$ & $0.135$ \\
MeZO & $0.957 \pm 0.099$ & $0.911 \pm 0.052$ & $0.168$ \\
SPSA & $1.853 \pm 0.061$ & $1.311 \pm 0.022$ & $0.323$ \\
Random search & $1.979 \pm 0.049$ & $1.355 \pm 0.018$ & $0.349$ \\
\midrule
Adam & $0.247 \pm 0.023$ & $0.413 \pm 0.020$ & $0.109$ \\
\bottomrule
\end{tabular}
\end{table}

Momentum is what carries \sysname{} here: disabling it moves validation MSE by a factor
of four, measured on its own axis under the earlier protocol. Persistence is the only
method whose test error falls below its own validation error, which is the signature of
the distribution shift rather than of an optimizer.

\paragraph{ETTh1 time-series configuration.}\label{app:lstm-details}
The ETTh1 \sysname{} run uses HybridSubspace rank~8 with cosine $\varepsilon$ from 10.0 to 0.1, $r_s{:}~5{\to}1$, $r_p{:}~10{\to}2$, $K{=}1$ probe, momentum $0.5{\to}0.95$, and EMA-amortized OT (plan reuse over 3 steps at moving-average weight 0.7), for 30 epochs. EMA amortization is the only solver acceleration feature active here: biased rotation, Anderson acceleration, adaptive $\varpi$ and dual momentum are all left at their disabled defaults on this benchmark. Adam uses lr$=$0.001 for 50 epochs.
OpenAI-ES uses $\sigma{=}0.02$, lr$=$0.01, population 50, 2000 generations with LR decay
and rank-based fitness shaping.
SPSA uses $a{=}0.1$, $c{=}0.1$, 10,000 iterations.
CMA-ES uses separable mode (diagonal covariance) with $\sigma_0{=}0.5$, population 16,
2000 generations.
The persistence baseline repeats the last observed value for all 96 predicted steps.

\paragraph{Why every learned method misses persistence on ETTh1.}
The benchmark is $z$-scored on training statistics, and in those units the test segment
has mean $-1.338$ and standard deviation $0.343$ against the training segment's $0$ and
$1$. A model fit on the training split therefore predicts around a level the test
segment never visits, which costs every learned method more than the fit is worth.
Persistence repeats the last \emph{observed} value and is invariant to that shift,
which is why it is the only method whose test error falls below its own validation
error. The effect is a property of the standard split, not of any optimizer here.

\section{Plan Reuse and the Cost of Staleness}\label{app:turbo-mode}

Three optional features reduce solver work per step.
(1)~\textbf{EMA amortized OT} reuses the transport plan for a fixed number of consecutive steps via exponential moving average, reverting on loss: if the loss increases by more than $1.5\times$, a full OT solve is forced.
(2)~\textbf{Adaptive probe count} reuses cost rows for particles whose loss has not changed.
(3)~\textbf{Transport-biased rotation} seeds random polytope rotations toward the previous step's OT descent direction.

\emph{Plan reuse fails on a jumping objective.} The cost
matrix changes discontinuously between steps, so a plan interpolated from the
previous step is not a stale approximation of the current one but an answer to a
different problem.

\paragraph{Amortization dose-response.}
A controlled sweep on MNIST (3 seeds) quantifies it. These runs pair \sysname{}
against \sysname{} at a fixed budget under the ablation protocol of
Section~\ref{sec:ablation}, scored by maximum test accuracy over epochs, and on the
sweep's own base configuration rather than the tuned MNIST one, so they rank
plan-reuse intervals and are not comparable to the primary \mnistBestAcc{}, which is
validation-selected on the tuned configuration and also reuses a plan every third
step:
Reuse over 1 step achieves 76.4\% $\pm$ 6.2\% best-of-30;
reuse over 3 steps drops to 74.3\% $\pm$ 3.7\% best but 54.8\% last-3-epoch;
reuse over 10 steps reaches 58.8\% mid-training and ends at 24.2\% $\pm$ 4.7\%, a $-34.6$~pp terminal drop.
On the non-differentiable gallery, disabling amortization improves accuracy by 2--3~pp on staircase and MoE and up to 6~pp on SNN.
Every non-differentiable architecture benchmark therefore uses
reuse over 1 step; MAX-SAT keeps $3$ below $10^6$ variables, where the
piecewise-constant objective changes slowly enough for a reused plan to stay valid
(Section~\ref{sec:ablation}).

Wall-clock figures for these features are not reported: a single host
synchronization in the evaluation path dominates per-call cost, and removing it
changes the measurement by a factor of $52$ (Section~\ref{sec:limitations-disc}).

\section{Scalability Analysis}\label{app:scalability}

\paragraph{The MAX-SAT configuration and its incremental objective.}
The schedule is a wide cosine step radius, $r_s{:}~3000 \to 600$ at $10^5$ variables and
scaled by $\sqrt{n_{\mathrm{vars}}/10^5}$ at other sizes, with momentum from $0.5$ to
$0.95$ and transport-plan reuse every three steps. Wide perturbations let the polytope
probes span several discrete cost levels, which is what supplies the cost variation the
assignment needs. The million-variable solve is possible because the objective is
evaluated incrementally: an inverted index from each variable to the clauses containing
it, stored in compressed sparse row form, lets a perturbation chunk re-evaluate only the
roughly $1{,}664$ clauses it affects rather than all $4.27$M. That cuts the per-step
cost from $270$ to $2.4$ seconds, a factor of $112$, and brings the $10^6$ solve time
($1{,}216$\,s) below the $10^5$ full-evaluation run ($3{,}744$\,s).
Figure~\ref{fig:maxsat-scaling-full} gives wall-clock time and peak VRAM against
instance size.

\paragraph{Parameter scaling.}
The dominant cost is the forward-pass evaluation of $P(d_p+1)K$ probe points per step under the deployed simplex, which scales linearly with model size via \texttt{torch.vmap} \citep{pytorch2024}; the orthoplex of theory mode costs $2Pd_pK$.
HybridSubspace at fixed rank bounds the subspace dimension independently of parameter count, so the OT solver cost stays constant as models grow.

\paragraph{Sparse projection.}
Sparse random projection (Appendix~\ref{app:system}) cuts both projection memory and time, enabling models with 100M+ parameters in the subspace regime (Figure~\ref{fig:scalability-plate}, left).

\paragraph{\texttt{torch.compile} acceleration.}
On the compiled helper functions (barycentric projection, rotation, probe generation), at the problem size this was measured on (500 particles, 16 orthoplex vertices at $d_p{=}8$) compile overhead exceeds the kernel fusion benefit, giving no measurable end-to-end speedup (Figure~\ref{fig:ablation-plate}f; Appendix~\ref{app:system}).

\section{Extended Ablation Results}\label{app:ablation}

\paragraph{Simplex against orthoplex at matched evaluations.}
Per step the orthoplex probes more directions and scores higher, so selecting it on the
step axis the primary tables use would favor us. It costs $2d_p/(d_p{+}1)$ times as many
evaluations per step, and at a matched $5.18$M evaluations on the SNN over two seeds the
ordering reverses: the simplex reaches $0.818$ validation accuracy against the
orthoplex's $0.797$. Raising rank from $4$ to $8$ costs more still, $0.688$ against
$0.669$. Every reported run therefore uses the simplex, the per-step-weaker polytope,
which is the step-economics trade of Section~\ref{sec:limitations-disc} applied inside
our own configuration space. This choice is made on the evaluation axis rather than
the step axis, the one place in the paper where that is so.

\paragraph{Blockwise OT.}
Decomposing the cost matrix per layer (one Sinkhorn solve per layer) trades wall-clock time for training stability.
A controlled comparison (3 seeds, T1 MNIST full-space) shows that per-layer blocking reduces seed-to-seed variance by 4$\times$ (best-of-30 std: $1.49$ vs $6.22$ for monolithic) and improves last-3-epoch accuracy by +2.3~pp, at the cost of 50\% longer wall-clock time.
Per-layer blocking is preferred for deployment settings where reproducibility across seeds matters more than throughput; monolithic blocking is the cheaper default for research sweeps.

\subsection{Schedule Fragility}\label{app:schedule-fragility}

Proposition~\ref{prop:fragility} predicts that aggressive $\varepsilon$-decay destabilizes converged solutions.
We verify this on two model sizes with identical hyperparameters except the schedule type (3 seeds each).

\begin{table}[ht]
\caption{Schedule fragility: flat vs.\ cosine $\varepsilon$-decay at two model sizes (3 seeds, mean $\pm$ std). Cosine costs 2.2--2.5~pp accuracy and up to 4.5$\times$ variance amplification, independent of model dimension.}
\label{tab:fragility}
\centering\small
\begin{tabular}{llccc}
\toprule
\textbf{Model} & \textbf{Schedule} & \textbf{Last-3 acc (\%)} & \textbf{Best acc (\%)} & \textbf{Std ratio} \\
\midrule
102K MNISTNet & flat   & $\mathbf{95.87 \pm 0.29}$ & $95.89 \pm 0.29$ & $1.0\times$ \\
              & cosine & $93.63 \pm 0.94$ & $95.87 \pm 0.25$ & $3.2\times$ \\
\midrule
1M MLP        & flat   & $\mathbf{96.92 \pm 0.24}$ & $96.98 \pm 0.21$ & $1.0\times$ \\
              & cosine & $94.46 \pm 1.07$ & $96.98 \pm 0.17$ & $4.5\times$ \\
\bottomrule
\end{tabular}
\end{table}

The gap is size-stable: cosine costs 2.24~pp at 102K and 2.46~pp at 1M parameters.
Two trained-model points cannot establish a flat line, so the model side supplies the
curve: running the fragility
recursion at $d = 10^2$ through $10^5$ ($P$ blocks of dimension $d_p = 4$ sharing one
schedule, isotropic and condition-30 within-block Hessians) measures the
per-block excursion in envelope units flat to a log-log slope of $0.0003$ across the
four decades, with the decayed-schedule excursion uniformly below a quarter of the
constant-radius one. The two trained-model points sit on that line; the trained-model
claim still rests on them.
Both schedules \emph{reach} the same best-of-30 accuracy, confirming Proposition~\ref{prop:fragility}'s prediction: the damage occurs late in training, when $\varepsilon \to 0$ sharpens the softmax into greedy selection, producing oscillations of amplitude $\Theta(r_s \varepsilon)$ (Eq.~\ref{eq:fragility-bound}).
Practitioners should keep $\varepsilon$ flat at a small-but-finite target, or co-decay $r_s$ to zero jointly.

\begin{figure*}[ht]
  \centering
  \IfFileExists{figures/appendix_ablations.pdf}{%
    \setlength{\unitlength}{\linewidth}%
    \setlength{\fboxsep}{1pt}%
    \begin{picture}(1,0.634)
      \put(0,0){\includegraphics[width=\linewidth]{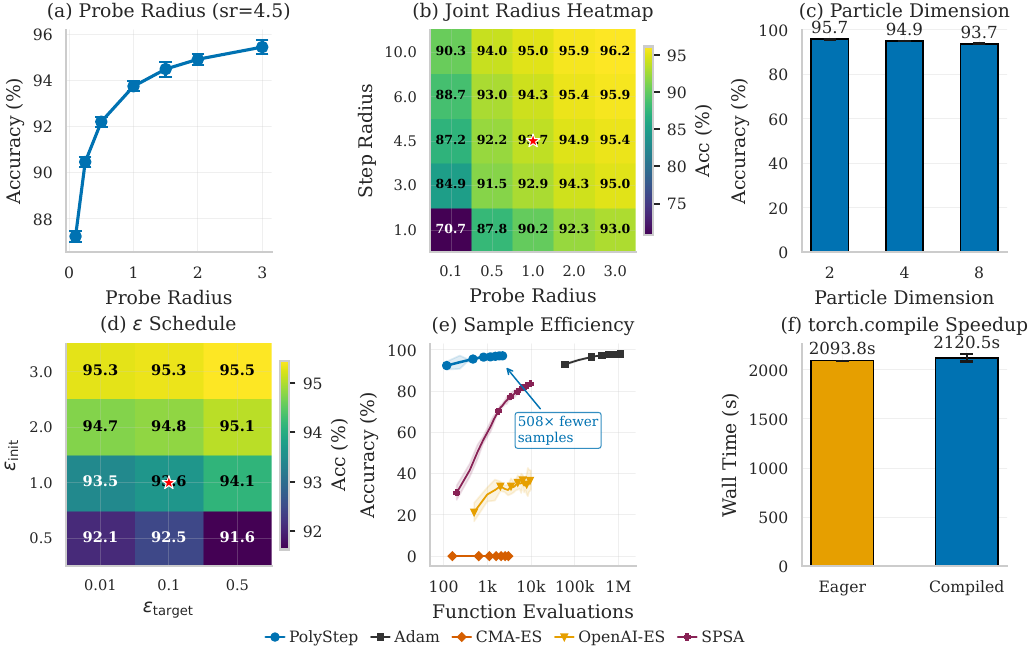}}
      \put(0.0300,0.6101){\colorbox{white}{\makebox(0.2486,0.0206){\small\textcolor[HTML]{262626}{(a) Probe Radius ($r_s = 4.5$)}}}}
      \put(0.7549,0.3110){\colorbox{white}{\makebox(0.2380,0.0193){\small\textcolor[HTML]{262626}{(f) {\footnotesize\texttt{torch.compile}} Wall-Clock}}}}
      \put(0.5010,0.2037){\colorbox{white}{\makebox(0.0752,0.0267){\scriptsize\textcolor[HTML]{1F77B4}{\shortstack{508$\times$ fewer\\evaluations}}}}}
    \end{picture}%
  }{%
    \fbox{\parbox{\linewidth}{\centering\vspace{3cm}[Figure: appendix\_ablations.pdf]\vspace{3cm}}}%
  }
  \caption{Ablation diagnostics on MNIST (5 seeds, HybridSubspace rank 8). Red star ($\star$) marks the sweep's reference configuration; per-task tuned configurations differ (Table~\ref{tab:hyperparameters}). (a)~Probe radius sweep: $r_p{=}1.0$ is a safe default. (b)~Joint radius heatmap showing $r_p$--$r_s$ interaction; accuracy is stable across a wide range. (c)~Particle dimension: $d_p{=}2$ achieves the highest accuracy (more particles outweigh richer per-particle OT). (d)~$\varepsilon$-schedule sensitivity: initial--target combinations; moderate targets ($\varepsilon_{\mathrm{target}}{=}0.1$) are safe. (e)~Sample efficiency: cumulative forward evaluations vs.\ accuracy; \sysname{} reaches 95\% with ${\sim}508{\times}$ fewer evaluations than CMA-ES. (f)~\texttt{torch.compile} wall-clock comparison: no measurable speedup at tested scale (500 particles, 16 vertices).}
  \label{fig:ablation-plate}
\end{figure*}

\begin{figure*}[ht]
  \centering
  \IfFileExists{figures/scalability_combined.pdf}{%
    \setlength{\unitlength}{\linewidth}%
    \setlength{\fboxsep}{1pt}%
    \begin{picture}(1,0.3383)
      \put(0,0){\includegraphics[width=\linewidth]{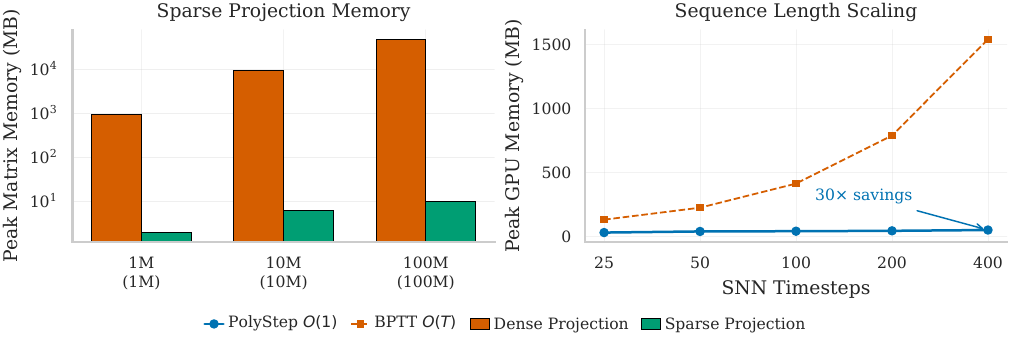}}
      \put(0.1197,0.0525){\colorbox{white}{\makebox(0.0353,0.0145){}}}
      \put(0.2552,0.0525){\colorbox{white}{\makebox(0.0452,0.0145){}}}
      \put(0.3907,0.0525){\colorbox{white}{\makebox(0.0550,0.0145){}}}
      \put(0.2977,0.0130){\colorbox{white}{\makebox(0.0333,0.0145){}}}
    \end{picture}%
  }{%
    \fbox{\parbox{\linewidth}{\centering\vspace{3cm}[Figure: scalability\_combined.pdf]\vspace{3cm}}}%
  }
  \caption{Scalability. Left: sparse vs.\ dense projection memory; sparse projection eliminates memory scaling beyond 2M parameters. Right: SNN training memory vs.\ temporal depth $T_{\mathrm{seq}}$; \sysname{} grows sub-linearly ($62\%$ over a $16\times$ depth increase, from forward-pass tensor sizes) while BPTT grows $O(T_{\mathrm{seq}})$.}
  \label{fig:scalability-plate}
\end{figure*}

\begin{figure*}[ht]
  \centering
  \IfFileExists{figures/maxsat_scaling.pdf}{%
    \includegraphics[width=\linewidth]{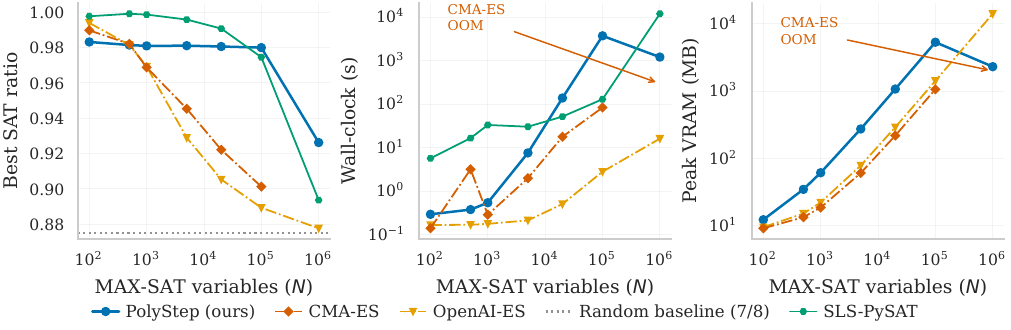}%
  }{%
    \fbox{\parbox{\linewidth}{\centering\vspace{3cm}[Figure: maxsat\_scaling.pdf]\vspace{3cm}}}%
  }
  \caption{MAX-SAT scaling from 100 to 1M variables (5 seeds). Left: clause satisfaction; \sysname{} degrades $5.4$~pp from $100$ to $10^6$ variables. Center: wall-clock time. Right: peak VRAM; the 1M-variable solve fits in 3~GB.}
  \label{fig:maxsat-scaling-full}
\end{figure*}

\section{Numerical Validation of the Analytical Results}\label{app:theory-numerics}

The numerical evaluations below test the results of
Section~\ref{sec:theory}. They are kept apart from the proofs of
Appendix~\ref{app:proofs}, so no proof rests on a measurement, and
Appendix~\ref{app:numerics-limits} gives which of them are weaker than they look.

\paragraph{Method.} Monte-Carlo exponents are fitted log-log slopes over five values of
$\varepsilon$ spanning a factor of $16$; section fractions are over $2\times10^{4}$
probe points per plane and $20$ independent Gaussian planes; the annulus identity is
evaluated at $3\times10^{6}$ samples with common random numbers across the
finite-difference stencil; schedule-stability exponents are fitted over the last three
quarters of $2{,}500$-step trajectories averaged over $12$ runs.

\subsection{Blow-Up at an Attained Jump (Theorem~\ref{thm:smoothing-blowup})}

The exponent $-2$ is not by itself evidence. For a positively homogeneous loss smoothed
by a scale family $\mu_h(\cdot) = h^{-d}\mu(\cdot/h)$,
$\Ls_h(x) = \Psi(x/h)$ is an identity, so the scaling is forced whether or not
the theorem is true; the evaluations below are therefore placed on quantities that are
not forced. Against the closed forms derived in
Appendix~\ref{app:proof-smoothing-blowup} ($\sup_x\|\nabla\Ls_h\| = 8/(3\pi h)$
and $\Lambda_h = 4/(\pi h^2)$ for the uniform ball in $\R^4$ applied to a unit
step, with the Hessian maximizer at $|x_1| = h/\sqrt2$), a grid evaluation with
independent randomness per radius gives a gradient constant of $0.836$--$0.841$ against
$8/(3\pi) = 0.8488$, a smoothness constant of $1.253$--$1.263$ against
$4/\pi = 1.2732$, and a maximizer at $0.70$. The three exponents $-1$, $-2$, $0$ of
Eq.~\ref{eq:blowup} and Eq.~\ref{eq:blowup-hess} reproduce to two decimal places. On a
non-homogeneous loss, where the scaling is not forced, the exponents are the same; on a
locally Lipschitz loss ($\Ls = |\theta_1|$, $J = 0$) the gradient exponent is $0$ and
$\Lambda_h \sim h^{-1}$, so the blow-up tracks the jump and not the smoothing.

Two facts about the curved sheet are also checked. The planar decomposition
$\Ls = J\,\mathbb{1}[y\cdot n > 0] + \Ls_{\mathrm{sm}}$ leaves a remainder that jumps by
the full $J$ at every $h$, on a lens of relative volume $\Theta(\kappa h)$;
this is why Appendix~\ref{app:proof-smoothing-blowup} avoids the decomposition. The
straddle offset is $h/h = 1$ to six decimals throughout the regime
Eq.~\ref{eq:straddle-offset} covers, and on the quadratic model sheet
$g = \kappa s^2/2$ it matches that model's own formula
$h + (\kappa h-1)_+^2/(2\kappa)$ exactly above the model's threshold
($1.0833$ at $\kappa h = 1.5$, $1.6667$ at $3$); the crossing window stays inside
$[-0.9997,\,0.9975]\,h$; and $h\|\nabla\Ls_h\|/J$ on a curved sheet lands
at $0.803$--$0.858$, above the $1/2 - Gh/J$ that Eq.~\ref{eq:grad-lb} guarantees,
so the bound is a floor rather than an estimate.

\subsection{Sections of \texorpdfstring{$\mathcal{D}$}{D} Under the Deployed Projection (Lemmas~\ref{lem:generic-section} and~\ref{lem:blockwise-section})}

Transversality measured directly on a real HybridSubspace makes the gap look fatal: a
global i.i.d.\ dense projection is transversal to $100\%$ of coordinate-aligned walls,
while the projected weight blocks reach $62.3\%$ and $31.2\%$ and the identity blocks
reach $1.3\%$. Lemma~\ref{lem:blockwise-section} is what makes the $38$--$99\%$
shortfall harmless, and it is confirmed directly: over real HybridSubspace planes at
real iterates every section is proper, $74{,}240$ of $74{,}240$, with no plane contained
in a wall and no atom of the realized one-step law on a wall.

That coordinate planes are not generic in full-space mode
is exhibited on all three geometries: spike thresholds, argmax Voronoi walls and
$\mathrm{floor}$ level sets. Sampling $2\times10^4$ points per plane, axis-aligned
planes place $100.0\%$ of probes on $\mathcal{D}$ while Gaussian-spanned planes through
the same base point place $0.0\%$, below the $10^{-3}$ resolution of the sample.

\subsection{The Annulus Kernel and Its Score (Lemma~\ref{lem:smooth-kernel})}

On a discontinuous test loss in $\R^3$ at $3\times10^6$ samples, the score identity
Eq.~\ref{eq:annulus-stein} matches a common-random-number finite difference to $0.9\%$
relative error for the single annulus and to $0.2$--$0.6\%$ for the mixture of
Eq.~\ref{eq:mixture-radius} at $K \in \{1,3,5\}$.

Two comparisons matter more than the agreement. The uniform-ball estimator applied to
the same probes differs by $41\%$; that figure measures the gap between two smoothing
targets, not an error in \citet{flaxman2005online}, whose identity is correct for its
own kernel and does not transfer to the annulus. And the single-annulus form is not
merely less general than the mixture: at the $K = 3$, $\eta_{\max} = 0.5$ setting, its
support contains only $48.7\%$ of the mixture's probes, and on the band the two share
its radial score is off by a median factor of about $5$. The probe-law and tail
smoothings separate the same way next to a discontinuity: at a base point one probe
radius from a unit jump their gradients differ by $31\%$, the figure
Appendix~\ref{app:proof-piecewise-smooth} quotes.

The Bessel example of Appendix~\ref{app:proof-piecewise-smooth} is closed-form and needs no
measurement, but the surrogate gradients it predicts are reproduced: $+5.118$ at the
realized radius $\delta$ against $-2.596$ at $2\delta$, the opposite signs that
$J_0(1.5) = +0.512$ and $J_0(3.0) = -0.260$ require.

\subsection{The Step Constant and Its Remainder (Theorem~\ref{thm:exact-step})}

Fitting the residual of the barycentric step against $r_p$, by Monte Carlo at
$d_p \in \{2,4,8\}$, gives an exponent of $3.00$
for the orthoplex and $2.96$ for the simplex, and a coefficient ratio of $0.5000$ for
both, which is $\bar\xi$. The extra order is the rotation average and not the loss's
regularity: on a $C^{2,1}$ loss under Haar rotations the exponent is $2.13$ pointwise in
the rotation and $3.00$ in expectation, it survives a jumping Hessian ($2.08$ and
$2.99$) and odd $d_p$, and it disappears when the rotation is held fixed ($2.00$ both
ways). On a loss that \emph{jumps} it fails, $0.29$ pointwise and $0.15$ in
expectation: a straddling probe makes the cost-row spread $\Theta(1)$ rather than
$\Theta(r_p)$, which is why the clearing-case reduction of the floor's first term asks
the probes to cross no wall and why those draws are charged to $\bar\sigma$ instead.
This separates the two halves of the parity argument in
Appendix~\ref{app:proof-piecewise-smooth}: the second-order term
$\sum_v \langle g, Rv_v\rangle^2 Rv_v$ vanishes \emph{pointwise} in $R$ on the orthoplex
($1.1\times10^{-16}$ at the worst rotation) and only in expectation on the simplex
($4.4\times10^{-1}$ at the worst rotation, $1.6\times10^{-4}$ on average). Both reach
the same $O(r_p^3)$ order and the same constant $c = r_p/(2d_p)$; they differ in
variance, not in rate.

\subsection{Bias Floor, Occupancy and the Optimal Radius (Theorem~\ref{thm:piecewise-smooth}, Corollary~\ref{cor:optimal-rp})}

Two smoothness constants separate sharply. Over a sweep in $\delta$ the supremum of the
smooth part's second derivative where the kernel clears $\mathcal{D}$ is
$\delta$-independent, while $\Lambda_\delta$ grows from $9$ to $2{,}002$; this is why the
coupling constant of condition~(vii) is taken on the smoothed field and the two are named
separately.

The saturation term of Eq.~\ref{eq:bias-floor} is linear in $\varepsilon$ and carries
$r_p^{-1}$ under the geometric model, the straddle frequency scaling as
$\varepsilon^{1.00}$ and the floor term following at $\varepsilon^{1.00}$, with the ratio
to $d_p\bar\xi r_s\varepsilon/L_{\mathcal{D}}$ constant to $1.05\times$ across a
$4\times$ range of $r_p$. The interior optimum $r_p^\star = \Theta(\varepsilon^{1/5})$ of
Eq.~\ref{eq:optimal-rp} is recovered in closed form from that model. Measured on the runs
rather than modeled, the saturation frequency is $0.00$
(Section~\ref{sec:hypothesis-coverage}), so the model is an upper bound that is not tight
at our operating point. The constructive half is supplied on a synthetic loss with the
saturated term active by construction: walls at spacing
$L$ with jump $10\varepsilon$, base points uniform over one wall period, the
uniform-base-point model the corollary's $\bar\sigma$ hypothesis is stated under, and
the wall band stratified exactly so the rare saturated mass is not starved. There the
measured saturated fraction is positive and increasing in $r_p$, and the measured floor
grows as $r_p^{-0.92}$ as the probe radius falls (mechanism exponent $-1$: saturated
mass over the squared descent constant), which is the corollary's middle term operating.
Beyond the saturation spike the realized floor is monotone decreasing, so the interior
optimum remains a property of the bound $\hat B$ rather than of the floor, exactly the
reading the corollary states.

Lemma~\ref{lem:occupancy} is tested on the real recursion rather than on a
uniform-base-point model. On an adapted trajectory over a staircase, where the smooth
tilt makes every visit a crossing and prevents re-crossing, the path-weighted occupancy
is within $12\%$ of $\delta/L_{\mathcal{D}}$ across an $8\times$ range of $\delta$, with
a fitted exponent of $1.05$ against the predicted $1$; $\delta/L_{\mathcal{D}}$ is the
lemma's $(2\delta+\rho_T)/L_{\mathrm{exc}}$ at leading order, since the distinct-sheet
bridge gives $L_{\mathrm{exc}} = 2L_{\mathcal{D}} - 2\delta$
(Appendix~\ref{app:proof-occupancy}) and $\rho_T$ is negligible at these settings. Dropping either condition
reproduces the oscillating counterexample of Section~\ref{sec:thm-piecewise-smooth}:
across a single sheet at amplitude $\delta + a$, occupancy rises to $0.50$, $0.91$,
$0.99$ as $a/\delta$ falls through $1$, $0.1$, $0.01$, while the sheet-separation bound
stays flat, so separation cannot be the quantity the lemma is stated in.

Finally, the two readings of the straddle statistic differ and the difference matters for
which power of it the floor carries. Read as an $\mathcal{F}_t$-measurable indicator the
measured ratio of its weighted square to its weighted mean is $1.0000$; read as the
fractional straddle frequency of a uniform-base-point model it is $0.92$, and the gap
widens as $\varepsilon$ falls. The saturation statistic of condition~(viii) is a
conditional probability rather than an indicator, so the first reading is unavailable to
it and Eq.~\ref{eq:bias-floor} carries the linear form as an upper bound only
(Appendix~\ref{app:theory-remarks}).

\subsection{Goldstein Inclusion and Its Radius (Corollary~\ref{cor:goldstein})}

On the realized kernel with $J = 0$, Eq.~\ref{eq:goldstein-inclusion} holds with margin
at least $0.07$ and fails at a shrunk radius, as the sharpness argument requires. With
$J > 0$, $\|\nabla\Ls_\delta\|\,\delta_{\mathrm{out}}$ is constant to $1.3\%$ over a
$4\times$ range of radii, which is Theorem~\ref{thm:smoothing-blowup}'s $\Theta(1/\delta)$.

The joint radius of Eq.~\ref{eq:joint-radius} is confirmed on both sides. The support of
the joint displacement never exceeds $\sqrt{P}\,\delta_{\mathrm{out}}$ for $P \leq 64$,
exactly; the corner carries positive probability, computed by quadrature rather than
sampled, at $\log_{10}\Pr[\|u\| > 0.95\sqrt{P}\delta_{\mathrm{out}}] = -12.9$ for
$P = 4$; and an admissible measure makes the inclusion fail at every radius below
$\sqrt{P}\delta_{\mathrm{out}}$. The realized law is far tighter than the worst case
the corollary must cover: its $99.9$th percentile sits at $0.568$ of the joint radius at
$P = 64$.

\subsection{Radial Drift, Stationary Excursion and Confinement (Propositions~\ref{prop:stable-schedules} and~\ref{prop:confinement-ville})}

The envelope alone permits an unbounded excursion, and this is exhibited: keeping
Eq.~\ref{eq:fragility-bound} and replacing the softmax weights with uniform ones fits an
excursion exponent of $0.507$ against the free-walk value $1/2$. With the drift restored
the exponent is $0.00$ under flat $r_s$ and $-0.61$ under condition~(v).

Both halves of part~(a) are measured on the real step. The normalized radial drift is
$-0.777$ at $d_p = 4$ and $H = 2I$, identical at
$\|z\| \in \{0.5, 1, 2\}\cdot r_s\varepsilon$ as scale covariance requires, and the
stationary amplitude is $0.650\,r_s\varepsilon$, constant to $0.8\%$ across a $16\times$
range of $r_s$ and an $8\times$ range of $\varepsilon$.

Anisotropy costs the constant and not the scaling. Sweeping the condition number at
$d_p = 4$ with the eigenvalues spread evenly gives normalized drift $-0.779$ at $H = I$,
$-0.350$ at $10$, $-0.272$ at $100$ and $-0.278$ at $10^3$: the sequence flattens rather
than approaching zero. The constant is set by the whole spectrum and not by the
condition number alone, which is why $H = \mathrm{diag}(1,1,1,100)$ gives $-0.10$ at the
same $d_p$ and the same condition number as the $-0.272$ of
$H = \mathrm{diag}(1,34,67,100)$; the amplitude there is still $\Theta(r_s\varepsilon)$,
at $1.84$ times it. In the linearized regime the identity of
Eq.~\ref{eq:drift-lambda-min} is exact: the mean step aligns with $-Hz$ to six decimals
at every condition number tested, and the realized normalized drift ($1.00$, $0.88$,
$0.70$, $0.61$) sits far above the $1/\mathrm{cond}(H)$ floor the bound guarantees
($1$, $0.1$, $0.01$, $0.001$), so the worst case the proof must allow for is not the
case that occurs.

The pointwise reading is false and the measurement says how false. At
$H = \mathrm{diag}(100,1)$, $d_p = 2$, over $2\times10^4$ rotations, $49\%$ of
individual steps move outward while the average stays at $-0.274$. At $H = I$ every
step is inward, which is the isotropic special case.

The confinement bound of Proposition~\ref{prop:confinement-ville} is evaluated on the
same model: over $400$ seeds the realized tail of $\sup_t V_H$ is dominated by the
supermartingale bound at every grid point, for isotropic and condition-$9$ anisotropic
$H$, and replacing the inward drift by an outward one of the same magnitude violates
the bound, so the test has discriminating power.

\subsection{The KL-Penalized Program (Theorem~\ref{thm:kl-softmax-endpoints})}

Both endpoints are exact to machine precision: the $\lambda = 0$ plan matches
$\mathrm{diag}(\va)\,\mathrm{softmax}(-\mC/\varepsilon)$ to $1.4\times10^{-17}$, and the
$\lambda \to \infty$ column-marginal error is $2.8\times10^{-17}$. The scaling exponent
is $\alpha = \lambda/(\lambda+\varepsilon)$ exactly, and the column-marginal violation
is non-increasing at every step of a ten-point sweep, falling by a factor of
$1.7\times10^{5}$ with a log-log slope of $-1.36$ against the $O(1/\lambda)$ the proof
gives.

Theorem~\ref{prop:plateau-freeze} and Proposition~\ref{prop:ghost-drift} are checked
against the shipped solver rather than a reimplementation. At $\lambda = 0$ the step on
a constant row is exactly zero for every rotation. At $\lambda > 0$ the closed form
$T_{iv} = a_i\,\mathrm{softmax}(g_v/\varepsilon)$ holds to $1.7\times10^{-16}$, the
realized step grows monotonically in $\lambda$ from $0.10$ to $0.46$, and the
Haar-averaged step decays as $n^{-1/2}$, reaching $9.2\times10^{-4}$ at $n = 10^5$:
the freeze breaks pointwise and survives in expectation, as the proposition states.

\subsection{Measured Jump Magnitudes (Definition~\ref{def:pwc-loss})}\label{app:measured-jump}

Appendix~\ref{app:discontinuity-sets} derives the geometry of $\mathcal{D}$ for each
architecture but not the magnitude $J$, which Definition~\ref{def:pwc-loss} needs
positive on a relatively open piece.
Table~\ref{tab:measured-jump} reports what crossing actually moves. Throughout, $n_B$ is
the minibatch size and losses are averaged over the batch.

\begin{table}[ht]
\centering
\caption{Measured jump magnitudes. A \emph{wall} is a solved crossing of one threshold,
confirmed by forward hooks to change exactly the intended decision. ``zero frac.'' is the
fraction of walls across which the batch loss is \emph{bit-identical}; ``$J$'' is the
median over walls that do move; ``example-local'' is the largest $|\Delta\mathrm{CE}|$
over examples whose decisions did \emph{not} change, which is what makes $J = j/n_B$
arithmetic rather than an assumption. Models are trained here from the paper's smooth
twins, not restored from the reported \sysname{} checkpoints; magnitudes are
checkpoint-dependent and reproduce to within about $2\times$ across retrains.}
\label{tab:measured-jump}
\small
\begin{tabular}{@{}lcccccc@{}}
\toprule
Architecture & walls & zero frac. & median $J$ & $\max J$ & example-local & $J = j/n_B$ \\
\midrule
(a) Hard-LIF SNN     & 256 & $0.67$   & $1.9{\times}10^{-7}$ & $1.0{\times}10^{-3}$ & $0$ & yes \\
(b) INT8             & 512 & $0.10$   & $9.7{\times}10^{-6}$ & $8.2{\times}10^{-5}$ & n/a & \textbf{no} \\
(c) Hard-MoE argmax  & 256 & $0.00$   & $1.2{\times}10^{-3}$ & $1.9{\times}10^{-2}$ & $0$ & yes \\
(d) Staircase        & 256 & $0.00$   & $3.2{\times}10^{-6}$ & $7.9{\times}10^{-5}$ & $0$ & yes \\
(e) Argmax attention & 256 & $0.00$   & $2.0{\times}10^{-5}$ & $2.4{\times}10^{-3}$ & $2.8{\times}10^{-13}$ & approx. \\
\bottomrule
\end{tabular}
\end{table}

\paragraph{The SNN.} Crossing a spike threshold
flips a spike, so the rate code and the cross-entropy change; yet across $67\%$ of
solved crossings the batch loss is instead bit-identical. The mechanism is structural,
not an artifact of a checkpoint: the LIF reset $u \leftarrow u(1-s)$ means an inserted
spike suppresses a later one, and the readout is a spike \emph{count} over the
$T_{\mathrm{seq}}$ timesteps, so a crossing that changes the spike train while
preserving its count changes nothing downstream. On the $200$ of the $256$ walls where
both sides could be evaluated exactly, the equivalence is exact rather
than approximate: $\Delta\Ls = 0$ \emph{if and only if} the summed count is unchanged.
So $J > 0$ holds on the remaining third of the sheet, and
Definition~\ref{def:pwc-loss} is satisfied, but not by the mechanism one would state
without measuring: not every threshold crossing in a spiking network is a jump.

\paragraph{INT8.} A crossing changes one quantized
weight by one level; $90\%$ of crossings move the loss, and the $10\%$ that do not are
accounted for exactly by dead units: input or output channels of the quantized layer
that are zero on the whole batch. The failure of $J = j/n_B$ is structural rather than
numerical: the wall $\{\theta_j = (m+1/2)\Delta_q\}$ is a condition on a \emph{weight
alone}, with no data in it, so no single example owns the crossing and one wall moves
roughly half the batch at once ($244$ of $512$) with balanced signs. The measured batch
jump does not decay like $1/n_B$: over a $64\times$ increase in $n_B$ the ratio
$J(n_B)/J(n_{B,\min})$ lands between $13$ and $300$ times the $1/64$ that $J = j/n_B$
predicts. Its direction is not even determined (across retrains the ratio has come out
both above and below one), so we claim no exponent for this case, only that the decay
is nowhere near $1/n_B$.

\paragraph{The remaining three architectures.} Every one of $256$ solved crossings moves the loss for
hard-MoE routing, for the staircase and for argmax attention. Hard-MoE carries much the
largest jumps of the five, median $J = 1.2\times10^{-3}$ against $10^{-5}$--$10^{-7}$
elsewhere: the experts differ enough that a routing swap is a large change. Argmax
attention is the one case where $J = j/n_B$ is not exact, since the routed key enters
the output through a continuous path as well, so examples that did not re-route still
drift, by $2.8\times10^{-13}$, eight orders below the row's median jump.

\subsection{Limits of the Jump Measurements}\label{app:numerics-limits}

Five are weaker than they look. The straddle measurement draws base
points uniformly in a window, so it reports a straddle frequency \emph{given} a bounded
iterate density; the path-occupancy measurement is the one that runs the real recursion,
and the saturation measurement of condition~(viii) is the one that runs a real training
trajectory. The exponents read off Eq.~\ref{eq:bias-floor} restate a definition and are
bookkeeping, not evidence. The saturation measurement is $150$ steps at one seed on four
architectures: enough to establish that saturation is rare at the operating point and not
enough to bound it over a full run, and $\bar\sigma = 0.00$ means below the resolution of
the $1{,}048{,}500$ observed non-spiking rows rather than provably zero.

One statement survives the conditioning, by the tower property: if each row saturated
with probability at least $p$ conditionally on everything drawn before it,
previously revealed rows of the same step included, an all-clear run of $N$ rows
would have probability at most $(1-p)^N$, so the all-clear over the
$N = 1{,}048{,}500$ logged rows refutes every uniform conditional rate above
$4.4\times10^{-6}$ at the $99\%$ level, while bounding nothing pointwise. The schedule-stability measurements
sit in the saturated-softmax regime Proposition~\ref{prop:fragility} assumes and not in
the near-uniform one, and the amplitude proportionality is close to forced for an
isotropic quadratic, where the curvature term is the same for every unit vertex, which
is why it is repeated with an anisotropic Hessian, where it holds inside the
small-excursion regime and visibly departs outside it. And the straddle-offset and
support figures of Eq.~\ref{eq:straddle-offset} are exact geometry, whereas
$\delta\|\nabla\Ls_\delta\|/J$ reads a histogram estimate of a density peak, which
binning biases \emph{downward}: that figure is conservative, and can fail spuriously but
not pass spuriously.

Two further limits. Every drift and excursion figure is
measured on a quadratic model of the smoothed loss, not on a trained network, so it
tests the proposition's mechanism and not its applicability to any reported run. And no
measurement here bears on condition~(vi) of Assumption~\ref{asm:main}, which
constrains the realized trajectory and which Table~\ref{tab:hypothesis-coverage} marks
``assumed'' on every row; condition~(viii) is measured separately, over $150$ steps on
four architectures, and the rows the run did not cover are the ones the table marks
``?''.

\section{Extended Experiment Results}\label{app:extended-results}

\subsection{The Four Remaining Hard Architectures}\label{app:gallery-tables}

Tables~\ref{tab:int8-accuracy}, \ref{tab:staircase-accuracy},
\ref{tab:argmax-accuracy} and~\ref{tab:moe-accuracy} report the gallery
architectures under the protocol of Section~\ref{sec:setup}, in the same format as
Table~\ref{tab:snn-accuracy}: both cost axes, the axis each row was budgeted on, and
test accuracy at the validation-selected checkpoint over 5 seeds. \sysname{} leads
every gradient-free baseline on every seed of all four, so the exact two-sided
sign-flip test returns $p = 0.0625$ on each (Appendix~\ref{app:stats}).
All models use MNIST except argmax attention (Fashion-MNIST) and hard MoE
(multi-domain synthetic).

\begin{table}[ht]
\caption{INT8 weight quantization, MNIST. Best gradient-free result in bold.}
\label{tab:int8-accuracy}
\centering
\small
\begin{threeparttable}
\begin{tabular}{lrrS[table-format=2.1(2)]}
\toprule
{Method} & {Steps} & {Evaluations} & {Test accuracy (\%)} \\
\midrule
\sysname{} & 3{,}180 & 38.35M & \bfseries 97.0 \pm 0.1 \\
OpenAI-ES & 1{,}198{,}462 & 38.35M$^{e}$ & 94.4 \pm 0.2 \\
CMA-ES & 299{,}615 & 38.35M$^{e}$ & 91.6 \pm 0.4 \\
EGGROLL & 599{,}231 & 38.35M$^{e}$ & 83.4 \pm 1.2 \\
MeZO & 3{,}180 & 6.36K$^{s}$ & 80.7 \pm 0.9 \\
SPSA & 3{,}180 & 6.36K$^{s}$ & 82.5 \pm 0.5 \\
Random search & 3{,}179 & 3.18K$^{s}$ & 29.2 \pm 4.9 \\
Adam$^\dagger$ & 3{,}180 & 3.18K & 97.8 \pm 0.1 \\
\bottomrule
\end{tabular}
\begin{tablenotes}[para,flushleft]
\scriptsize
$^{e}$matched on evaluations. $^{s}$budgeted on optimizer steps: the row spends the evaluation total implied by \sysname{}'s step count at reference population $32$, at its own population size, so recorded steps differ where the population does. \sysname{} and Adam set the budget the others are matched to. $^\dagger$Adam trains a differentiable surrogate relaxation of the same architecture, not the hard model.
\end{tablenotes}
\end{threeparttable}
\end{table}

\begin{table}[ht]
\caption{Staircase activations, MNIST. Best gradient-free result in bold.}
\label{tab:staircase-accuracy}
\centering
\small
\begin{threeparttable}
\begin{tabular}{lrrS[table-format=2.1(2)]}
\toprule
{Method} & {Steps} & {Evaluations} & {Test accuracy (\%)} \\
\midrule
\sysname{} & 3{,}180 & 19.66M & \bfseries 94.3 \pm 0.1 \\
OpenAI-ES & 614{,}435 & 19.66M$^{e}$ & 88.9 \pm 0.5 \\
CMA-ES & 86{,}981 & 19.66M$^{e}$ & 89.2 \pm 0.4 \\
EGGROLL & 427{,}433 & 19.66M$^{e}$ & 52.0 \pm 1.5 \\
MeZO & 3{,}180 & 6.36K$^{s}$ & 61.1 \pm 2.8 \\
SPSA & 3{,}180 & 6.36K$^{s}$ & 45.2 \pm 4.2 \\
Random search & 3{,}179 & 3.18K$^{s}$ & 18.1 \pm 4.8 \\
Adam$^\dagger$ & 3{,}180 & 3.18K & 97.5 \pm 0.1 \\
\bottomrule
\end{tabular}
\begin{tablenotes}[para,flushleft]
\scriptsize
$^{e}$matched on evaluations. $^{s}$budgeted on optimizer steps: the row spends the evaluation total implied by \sysname{}'s step count at reference population $32$, at its own population size, so recorded steps differ where the population does. \sysname{} and Adam set the budget the others are matched to. $^\dagger$Adam trains a differentiable surrogate relaxation of the same architecture, not the hard model.
\end{tablenotes}
\end{threeparttable}
\end{table}

\begin{table}[ht]
\caption{Argmax hard attention over eight slots, Fashion-MNIST. Best gradient-free result in bold. Population baselines budgeted on optimizer steps; the evaluation axis, where the ordering reverses, is in Section~\ref{sec:other-axes}.}
\label{tab:argmax-accuracy}
\centering
\small
\begin{threeparttable}
\begin{tabular}{lrrS[table-format=2.1(2)]}
\toprule
{Method} & {Steps} & {Evaluations} & {Test accuracy (\%)} \\
\midrule
\sysname{} & 3{,}180 & 42.01M & \bfseries 86.6 \pm 0.4 \\
OpenAI-ES & 3{,}180 & 101.76K$^{s}$ & 80.0 \pm 0.4 \\
CMA-ES & 3{,}180 & 101.76K$^{s}$ & 79.5 \pm 0.5 \\
EGGROLL & 6{,}360 & 101.76K$^{s}$ & 71.0 \pm 0.4 \\
MeZO & 3{,}180 & 6.36K$^{s}$ & 71.6 \pm 1.2 \\
SPSA & 3{,}180 & 6.36K$^{s}$ & 67.4 \pm 1.3 \\
Random search & 3{,}179 & 3.18K$^{s}$ & 42.8 \pm 1.9 \\
Adam$^\dagger$ & 3{,}180 & 3.18K & 88.6 \pm 0.1 \\
\bottomrule
\end{tabular}
\begin{tablenotes}[para,flushleft]
\scriptsize
$^{s}$budgeted on optimizer steps: the row spends the evaluation total implied by \sysname{}'s step count at reference population $32$, at its own population size, so recorded steps differ where the population does. \sysname{} and Adam set the budget the others are matched to. $^\dagger$Adam trains a differentiable surrogate relaxation of the same architecture, not the hard model.
\end{tablenotes}
\end{threeparttable}
\end{table}

\begin{table}[ht]
\caption{Hard MoE routing. Best gradient-free result in bold. No Adam row: the smooth twin of a hard router is a different model, not a relaxation of this one. Population baselines budgeted on optimizer steps; the evaluation axis, where the comparison levels, is in Section~\ref{sec:other-axes}.}
\label{tab:moe-accuracy}
\centering
\small
\begin{threeparttable}
\begin{tabular}{lrrS[table-format=2.1(2)]}
\toprule
{Method} & {Steps} & {Evaluations} & {Test accuracy (\%)} \\
\midrule
\sysname{} & 6{,}330 & 100.55M & \bfseries 90.6 \pm 0.2 \\
OpenAI-ES & 6{,}330 & 202.56K$^{s}$ & 82.5 \pm 0.8 \\
CMA-ES & 9{,}207 & 202.55K$^{s}$ & 77.8 \pm 1.5 \\
EGGROLL & 1{,}582 & 202.50K$^{s}$ & 52.5 \pm 1.5 \\
MeZO & 6{,}330 & 12.66K$^{s}$ & 55.3 \pm 3.0 \\
SPSA & 6{,}330 & 12.66K$^{s}$ & 25.4 \pm 3.5 \\
Random search & 6{,}329 & 6.33K$^{s}$ & 13.7 \pm 1.9 \\
\bottomrule
\end{tabular}
\begin{tablenotes}[para,flushleft]
\scriptsize
$^{s}$budgeted on optimizer steps: the row spends the evaluation total implied by \sysname{}'s step count at reference population $32$, at its own population size, so recorded steps differ where the population does. \sysname{} sets the budget the others are matched to.
\end{tablenotes}
\end{threeparttable}
\end{table}

\subsection{Population Size, Throughput, and the Other Two Axes}\label{app:wallclock-matched}

The primary comparison matches on optimizer steps
(Table~\ref{tab:step-sweep}). That axis fixes the population of every
population method at $32$, so that one generation costs one step and the methods are
comparable per step. It is a definition, and it has a measurable consequence.

Two bookkeeping points hold the axis in place. Every result file stores
\texttt{total\_steps}, \texttt{function\_evals} and \texttt{match\_axis}, and the
generator prints no result whose seeds disagree on the axis. Every result also stores its
subspace class, \texttt{subspace\_dim} and compression ratio, since parameter count is
model size rather than search dimension. EGGROLL is the one representation asymmetry in
the comparison: its rank-$r$ perturbations need matrix structure that HybridSubspace
coordinates do not carry, so it runs in a factored subspace preserving per-layer shapes.

\paragraph{Generation cost.} Two OpenAI-ES runs
on hard MoE show that consequence without a microbenchmark. At population $32$ it spends
$68.4$~ms per generation; at population $2{,}048$, on the same GPU and the same objective,
it spends $48.3$~ms. A generation $64$ times larger costs \emph{less} wall-clock than the
small one, so the cost is fixed overhead and not candidates, and throughput runs from
$468$ evaluations per second to $42{,}364$, a factor of $90$. Every timing in this
section carries two further caveats. A host synchronization once dominated per-call
cost, falling hardest on the methods making the most calls; removing it cut one
profiled call from $11{,}024$ to $210\,\mu$s,
and every number here postdates that fix. And the campaign ran three runs at a time on
one GPU, so every figure carries contention from the other two; pure objective time on an
idle device is unmeasured.
Evaluation matching on the step-capped benchmarks costs $40$ minutes a run at population
$2{,}048$, not the tens of hours a population-$32$ estimate suggests; the hours are
the population, not the axis. And any wall-clock comparison run at population $32$
measures a baseline throttled by that factor for reasons that have nothing to do with the
method: measured on argmax at three seeds, such a comparison gives OpenAI-ES $3{,}688$ seconds
to reach $84.2 \pm 0.1\%$, which a tuned population reaches in $54$ seconds. We
therefore report no wall-clock-matched comparisons.

\paragraph{Accuracy of a tuned population.} We swept the learning rate at population
$2{,}048$ on the two benchmarks where the step cap gives \sysname{} the most, at seed
$42$, first at a budget of $3.0$M evaluations and then, on MoE, at \sysname{}'s own
$100.55$M.

\begin{center}\small
\begin{tabular}{@{}llrrrr@{}}
\toprule
Benchmark & OpenAI-ES configuration & Steps & Evaluations & Wall & Test \\
\midrule
Argmax   & population $32$, matched steps        & $3{,}180$   & $101.8$K & $98$~s    & $80.0$ \\
         & population $32$, matched wall-clock   & $122{,}750$ & $3.93$M  & $3{,}688$~s & $84.2$ \\
         & population $2{,}048$, $\mathrm{lr}=0.03$ & $1{,}464$ & $3.00$M  & $54$~s    & $\mathbf{84.9}$ \\
         & \sysname{}                            & $3{,}180$   & $42.01$M & $3{,}691$~s & $86.6$ \\
\midrule
Hard MoE & population $32$, matched steps        & $6{,}330$   & $202.6$K & $433$~s   & $82.5$ \\
         & population $2{,}048$, $\mathrm{lr}=0.03$ & $1{,}464$ & $3.00$M  & $94$~s    & $87.0$ \\
         & population $2{,}048$, matched evaluations & $49{,}097$ & $100.55$M & $2{,}375$~s & $\mathbf{90.5}$ \\
         & \sysname{}                            & $6{,}330$   & $100.55$M & $12{,}541$~s & $90.6$ \\
\bottomrule
\multicolumn{6}{l}{\scriptsize Bold marks the best OpenAI-ES configuration per block; the \sysname{} row is the reference.}\\
\end{tabular}
\end{center}

On argmax, at exactly \sysname{}'s $42.01$M evaluations, OpenAI-ES reaches
$86.95 \pm 0.1\%$ over seeds $42/123/456$ against \sysname{}'s $86.41\%$ on the same
three; Section~\ref{sec:other-axes} carries the reading.

We gave \sysname{} the symmetric move and it does not recover the benchmark. The baseline
was allowed to retune the knob that trades candidates per generation against generation
count; \sysname{}'s counterpart is the subspace rank, which trades particles per step
against step count at the same evaluation budget. Sweeping it at seed $42$, selecting on
validation as everywhere else:

\begin{center}\small
\begin{tabular}{@{}rrrcc@{}}
\toprule
rank & steps & evaluations & validation & test \\
\midrule
$8$ (shipped) & $3{,}180$  & $42.01$M & $\mathbf{87.38}$ & $\mathbf{86.65}$ \\
$4$           & $6{,}148$  & $41.78$M & $86.50$ & $85.63$ \\
$2$           & $11{,}872$ & $42.10$M & $84.68$ & $84.24$ \\
\bottomrule
\end{tabular}
\end{center}

Accuracy is monotone in rank, so the shipped configuration is the best of the three
and the selection stands; Section~\ref{sec:other-axes} interprets the sweep. On hard
MoE the matched-evaluation row reads $90.52\%$ over three seeds against \sysname{}'s
$90.62\%$ over five, level within \sysname{}'s $\pm 0.2$~pp spread
(Section~\ref{sec:other-axes}). The learning rate is what the larger population needs
and the sweep is not flat: on MoE, population $2{,}048$ runs $82.5$, $87.0$, $86.4$ and
$79.1$ at $\mathrm{lr} = 0.01, 0.03, 0.1, 0.3$, and the step-axis learning rate of
$0.003$ collapses to $60.2$ when carried to that population. The step-matched figures are
correctly tuned for their own axis; they are not correctly tuned for any other.

\paragraph{The other four benchmarks.}
On SNN, MNIST, INT8 and staircase the baselines were matched on evaluations and ran far
past \sysname{}'s step count, so truncating their trajectories recovers the step axis
without running anything. Table~\ref{tab:axes} reports both from the same runs.
Validation accuracy throughout, including for \sysname{}, since a trajectory carries no
test score; test runs $0.0$ to $0.9$~pp above validation across the methods here, so the
ordering is unchanged. The read-off takes the best validation value over the truncated
prefix, which is how every method selects, and it reproduces each run's recorded selection
exactly when the truncation is removed.

\begin{table}[ht]
\centering
\caption{The same runs read on both budget axes. Validation accuracy, five-seed means; the per-benchmark tables report the same runs with standard deviations. Steps: the baseline truncated at \sysname{}'s step count. Evaluations: the full matched-evaluation budget, which is what the SNN, MNIST, INT8 and staircase per-benchmark tables report. \sysname{} sets both budgets, so its two entries coincide in every block.}
\label{tab:axes}
\small
\begin{tabular}{llcc}
\toprule
Benchmark & Method & Steps & Evaluations \\
\midrule
SNN (hard LIF) & \sysname{} & $\mathbf{92.9}$ & $\mathbf{92.9}$ \\
 & OpenAI-ES & $77.6$ & $78.9$ \\
 & CMA-ES & $72.8$ & $76.3$ \\
 & EGGROLL & $37.8$ & $55.2$ \\
\midrule
MNIST & \sysname{} & $\mathbf{96.7}$ & $\mathbf{96.7}$ \\
 & OpenAI-ES & $86.2$ & $87.8$ \\
 & CMA-ES & $89.7$ & $94.2$ \\
 & EGGROLL & $73.2$ & $78.0$ \\
\midrule
INT8 & \sysname{} & $\mathbf{97.0}$ & $\mathbf{97.0}$ \\
 & OpenAI-ES & $87.6$ & $94.3$ \\
 & CMA-ES & $88.5$ & $90.9$ \\
 & EGGROLL & $72.5$ & $83.1$ \\
\midrule
Staircase & \sysname{} & $\mathbf{94.3}$ & $\mathbf{94.3}$ \\
 & OpenAI-ES & $86.2$ & $88.8$ \\
 & CMA-ES & $74.5$ & $88.7$ \\
 & EGGROLL & $41.6$ & $51.7$ \\
\bottomrule
\end{tabular}
\end{table}

Throughput cannot matter on those four: the baselines peak at $0.2$--$25\%$ of their
budgets and then decline or flatten (Section~\ref{sec:other-axes}), so a larger
population reaches the same ceiling sooner and no higher. The narrowest margin in the
table, MNIST against CMA-ES at $96.7$ to $94.2$, is a matched-evaluation number and
independent of how fast either method runs.

The contrast with argmax and MoE is the useful one. There OpenAI-ES is still climbing
when its budget ends (on argmax its validation curve peaks at the last logged point of
the step-matched run), so on those two more compute does help it, and the sweep
above states what it reaches when given some.

\subsection{The Transport Constraint in the High-Particle Regime}\label{app:ot-highparticle}

Section~\ref{sec:ot-binding} reports that the two solvers coincide wherever we operate.
Outside that regime we have two measurements and they disagree.

Table~\ref{tab:ot-fullspace} is the one that favors the constraint: full-space MNIST at
$P = 50{,}885$ particles against $V = 4$ vertices, a ratio of $12{,}721$, where entropic
OT leads softmax by $23$~pp. Three qualifications belong with it. The runs were scored
by maximum test accuracy over epochs, not the validation selection of
Section~\ref{sec:setup}, and have not been re-measured under it. The seed spreads are
$\pm 11.0$ and $\pm 13.9$ on five seeds, so the gap is roughly two standard deviations
of a single configuration, though entropic OT leads on every paired seed. And both solvers are far
below what the same network reaches in subspace mode, so the comparison is between two
poor configurations.

\begin{table}[ht]
\centering
\caption{Full-space OT vs.\ softmax (MNIST, $P{=}50{,}885$ particles, $V{=}4$ vertices, 3 epochs, 5 seeds), under max-over-epochs selection, not the validation selection of Section~\ref{sec:setup}; not re-measured under it. Read with Table~\ref{tab:ot-maxsat}.}
\label{tab:ot-fullspace}
\begin{tabular}{lcc}
\toprule
\textbf{Update rule} & \textbf{Subspace} & \textbf{Full-space ($P{=}50$K)} \\
\midrule
Entropic OT & $96.4 \pm 0.2$ & $57.6 \pm 11.0$ \\
Softmax-weighted & $96.4 \pm 0.2$ & $34.6 \pm 13.9$ \\
\midrule
Gap & $0.0$ & $+23.0$ \\
\bottomrule
\end{tabular}
\end{table}

Table~\ref{tab:ot-maxsat} is the one that does not. MAX-SAT is also full-space and also
high-particle, and it was re-run under the reported protocol at four instance sizes. The
differences alternate in sign and every one is inside the seed spread. The constraint
buys nothing there at any size.

\begin{table}[ht]
\centering
\caption{Full-space MAX-SAT, softmax vs.\ Sinkhorn under the protocol of Section~\ref{sec:setup}. Clause satisfaction (\%), 5 seeds except the $5{,}000$-variable Sinkhorn run ($n{=}2$). The same high-particle full-space regime as Table~\ref{tab:ot-fullspace}, and the constraint is worth nothing at any size.}
\label{tab:ot-maxsat}
\begin{tabular}{lccc}
\toprule
Variables & Softmax & Sinkhorn & $\Delta$ \\
\midrule
$100$    & $98.310 \pm 0.509$ & $98.263 \pm 0.458$ & $-0.047$ \\
$500$    & $98.183 \pm 0.288$ & $98.211 \pm 0.247$ & $+0.028$ \\
$1{,}000$ & $98.084 \pm 0.090$ & $98.000 \pm 0.069$ & $-0.084$ \\
$5{,}000$ & $98.101 \pm 0.061$ & $98.117 \pm 0.066$ & $+0.016$ \\
\bottomrule
\end{tabular}
\end{table}

We do not resolve the disagreement. The claim about the column
constraint is Theorem~\ref{prop:plateau-freeze} together with
Proposition~\ref{prop:ghost-drift}, which is a separation between $\lambda = 0$ and
$\lambda > 0$ that holds for every row-wise rule and does not depend on either table.

\subsection{Reinforcement-Learning Policy Search}\label{app:rl-policy-search}

When a policy embeds a non-differentiable operator (hard quantization, a
$\mathrm{sign}(\cdot)$ activation, argmax routing), backprop returns zero gradient
almost everywhere and the parameters \emph{behind} the operator receive no learning
signal (Section~\ref{sec:background}); forward-only methods need only the scalar
episodic return, and OpenAI-ES~\citep{salimans2017evolution}, the dominant
gradient-free policy-search algorithm at scale, is the natural baseline.
Proposition~\ref{prop:rl-policy-search} connects the settings: batched rollouts
produce a concentrated empirical cost matrix, the perturbation radius shrinking as
$O(H_{\mathrm{ep}}R_{\max}\sqrt{\log N/M})$, so softmax/OT weights from finite
rollouts are close to their population counterparts; the contribution is
quantization resilience, not Float32 parity, which is a sanity check.

We evaluate \textbf{CartPole-v1} (4-D obs, $\{0,1\}$ actions, $500$-step cap) and
\textbf{Acrobot-v1} (6-D obs, $3$ actions, $-500$ floor)~\citep{gymnasium2024},
all methods sharing one MLP topology (a single hidden layer of width $16$), so
differences are attributable to the optimizer: \sysname{} against
\textbf{OpenAI-ES} (antithetic sampling, rank-centered weights, cosine $\sigma$
decay), \textbf{PPO} and \textbf{DQN} (both Stable-Baselines3,
\citealp{raffin2021sb3}), $5$ seeds per cell. \sysname{} and OpenAI-ES take the
same number of optimizer steps; the environment interactions each buys differ and
are reported in Table~\ref{tab:rl_cost}. No RL runner holds a validation split or
selects a checkpoint, so the reported number is the final policy's mean return
over the evaluation rollouts. Each environment is swept at three policy
precisions: \emph{Float32}, the standard ReLU MLP; \emph{INT8}, the first hidden
activation quantized as $\mathrm{round}(x/s)\cdot s$ with $s = \max|x|/127$
(per-tensor symmetric); and \emph{binary}, the first hidden activation replaced by
$\mathrm{sign}(x)$. The SB3 harness wraps \emph{every} hidden activation
\emph{and} the policy/value output of PPO/DQN with the non-differentiable operator
(no straight-through estimator), so the effective gradient on every trainable
parameter is exactly zero, confirmed by direct inspection of the backward pass.
Hardened-environment variants (observation quantization, bucketed reward) are
omitted because they tell the same qualitative story: gradient-free methods are
unaffected while PPO/DQN see flatter value landscapes and noisier policy
gradients.

\begin{table}[t]
\centering
\caption{RL policy search: 2 environments $\times$ 3 precision regimes (5 seeds, normalized return of the final policy, mean $\pm$ std); best gradient-free result per column in bold. Costs in Table~\ref{tab:rl_cost}; PPO and DQN run the quantization in the forward pass with no straight-through estimator, a diagnostic of gradient flow rather than of quantization-aware training.}
\label{tab:rl_nondiff}
\small
\setlength{\tabcolsep}{4pt}
\begin{tabular}{lcccccc}
\toprule
Method & \multicolumn{3}{c}{CartPole-v1} & \multicolumn{3}{c}{Acrobot-v1} \\
\cmidrule(lr){2-4} \cmidrule(lr){5-7}
 & F32 & INT8 & Binary & F32 & INT8 & Binary \\
\midrule
\sysname{} & $\mathbf{1.000 \pm 0.000}$ & $\mathbf{1.000 \pm 0.000}$ & $\mathbf{1.000 \pm 0.000}$ & $\mathbf{0.996 \pm 0.006}$ & $\mathbf{0.995 \pm 0.009}$ & $\mathbf{1.000 \pm 0.000}$ \\
OpenAI-ES & $0.645 \pm 0.364$ & $0.942 \pm 0.130$ & $0.337 \pm 0.389$ & $0.980 \pm 0.019$ & $0.957 \pm 0.044$ & $0.755 \pm 0.424$ \\
PPO & $1.000 \pm 0.000$ & $0.033 \pm 0.074$ & $0.000 \pm 0.000$ & $0.995 \pm 0.007$ & $0.186 \pm 0.416$ & $0.144 \pm 0.251$ \\
DQN & $0.283 \pm 0.345$ & $0.015 \pm 0.034$ & $0.007 \pm 0.015$ & $0.355 \pm 0.352$ & $0.000 \pm 0.000$ & $0.000 \pm 0.000$ \\
\bottomrule
\end{tabular}
\end{table}

\begin{figure}[t]
\centering
\includegraphics[width=\linewidth]{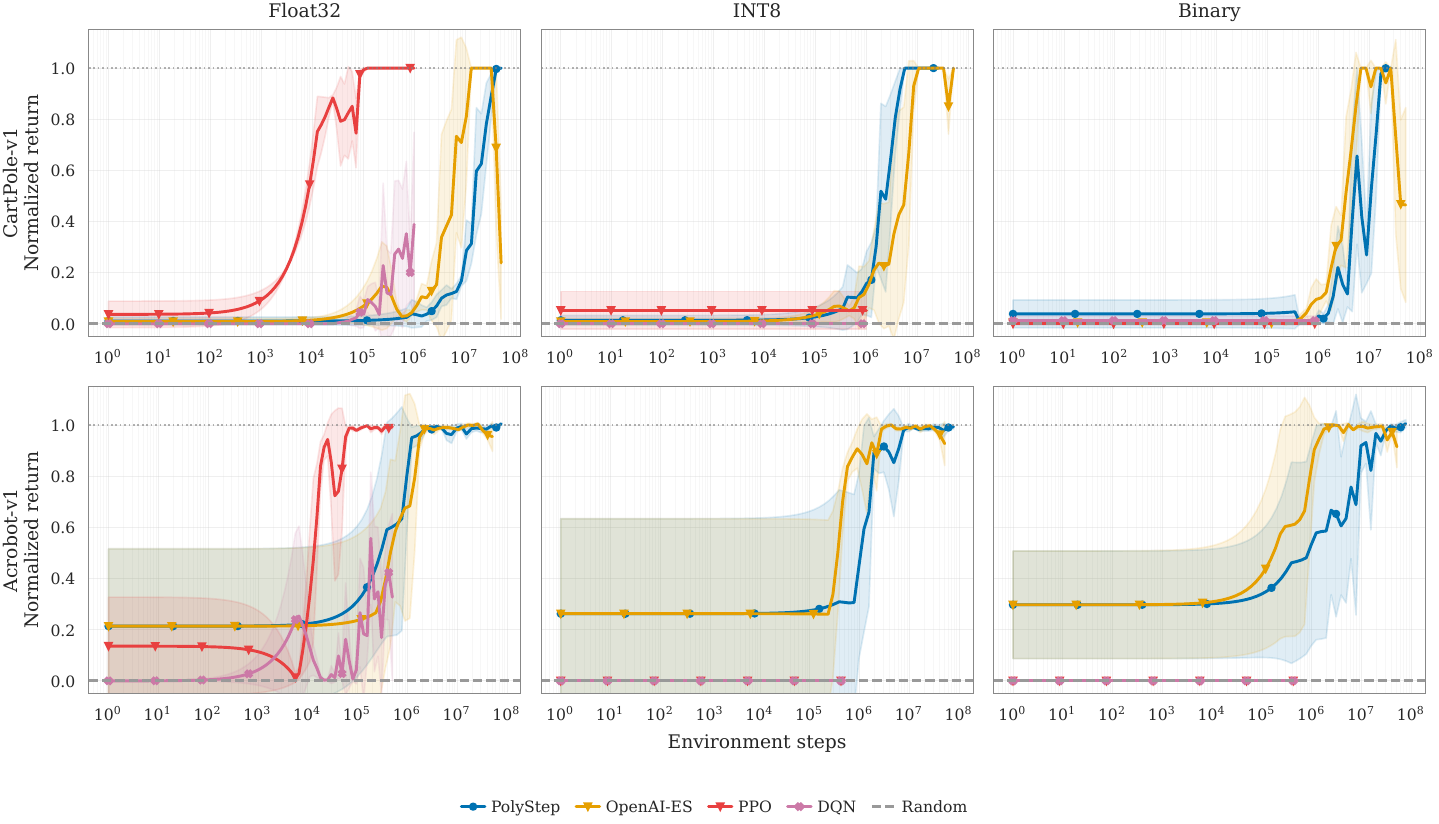}
\caption{RL learning curves: 2 environments $\times$ 3 precision regimes (5 seeds, mean $\pm$ std).}
\label{fig:rl-curves}
\end{figure}

\begin{table}[t]
\centering
\caption{Environment interactions behind Table~\ref{tab:rl_nondiff}, in millions, mean over seeds. PPO and DQN run their own SB3 schedule at $1.0$M throughout.}
\label{tab:rl_cost}
\small
\begin{tabular}{lcccccc}
\toprule
Method & \multicolumn{3}{c}{CartPole-v1} & \multicolumn{3}{c}{Acrobot-v1} \\
\cmidrule(lr){2-4} \cmidrule(lr){5-7}
 & F32 & INT8 & Binary & F32 & INT8 & Binary \\
\midrule
\sysname{} & $417$ & $208$ & $202$ & $29$ & $28$ & $32$ \\
OpenAI-ES & $51$ & $51$ & $51$ & $51$ & $51$ & $51$ \\
PPO & $1$ & $1$ & $1$ & $1$ & $1$ & $1$ \\
DQN & $1$ & $1$ & $1$ & $1$ & $1$ & $1$ \\
\bottomrule
\end{tabular}
\end{table}

Table~\ref{tab:rl_nondiff} and Figure~\ref{fig:rl-curves}: quantizing the policy
costs \sysname{} nothing measurable, and it leads OpenAI-ES in all six settings; PPO
matches it in Float32 on both environments and then collapses, the zero-gradient
prediction; DQN solves neither environment even in Float32 at this budget, so its
failure is not attributable to quantization. OpenAI-ES neither collapses nor
matches: its seed spread exceeds $0.36$ in three of six settings against
\sysname{}'s at most $0.009$ everywhere, the softmax-over-cost update
concentrating on a low-cost vertex once one dominates, which makes the step
conservative near a solution. Table~\ref{tab:rl_cost} reports what the shared
step count buys in environment interactions, and it runs both ways: \sysname{}
spends $4.0$--$8.2\times$ OpenAI-ES's interactions on CartPole, so those results are
not an efficiency claim, and about half on Acrobot, where it leads on all three
precisions. PPO and DQN's Float32 results are the sample-efficiency reference,
their quantized results a diagnostic of gradient flow, since no budget recovers a
gradient that is exactly zero.

Two further notes. \sysname{}'s converged policy is the deployed policy, with no
sampling noise to anneal: the binary-precision CartPole policy uses only
$\mathrm{sign}(\cdot)$ activations, trained with no surrogate gradient, no
straight-through estimator and no critic. And a pilot on a 29-DoF humanoid
(Unitree G1) had RSL-RL PPO far ahead of \sysname{} under the same simulator and
budget, the expected outcome where dense gradients and policy-gradient assumptions
hold; the pilot is scored by maximum test accuracy over its run, so only the direction stands and
not a number. \sysname{}'s value appears in the quantized-policy regime where PPO
collapses, not in unrestricted full-precision continuous control.

\section{Statistical Testing}\label{app:stats}

With 5 paired seeds, an exact two-sided sign-flip permutation test has $2^5 = 32$
assignments, so its smallest attainable $p$-value is $2/32 = 0.0625$ and a Wilcoxon
signed-rank test has the same two-sided floor ($0.03125$ one-sided). We use the sign-flip test, report the
standard deviation alongside every mean, and quote the floor whenever we hit it, so
a unanimous result stays distinguishable from an underpowered one.

On every supervised benchmark \sysname{} leads the best tuned gradient-free baseline
on all five seeds, so each returns $p = 0.0625$, with per-seed mean margins: SNN
$+13.4$~pp against OpenAI-ES,
MNIST $+2.3$ against CMA-ES (the rounded table entries differ by $2.4$), INT8 $+2.6$
against OpenAI-ES, staircase $+5.1$ against
CMA-ES, argmax $+6.6$ against OpenAI-ES, hard MoE $+8.1$ against OpenAI-ES. Reading
$0.0625$ as failure to reach $0.05$ would be reading the seed count. We did not add
seeds to cross the threshold, because doing so only where it would help is itself a
selection rule. No multiplicity correction is applied across the six benchmarks or
the $36$ comparisons of Table~\ref{tab:step-sweep}: at the design floor a Bonferroni
correction over six benchmarks would put every $p$ above any conventional level, so
the six tests are reported individually and the $36$-comparison claim is read as a
description of that table rather than as a family-wise test.

SST-2 from-scratch results are single-seed (seed 42); \sysname{} sits at chance
($\sim$50\% on a binary task) there, which makes multi-seed statistics uninformative.
The baseline figures measured alongside it used a mismatched perturbation scale and are
not quoted, so we make no claim about gradient-free methods in general at that scale.
The GPT-2 head-only fine-tuning experiment (Appendix~\ref{app:gpt2-finetune})
uses 3 seeds.

\paragraph{Checkpoint selection.}
Every primary-table number is the test score of the checkpoint selected on a held-out
validation split, for every method including all baselines. Hyperparameters are
selected on validation as well, under the per-method tuning budget recorded in
Table~\ref{tab:hyperparameters}; the test set is touched once, at the end, by the
selected configuration.

Selecting checkpoints by maximum test accuracy over epochs would be test-set
selection, and on this optimizer it is particularly
misleading. Sections~\ref{sec:instability} and~\ref{sec:ablation} document
that \sysname{} trajectories can dip late and recover, by $2.24$ to $2.46$~pp
under decayed schedules; taking the maximum over such a trajectory reports the
peak of that oscillation and suppresses exactly the instability the theory
predicts. The gaps involved are
comparable to the differences between methods on the smooth benchmarks, so the
convention could change conclusions and not just numbers. Every supervised table
in this paper is reported under validation selection.

\section{Sinkhorn Solver Details}\label{app:sinkhorn-details}

The softmax solver used in all primary experiments is the $\lambda = 0$
endpoint of a solver continuum:
softmax $\to$ KL-softmax (Theorem~\ref{thm:kl-softmax-endpoints}, solved by
\texttt{KLSoftmaxSolver}) $\to$ full entropic OT
(Eq.~\ref{eq:ot-objective}, solved by \texttt{SinkhornSolver}).
The full Sinkhorn solver below is the $\lambda \to \infty$ endpoint the
softmax is measured against, for users who operate in the high-particle regime
where the target marginal constraint is binding.

\paragraph{Log-domain Sinkhorn iterations.}
The entropic OT problem (Eq.~\ref{eq:ot-objective}) is solved via
alternating dual potential updates with overrelaxation parameter
$\varpi \in [0.5, 1.95]$:
\begin{align}
f_i &\leftarrow (1 - \varpi) f_i + \varpi \cdot \varepsilon \!\left(\log a_i - \mathrm{logsumexp}_{v}\!\left(\frac{g_v - C_{iv}}{\varepsilon}\right)\right), \label{eq:sinkhorn-f} \\
g_v &\leftarrow (1 - \varpi) g_v + \varpi \cdot \varepsilon \!\left(\log b_v - \mathrm{logsumexp}_{i}\!\left(\frac{f_i^{\mathrm{new}} - C_{iv}}{\varepsilon}\right)\right). \label{eq:sinkhorn-g}
\end{align}

\paragraph{Warm-started dual potentials.}
The dual potentials $(f, g)$ are reused from the previous optimization step,
reducing the number of Sinkhorn iterations from ${\sim}100$ (cold start) to
${\sim}10$ in steady state. Each step
begins near the previous optimum, and the small parameter displacement between
steps keeps the dual potentials good initializers.

\section{System Design}\label{app:system}

The method needs only a scalar loss oracle, so it is not tied to any one array framework.
Our reference implementation uses PyTorch and wraps most standard \texttt{nn.Module}
architectures for gradient-free training; the description below is specific to it.

\paragraph{Blockwise OT.}
For large models in full-space mode, the OT problem decomposes per-layer: each layer runs an independent Sinkhorn solve on its own particle block.
This reduces peak memory from $O(M^2)$ to $O(M^2 / L)$ for $M$ particles across $L$ blocks, at the cost of losing cross-layer transport information.

\paragraph{Sparse projection.}
For models exceeding 2M parameters on GPU (1M on CPU), a sparse Johnson--Lindenstrauss transform~\citep{li2006sparse} replaces the dense projection matrix.
Entries are drawn from a scaled Rademacher distribution with density $1/\sqrt{d}$: $\pm 1$ scaled by $1/\sqrt{\mathrm{nnz}}$.
For 1M parameters at rank 256, the dense projection requires ${\sim}$1\,GB while the sparse variant uses ${\sim}$1\,MB.
Neither mode is used in any primary run.

\paragraph{Forward-only evaluation.}
\sysname{} treats the model as a black-box function $\theta \mapsto \Ls$: every probe evaluation is wrapped in \texttt{@torch.no\_grad()}, so no computational graph is constructed and no intermediate activations are retained.
For recurrent models evaluated over $T_{\mathrm{seq}}$ timesteps, this yields sub-linear memory scaling versus BPTT, which must store hidden states at each step.
Measured on SNNVGG11Small: \memoryOursMin--\memoryOursMax\,MB (\sysname{}) versus \memoryBpttMin--\memoryBpttMax\,MB (BPTT) at $T_{\mathrm{seq}} = 25$--$400$, a \memorySavingsMax{} reduction at the longest temporal horizon (Section~\ref{sec:snn-memory}).

\paragraph{Layer compatibility.}
Standard layers (\texttt{nn.Linear}, \texttt{nn.Conv2d}, activations) work under \texttt{vmap} without modification.
Attention and recurrent layers require drop-in VmapSafe replacements (\texttt{VmapSafeMultiHeadAttention}, \texttt{VmapSafeLSTM}) that use explicit matrix operations instead of specialized dispatch paths.

\paragraph{Usage.}
\sysname{} is not a \texttt{torch.optim.Optimizer} subclass, so no \texttt{zero\_grad()} or parameter groups are needed.
Usage is one line: \texttt{optimizer = PolyStepOptimizer(model, epsilon=1.0)}.
Mixed BF16/FP32 precision and sparse random projection (for models $>$2M parameters) are enabled automatically.

\paragraph{torch.compile modes.}
\sysname{} applies \texttt{torch.compile} to the inner Sinkhorn solver in
\texttt{"default"} mode. The \texttt{"reduce-overhead"} mode was evaluated but
introduced compatibility issues with dynamic shapes in the log-domain
solver; \texttt{"default"} gives compile-time operator fusion without sacrificing
correctness, though at our OT problem sizes the speedup is not measurable end-to-end.

\paragraph{vmap batching strategy.}
Batched forward evaluation uses \texttt{torch.vmap} with \texttt{functional\_call}
to evaluate $n$ parameter candidates in a single vectorized call. Parameters are stacked
into a leading batch dimension; \texttt{functional\_call} reconstructs the
\texttt{state\_dict} per candidate. This avoids a Python-level loop over candidates,
providing GPU parallelism over the particle dimension.

\paragraph{Memory management.}
Three mechanisms bound peak memory: (1) \texttt{chunk\_size} batches vmap evaluation
into chunks, bounding the intermediate activation tensor size; (2) mixed precision
mode (\texttt{mixed\_precision=True}) stores model parameters in BF16 while keeping
the Sinkhorn solver in FP32; (3) blockwise OT (one Sinkhorn solve per layer)
decomposes the cost matrix per network layer rather than globally, trading accuracy
for memory.

\paragraph{SNN output scaling.}
SNN models output spike rates in $[0, 1]$. Cross-entropy loss expects unnormalized
logits. Without scaling, the cost differences between probe points fall below the temperature.
The SNN experiment loss wrapper multiplies model outputs by $10.0$ before computing
cross-entropy loss, which stabilizes convergence. This scaling is applied in the
user-defined loss function, not automatically by the optimizer.

\paragraph{SST-2 multi-argument vmap.}
The TransformerClassifier requires \texttt{input\_ids} and \texttt{attention\_mask}
as separate positional arguments. The vmap wrapper passes these as a positional
argument tuple to \texttt{functional\_call}, matching the model's forward signature.

\section{Reproducibility Checklist}\label{app:reproducibility}

Complete hyperparameter settings for all experiments are in
Appendix~\ref{app:hyperparameters}.

\paragraph{Random seeds.}
All multi-seed experiments use seeds $\{42, 123, 456, 789, 1337\}$ (5 seeds).
Single-seed experiments: SST-2 from-scratch (seed 42 only; a chance-level result makes
multi-seed statistics uninformative).
GPT-2 head-only fine-tuning (Appendix~\ref{app:gpt2-finetune}) uses seeds $\{42, 123, 456\}$ (3 seeds); the quantized-head grid (Appendix~\ref{app:gpt2-headquant}) uses all five.
Seeds are set via \texttt{torch.manual\_seed}, \texttt{numpy.random.seed}, and Python's
\texttt{random.seed} at the start of each experiment.

\paragraph{Determinism, and its limits.}
Seeding alone does not make a CUDA run reproducible, so we additionally set
\texttt{CUBLAS\_WORKSPACE\_CONFIG=:4096:8} before any CUDA context is created,
enable \texttt{torch.use\_deterministic\_algorithms}, disable TF32 in both the
matmul and cuDNN paths, and give every \texttt{DataLoader} a seeded generator and
worker initializer; these are the flags behind the same-device exact reproduction
claim of Section~\ref{sec:setup}. With these in place, two runs at the same seed on the same
device produce bit-identical loss trajectories, which we verify by rerunning at
the same seed.

Across devices they do not, and no amount of seeding would change that: cuBLAS
selects reduction orders by architecture and problem shape, so the same code path
accumulates floating-point error differently on different hardware. Reproduction is
bit-identical device-locally only. The cross-device spread is reported alongside the
per-seed trajectories in the repository, and is why we report five seeds rather than
one.

\paragraph{Hardware.}
All experiments run on a single NVIDIA GeForce RTX 5090 GPU.
Multi-GPU training is not used.

\paragraph{Software.}
PyTorch 2.11.0 (CUDA 13.0 build), Python 3.12.13,
HuggingFace \texttt{datasets} 3.x (SST-2). Full environment details are recorded
in the \texttt{environment} field of each JSON result file.

\paragraph{Code availability.}
\sysname{} is released as an open-source package at \url{https://github.com/anindex/polystep}.

\paragraph{Ablation sweep.}
Beyond the primary experiments we provide one fully reproducible
ablation sweep of $253$ runs over $66$ configurations
at $90.9$ GPU-hours on a single RTX~5090, covering all reported
ablation groups (solver comparison, subspace rank, low-rank and blockwise
OT, amortization, KL interpolation, marginal-violation rate, and schedule
fragility) with zero anomalies.
Reproduction instructions and the full status report are in the repository
(\texttt{ablations/README.md}).

\paragraph{Dataset access.}
MNIST: via \texttt{torchvision.datasets} (auto-downloaded).
SST-2: via HuggingFace \texttt{datasets} library, GLUE benchmark split.

\paragraph{Expected runtimes.}
Approximate wall-clock times for single-seed runs on RTX 5090:
MNIST (\sysname{}, 20 epochs): $\approx$18 minutes;
SNN (\sysname{}, 40 epochs): $\approx$100 minutes;
GPT-2 fine-tuning (\sysname{}, 100 steps): $\approx$10 minutes;
GPT-2 fine-tuning (Adam, 3 epochs): $\approx$1 minute;
Memory scaling benchmark (T=25 to 400): $\approx$5 minutes.

\paragraph{Statistical tests.}
Pairwise significance is reported with the exact two-sided sign-flip permutation test,
whose floor is $p = 0.0625$;
Appendix~\ref{app:stats} gives the per-benchmark results and explains why the
asymptotic alternatives are not used here.

\section{GPT-2 Fine-Tuning}\label{app:gpt2-finetune}

\subsection{The Flat-Pair Fraction Under a Quantizer}
\label{app:flat-pair}

Section~\ref{sec:background} argues that a finite difference returns exactly zero on a
piecewise-constant loss as the probe radius shrinks. That is an asymptotic statement,
so we measure the finite-radius version, sweeping the radius rather than choosing one;
choosing one is what would make the result an artifact.

On the quantized GPT-2 head we record the \emph{flat-pair fraction}: the proportion of
antithetic probe pairs $(\theta + \mu z,\, \theta - \mu z)$ whose two evaluations are
bit-identical, so that the estimated directional derivative is exactly $0$. The pairs
are MeZO's own, so $z$ is a random direction over all $1{,}538$ coordinates at once and
$\mu$ is its finite-difference radius, not this paper's temperature $\varepsilon$; a
pair is flat only when no coordinate crosses a bin. Each entry is one seed-42 run at the
stated $\mu$, averaged over the run's generations.

\begin{center}
\small
\begin{tabular}{lccc}
\toprule
Probe radius $\mu$ & INT8 head & Binary head & Smooth control \\
\midrule
$10^{-5}$ & $0.995$ & $0.567$ & $0.016$ \\
$10^{-3}$ & $0.0004$ & $0.0027$ & $0.0004$ \\
$2 \times 10^{-2}$ & $0.000$ & $0.0002$ & $0.000$ \\
\bottomrule
\end{tabular}
\end{center}

At $\mu = 10^{-5}$, the regime in which the finite-difference analysis takes its
limits, the INT8 head returns an exactly-zero estimate on $99.5\%$ of pairs against
$1.6\%$ for the smooth control. The binary head sits between them at $56.7\%$, and a
lower flat fraction is not better signal there: its crossings are $\mathrm{sign}$
flips, which enter the difference at the full jump magnitude rather than at a
derivative's scale, which is why no radius rescues MeZO on that head
(Appendix~\ref{app:gpt2-headquant}). At MeZO's default $\mu = 10^{-3}$ and at the
largest radius swept, all three columns fall below $0.3\%$, because a probe that large
does cross bins, at which point, as Section~\ref{sec:introduction} notes, the quotient
is no longer estimating a derivative. The rows are the horns of the same dilemma
rather than a refutation of the first by the rest.

The radius at which the transition happens is not incidental: it is the quantization
bin $\Delta_q$, twice the isolation radius $r_{\mathrm{iso}} = \Delta_q/2$ that
Appendix~\ref{app:discontinuity-sets} assigns this architecture, and
so as the scale below which Theorem~\ref{thm:smoothing-blowup} puts the smoothing
gradient beyond any fixed bound.

\subsection{Quantized Head at a Matched Budget}\label{app:gpt2-headquant}

The grid behind Table~\ref{tab:gpt2-headquant}. The GPT-2 124M backbone is
frozen and its final-token features cached once ($4{,}500/500/872$
train/validation/test split by a fixed permutation; the test split is wrapped
in a guard that raises on any access during training or tuning), so every
method trains the same 1,538-parameter head on the same feature stream. Three
head variants: INT8, weight and bias quantized as
$\mathrm{clamp}(\mathrm{round}(w/\Delta_q), -128, 127)\,\Delta_q$ with
$\Delta_q = 0.01$; binary, the weight passed through $\mathrm{sign}(\cdot)$
with the bias left continuous, so 2 of 1,538 parameters remain differentiable
and the Adam column there is a partial control; and the unquantized smooth
control. Each gradient-free method is tuned by the same nine-configuration
validation-only sweep at one fifth of the budget, with the picks and sweep
provenance recorded beside the results, then run at $500$K candidate
evaluations over five seeds, the validation-selected head scored once on test.
At that budget MeZO's antithetic pairs take $250{,}000$ steps, EGGROLL spends
$16$--$32$ candidates per generation, and \sysname{}, whose full-space step
costs $1.5$ evaluations per parameter, takes $216$ steps, stopping at
$498{,}312$ evaluations because it starts no step it cannot finish inside the
budget; validation selection scores both the method's mean iterate and its
best sampled candidate. The tuned probe radii show the mechanism of
Section~\ref{sec:nondiff}: on INT8 MeZO's sweep selects $\mu = 1.0$; on the
binary head no radius helps, the $\mathrm{sign}$ jump entering every two-point
difference at full magnitude, and MeZO lands near chance.

\subsection{Head-Only Fine-Tuning}

We load pretrained GPT-2 124M weights~\citep{radford2019language} into a custom
VmapSafe model (Conv1D weight transpose, fused QKV split; forward pass matches
HuggingFace within $10^{-4}$) and fine-tune on SST-2 binary sentiment
classification~\citep{socher2013recursive}, either through a 128-dimensional
sparse random projection of all 124M parameters (100 optimizer steps) or in
full-space mode on the 1,538-parameter classification head (50 epochs at batch
size 8, about $31{,}000$ steps). Three seeds, maximum sequence length 128,
5,000 training samples; the Adam baselines run 3 epochs on the same data.

The head trained here is the same object as the smooth control of
Appendix~\ref{app:gpt2-headquant}, and the two numbers, $76.8\%$ below and $51.2\%$
there, are not in conflict: this setting takes the maximum test accuracy over about
$31{,}000$ steps, that one takes the validation-selected checkpoint of a
$216$-step run under a $500$K-evaluation cap. Protocol and budget both differ, and
only the second is comparable to anything else in the paper.

\begin{table}[ht]
\centering
\caption{GPT-2 124M fine-tuning on SST-2 (3 seeds). Accuracy (\%), trainable parameters, peak VRAM, and wall time. \textbf{Protocol caveat:} this family is not scored by the validation-selection protocol used elsewhere; every method reports the maximum test accuracy over its run, over an unequal number of draws (51 for head-only \sysname{} against 4 for head-only Adam). The bias favors \sysname{}, which nonetheless trails Adam, so correcting the protocol would widen the gap; we draw no protocol-clean conclusion from the \sysname{}-vs-Adam difference here.}
\label{tab:gpt2-finetune}
\begin{tabular}{lcccc}
\toprule
Method & Accuracy (\%) & Trainable & Peak VRAM & Wall Time \\
\midrule
\sysname{} (head only, full-space) & $76.8 \pm 0.7$ & 1,538 & 1,075 MB & 8 min \\
\sysname{} (all parameters, rank 128) & $49.7 \pm 1.1$ & 124M & 9,043 MB & 10 min \\
From scratch (Sec.~\ref{sec:limitations-disc}) & 49.4 & 4.2M & -- & -- \\
\midrule
Adam (lr=1e-3, head only) & $81.3 \pm 0.6$ & 1,538 & 1,075 MB & 38 s \\
Adam (lr=2e-5, all parameters) & $86.4 \pm 0.8$ & 124M & 2,419 MB & 44 s \\
\bottomrule
\end{tabular}
\end{table}

\paragraph{Results.}
Table~\ref{tab:gpt2-finetune} shows two regimes. Through the 128-dimensional
projection, covering $1.0 \times 10^{-4}\%$ of 124M parameters, accuracy is
indistinguishable from chance: the compression is too extreme for meaningful
optimization, the subspace-coverage ceiling of
Section~\ref{sec:limitations-disc}; MNIST (102K parameters, rank~8,
\mnistBestAcc{}) sits on the workable side of that line. On the frozen backbone's
1,538-parameter head, \sysname{} lands $26.8$~pp above chance, so pretrained
features are exploited by gradient-free optimization; the head-only Adam ceiling
sits $4.5$~pp higher, the cost of zeroth-order optimization, since probing
polytope vertices carries less directional information per step than an exact
gradient. Table~\ref{tab:gpt2-memory} breaks down peak VRAM: \sysname{} stores no
gradients or optimizer states, but \texttt{torch.vmap} materializes intermediate
activations across the $257$ candidate evaluations per step (chunks of $4$), which
dominates its measured peak.

\begin{table}[ht]
\centering
\caption{GPT-2 124M training memory breakdown on RTX 5090 (32 GB). Measured peak and component-level estimates for Adam (FP32) and \sysname{} (FP32, sparse projection).}
\label{tab:gpt2-memory}
\begin{tabular}{lcc}
\toprule
Component & Adam (FP32) & \sysname{} (FP32, sparse) \\
\midrule
Model weights (124M) & 472 MB & 472 MB \\
Gradients & 472 MB & 0 \\
Optimizer states ($m$+$v$) & 944 MB & 0 \\
Sparse projection matrix & 0 & $\sim$11 MB \\
Subspace state & 0 & $\sim$10 MB \\
Forward activations & $\sim$500 MB & $\sim$8{,}550 MB\textsuperscript{$\dagger$} \\
Backward activations & included above & 0 \\
\midrule
Measured peak & 2{,}419 MB & 9{,}043 MB \\
\bottomrule
\end{tabular}
\vspace{4pt}

\raggedright\footnotesize
$\dagger$\,\texttt{torch.vmap} materializes activations for chunk\_size$=$4 candidates simultaneously across 12 transformer layers with 768-dim hidden states and 128-token sequences, dominating memory.
\end{table}

MeZO~\citep{malladi2023mezo} reaches ${\sim}91\%$ on OPT-13B by perturbing
\emph{all} parameters with no subspace compression, on a $105\times$ larger
backbone whose forward pass is differentiable end to end; under this paper's
matched-budget protocol MeZO runs as a tuned baseline and trails on all six
architectures (Table~\ref{tab:step-sweep}). Richer subspace parameterizations
(per-layer adaptive projections, learned bases) remain the route to narrowing
the gap at a fixed backbone.

\end{document}